\titlespacing*{\subsubsection}{0pt}{*1}{*1}
\newcommand{\argmax}{\operatornamewithlimits{argmax}}
\newcommand{\argmin}{\operatornamewithlimits{argmin}}
\newtheorem{theorem}{Theorem} % Define a new theorem 
\newtheorem{definition}{Definition}
\newtheorem{corollary}{Corollary}
\newtheorem{lemma}{Lemma}
\newtheorem{claim}{Claim}
\newtheorem{remark}{Remark}
\pgfplotsset{compat=1.18} % Set compatibility version (adjust as needed)
\begin{document}
% \input{coverletter}
% \newpage

\title{Online Clustering of Data Sequences with Bandit Information\thanks{Part of this work has been presented at the IEEE International Symposium on Information Theory, Ann Arbor, USA, 2025, and another part has been accepted for presentation at the IEEE International Conference on Acoustic, Speech, and Signal Processing, Barcelona, Spain, 2026. This work is supported in part by Qualcomm University Research Grant. Kota Srinivas Reddy was supported by the Department of Science and Technology (DST), Govt. of India, through the INSPIRE faculty fellowship.}
}
% \author{\IEEEauthorblockN{G Dhinesh Chandran, Kota Srinivas Reddy, Srikrishna Bhashyam,\\}
% \IEEEauthorblockA{Department of Electrical Engineering,\\}
% \IEEEauthorblockA{Indian Institute of Technology Madras, Chennai 600036, Tamil Nadu, India.\\}
% \IEEEauthorblockA{Email: ee22d200@smail.iitm.ac.in,\{ksreddy,skrishna\}@ee.iitm.ac.in} }

\author{
\IEEEauthorblockN{
G.~Dhinesh~Chandran\IEEEauthorrefmark{1},
Kota~Srinivas~Reddy\IEEEauthorrefmark{2},
Srikrishna~Bhashyam\IEEEauthorrefmark{1}\\
}
\IEEEauthorblockA{\IEEEauthorrefmark{1}
Department of Electrical Engineering, IIT Madras, Chennai, Tamil Nadu, India\\
% Institute A\\
% Email: dhinesh@xyz.edu
}
\IEEEauthorblockA{\IEEEauthorrefmark{2}
Department of Artificial Intelligence, IIT Kharagpur, Kharagpur, West Bengal, India\\
% Institute B\\
% Email: srinivas@xyz.edu
}
\IEEEauthorblockA{Email: ee22d200@smail.iitm.ac.in, ksreddy@ai.iitkgp.ac.in, skrishna@ee.iitm.ac.in}
}
% \author{IEEE Publication Technology,~\IEEEmembership{Staff,~IEEE,}
%         % <-this % stops a space
% \thanks{This paper was produced by the IEEE Publication Technology Group. They are in Piscataway, NJ.}% <-this % stops a space
% \thanks{Manuscript received October 26, 2023; revised December 8, 2023.}}

% % The paper headers
% \markboth{Journal of \LaTeX\ Class Files,~Vol.~1, No.~2, December~2023}%
% {Shell \MakeLowercase{\textit{et al.}}: A Sample Article Using IEEEtran.cls for IEEE Journals}

% \IEEEpubid{0000--0000~\copyright~2023 IEEE}
% Remember, if you use this you must call \IEEEpubidadjcol in the second
% column for its text to clear the IEEEpubid mark.

\maketitle

\begin{abstract}
    We study the problem of online clustering of data sequences in the multi-armed bandit (MAB) framework under the fixed-confidence setting. There are $M$ arms, each providing i.i.d. samples from a parametric distribution whose parameters are unknown. The $M$ arms form $K$ clusters based on the distance between the true parameters. In the MAB setting, one arm can be sampled at each time. The objective is to estimate the clusters of the arms using as few samples as possible from the arms, subject to an upper bound on the error probability. Our setting allows for: arms within a cluster to have non-identical distributions, vector parameter arms, vector observations, and $K \le M$ clusters. 
    %Unlike existing works in the literature that assume that the arms within a cluster have identical distributions, we allow the arms in a cluster to have non-identical distributions.
    We propose and analyze the Average Tracking Bandit Online Clustering (ATBOC) algorithm. For multivariate Gaussian distributed arms, we show that ATBOC is asymptotically order-optimal, i.e., the expected sample complexity grows at most twice as fast as that of lower bound in asymptotic regime ($\delta\rightarrow 0$). This upper bound on expected sample complexity is also valid for multivariate subgaussian arms. For single-parameter exponential family distributed arms, we show that ATBOC is asymptotically optimal, i.e., the expected sample complexity growth matches the lower bound in the asymptotic regime. We also propose a computationally more efficient alternatives Lower and Upper Confidence Bound based Bandit Online Clustering Algorithm (LUCBBOC), and Bandit Online Clustering-Elimination (BOC-ELIM). We derive the computational complexity of all the proposed algorithms and also compare the per-sample run time through simulations. The LUCBBOC and BOC-ELIM algorithms require lesser per-sample run time compared to ATBOC and achieve comparable performance. All the proposed algorithms are $\delta$-Probably correct, i.e., the error probability of cluster estimate at the stopping time is upper bounded by $\delta$. We validate the asymptotic optimality guarantees through simulations, and 
    present the comparison of our proposed algorithms with other related work through simulations on both synthetic and real-world datasets. 
\end{abstract}

\begin{IEEEkeywords}
Multi-armed bandit, Online Clustering, Average Tracking, Sample Complexity.
\end{IEEEkeywords}

\section{Introduction}

Clustering involves partitioning a collection of items into different groups, where the items within each group share similar properties. Clustering has many applications including drug discovery \cite{maccuish2010clustering}, market segmentation \cite{chaturvedi1997feature}, 
% \cite{grua2019clustream}
pattern recognition \cite{theodoridis2006pattern}, traffic monitoring \cite{liu2021spatio, abolhelm2021large}, and clustering of biological sequences \cite{chiu2022clustering}.
% Most of the practical applications require to cluster the items in an online fashion \cite{yang2024optimal}.
For example, in market segmentation, the items to be clustered are the customers.
We collect feedback from different customers on multiple products and group the customers into clusters based on their feedback, so that we can recommend appropriate products to customers in the future according to the cluster to which they belong. It is desirable to obtain the clusters with as little feedback from customers as possible. 

% Such applications require the design of an online clustering algorithm that observes sequential feedback from customers and adaptively decides from which customers to collect feedback, in order to reduce the overall amount of feedback.

Clustering algorithms can be linkage-based such as the Single Linkage clustering algorithm (SLINK) \cite{rohlf198212}, or based on cost minimization like K-Means \cite{ahmed2020k} and K-Medoids \cite{kaur2014k}, or spectral clustering (SPEC) \cite{ng2001spectral}. 
The problem of clustering data points, where the collection of data points is grouped into clusters, has been well-studied in the literature; 
See \cite{ezugwu2022comprehensive} for a comprehensive survey.
We focus on the class of data sequence clustering problems in which we divide a collection of i.i.d. data sequences (arms) generated from unknown distributions into clusters.
% Another class of clustering problems, also referred to as Online Clustering \cite{liberty2016algorithm,silva2013data, fahy2019finding}, involves observing samples from a single data stream, where the objective is to cluster the observed data points in this data stream in a streaming manner without storing the entire stream. This problem differs from our setting.
Another class of clustering problems, also referred to as online clustering \cite{barbakh2008online, liberty2016algorithm, silva2013data, fahy2019finding}, involves observing samples from a single data stream, with the objective of assigning each data point to one of the clusters without storing the entire stream. This problem differs from our setting.

Data sequence clustering can be studied in a fixed-sample size setting (FSS) or a sequential setting (SEQ). In the FSS setting, a finite sequence of data points (samples) from each arm is available a priori, and the cluster estimate is based on the available samples. Clustering algorithms such as K-Means \cite{wang2018exponentially}, K-Medoids \cite{wang2019k}, and SLINK \cite{wang2020exponentially} have been studied in the FSS setting. In the SEQ setting, samples from the arms are available sequentially, and the algorithm is equipped with a stopping rule to decide when to stop sampling and output the cluster estimate. Sequential clustering can achieve the same clustering performance as FSS clustering with fewer samples on average \cite{zhu2025exponentially, diao2025sequential}. Sequential methods are also more suitable for applications where the data are naturally sequentially available. Sequential (SEQ) clustering can be studied in a full information setting or a Bandit setting. In the full information setting, at each round, a sample is available from each arm. Sequential versions of the clustering algorithms K-Medoids \cite{sreenivasan2023nonparametric}, SLINK \cite{singh2025exponentially}, and SPEC \cite{chandran2026sequentialspectralclusteringdata} have been studied in the literature in a full information setting. In the bandit setting, a sample can be observed only from the selected arm in each round. 

Clustering of data sequences in the Bandit setting is called Bandit Online Clustering (BOC) \cite{yang2024optimal, chandran2025online}, and an algorithm designed to solve a BOC problem is called a BOC algorithm. BOC algorithms, in addition to a stopping rule, are equipped with a sampling rule that adaptively selects an arm to observe based on past observations. 
Typically, for a given error probability, BOC algorithms, due to their adaptive sampling rule, use fewer observations on average to estimate clusters than their full-information and FSS counterparts.
% The clustering algorithm to be designed for the application of market segmentation, discussed in the beginning of this section, falls with in this category of clustering algorithm called BOC algorithm. 
%The application of market segmentation, discussed in the beginning of this section, requires the design of a BOC algorithm that observes sequential feedback from customers by adaptively deciding from which customer to collect feedback, in order to reduce the overall amount of feedback. 
BOC problems have been studied recently under the fixed confidence setting in \cite{yang2024optimal,thuot2024active,katariya2019maxgap,yavas2025general}, where the probability of error is fixed and the BOC algorithm is expected to estimate the cluster by observing less number of samples as possible. 
In \cite{yang2024optimal, yavas2025general}, an asymptotically optimal BOC algorithm is designed for clustering the arms, assuming that all arms in any given cluster have the same mean. For the same setting, \cite{thuot2024active} studied the non-asymptotic regime. However, in practical applications, the arms in a cluster do not need to have the same mean \cite{katariya2019maxgap}. The MaxGap algorithms in \cite{katariya2019maxgap} handle such cases, but are limited to two clusters with one-dimensional samples. In this paper, we address the general BOC problem, allowing different means within clusters, more than two clusters, and multi-dimensional samples.

\subsection{Related Work}
The multi-armed bandit (MAB) problem was first introduced in \cite{thompson1933likelihood}. MAB problems can be broadly categorized as regret-minimization problems, where the objective is to maximize cumulative reward by minimizing regret with respect to the optimal arm \cite{auer2002finite}, or pure exploration problems, where the goal is to identify the optimal arm or the structural properties of the arms. Pure exploration problems can be studied in the fixed-budget setting, where the number of arm pulls is limited and the goal is to minimize the probability of error for the fixed budget \cite{yang2022minimax}, or the fixed-confidence setting, where there is an upper bound constraint on the error probability and the goal is to minimize the total number of arm pulls for the fixed error probability \cite{yang2024optimal}. Pure exploration problems include best arm identification (BAI) \cite{garivier2016optimal,jedra2020optimal,kalyanakrishnan2012pac}, where the objective is to identify the arm with the highest mean; threshold bandits \cite{locatelli2016optimal}, where the objective is to classify arms as above or below a given threshold; and clustering, typically referred to as Bandit Online Clustering (BOC), where the objective is to group arms into clusters based on similarity in their underlying distributions. 

We focus on the BOC problem in the fixed-confidence setting. 
Any bandit algorithm designed to solve a pure exploration problem under fixed confidence setting consists of a sampling rule to sequentially select arms, a stopping rule, and the declaration rule to estimate the optimal arm or the structural properties of the arms.
The BOC algorithms proposed for fixed confidence setting, based on their sampling rules, can be either gap-based or tracking-based. In gap-based BOC algorithms, at each round, arm selection is based on the estimated gap between arms \cite{katariya2019maxgap}. In the tracking-based BOC algorithms, at each round, the arm selection is made such that the proportion of samples from each arm tracks an optimal proportion (which is a problem-instance-dependent unknown quantity but can be estimated at each round from the samples) \cite{yang2024optimal}. We design both gap-based and tracking-based BOC algorithms. 

Sequential multi-hypothesis testing has also been studied in the MAB setting in \cite{deshmukh2021sequential,prabhu2022sequential}. The BOC problem can be viewed as a special case of the multi-hypothesis problem, with each possible partition of the arms treated as a hypothesis. Nevertheless, the number of hypotheses grows exponentially with the number of arms, resulting in significant computational complexity. The algorithms designed in the work \cite{deshmukh2021sequential, prabhu2022sequential}, for sequential multi-hypothesis testing, are tracking-based.

Existing tracking-based BOC algorithms are designed based on direct tracking (D-Tracking), where the proportion of samples from each arm tracks the estimated optimal proportion of samples at each round \cite{yang2024optimal, deshmukh2021sequential, prabhu2022sequential}. In \cite{yang2024optimal}, the unknown true optimal proportion of arms is proven to be unique and in \cite{deshmukh2021sequential, prabhu2022sequential}, the unknown true optimal proportion of arms is assumed to be unique. In general, the true optimal proportion of arms need not be unique. Motivated from the algorithm designed to solve BAI problem in the linear bandit setting \cite{jedra2020optimal}, for such scenarios, we design average tracking BOC algorithms that track the average of the estimated optimal proportion of samples. However, the gap-based algorithms are typically computationally efficient than tracking-based algorithms, as the tracking-based algorithm involves solving an optimization algorithm to estimate the optimal proportion of samples from each arm.
We also design a couple of gap-based algorithms, one based on \cite{kalyanakrishnan2012pac}, which solves the BAI problem by computing the lower and upper confidence bounds of the arms, and another one based on \cite{katariya2019maxgap}, which solves the MaxGap identification problem based on arm elimination.

\subsection{Our Contributions}
%, i.e, non-zero intra-cluster distance. %The number of clusters can also be arbitrary, as long as $d_{INTRA} < d_{INTER}$, where $d_{INTRA}$ is the maximum intra-cluster neighbour distance between the means and $d_{INTER}$ is the minimum inter-cluster distance. 
% In this paper, we make the following contributions:

% \subsection{}
The main contributions in this work are as follows. 
\begin{itemize}
    \item We propose a tracking-based BOC algorithm called Average Tracking Bandit Online Clustering (ATBOC).
We derive an upper bound on the expected sample complexity of ATBOC when the arms follow multivariate sub-Gaussian distributions.
By comparing this bound with the asymptotic lower bound, we show that for Gaussian-distributed arms, ATBOC is asymptotically order-optimal, and we refer to this variant of ATBOC as ATBOC-Gauss.
The same algorithm, when applied to multivariate sub-Gaussian arms other than Gaussian, is referred to as ATBOC-subGauss.
For the case where arms follow a single-parameter exponential family distribution, we use a different stopping threshold and prove it to be asymptotically optimal. We refer to this variant of ATBOC as ATBOC-1pExp.
    \item More computationally efficient gap-based BOC algorithms, namely, Lower and Upper Confidence Bound--based Bandit Online Clustering (LUCBBOC) and Bandit Online Clustering Elimination (BOC-ELIM), are proposed.
Upper bounds on the expected sample complexity are derived for both LUCBBOC and BOC-ELIM.
The three variants of ATBOC -- ATBOC-Gauss, ATBOC-subGauss, and ATBOC-1pExp -- as well as LUCBBOC and BOC-ELIM have been proven to be $\delta$-PC, that is, the probability of a clustering error at stopping time is upper bounded by $\delta$, for some $\delta \in (0,1)$.

    \item The implementation of the sampling and stopping rules of ATBOC-Gauss/ATBOC-subGauss and LUCBBOC involves solving a finite number of Quadratic Constrained Quadratic Programs (QCQPs). We propose an Alternating Direction Method of Multipliers (ADMM) based algorithm for solving the QCQP. For ATBOC-1pExp, we further simplify the optimization problem involved in the implementation into a finite minimization of closed-form expressions. 
    % We propose Alternating Direction Method of Multiplier (ADMM) based simplifications for the Quadratic Constrained Quadratic problems (QCQP) for solving the optimization problem involved in the implementation of sampling ans stopping rules in ATBOC-Gauss, ATBOC-subGauss and LUCBBOC. For ATBOC-1pExp, we further simiplify the optimization problem involved in the implementation into a finite minimization of closed form expressions.
    \item We derive the computational complexity order of all the proposed algorithms and also compare the per sample run time through simulations. We infer that the computationally efficient gap-based BOC algorithms, LUCBBOC and BOC-ELIM, require a lesser per-sample run time compared to ATBOC and achieve comparable performance.
    \item We validate the asymptotic theoretical guarantees of the proposed algorithms through simulations on synthetic datasets. In addition, we compare the performance of our algorithms with existing methods in the literature through simulations on synthetic and real-world datasets. Our proposed algorithms provide a significant improvement in the expected sample complexity compared to FSS and round-robin sampling settings for the same probability of error. For the specific setups considered in existing literature, i.e., such as when arms within the same cluster follow the same distribution or when the arms are partitioned into two clusters,  we observe that our proposed generalized algorithms perform comparably to existing methods. Moreover, when existing algorithms designed for identical distribution of arms in the same cluster are applied to our more general setup—where arms within the same cluster need not share the same distribution, our proposed algorithms outperform it by a large margin.
\end{itemize}

Parts of this work have been accepted for presentation at ISIT 2025 \cite{chandran2025online} and ICASSP 2026 \cite{chandran2025optimal}. In \cite{chandran2025online}, ATBOC-Gauss and LUCBBOC algorithms have been proposed. In \cite{chandran2025optimal}, ATBOC-1pExp algorithm has been proposed.

\subsection{Paper Organization and Notation}
The paper is organized as follows. Section~\ref{sec:ProblemSetup} describes the problem setup. Section~\ref{sec:Lowerbound} presents an algorithm-independent, problem-dependent lower bound on the expected sample complexity. Section~\ref{sec:AvgTrackingOC} introduces the proposed algorithms—ATBOC-Gauss, ATBOC-subGauss, and ATBOC-1pExp—and presents their theoretical guarantees. The computationally efficient BOC algorithms, LUCBBOC and BOC-ELIM, along with their theoretical guarantees, are discussed in Sections~\ref{sec:LUCB} and~\ref{sec: BOC-Elim}, respectively. Section~\ref{sec: compcomplx} discusses the implementation aspects and provides a computational complexity analysis of the proposed algorithms. Simulation results and conclusions are presented in Sections~\ref{sec:Simulations} and~\ref{sec:Conclusion}, respectively. Proofs are deferred to the Technical Appendix.

For an integer $n \geq 1$, $[n] \coloneqq \{1,2,\ldots,n\}$. The probability simplex of dimension $M$ is defined as \(\mathcal{P}_M \coloneqq \left\{ \boldsymbol{w} \in \mathbb{R}_{+}^M \mid w_1 + \ldots + w_M = 1 \right\}\). For any set $A$, we use $2^A$ to denote the power set of $A$, i.e., the set of all subsets of $A$. We use $\|\boldsymbol{v}\|_\infty$ to represent the $\ell_{\infty}$ norm of a vector $\boldsymbol{v}=[v_1, \ldots, v_M]$, defined as $\|\boldsymbol{v}\|_{\infty}\coloneqq \max_{m\in[M]}|v_m|$.

% Proof details are provided in the Appendix.
\section{System Model and Preliminaries} \label{sec:ProblemSetup}

%For an integer $n \geq 1$, let $[n] \coloneqq \{1,2,\ldots,n\}$. We consider a BOC problem, which contains a MAB with $M$ arms, where each arm $m \in [M]$ generates $d-$ dimensional i.i.d. samples sequentially according to a multivariate-Gaussian distribution\footnote{The multivariate-Gaussian rewards assumption is mainly for simplicity in the presentation. Our analyses can be extended to any single parameter exponential family with appropriate modifications.} with $d-$ dimensional unknown mean vector $\boldsymbol{\mu}_m$ and the identity matrix of dimension $d\times d$ as the  covariance matrix. We refer to $\boldsymbol{\mu} = [\boldsymbol{\mu}_1, \ldots, \boldsymbol{\mu}_M] \in \mathbb{R}^{d \times M}$ as the collection of mean vectors.
%The arms are grouped into $K$ clusters based on the distance between the means of the arms. Let $[K]$ represent the set of clusters. For any arm $m \in [M]$, which belongs to cluster $k \in [K]$, the cluster index of arm $m$, denoted by $c_m$, is $k$. We denote the cluster index vector by $\mathbf{c} = [c_1, \ldots, c_M]$. The tuple $(\boldsymbol{\mu}, \mathbf{c})$ contains information regarding the probability distribution of arms and how the arms are grouped into $K$ clusters. Hence, each tuple $(\boldsymbol{\mu}, \mathbf{c})$, with $\mathbf{c} \in [K]^M$ and $\boldsymbol{\mu} \in \mathbb{R}^{d \times M}$, defines a clustering problem.
\subsection{Clustering problem setup}
% \textcolor{blue}{
 We consider a BOC problem involving an MAB with $M$ arms, where each arm $m \in [M]$ generates $d$-dimensional i.i.d. samples from an unknown probability distribution. In this work, we consider two classes of distributions: multivariate $\sigma$-sub-Gaussian distributions, parameterized by their mean vectors,  and the single-parameter exponential family of distributions. 
% It can be understood that any discussion for the case of a multivariate sub-Gaussian distribution, dimension of samples $d$ can take any arbitrary integer value, that is, $d\in \mathbb{N}$, whereas $d=1$ for the discussion on a single-parameter exponential family of distribution.  
We use the same notation $\boldsymbol{\mu}_m$ of dimension $d$ to indicate both the mean vector of the multivariate sub-Gaussian distribution and the parameter of the single-parameter exponential family of distribution corresponding to the arm $m$. We call $\boldsymbol{\mu}_m\in\Theta$ as the parameter of the $m^{th}$ arm, where $\Theta\subseteq\mathbb{R}^d$ is the parameter space. Let $\boldsymbol{\mu} = [\boldsymbol{\mu}_1, \ldots, \boldsymbol{\mu}_M] \in \Theta^{M}$ represent the collection of parameters of the $M$ arms, which we refer to as the problem instance. The $M$ arms are grouped into $K$ clusters based on the distances between their parameters using the SLINK clustering algorithm, as illustrated in Fig.~\ref{fig: SLINKdescription}. We assume that the number of clusters $K$ is fixed and known to the learner. Let $[K]$ denote the set of clusters. For any arm $m \in [M]$ in the cluster $k \in [K]$, the cluster index of arm $m$, denoted $c_m$, is $k$. The cluster index vector corresponding to all $M$ arms is $\mathbf{c} = [c_1, \ldots, c_M]$. The tuple $(\boldsymbol{\mu}, \mathbf{c})$, with $\mathbf{c} \in [K]^M$ and $\boldsymbol{\mu} \in \Theta^{M}$, contains information on the probability distribution of arms and how the arms are grouped into $K$ clusters, and  hence defines the clustering problem. The clustering algorithm needs to determine the cluster index vector.
\subsection{Cluster distances}
Let \(d_{i,j}(\boldsymbol{\mu}) \coloneqq \left\| \boldsymbol{\mu}_i - \boldsymbol{\mu}_j \right\|\) denote the distance between arms \(i\) and \(j\). The inter-cluster distance between clusters \(p\) and \(q\) is defined as  
\[
    d(D_p, D_q) \coloneqq \min_{n \in D_p, m \in D_q} d_{m,n}(\boldsymbol{\mu}),
\]
where \(D_k \coloneqq \{ m \mid c_m = k \}\) represents the set of arms in the \(k^{\text{th}}\) cluster. The minimum inter-cluster distance is \(d_{\text{INTER}} \coloneqq \min_{D_p \neq D_q} d(p, q)\).  

% We use $2^{D_k}$ to denote the set of all subsets of $D_k$. 
The intra-cluster neighbor distance for cluster \(k\) is  
\[
    d(D_k) \coloneqq \max_{\substack{P_1 \in 2^{D_k} \setminus \{\emptyset, D_k\} \\ P_2 = D_k \setminus P_1}} \min_{i \in P_1, j \in P_2} d_{i,j}(\boldsymbol{\mu}),
\]
i.e., for any two sets $P_1$ and $P_2$ that partition the $k^{th}$ cluster, we find the minimum distance between those two partitions, and then the maximum over all such possible partitions. Here, $2^{D_k}$ denotes the set of all subsets of $D_k$. The maximum intra-cluster neighbor distance is \(d_{\text{INTRA}} \coloneqq \max_{k \in [K]} d(D_k)\).  

\subsection{Separation assumption for clusters}
Typically, we need some separation between clusters for any algorithm to cluster the arms.
We consider the family of clustering problems defined by the tuple \((\boldsymbol{\mu}, \boldsymbol{c})\) satisfying the condition\footnote{ The maximum intra-cluster distance defined in \cite{wang2019k, wang2020exponentially,sreenivasan2023nonparametric} is greater than or equal to the maximum intra-cluster neighbor distance defined in our paper. Consequently, our clustering condition is less restrictive than the condition assumed in  \cite{wang2019k, wang2020exponentially, sreenivasan2023nonparametric}, which requires the maximum intra-cluster distance to be smaller than the minimum inter-cluster distance. As a result, our framework accommodates a larger class of clustering problems.} \(d_{\text{INTRA}} < d_{\text{INTER}}\). This separation condition of $d_{\text{INTRA}} < d_{\text{INTER}}$ has been assumed in the work on non-parametric linkage-based clustering in \cite{singh2025exponentially}. This condition ensures that the \(M\) arms with parameters \(\boldsymbol{\mu}\) are unambiguously grouped into \(K\) clusters by single-linkage (SLINK) clustering algorithm as discussed in \cite{song2011unique, singh2025exponentially}.

%A brief description of the SLINK clustering algorithm is as follows. Each arm is initially treated as an individual cluster, resulting in $M$ clusters. In the first iteration, the algorithm identifies the pair of clusters with the smallest pairwise minimum distance and merges them, reducing the number of clusters from $M$ to $M-1$. This process continues until the desired number of clusters, $K$, is achieved. Since SLINK iteratively merges the clusters that have the minimum pairwise minimum distance, the minimum inter-cluster distance of the resulting $K$ clusters will always be greater than the maximum distance between any two partitions $P_1, P_2$ in any cluster $k$. Hence, given any collection of mean vectors $\boldsymbol{\mu} \in \mathbb{R}^{d\times M}$, we can get a cluster index vector that satisfies the condition that $d_{INTRA} < d_{INTER}$ through the SLINK algorithm. 
%Let $\mathcal{C}:\mathbb{R}^{d\times M} \rightarrow [K]^M$ be the relation that takes the collection of mean vectors $\boldsymbol{\mu}$ as input, partitions the arms through the SLINK algorithm, and outputs the cluster index vector $\mathcal{C}\left(\boldsymbol{\mu}\right)$. Figure \ref{fig:relation} provides an illustrative example to understand \(\mathcal{C}(\boldsymbol{\mu})\), given a collection of mean vectors \(\boldsymbol{\mu}\).

A brief description of the SLINK clustering algorithm is as follows. Each arm is initially treated as an individual cluster, resulting in $M$ clusters. At each iteration, the pair of clusters with the smallest inter-cluster distance is merged, reducing the number of clusters by one. This process continues until $K$ clusters remain. The working of SLINK clustering algorithm is illustrated with an example in Fig.~\ref{fig: SLINKdescription}.
Since SLINK merges clusters with the smallest inter-cluster distance, the resulting $K$ clusters ensure \(d_{\text{INTRA}} < d_{\text{INTER}}\).

% Figure \ref{fig: SLINKdescription} provides the illustrative example to explain the working of the Single Linkage Clustering Algorithm and illustrates \(\mathcal{C}(\boldsymbol{\mu})\) for a given \(\boldsymbol{\mu}\).
\def\xsize{0.25}
\begin{figure}[htbp]
    \centering

    % === First Grid ===
    \begin{subfigure}[b]{0.24\textwidth}
        \centering
        \begin{tikzpicture}[scale=0.32]
            % Yellow ellipses
            \foreach \x/\y in {-1/-2, -1/-1, 1/1, 2/2, 3/-3, 3.5/-3} {
                \filldraw[fill=yellow, fill opacity=0.5, draw=black, thick] (\x,\y) ellipse (0.35 and 0.35);
            }
            % Axes
            \draw[gray!70] (-4,-4) grid (4,4);
            \draw[thick,->] (-4,0)--(4.2,0);
            \draw[thick,->] (0,-4)--(0,4.2);

            %\labels
            \node[font=\small, right] at (-3,-2) {1};
            \node[font=\small, right] at (-3,-1) {2};
            \node[font=\small, right] at (-0.6,0.7) {3};
            \node[font=\small, right] at (2.1,2.4) {4};
            \node[font=\small, right] at (1, -3) {5};
            \node[font=\small, right] at (2.8,-1.8) {6};
            
            % Red crosses
            \foreach \x/\y in {-1/-2, -1/-1, 1/1, 2/2, 3/-3, 3.5/-3} {
                \draw[red, line width=1pt] (\x-\xsize,\y-\xsize)--(\x+\xsize,\y+\xsize);
                \draw[red, line width=1pt] (\x-\xsize,\y+\xsize)--(\x+\xsize,\y-\xsize);
            }
        \end{tikzpicture}
        \caption{}
    \end{subfigure}
    \hfill
    % === Second Grid ===
    \begin{subfigure}[b]{0.24\textwidth}
        \centering
        \begin{tikzpicture}[scale=0.32]
            % Separate ellipses
            \foreach \x/\y in {-1/-2, -1/-1, 1/1, 2/2} {
                \filldraw[fill=yellow, fill opacity=0.5, draw=black, thick] (\x,\y) ellipse (0.35 and 0.35);
            }
            % Merged ellipse for (4,3) and (4,4)
            \filldraw[fill=yellow, fill opacity=0.5, draw=black, thick] (3.25,-3) ellipse (0.8 and 0.6);
            % Axes
            \draw[gray!70] (-4,-4) grid (4,4);
            \draw[thick,->] (-4,0)--(4.2,0);
            \draw[thick,->] (0,-4)--(0,4.2);
            %\labels
            \node[font=\small, right] at (-3,-2) {1};
            \node[font=\small, right] at (-3,-1) {2};
            \node[font=\small, right] at (-0.6,0.7) {3};
            \node[font=\small, right] at (2.1,2.4) {4};
            \node[font=\small, right] at (1, -3) {5};
            \node[font=\small, right] at (2.8,-1.8) {6};
            % Red crosses
            \foreach \x/\y in {-1/-2, -1/-1, 1/1, 2/2, 3/-3, 3.5/-3} {
                \draw[red, line width=1pt] (\x-\xsize,\y-\xsize)--(\x+\xsize,\y+\xsize);
                \draw[red, line width=1pt] (\x-\xsize,\y+\xsize)--(\x+\xsize,\y-\xsize);
            }
        \end{tikzpicture}
        \caption{}
    \end{subfigure}
    \hfill
    % === Third Grid ===
    \begin{subfigure}[b]{0.24\textwidth}
        \centering
        \begin{tikzpicture}[scale=0.32]
            % Separate ellipses
            \foreach \x/\y in {1/1, 2/2} {
                \filldraw[fill=yellow, fill opacity=0.5, draw=black, thick] (\x,\y) ellipse (0.35 and 0.35);
            }
            \filldraw[fill=yellow, fill opacity=0.5, draw=black, thick] (-1,-1.5) ellipse (0.6 and 1);
            \filldraw[fill=yellow, fill opacity=0.5, draw=black, thick] (3.25,-3) ellipse (0.8 and 0.6);
            % Axes
            \draw[gray!70] (-4,-4) grid (4,4);
            \draw[thick,->] (-4,0)--(4.2,0);
            \draw[thick,->] (0,-4)--(0,4.2);
            %\labels
            \node[font=\small, right] at (-3,-2) {1};
            \node[font=\small, right] at (-3,-1) {2};
            \node[font=\small, right] at (-0.6,0.7) {3};
            \node[font=\small, right] at (2.1,2.4) {4};
            \node[font=\small, right] at (1, -3) {5};
            \node[font=\small, right] at (2.8,-1.8) {6};
            % Red crosses
            \foreach \x/\y in {-1/-2, -1/-1, 1/1, 2/2, 3/-3, 3.5/-3} {
                \draw[red, line width=1pt] (\x-\xsize,\y-\xsize)--(\x+\xsize,\y+\xsize);
                \draw[red, line width=1pt] (\x-\xsize,\y+\xsize)--(\x+\xsize,\y-\xsize);
            }
        \end{tikzpicture}
        \caption{}
    \end{subfigure}
    \hfill
    % === Fourth Grid ===
    \begin{subfigure}[b]{0.24\textwidth}
        \centering
        \begin{tikzpicture}[scale=0.32]
            \filldraw[fill=yellow, fill opacity=0.5, draw=black, thick] (1.5,1.5) ellipse (1 and 1);
            \filldraw[fill=yellow, fill opacity=0.5, draw=black, thick] (-1,-1.5) ellipse (0.6 and 1);
            \filldraw[fill=yellow, fill opacity=0.5, draw=black, thick] (3.25,-3) ellipse (0.8 and 0.6);
            % Axes
            \draw[gray!70] (-4,-4) grid (4,4);
            \draw[thick,->] (-4,0)--(4.2,0);
            \draw[thick,->] (0,-4)--(0,4.2);
            %\labels
            \node[font=\small, right] at (-3,-2) {1};
            \node[font=\small, right] at (-3,-1) {2};
            \node[font=\small, right] at (-0.6,0.7) {3};
            \node[font=\small, right] at (2.1,2.4) {4};
            \node[font=\small, right] at (1, -3) {5};
            \node[font=\small, right] at (2.8,-1.8) {6};
            
            % Red crosses
            \foreach \x/\y in {-1/-2, -1/-1, 1/1, 2/2, 3/-3, 3.5/-3} {
                \draw[red, line width=1pt] (\x-\xsize,\y-\xsize)--(\x+\xsize,\y+\xsize);
                \draw[red, line width=1pt] (\x-\xsize,\y+\xsize)--(\x+\xsize,\y-\xsize);
            }
        \end{tikzpicture}
        \caption{}
    \end{subfigure}
    \caption{Consider a BOC problem with $d=2$, $K=3$, $M=6$, and the parameter vectors $\boldsymbol{\mu} = \begin{bmatrix}
        -1 & -1 & 1 & 2 & 3 & 3.5 \\
        -1 & -2 & 1 & 2 & -3 & -3
    \end{bmatrix}$. SLINK initially assumes each point as a cluster, as shown in (a). Then, the two closest clusters are subsequently merged until the number of clusters reaches $K = 3$, as shown in (b), (c), and (d).}
    % Illustrative example to understand Single Linkage Clustering Algorithm (SLINK) and $\mathcal{C}(\boldsymbol{\mu})$. We have $d=2$, $K=3$, $M=6$, and mean vector $\boldsymbol{\mu} = \begin{bmatrix}
    %     -4 & -2.5 & 3 & 4 & 4 & 4 \\
    %     4 & 2.5 & -3 & -4 & 3 & 4
    % \end{bmatrix}$. Let $i^{th}$ of $\boldsymbol{\mu}$ be the mean vector of the $i^{th}$ arm, for $i \in [6]$. SLINK initially assumes each of the points as a cluster, as in (1). Then the two closest clusters are subsequently merged until the number of clusters $K = 3$ is reached, as in (b), (c), and (d).
    % On using SLINK, arms $1, 2$ will be assigned to cluster $1$; arms $3, 4$ will be assigned to cluster $2$, and arms $5,6$ will be assigned to cluster $3$. Hence, it outputs the cluster index vector, $\mathcal{C}(\boldsymbol{\mu}) = [1, 1, 2, 2, 3, 3]$.
    % }
    \label{fig: SLINKdescription}
\end{figure}

Let \(\mathcal{C}:\Theta^M \to [K]^M\) denote the mapping that takes a collection of parameters \(\boldsymbol{\mu} \in \Theta^M\), partitions the arms using SLINK, and outputs the cluster index vector \(\mathcal{C}(\boldsymbol{\mu})\). 
Given a collection of mean vectors \(\boldsymbol{\mu}\), the cluster index vector does not have to be unique. For the example considered in Fig.~\ref{fig: SLINKdescription}, both \(\boldsymbol{c^{(1)}} = [1, 1, 2, 2, 3, 3]\) and \(\boldsymbol{c^{(2)}} = [2, 2, 1, 1, 3, 3]\) are acceptable cluster index vectors. More precisely, for any two cluster index vectors \(\boldsymbol{c^{(1)}}\) and \(\boldsymbol{c^{(2)}}\), if there exists a permutation \(\sigma\) over \([K]\) such that \(\boldsymbol{c^{(1)}} = \sigma\left( \boldsymbol{c^{(2)}} \right)\), then \(\boldsymbol{c^{(1)}}\) is considered equivalent to \(\boldsymbol{c^{(2)}}\), and we denote this as \(\boldsymbol{c^{(1)}} \sim \boldsymbol{c^{(2)}}\). Here, \(\sigma(\boldsymbol{c}) \coloneqq \left[ \sigma(c_1), \ldots, \sigma(c_n) \right]\).   Note that given the collection of parameters \(\boldsymbol{\mu}\), the cluster index vector \(\boldsymbol{c}\) cannot be arbitrary; instead, it entirely depends on \(\boldsymbol{\mu}\). Hence, for the given number of clusters $K$, the problem instance is specified solely by the \(d\times M\)-dimensional matrix \(\boldsymbol{\mu} \in \Theta^M\), which contains the parameters of the \(M\) arms, with \(\boldsymbol{c} \sim \mathcal{C}(\boldsymbol{\mu})\).

\subsection{Clustering algorithm and performance metric}
Given the collection of parameters \(\boldsymbol{\mu}\), the cluster index vector can be determined using the relation \(\mathcal{C}\). However, in our problem,  \(\boldsymbol{\mu}\) is unknown. Therefore, in order to identify the cluster index vector \(\mathcal{C}(\boldsymbol{\mu})\), we observe samples generated from the arms. We consider a bandit information setup in which one arm is adaptively selected at each time step \(t\) and a sample is observed from the selected arm. Given \(\delta \in (0,1)\), the goal is to group the arms with the least expected number of arm selections (expected stopping time), while ensuring that the probability of error\footnote{Since we study a pure exploration problem under fixed confidence setting, we have the error probability metric and not a regret-based metric.} remains below \(\delta\).

%Given the collection of mean vectors \(\boldsymbol{\mu}\), we can determine the cluster index vector using the relation \(\mathcal{C}\). But, in our problem, the actual values of the means $\boldsymbol{\mu}$ are unknown. Therefore, in order to identify the cluster index vector \(\mathcal{C}(\boldsymbol{\mu})\), we have to observe the samples generated from the arms. In our work, we consider a bandit information setup, in which we select an arm at each time step $t$ and obtain a sample from that selected arm. Given $\delta \in (0,1)$, the goal is to cluster the arms with the least expected number of arm selections (a.k.a. expected stopping time), while keeping the error probability below $\delta$. %We call an algorithm designed to cluster the arms from active samples with bandit information as bandit online clustering (BOC) algorithm.%  and it prescribes the following:

%\begin{itemize}
    %\item A \textit{sampling rule} that specifies which arm to sample among the \(M\) arms at time \(t\).
    %\item A \textit{stopping rule} that determines when to cease further sampling of arms.
    %\item A \textit{declaration rule} that specifies the estimate of the cluster index vector \(\hat{\boldsymbol{c}} \in [K]^M\) to be output.
%\end{itemize}

We denote an algorithm designed to cluster arms by \(\pi\) and its sample complexity by \(\tau_\delta(\pi)\), which represents the total number of arm pulls under the algorithm \(\pi\) for a fixed confidence level (error probability) \(\delta\). We use $\hat{\boldsymbol{c}}(\pi)$ to denote the cluster index vector estimated by the algorithm $\pi$ at the stopping time. 

\begin{definition} \label{def:deltaPC}
    For \(\delta \in (0, 1)\), an algorithm \(\pi\) is said to be  \(\delta\)-PC (Probably Correct) if  for all \(\boldsymbol{\mu} \in \Theta^M\), we have
    \begin{equation}
        \mathbb{P}_{\boldsymbol{\mu}}^\pi \left( \hat{\boldsymbol{c}}(\pi) \sim \mathcal{C}(\boldsymbol{\mu}) \right) \geq 1 - \delta.
    \end{equation}
    Here, \(\mathbb{P}_{\boldsymbol{\mu}}^\pi(\cdot)\) denotes the probability measure under algorithm \(\pi\) and the problem instance \(\boldsymbol{\mu}\).
\end{definition}

Let \(\mathbb{E}_{\boldsymbol{\mu}}^{\pi}\left[ \tau_\delta(\pi) \right]\) denote the expected sample complexity of algorithm \(\pi\) for the problem instance \(\boldsymbol{\mu}\). Our goal is to design a \(\delta\)-PC algorithm \(\pi\) that minimizes the expected sample complexity \(\mathbb{E}_{\boldsymbol{\mu}}^{\pi}\left[ \tau_\delta(\pi) \right]\).

%We denote an algorithm designed to cluster the arms by \(\pi\) and its sample complexity by \(\tau_\delta(\pi)\), which represents the total number of arm pulls across all arms under the algorithm \(\pi\) for a fixed confidence level (error probability) of \(\delta\).

%\begin{definition} \label{def:deltaPC}
   % For \(\delta \in (0, 1)\), an algorithm \(\pi\) is said to be  \(\delta\)-PC (Probably Correct) if  for all \(\boldsymbol{\mu} \in \mathbb{R}^{d \times M}\), we have
   % \begin{equation}
       % \mathbb{P}_{\boldsymbol{\mu}}^\pi \left( \hat{\boldsymbol{c}} \sim \mathcal{C}(\boldsymbol{\mu}) \right) \geq 1 - \delta.
    %\end{equation}
    %Here, \(\mathbb{P}_{\boldsymbol{\mu}}^\pi(\cdot)\) denotes the probability measure under algorithm \(\pi\) and the problem instance \(\boldsymbol{\mu}\).
%\end{definition}

%Let \(\mathbb{E}_{\boldsymbol{\mu}}^{\pi}\left[ \tau_\delta(\pi) \right]\) denotes the expected sample complexity under algorithm \(\pi\) and the problem instance \(\boldsymbol{\mu}\). Our objective is to design a \(\delta\)-PC  algorithm \(\pi\) for finding the cluster index vector while minimizing the expected sample complexity \(\mathbb{E}_{\boldsymbol{\mu}}^{\pi}\left[ \tau_\delta(\pi) \right]\). %A formal description of the \(\delta\)-PC algorithm is presented below.
%Note that any \(\delta\)-PC algorithm \(\pi\) must declare the correct output with probability at least \(1 - \delta\) for all problem instances \(\boldsymbol{\mu}\).

% as \(\pi\) is designed to be oblivious to the specific details of the underlying problem instance \(\boldsymbol{\mu}\).
\begin{remark}
 %    In this work, we group the $M$ arms, each with $d$-dimensional samples, into $K$ clusters using the SLINK clustering algorithm. In the special case where $d = 1$, the SLINK clustering algorithm reduces to identifying the largest $K-1$ gaps between the means, which is known as the MaxGap identification problem in \cite{katariya2019maxgap}.
 % The MaxGap algorithms in \cite{katariya2019maxgap} are designed for the specific case where the number of clusters $K=2$.

In this work, we group the $M$ arms, each associated with $d$-dimensional samples, into $K$ clusters using the SLINK clustering algorithm. In the special case where $d = 1$, the SLINK algorithm reduces to identifying the largest $K-1$ gaps between the means. In the further special case where $d = 1$ and $K = 2$, our BOC problem reduces to the MaxGap identification problem studied in \cite{katariya2019maxgap}, where the objective is to identify the largest gap between the means.
\end{remark}

\section{Lower Bound}\label{sec:Lowerbound}
%In this section, we present an instance-dependent lower bound on the expected sample complexity. %We also discuss some of the continuity properties of the expressions involved in the lower bound.  
%For any problem instance $\boldsymbol{\mu}$, we define the alternative space of $\boldsymbol{\mu}$ as $\text{Alt}(\boldsymbol{\mu}) \coloneqq \left\{ \boldsymbol{\lambda} \in \mathbb{R}^{d\times M} \mid \mathcal{C}(\boldsymbol{\lambda}) \nsim \mathcal{C}(\boldsymbol{\mu}) \right\}$. We call any problem instance $\boldsymbol{\lambda}$ in $\text{Alt}(\boldsymbol{\mu})$ an alternative instance. We introduce the probability simplex $\mathcal{P}_M$, defined as $\mathcal{P}_M \coloneqq \left\{ \boldsymbol{w} \in \mathbb{R}_{+}^M \mid w_1 + \ldots + w_M = 1 \right\}$. Any point $\boldsymbol{w} = [w_1, \ldots, w_M]$ in the probability simplex can be understood as a probability distribution on the arm set $[M]$. Let $d_{kl}(a, b)$ denote the KL-divergence between Bernoulli distributions with means $a$ and $b$. 
%The following theorem gives a lower bound on the expected sample complexity of any $\delta-$PC algorithm.

In this section, we present an algorithm-independent, problem instance-dependent lower bound on the expected sample complexity. For any problem instance \(\boldsymbol{\mu}\), the alternative space is defined as \(\text{Alt}(\boldsymbol{\mu}) \coloneqq \left\{ \boldsymbol{\lambda} \in \Theta^M \mid \mathcal{C}(\boldsymbol{\lambda}) \nsim \mathcal{C}(\boldsymbol{\mu}) \right\}\), and any \(\boldsymbol{\lambda} \in \text{Alt}(\boldsymbol{\mu})\) is called an alternative problem instance. The following theorem provides a lower bound on the expected sample complexity of any \(\delta\)-PC algorithm.

\begin{theorem} \label{Theorem:lowerbound}
    Let $\delta \in (0, 1)$. For any $\delta-$PC algorithm $\pi$ and any problem instance $\boldsymbol{\mu} \in \Theta^{M}$, the expected sample complexity is lower bounded as
    \begin{equation}
        \mathbb{E}_{\boldsymbol{\mu}}^{\pi}\left[ \tau_\delta(\pi) \right] \geq (1-2\delta)\log\left(\frac{1-\delta}{\delta}\right) T^*(\boldsymbol{\mu}),
    \end{equation}
    where
    \begin{equation}\label{eq:T_star_mu}
        T^*(\boldsymbol{\mu}) \coloneqq \left[\sup_{\boldsymbol{w} \in \mathcal{P}_M} \inf_{\boldsymbol{\lambda} \in \mathrm{Alt}(\boldsymbol{\mu})}  \sum_{m=1}^M w_m  d_{\text{KL}}\left(\boldsymbol{\mu}_m, \boldsymbol{\lambda}_m\right)\right]^{-1},
    \end{equation}
    with $\boldsymbol{w} = [w_1, \ldots, w_M]$ representing a probability distribution over the arm set $[M]$ and belonging to the $M$-dimensional probability simplex $\mathcal{P}_M$, and $d_{\mathrm{KL}}(\boldsymbol{\mu}_1, \boldsymbol{\mu}_2)$ denoting the KL divergence between the probability distributions parameterized by $\boldsymbol{\mu}_1 \in \Theta$ and $\boldsymbol{\mu}_2 \in \Theta$. 
    Furthermore, 
    \begin{equation}
        \liminf_{\delta \rightarrow 0} \frac{\mathbb{E}_{\boldsymbol{\mu}}^{\pi}\left[ \tau_\delta(\pi) \right]}{\log\left( \frac{1}{\delta} \right)} \geq T^*(\boldsymbol{\mu}).
    \end{equation}
\end{theorem}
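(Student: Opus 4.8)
The plan is to adapt the change-of-measure (transportation) lower bound of Garivier and Kaufmann \cite{garivier2016optimal} to the clustering answer. For an arm $m\in[M]$, let $N_m(t)$ denote the number of pulls of arm $m$ up to time $t$. The cornerstone is the data-processing / transportation inequality: for any $\boldsymbol{\lambda}\in\mathbb{R}^{d\times M}$ and any event $\mathcal{E}$ measurable with respect to the history up to the stopping time $\tau_\delta(\pi)$,
\begin{equation}
\sum_{m=1}^M \mathbb{E}_{\boldsymbol{\mu}}^{\pi}[N_m(\tau_\delta(\pi))]\,\mathrm{KL}(\boldsymbol{\mu}_m\|\boldsymbol{\lambda}_m)\;\geq\; d_{kl}\!\left(\mathbb{P}_{\boldsymbol{\mu}}^{\pi}(\mathcal{E}),\,\mathbb{P}_{\boldsymbol{\lambda}}^{\pi}(\mathcal{E})\right),
\end{equation}
which follows from Wald's identity applied arm-by-arm to the accumulated log-likelihood ratio, combined with the data-processing inequality for KL divergence. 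Since the arms are unit-covariance Gaussians, $\mathrm{KL}(\boldsymbol{\mu}_m\|\boldsymbol{\lambda}_m)=\tfrac12\|\boldsymbol{\mu}_m-\boldsymbol{\lambda}_m\|^2$, so the left-hand side equals $\tfrac12\sum_m \mathbb{E}_{\boldsymbol{\mu}}^{\pi}[N_m(\tau_\delta(\pi))]\,\|\boldsymbol{\mu}_m-\boldsymbol{\lambda}_m\|^2$.

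Next I would instantiate $\mathcal{E}=\{\hat{\boldsymbol{c}}\sim\mathcal{C}(\boldsymbol{\mu})\}$ and restrict $\boldsymbol{\lambda}$ to $\mathrm{Alt}(\boldsymbol{\mu})$. Because $\pi$ is $\delta$-PC, under $\boldsymbol{\mu}$ we have $\mathbb{P}_{\boldsymbol{\mu}}^{\pi}(\mathcal{E})\geq 1-\delta$. For any $\boldsymbol{\lambda}\in\mathrm{Alt}(\boldsymbol{\mu})$ the equivalence class of $\mathcal{C}(\boldsymbol{\lambda})$ is disjoint from that of $\mathcal{C}(\boldsymbol{\mu})$, so $\mathcal{E}$ is an error event under $\boldsymbol{\lambda}$, and the $\delta$-PC guarantee forces $\mathbb{P}_{\boldsymbol{\lambda}}^{\pi}(\mathcal{E})\leq\delta$. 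Monotonicity of $d_{kl}(x,y)$ (increasing in $x$ and decreasing in $y$ on the region $x>y$) together with the identity $d_{kl}(1-\delta,\delta)=d_{kl}(\delta,1-\delta)$ then yields $d_{kl}(\mathbb{P}_{\boldsymbol{\mu}}^{\pi}(\mathcal{E}),\mathbb{P}_{\boldsymbol{\lambda}}^{\pi}(\mathcal{E}))\geq d_{kl}(\delta,1-\delta)$, so that for every $\boldsymbol{\lambda}\in\mathrm{Alt}(\boldsymbol{\mu})$,
\begin{equation}
\tfrac12\sum_{m=1}^M \mathbb{E}_{\boldsymbol{\mu}}^{\pi}[N_m(\tau_\delta(\pi))]\,\|\boldsymbol{\mu}_m-\boldsymbol{\lambda}_m\|^2 \;\geq\; d_{kl}(\delta,1-\delta).
\end{equation}

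To reach the stated form I would normalize the pull counts. Setting $w_m \coloneqq \mathbb{E}_{\boldsymbol{\mu}}^{\pi}[N_m(\tau_\delta(\pi))]/\mathbb{E}_{\boldsymbol{\mu}}^{\pi}[\tau_\delta(\pi)]$ gives $\boldsymbol{w}\in\mathcal{P}_M$, since $\sum_m N_m(\tau_\delta(\pi))=\tau_\delta(\pi)$. Dividing through by $\mathbb{E}_{\boldsymbol{\mu}}^{\pi}[\tau_\delta(\pi)]$ produces, for every $\boldsymbol{\lambda}\in\mathrm{Alt}(\boldsymbol{\mu})$, the bound $\tfrac12\sum_m w_m\|\boldsymbol{\mu}_m-\boldsymbol{\lambda}_m\|^2 \geq d_{kl}(\delta,1-\delta)/\mathbb{E}_{\boldsymbol{\mu}}^{\pi}[\tau_\delta(\pi)]$. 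Taking the infimum over $\boldsymbol{\lambda}\in\mathrm{Alt}(\boldsymbol{\mu})$ on the left and then bounding it above by the supremum over $\boldsymbol{w}\in\mathcal{P}_M$ reproduces exactly $T^*(\boldsymbol{\mu})^{-1}$ from \eqref{eq:T_star_mu}; rearranging gives $\mathbb{E}_{\boldsymbol{\mu}}^{\pi}[\tau_\delta(\pi)]\geq d_{kl}(\delta,1-\delta)\,T^*(\boldsymbol{\mu})$. The asymptotic claim then follows from the elementary fact $\liminf_{\delta\to0} d_{kl}(\delta,1-\delta)/\log(1/\delta)=1$, upon dividing by $\log(1/\delta)$ and taking $\liminf$.

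I expect the main obstacle to be a careful justification of the transportation inequality under the data-dependent stopping time $\tau_\delta(\pi)$: one must argue $\tau_\delta(\pi)$ is a genuine stopping time with $\mathbb{E}_{\boldsymbol{\mu}}^{\pi}[\tau_\delta(\pi)]<\infty$ (otherwise the bound is vacuous), apply Wald's identity to the stopped log-likelihood ratio, and invoke data processing over the $\sigma$-algebra generated by the stopped history. The sup-inf recognition and the asymptotics of $d_{kl}(\delta,1-\delta)$ are then routine. A secondary point to verify is that $\mathrm{Alt}(\boldsymbol{\mu})$ is nonempty and that the inner infimum is finite; this holds because, under the separation condition $d_{\text{INTRA}}<d_{\text{INTER}}$, one can always perturb the means so as to merge or split a cluster under SLINK, yielding a valid alternative at finite transportation cost.
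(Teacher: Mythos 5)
Your proposal is correct and follows essentially the same route as the paper: the paper invokes the transportation inequality of Lemma 1 in \cite{kaufmann2016complexity} with the event $\{\hat{\boldsymbol{c}} \sim \mathcal{C}(\boldsymbol{\mu})\}$ (whose probability is at least $1-\delta$ under $\boldsymbol{\mu}$ and at most $\delta$ under any $\boldsymbol{\lambda} \in \mathrm{Alt}(\boldsymbol{\mu})$ by the $\delta$-PC property), takes the infimum over $\boldsymbol{\lambda}$, normalizes by $\mathbb{E}_{\boldsymbol{\mu}}^{\pi}[\tau_\delta]$ so that the expected pull fractions lie in $\mathcal{P}_M$, and bounds by the supremum over $\boldsymbol{w}$ to obtain $T^*(\boldsymbol{\mu})^{-1}$, with the asymptotic claim following from $d_{kl}(\delta,1-\delta)/\log(1/\delta) \to 1$. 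Your re-derivation of the transportation inequality via Wald's identity and data processing is just the standard proof of that cited lemma, so it does not constitute a different approach.
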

\begin{proof}
    See Appendix \ref{appsec: lowerbound}
\end{proof}

The proof of this theorem uses the transportation inequality presented in Lemma 1 of \cite{kaufmann2016complexity} and proceeds analogously to the proof of Theorem 1 in \cite{yang2024optimal}. 
In the lower bound expression, $\boldsymbol{w}$ can be understood as arm pull proportions, i.e., the fraction of times each arm is sampled. The solution to the $\sup-\inf$ problem in the lower bound identifies the optimal arm pull proportions to distinguish the true instance $\boldsymbol{\mu}$ from the most confusing alternative instance.
 The sampling rule design of our algorithm uses the solution to the $\sup-\inf$ problem in the lower bound expression. Let $\psi(\boldsymbol{w}, \boldsymbol{\mu})$ denote the inner infimum in \eqref{eq:T_star_mu}, i.e.,
\begin{equation} \label{eq:inmin}
    \psi(\boldsymbol{w}, \boldsymbol{\mu}) \coloneqq \inf_{\boldsymbol{\lambda} \in \text{Alt}(\boldsymbol{\mu})}  \sum_{m=1}^M w_m d_{\text{KL}}\left(\boldsymbol{\mu}_m, \boldsymbol{\lambda}_m\right).
\end{equation}
% Note that $\psi(\boldsymbol{w}, \boldsymbol{\mu})$ is continuous in its domain, and the following lemma formally claims this.

\begin{lemma} \label{proposition:innermincont}
    $\psi(\boldsymbol{w}, \boldsymbol{\mu})$ is continuous in its domain $\mathcal{P}_M \times \Theta^{M}$.
\end{lemma}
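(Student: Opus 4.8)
The plan is to prove continuity by trading the single $\boldsymbol{\mu}$-dependent feasible region $\mathrm{Alt}(\boldsymbol{\mu})$ for finitely many $\boldsymbol{\mu}$-\emph{independent} ones. Since the objective $g(\boldsymbol{\lambda};\boldsymbol{w},\boldsymbol{\mu})\coloneqq\tfrac12\sum_{m=1}^M w_m\|\boldsymbol{\lambda}_m-\boldsymbol{\mu}_m\|^2$ is jointly continuous, the only source of discontinuity is that $\mathrm{Alt}(\boldsymbol{\mu})$ jumps as $\boldsymbol{\mu}$ varies, because $\mathcal{C}(\boldsymbol{\mu})$ can change. To tame this, let $\mathcal{Q}$ range over the finite collection of partitions of $[M]$ into $K$ nonempty groups, set $\Lambda_{\mathcal{Q}}\coloneqq\{\boldsymbol{\lambda}:\mathcal{C}(\boldsymbol{\lambda})\text{ induces partition }\mathcal{Q}\}$, and define $\phi_{\mathcal{Q}}(\boldsymbol{w},\boldsymbol{\mu})\coloneqq\inf_{\boldsymbol{\lambda}\in\overline{\Lambda_{\mathcal{Q}}}}g(\boldsymbol{\lambda};\boldsymbol{w},\boldsymbol{\mu})$. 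Because $g$ is continuous, the infimum over $\Lambda_{\mathcal{Q}}$ equals the infimum over its closure, and writing $\mathrm{part}(\boldsymbol{\mu})$ for the partition induced by $\mathcal{C}(\boldsymbol{\mu})$ we obtain $\psi(\boldsymbol{w},\boldsymbol{\mu})=\min_{\mathcal{Q}\neq \mathrm{part}(\boldsymbol{\mu})}\phi_{\mathcal{Q}}(\boldsymbol{w},\boldsymbol{\mu})$. The crucial gain is that each feasible set $\overline{\Lambda_{\mathcal{Q}}}$ no longer depends on $\boldsymbol{\mu}$.

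The first step is to show each $\phi_{\mathcal{Q}}$ is jointly continuous on $\mathcal{P}_M\times\mathbb{R}^{d\times M}$. Upper semicontinuity is immediate: as the pointwise infimum of the continuous maps $(\boldsymbol{w},\boldsymbol{\mu})\mapsto g(\boldsymbol{\lambda};\boldsymbol{w},\boldsymbol{\mu})$ indexed by $\boldsymbol{\lambda}\in\overline{\Lambda_{\mathcal{Q}}}$, $\phi_{\mathcal{Q}}$ is upper semicontinuous. For lower semicontinuity I would take $(\boldsymbol{w}^{(n)},\boldsymbol{\mu}^{(n)})\to(\boldsymbol{w}_0,\boldsymbol{\mu}_0)$ along which $\phi_{\mathcal{Q}}$ converges, with near-minimizers $\boldsymbol{\lambda}^{(n)}\in\overline{\Lambda_{\mathcal{Q}}}$. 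For each coordinate $m$ with $w_{0,m}>0$, the estimate $\tfrac12 w^{(n)}_m\|\boldsymbol{\lambda}^{(n)}_m-\boldsymbol{\mu}^{(n)}_m\|^2\le \phi_{\mathcal{Q}}(\boldsymbol{w}^{(n)},\boldsymbol{\mu}^{(n)})+o(1)$ keeps $\boldsymbol{\lambda}^{(n)}_m$ bounded, while coordinates with $w_{0,m}=0$ may run off to infinity but contribute nothing to $g(\cdot;\boldsymbol{w}_0,\boldsymbol{\mu}_0)$; comparing the two objectives term by term then yields $\phi_{\mathcal{Q}}(\boldsymbol{w}_0,\boldsymbol{\mu}_0)\le g(\boldsymbol{\lambda}^{(n)};\boldsymbol{w}_0,\boldsymbol{\mu}_0)\le g(\boldsymbol{\lambda}^{(n)};\boldsymbol{w}^{(n)},\boldsymbol{\mu}^{(n)})+o(1)$, and passing to the limit gives the claim. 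This argument deliberately sidesteps the failure of coercivity at the boundary of the simplex, which I expect to be the main technical nuisance in this step.

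With each $\phi_{\mathcal{Q}}$ continuous and the family finite, the remaining obstacle is the $\boldsymbol{\mu}$-dependence of the index set $\{\mathcal{Q}\neq\mathrm{part}(\boldsymbol{\mu})\}$, which I would control by splitting $\mathbb{R}^{d\times M}$ into two cases. At any $\boldsymbol{\mu}_0$ satisfying the strict separation $d_{\mathrm{INTRA}}<d_{\mathrm{INTER}}$, the SLINK partition is governed by finitely many strict inequalities among the pairwise distances, so $\mathrm{part}(\boldsymbol{\mu})$ is constant on a neighborhood $U$ of $\boldsymbol{\mu}_0$; on $U$, $\psi$ is a minimum over a \emph{fixed} finite family of continuous functions and is therefore continuous. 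At any remaining $\boldsymbol{\mu}_0$ (a ``tie'' point, where $\mathrm{part}$ is not locally constant) there exist two distinct partitions $\mathcal{Q}_0,\mathcal{Q}'$ with $\boldsymbol{\mu}_0\in\overline{\Lambda_{\mathcal{Q}_0}}\cap\overline{\Lambda_{\mathcal{Q}'}}$, whence $\phi_{\mathcal{Q}_0}(\boldsymbol{w}_0,\boldsymbol{\mu}_0)=\phi_{\mathcal{Q}'}(\boldsymbol{w}_0,\boldsymbol{\mu}_0)=0$ and so $\psi(\boldsymbol{w}_0,\boldsymbol{\mu}_0)=0$. For $\boldsymbol{\mu}$ near $\boldsymbol{\mu}_0$, at least one of $\mathcal{Q}_0,\mathcal{Q}'$ differs from $\mathrm{part}(\boldsymbol{\mu})$ and hence stays an admissible alternative, so $0\le\psi(\boldsymbol{w},\boldsymbol{\mu})\le\max\{\phi_{\mathcal{Q}_0}(\boldsymbol{w},\boldsymbol{\mu}),\phi_{\mathcal{Q}'}(\boldsymbol{w},\boldsymbol{\mu})\}\to0$ by continuity of the $\phi_{\mathcal{Q}}$, giving continuity at $\boldsymbol{\mu}_0$.

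Combining the two cases establishes continuity of $\psi$ on the whole domain. The step I expect to require the most care is the verification that, under strict separation, the SLINK cluster assignment is locally constant, i.e.\ that it is determined by strict (polynomial) inequalities in $\boldsymbol{\mu}$ stable under small perturbations; this is what makes the index set of the outer minimum locally fixed and underpins the whole reduction.
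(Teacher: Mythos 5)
Your proposal is correct, but it takes a genuinely different route from the paper's proof. The paper (following Proposition~3 of \cite{prabhu2022sequential}) argues sequentially: for an arbitrary sequence $(\boldsymbol{w}(n),\boldsymbol{\mu}(n))\to(\boldsymbol{w},\boldsymbol{\mu})$ it first shows that $\mathrm{Alt}(\boldsymbol{\mu})^C$ is open (a finite intersection of finite unions of open sets), so that $\mathrm{Alt}(\boldsymbol{\mu}(n))=\mathrm{Alt}(\boldsymbol{\mu})$ for all large $n$, and then runs a two-sided $\epsilon$-argument with near-minimizers over this single, eventually frozen feasible set to get matching $\limsup$ and $\liminf$ bounds. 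You instead decompose the alternative space over the finitely many unlabeled $K$-partitions $\mathcal{Q}$, prove each value function $\phi_{\mathcal{Q}}$ (whose feasible set $\overline{\Lambda_{\mathcal{Q}}}$ is $\boldsymbol{\mu}$-independent) jointly continuous via upper semicontinuity plus a coordinate-wise lower-semicontinuity argument, and recover $\psi$ as a pointwise minimum, splitting on whether the SLINK partition is locally constant; your decomposition is close in spirit to the paper's own Lemma~\ref{Lemma:finiteconvex}, which the paper uses only for computation, not for continuity. Your route buys two things. First, your treatment of the simplex boundary is more careful than the paper's: in the $\liminf$ step the paper asserts that the near-minimizers $\boldsymbol{\lambda}(n)$ form a bounded sequence ``otherwise the right side becomes $\infty$,'' which is not justified when some weight $w_m(n)\to 0$ — precisely the non-coercive case you isolate by letting coordinates with $w_{0,m}=0$ escape to infinity while comparing objectives only on coordinates with $w_{0,m}>0$. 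Second, your tie-point case establishes continuity at instances where strict separation fails (there $\psi=0$, sandwiched by $\max\{\phi_{\mathcal{Q}_0},\phi_{\mathcal{Q}'}\}\to 0$), whereas the paper's argument needs $\boldsymbol{\mu}\in\mathrm{Alt}(\boldsymbol{\mu})^C$, i.e.\ $d_{\mathrm{INTRA}}<d_{\mathrm{INTER}}$ at $\boldsymbol{\mu}$ itself; since the lemma is stated on all of $\mathcal{P}_M\times\mathbb{R}^{d\times M}$, your proof covers the stated domain while the paper's covers only instances satisfying the standing separation assumption. What the paper's approach buys in exchange is brevity: freezing $\mathrm{Alt}(\boldsymbol{\mu})$ at once avoids both the partition bookkeeping and the local-constancy verification for SLINK — though the latter step of yours is sound, resting on the same fact the paper invokes, namely that membership of a pair $(\boldsymbol{\lambda},\mathcal{Q})$ in the strict-separation regime is cut out by finitely many strict inequalities in the pairwise distances and is therefore stable under small perturbations.
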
 
\begin{proof}
    See Appendix \ref{appsec: innermincont}
\end{proof}
The proof of this Lemma proceeds analogously to the proof of Proposition 3 of \cite{prabhu2022sequential}.
Since $\psi(\boldsymbol{w}, \boldsymbol{\mu})$ is continuous and the probability simplex $\mathcal{P}_M$ is compact, we can replace $\sup$ with $\max$ in the expression of $T^*(\boldsymbol{\mu})$ in \eqref{eq:T_star_mu}. That is,
\begin{equation}
    T^*(\boldsymbol{\mu})^{-1} = \max_{\boldsymbol{w} \in \mathcal{P}_M} \psi(\boldsymbol{w}, \boldsymbol{\mu}).
\end{equation}

Note that the solution $\boldsymbol{w}$ at which the above optimization problem is maximized need not be unique. Hence, we call the set of optimal solutions $\mathcal{S}^*(\boldsymbol{\mu})$ and define it as 
\begin{equation} \label{eq:cstarmu}
    \mathcal{S}^*(\boldsymbol{\mu}) = \argmax_{w \in \mathcal{P}_M} \psi(\boldsymbol{w}, \boldsymbol{\mu}).
\end{equation}
From Berge's maximum Theorem \cite{berge1877topological, sundaram1996first}, we have $T^*(\boldsymbol{\mu})^{-1}$ is continuous on $\boldsymbol{\mu}$ and the correspondence $\mathcal{S}^*(\boldsymbol{\mu})$ is  upper hemicontinuous on $\boldsymbol{\mu}$. In addition, for all $\boldsymbol{\mu}$, the set $\mathcal{S}^*(\boldsymbol{\mu})$ is convex, compact, and non-empty.

\begin{remark} \label{rem:dtboc}
    %{\color{red}Also, write a remark on the non-uniqueness of the optimal solution. The next remark can be written as normal text.}

    Existing literature on BOC problems and hypothesis testing problems often consider scenarios where the optimal solution in \eqref{eq:cstarmu} is either unique or the correspondence $\mathcal{S}^*(\boldsymbol{\mu})$ admits a continuous selection. Uniqueness, along with the upper hemicontinuity property from \cite{berge1877topological, sundaram1996first}, ensures the existence of a continuous selection. In such cases, D-tracking or C-tracking rules from \cite{garivier2016optimal} can be used. For example, in \cite{yang2024optimal}, the optimal solution to \eqref{eq:cstarmu} is unique and the D-tracking rule was adopted. In the hypothesis testing problems considered in \cite{deshmukh2021sequential} uniqueness was assumed, and in \cite{prabhu2022sequential}, the existence of a continuous selection was assumed. 
    % \textcolor{red}{This allows them to design an optimal algorithm matching the lower bound as $\delta \rightarrow 0$.} 
    In our work, we neither have any conclusive result to show that a continuous selection exists nor are we assuming it. To address this, we adopt the average tracking rule for sampling. 
    % \textcolor{red}{However, this relaxation results in an additional factor of 2 in the sample complexity of the proposed algorithm (Theorems \ref{Theorem3:AlmostSureOptimal}, \ref{Theorem4:AsymptoticOptimal})}. 
\end{remark}

We use the expressions in \eqref{eq:inmin} and \eqref{eq:cstarmu} to formulate our stopping and sampling rules, respectively. Therefore, it is beneficial to simplify the inner $\inf$ problem for implementation purposes. The details of the simplification of the inner $\inf$ problem along with its complexity analysis have been discussed in section \ref{sec: compcomplx}.
% First, we write the inner $\inf$ problem \eqref{eq:inmin} as  the finite minimization of Quadratic Constrained Quadratic Program (QCQP) (Section \ref{appsec:innerminsimp} in Appendix). \textcolor{red}{Then, we solve each of QCQP using QCQP Algorithm proposed in \cite{lu2011kkt}. 
% For the special case of $d=1$, we simplified the inner $\inf$ problem as the minimum of a finite number of closed-form expressions. We skip the details for the special case of $d=1$.}
\section{Average Tracking Bandit Online Clustering (ATBOC) Algorithm and its performance } \label{sec:AvgTrackingOC}
To solve the BOC problem described in Section~\ref{sec:ProblemSetup}, we propose the Average Tracking Bandit Online Clustering (ATBOC) algorithm. Depending on the distribution of the arms, we consider three variants of ATBOC: ATBOC-Gauss, for multivariate Gaussian-distributed arms; ATBOC-subGauss, for multivariate sub-Gaussian arms other than Gaussian; and ATBOC-1pExp, for arms following a single-parameter exponential family of distributions.
From an implementation perspective, ATBOC-Gauss and ATBOC-subGauss are identical. However, when the arms are Gaussian distributed, the algorithm is {\em asymptotically order-optimal}. Hence, for clarity of exposition, we refer to the algorithm as ATBOC-Gauss when the arms are Gaussian distributed. In contrast, ATBOC-1pExp differs from ATBOC-Gauss/ATBOC-subGauss in its stopping threshold and is, as a consequence, {\em asymptotically optimal}.
Nevertheless, all three variants of ATBOC share the same algorithmic framework, and their pseudocode is presented in Algorithm~\ref{Algo:AvgTrackingOC}.

The ATBOC algorithm operates in time steps $t \in \mathbb{N}$. Let $N_m(t)$ denote the number of times that arm $m$ is sampled up to time $t$, and let $\boldsymbol{N}(t) \coloneqq [N_1(t), \ldots, N_M(t)]$. 
Let $\boldsymbol{X}_m(t)$ be the sample generated from the $m^{\text{th}}$ arm at time $t$.
Denote the arm sampled at time $t$ by $A_t$.
The estimated parameter of arms at time $t$ is $\boldsymbol{\hat{\mu}}(t) \coloneqq [\hat{\boldsymbol{\mu}}_1(t), \ldots, \hat{\boldsymbol{\mu}}_M(t)]$, where $\hat{\boldsymbol{\mu}}_m(t)$ represents the estimated parameter of arm $m$ at time $t$, and is given as follows. 
\begin{equation} \label{eq: paraestimate}
    \hat{\boldsymbol{\mu}}_m(t) = \begin{cases}
        \frac{1}{N_m(t)}\sum_{s\in[t]:A_s=m} \boldsymbol{X}_m(s) &\text{ for ATBOC-Gauss and ATBOC-subGauss} \\
        \dot{A}^{-1}\left(\frac{1}{N_m(t)}\sum_{s\in[t]:A_s=m}T(\boldsymbol{X}_m(s))\right) &\text{ for ATBOC-1pExp}.
    \end{cases}
\end{equation}
Here, $\dot{A}^{-1}(\cdot)$ represents the inverse of the first derivative of the log partition function, and $T(\cdot)$ is the sufficient statistic in the canonical form of the single-parameter exponential family of distributions (Section 1.5 in \cite{lehmann1998theory}).
To estimate parameters, we use the sample mean for sub-Gaussian arms (ATBOC-Gauss/ATBOC-subGauss) and the maximum likelihood estimate for a single-parameter exponential family distribution (ATBOC-1pExp).
We use $Z(t)$ and $\beta(\delta, t)$ to denote the test statistics and the stopping rule threshold at time $t$, respectively, where $\delta$ is the probability of error.
% Define $Z(t) \coloneqq t\psi\left(\frac{\boldsymbol{N}(t)}{t}, \boldsymbol{\hat{\mu}}(t)\right)$ as the log of the Generalized Likelihood Ratio (GLR) statistic and $\beta(\delta, t) \coloneqq 2 \log\left(\frac{\sqrt{\prod_{m=1}^M (N_m(t)+1)^d}}{\delta}\right)$ as the threshold of the stopping rule at time $t$, where $\delta$ is the error probability. 
Let $\boldsymbol{w}^{*}(t) \coloneqq [w_1^{*}(t), \ldots, w_M^{*}(t)]$ denote the optimal arm pull proportions in \eqref{eq:cstarmu} obtained when the estimated  $\boldsymbol{\hat{\mu}}(t)$ is plugged-in. For $\boldsymbol{w} \in \mathbb{R}^M$, the support is $\text{supp}(\boldsymbol{w}) \coloneqq \{m \in [M] : w_m \neq 0\}$.

\begin{algorithm} 
%\caption{Average Tracking Bandit Online Clustering (\textit{ATBOC}) Algorithm }\label{Algo:AvgTrackingOC}
\caption{\textit{ATBOC}}\label{Algo:AvgTrackingOC}
\begin{algorithmic}[1]
\STATE \textbf{Input:} $\delta, M, K$
\STATE \textbf{Initialize:} $t=0$, $N_m(0)=0 \text{ for all } m \in [M]$

\REPEAT
    \IF{$\displaystyle \min_{m \in [M]} N_m(t) < \sqrt{\frac{t}{M}}$}
        \STATE $\displaystyle A_{t+1} = \argmin_{m \in [M]} N_m(t)$
    \ELSE
        \STATE $\displaystyle A_{t+1} = \argmin_{m \in \mathrm{supp}\left( \sum_{s=1}^t w^{*}(s) \right)}  \left[ \frac{N_m(t)}{t} - \frac{1}{t}\sum_{s=1}^t w^{*}_m(s) \right] $
    \ENDIF
    \STATE sample arm $A_{t+1}$
    \STATE $t \leftarrow t+1$, update $\boldsymbol{\hat{\mu}}_m(t)$ and $N_m(t)$ for all $m \in [M]$
    \STATE Compute the test statistics $Z(t)$ in \eqref{eq: z} and the threshold $\beta\left( \delta, t \right)$ in \eqref{eq:thresh}.
    \STATE Compute the optimal arm pull proportion estimate $\boldsymbol{w}^{*}(t)$ in \eqref{eq: armpullestimate} corresponding to $\hat{\boldsymbol{\mu}}(t)$.

\UNTIL{$Z(t) \geq \beta(\delta, t)$} %\text{   (using \eqref{eq:thresh}) }

\STATE $\boldsymbol{\hat{c}} = \mathcal{C}\left( \boldsymbol{\hat{\mu}}(t) \right)$
\STATE \textbf{Output:} $\boldsymbol{\hat{c}}$
\end{algorithmic}
\end{algorithm}

\subsection{Algorithm Description}
%{\color{red}The inputs to Algorithm \ref{Algo:AvgTrackingOC} are: $\delta$ - maximum probability of error the algorithm can incur, $K$ - number of clusters, and $[M]$ - arm set.
%The output of the algorithm is
%%    \item 
    %$\boldsymbol{\hat{c}}$ - estimated cluster index vector.}
    The inputs to ATBOC are the error probability $\delta$, the number of clusters $K$, and the number of arms $M$, and the output of the algorithm is the estimated cluster index vector $\boldsymbol{\hat{c}}$ (Lines 1 and 15 in Algorithm~\ref{Algo:AvgTrackingOC}).
During each time step $t$, the algorithm does the following:
\begin{itemize}
    \item \textit{Sampling rule:} select an arm to sample, $A_{t+1}$.
    \item \textit{Stopping rule:} stop the algorithm if $Z(t) \geq \beta(\delta, t)$.
    \item \textit{Decision rule:} declare the output $\boldsymbol{\hat{c}}$ upon stopping.
\end{itemize}
% \noindent
Now we discuss each of these steps in detail, along with the theoretical consequences.
\subsubsection{Sampling Rule}
% \textbf{Sampling Rule: }
%The sampling rule used is the average tracking strategy proposed in \cite{jedra2020optimal}. The main idea of the tracking strategy is to make the arm pull proportions $\frac{\boldsymbol{N}(t)}{t}$ track the optimal arm pull proportions $\boldsymbol{w}^*$, for some $\boldsymbol{w}^* \in C^*(\boldsymbol{\mu})$  \eqref{eq:cstarmu}.
%However, finding $C^*(\boldsymbol{\mu})$ requires knowledge of the true mean vectors $\boldsymbol{\mu}$. Hence, plug-in estimates $\boldsymbol{w}(t)$, which are computed using the empirical mean vectors $\boldsymbol{\hat{\mu}}(t)$, are used in place of $\boldsymbol{w}^*$. In the average tracking strategy, at each time $t$, the arm $m$ for which {\color{red}$\frac{\boldsymbol{N}_m(t)}{t}$} is lagging the {\em average} of the plug-in estimates $w_m(t)$ upto $t$ the most is selected (See Line 7 of Algorithm~\ref{Algo:AvgTrackingOC}). The plug-in estimate is computed in Line 12.

 The sampling rule for the ATBOC is motivated by the average-tracking strategy proposed in \cite{jedra2020optimal} for the BAI problem in linear bandits. The main idea of the tracking-based sampling rule is to make the empirical arm pull proportions $\frac{\boldsymbol{N}(t)}{t}$ track the optimal arm pull proportions $\boldsymbol{w}^*$, for some $\boldsymbol{w}^* \in \mathcal{S}^*(\boldsymbol{\mu})$ in \eqref{eq:cstarmu}. Since determining $\mathcal{S}^*(\boldsymbol{\mu})$ requires knowledge of the true parameters $\boldsymbol{\mu}$, plug-in estimates $\boldsymbol{w}^{*}(t)$, computed from the estimated parameters $\boldsymbol{\hat{\mu}}(t)$ are used instead, as follows.
 % \textcolor{blue}{
 \begin{equation} \label{eq: armpullestimate}
    \boldsymbol{w}^{*}(t) \in \begin{cases}
    
        \argmax_{w \in \mathcal{P}_M} \psi_{\text{mod}}\left(\boldsymbol{w}, \hat{\boldsymbol{\mu}}_m(t)\right) &\text{for ATBOC-Gauss/ATBOC-SubGauss} \\
        \argmax_{w \in \mathcal{P}_M}\psi\left(\boldsymbol{w}, \hat{\boldsymbol{\mu}}_m(t)\right) &\text{for ATBOC-1pExp}
    \end{cases},
 \end{equation}
 % }
 where 
 \begin{equation} \label{eq: modinnerinf}
    \psi_{mod}(\boldsymbol{w}, \boldsymbol{\mu}) \coloneqq \frac{1}{2\sigma^2}\inf_{\boldsymbol{\lambda} \in \text{Alt}(\boldsymbol{\mu})}  \sum_{m=1}^M w_m \left\|\boldsymbol{\mu}_m- \boldsymbol{\lambda}_m\right\|^2.
\end{equation}
Note that for the Gaussian distributed arms (ATBOC-Gauss), $\psi_{\text{mod}(\cdot, \cdot)}$ is the same as $\psi(\cdot, \cdot)$. For ATBOC-subGauss, $\psi_{\text{mod}}(\cdot,\cdot)$ is obtained by replacing the KL divergence in $\psi(\cdot,\cdot)$ by the squared norm distance.
% ATBOC-subGauss, to compute the arm pull proportion estimate, maximizes the modified infimum problem defined in \eqref{eq: modinnerinf}. Consequently, for ATBOC-subGauss, the empirical arm pull proportions $\frac{\boldsymbol{N}(t)}{t}$ converges to a set 
% Such modification for ATBOC-subGauss, in computing the arm pull proportion estimate is required as the test statistics in the stopping rule of the ATBOC-subGauss is of the form $\psi_{\text{mod}}(\cdot, \cdot)$, and $\mathcal{S}^{*}_{\text{mod}}(\cdot)$ is the set of arm pull proportion that maximizes $\psi_{\text{mod}}(\cdot, \cdot)$.
 At each time $t$, the arm $m$ whose $\frac{N_m(t)}{t}$ lags the average of the plug-in estimates $w^{*}_m(t)$ up to $t$ the most is chosen (see Line 7 of Algorithm~\ref{Algo:AvgTrackingOC}).

To ensure that the estimated parameters $\boldsymbol{\hat{\mu}}(t)$ converge to the true parameters $\boldsymbol{\mu}$, we perform forced exploration as follows. At each time $t$, if there exists arms whose number of samples until time $t$ is less than $\sqrt{\frac{t}{M}}$, then the sampling rule will select the arm with the fewest number of samples among these arms (Line 5 of Algorithm~\ref{Algo:AvgTrackingOC}).
 
Lemma \ref{prop:armpullpropconverge} discusses the convergence properties of the empirical arm pull proportions $\frac{\boldsymbol{N}(t)}{t}$ for the sampling rule with forced exploration and average tracking described above. 
For any $\boldsymbol{w} \in \mathbb{R}^M$
and for any compact set $C \subseteq \mathbb{R}^M$, let
$
d_\infty(\boldsymbol{w}, C) \coloneqq \min_{\boldsymbol{w}' \in C} \|\boldsymbol{w}- \boldsymbol{w}'\|_\infty.
$ 

% For any $\boldsymbol{w}, \boldsymbol{w}' \in \mathbb{R}^M$, let $
% d_\infty(\boldsymbol{w}, \boldsymbol{w}') \coloneqq \max_{m \in [M]} \left| w_m - w_m' \right|
% $
% and for any compact set $C \subseteq \mathbb{R}^M$, let
% $
% d_\infty(\boldsymbol{w}, C) \coloneqq \min_{\boldsymbol{w}' \in C} \|\boldsymbol{w}, \boldsymbol{w}'\|_\infty.
% $ 
\begin{lemma} \label{prop:armpullpropconverge}
    For any problem instance $\boldsymbol{\mu}\in\Theta^M$, 
    \begin{enumerate}
        \item ATBOC-Gauss and ATBOC-1pExp satisfy 
        \begin{equation}
            \lim_{t \rightarrow \infty} d_\infty\left( \frac{\boldsymbol{N}(t)}{t}, \mathcal{S}^*(\boldsymbol{\mu}) \right) = 0 \quad \text{a.s.}
        \end{equation}
        \item ATBOC-subGauss satisfies
        \begin{equation}
            \lim_{t \rightarrow \infty} d_\infty\left( \frac{\boldsymbol{N}(t)}{t}, \mathcal{S}_{\text{mod}}^*(\boldsymbol{\mu}) \right) = 0 \quad \text{a.s.},
        \end{equation}
    \end{enumerate}
    where
\begin{equation} \label{eq:modcstarmu}
    \mathcal{S}_{\text{mod}}^*(\boldsymbol{\mu}) = \argmax_{w \in \mathcal{P}_M} \psi_{\text{mod}}(\boldsymbol{w}, \boldsymbol{\mu}).
\end{equation}
    
%    For the ATBOC algorithm, the arm pull proportions $\frac{\boldsymbol{N}(t)}{t}$ converge to the set $\mathcal{S}^*(\boldsymbol{\mu})$ defined in \eqref{eq:cstarmu} almost surely (a.s.), i.e.,
% \[
% \lim_{t \rightarrow \infty} d_\infty\left( \frac{\boldsymbol{N}(t)}{t}, \mathcal{S}^*(\boldsymbol{\mu}) \right) = 0 \quad \text{a.s.}
% \]
\end{lemma}
\begin{proof}
    See Appendix \ref{appsec: armpullpropconverge}
\end{proof}
In ATBOC-Gauss and ATBOC-1pExp, the tracking sequence $\boldsymbol{w}^{*}(t)$ is chosen by maximizing the inner infimum function $\psi(\cdot,\cdot)$ associated with the lower-bound expression in \eqref{eq:T_star_mu}. Consequently, the empirical arm-pull proportions $\frac{\boldsymbol{N}(t)}{t}$ converge to the optimal arm-pull proportions given in \eqref{eq:cstarmu}.
In contrast, in ATBOC-subGauss, a modified inner infimum problem $\psi_{\text{mod}}(\cdot,\cdot)$ is used, in which the KL divergence in $\psi(\cdot,\cdot)$ is replaced by the squared norm distance. This modification leads to the convergence of the empirical arm-pull proportions $\frac{\boldsymbol{N}(t)}{t}$ to the modified set in \eqref{eq:modcstarmu}.
 % \noindent
% \textbf{Stopping Rule: }

\subsubsection{Stopping Rule}
The test statistic $Z(t)$ of ATBOC is based on the $\log$ of the Generalized Likelihood Ratio (GLR).
% The algorithm stops when the log of the GLR statistic is greater than or equal to the threshold. 
The $\log$-GLR statistic computes the log likelihood ratio for the estimated cluster index vector against its closest alternative. 
% We present the details of $Z(t)$ and the threshold $\beta(\delta, t)$ now.
Let $\mathbb{P}_{\lambda_m}(\cdot)$ be the probability distribution with parameter $\lambda_m$.
% We write $\underline{\boldsymbol{X}}_m^i$ to denote the $i^{\text{th}}$ sample from the $m^{\text{th}}$ arm. The estimated cluster index vector at time $t$ is $\mathcal{C}\left( \boldsymbol{\hat{\mu}}(t) \right)$. Now, 
The $\log$-GLR is given by
\begin{align}
   & \log\left[\frac{ \displaystyle \max_{\substack{\boldsymbol{\lambda}: \mathcal{C}(\boldsymbol{\lambda})\sim  \mathcal{C}\left( \boldsymbol{\hat{\mu}}(t) \right)}} \left(\prod_{m\in[M]}\prod_{s\in[t]:A_s=m}\mathbb{P}_{\lambda_m}(\boldsymbol{X}_m(t)) \right)}{\displaystyle \max_{\boldsymbol{\lambda}\in\text{Alt}(\boldsymbol{\hat{\mu}}(t))} \left(\prod_{m\in[M]}\prod_{s\in[t]:A_s=m}\mathbb{P}_{\lambda_m}(\boldsymbol{X}_m(t))\right)}\right]. \nonumber
\end{align}
The above expression after simplification yields the following.
\begin{align}   
    % Z(t) = 
    -t\min_{\substack{\boldsymbol{\lambda}: \mathcal{C}(\boldsymbol{\lambda})\sim  \mathcal{C}\left( \boldsymbol{\hat{\mu}}(t) \right)}} \left(\sum_{m=1}^M \frac{N_m(t)}{t} d_{\text{KL}}\left(\hat{\boldsymbol{\mu}}_m(t), \boldsymbol{\lambda}_m\right)\right)   
    +t\min_{\boldsymbol{\lambda}\in\text{Alt}(\boldsymbol{\hat{\mu}}(t))} \left(\sum_{m=1}^M \frac{N_m(t)}{t} d_{\text{KL}}\left(\hat{\boldsymbol{\mu}}_m(t), \boldsymbol{\lambda}_m\right)\right). \nonumber
\end{align} 
% \textcolor{blue}{
The first term in the above-derived expression becomes 0 on setting $\boldsymbol{\lambda_m} = \boldsymbol{\hat{\mu}}_m(t)$ for all $m \in [M]$. 
The second term takes the form of the inner minimization problem \eqref{eq:inmin} in the lower bound expression. Hence, the $\log$-GLR takes the form $t\psi\left(\frac{\boldsymbol{N}(t)}{t}, \boldsymbol{\hat{\mu}}(t)\right)$.
% \fi
We use the log-GLR obtained above as the test statistic $Z(t)$ for the single-parameter exponential family of distributions (ATBOC-1pExp). For sub-Gaussian distributions (ATBOC-Gauss/ATBOC-subGauss), we use $t\psi_{\text{mod}}\left(\frac{\boldsymbol{N}(t)}{t}, \boldsymbol{\hat{\mu}}(t)\right)$ as the test statistic $Z(t)$. Note that for Gaussian distributed arms $t\psi_{\text{mod}}\left(\frac{\boldsymbol{N}(t)}{t}, \boldsymbol{\hat{\mu}}(t)\right)$ is equal to log-GLR.
% same as the form one obtained for the mulivariate Gaussian distribution. 
The test statistic $Z(t)$ is given by:
\begin{equation} \label{eq: z}
   Z(t) = \begin{cases}
       t\psi_{mod}\left(\frac{\boldsymbol{N}(t)}{t}, \hat{\boldsymbol{\mu}}(t)\right) &\text{ for ATBOC-Gauss/ATBOC-SubGauss} \\
       t\psi\left(\frac{\boldsymbol{N}(t)}{t}, \hat{\boldsymbol{\mu}}(t)\right) &\text{ for ATBOC-1pExp}
   \end{cases}. 
\end{equation}
A time uniform KL-based deviation inequality is required to prove the $\delta$-PC guarantee. However, such a deviation inequality is not available for the multivariate sub-Gaussian distribution. Instead, a squared distance-based deviation inequality is available. Hence, ATBOC-SubGauss uses the modified function $\psi_{\text{mod}}(\cdot, \cdot)$, which is similar to the inner infimum function $\psi(\cdot, \cdot)$, but measures the weighted squared distance between the problem instance and its closest alternative.
% }
%This statistic is computed in Line 11 of Algorithm \ref{Algo:AvgTrackingOC}.
The threshold used in the stopping rule is
\begin{equation} \label{eq:thresh}
    \beta\left( \delta, t \right) = \begin{cases}
    2 \log\left( \frac{\sqrt{\prod_{m=1}^M\left(N_m(t)+1\right)^d}}{\delta} \right) &\text{ for ATBOC-Gauss/ATBOC-SubGauss}\\
        3 \sum_{m=1}^M \log\left(1 + \log\left(N_m(t)\right)\right) 
    + (1 + \zeta)\log\left(\frac{1}{\delta}\right) \\
    \hspace{2cm} + (1 + \zeta)M \log\left(\frac{\pi^2/3}{\left(\log(1 + \zeta)\right)^2}\right) &\text{ for ATBOC-1pExp}
    \end{cases}.
\end{equation}
The choice of this threshold is based on the time uniform deviation inequalities for the respective distributions to ensure $\delta$-PC guarantee.
The algorithm stops if the condition $Z(t) \geq \beta(\delta, t)$ is satisfied (Line 13 of Algorithm \ref{Algo:AvgTrackingOC}).
Using the test statistic in \eqref{eq: z} and the threshold \eqref{eq:thresh}, the formal expression for the stopping time of the algorithm is given by, 
$    \tau_\delta(\text{ATBOC}) = \tau_\delta = \inf\left\{ t \in \mathbb{N}: Z(t) \geq \beta(\delta, t) \right\}.$\\
% \noindent
\subsubsection{Declaration Rule }
% \textbf{Declaration Rule: }
When the algorithm stops, it declares the estimated cluster index vector $\boldsymbol{\hat{c}}$, based on the estimated parameters of the arms $\boldsymbol{\hat{\mu}}(\tau_{\delta})$ using the relation $\mathcal{C}$ discussed in Section \ref{sec:ProblemSetup}, i.e., $\boldsymbol{\hat{c}} = \mathcal{C}(\boldsymbol{\hat{\mu}}(\tau_{\delta}))$. Hence, we can consider the relation $\mathcal{C}$ as our \textit{declaration rule}. 

\subsection{ATBOC Algorithm Performance} \label{sec:AlgorithmPerformance}
The performance guarantees provided in this section include the correctness of the cluster index vector estimate $\boldsymbol{\hat{c}}$ and upper bounds on the sample complexity $\tau_\delta$. Theorem \ref{Theorem2:deltaPAC} confirms that the algorithm stops in finite time and that its error probability is upper bounded by $\delta$.
\begin{theorem} \label{Theorem2:deltaPAC}
    All three variants of the proposed ATBOC algorithm: ATBOC-Gauss, ATBOC-subGauss, and ATBOC-1pExp, stop in finite time and are $\delta-$ PC algorithms, i.e.
    \begin{equation}
    \begin{aligned}
        \mathbb{P}_{\boldsymbol{\mu}}\left[ \tau_\delta < \infty \right] = 1 \text{ and } \mathbb{P}_{\boldsymbol{\mu}}\left[ \mathcal{C}\left( \boldsymbol{\hat{\mu}}\left( \tau_\delta \right) \right) \nsim \mathcal{C}\left( \boldsymbol{\mu} \right) \right] \leq \delta.
    \end{aligned}
    \end{equation}
\end{theorem}
\begin{proof}
    See Appendix \ref{appsec: deltapac}.
\end{proof}
% \begin{remark}
%     Any BOC algorithm with a stopping rule of the form $Z(t) \geq \beta(\delta, t)$, where $Z(t)$ and $\beta(\delta, t)$ are given by equations \eqref{eq:zt} and \eqref{eq:thresh}, respectively, is a $\delta-$PC algorithm. In Section \ref{sec:LUCB}, we define a more computationally efficient algorithm called the LUCBBOC algorithm, using the same stopping rule; hence, LUCBBOC is also a $\delta-$PC algorithm.
% \end{remark}
Now, we discuss the upper bound on the stopping time $\tau_\delta$ of the three variants of the ATBOC algorithm.  
Let us define $\displaystyle T^*_{mod}(\boldsymbol{\mu}) = \left[\max_{\boldsymbol{w} \in \mathcal{P}_M} \psi_{mod}(\boldsymbol{w}, \boldsymbol{\mu})\right]^{-1}.$
\begin{theorem} \label{Theorem3:AlmostSureOptimal}
    (Almost sure sample complexity upper bound) For any problem instance $\boldsymbol{\mu}$, 
    \begin{enumerate}
        \item ATBOC-SubGauss satisfies
        \begin{equation}
        \mathbb{P}_{\boldsymbol{\mu}}\left[ \limsup_{\delta \rightarrow 0} \frac{\tau_\delta}{\log\left( 
        \frac{1}{\delta} \right)} \leq 2T_{mod}^*(\boldsymbol{\mu})\right] = 1.
    \end{equation}
    \item ATBOC-Gauss and ATBOC-1pExp satisfy
    \begin{equation}
        \mathbb{P}_{\boldsymbol{\mu}}\left[ \limsup_{\delta \rightarrow 0} \frac{\tau_\delta}{\log\left( 
        \frac{1}{\delta} \right)} \leq aT^*(\boldsymbol{\mu})\right] = 1,
    \end{equation}
    where
    \begin{equation}
        a = \begin{cases}
            2 &\text{ for ATBOC-Gauss} \\
            1+\zeta &\text{ for ATBOC-1pExp}
        \end{cases}, \text{ for } \zeta \in \left(0, \frac{1}{2}\right).
    \end{equation}
    \end{enumerate}
    
    % \begin{equation}
    %     \mathbb{P}_{\boldsymbol{\mu}}^{\text{ATBOC}}\left( \limsup_{\delta \rightarrow 0} \frac{\tau_\delta}{\log\left( 
    %     \frac{1}{\delta} \right)} \leq 2T^*(\boldsymbol{\mu})\right) = 1.
    % \end{equation}
\end{theorem}
\begin{proof}
    See Appendix \ref{appsec: almostsureoptimal}.
\end{proof}

The above result provides an almost sure upper bound on the stopping time $\tau_\delta$ of the ATBOC algorithm. We also derive an upper bound on the expected stopping time $\mathbb{E}_{\boldsymbol{\mu}}[\tau_\delta]$ of the ATBOC algorithm in Theorem \ref{Theorem4:AsymptoticOptimal}.

\begin{theorem} \label{Theorem4:AsymptoticOptimal}
    (Expected sample complexity upper bound) For any problem instance $\boldsymbol{\mu}$, 
    \begin{enumerate}
        \item ATBOC-SubGauss satisfies
        \begin{equation}
        \limsup_{\delta \rightarrow 0} \frac{\mathbb{E}_{\boldsymbol{\mu}}[\tau_\delta]}{\log\left( \frac{1}{\delta} \right)} \leq 2T_{mod}^*(\boldsymbol{\mu}).
    \end{equation}
    \item ATBOC-Gauss and ATBOC-1pExp satisfy
    \begin{equation}
        \limsup_{\delta \rightarrow 0} \frac{\mathbb{E}_{\boldsymbol{\mu}}[\tau_\delta]}{\log\left( \frac{1}{\delta} \right)} \leq aT^*(\boldsymbol{\mu}).
    \end{equation}
    \end{enumerate}
\end{theorem}
\begin{proof}
    See Appendix \ref{appsec: asymptoticoptimal}.
\end{proof}

The factor of $2$ for ATBOC-Gauss/ATBOC-subGauss arises from the time-uniform concentration inequality derived for multivariate sub-Gaussian distribution (Theorem 1 of \cite{abbasi2011improved}) that we use to choose the stopping threshold while proving the algorithm to be $\delta-$PC. We use a tighter time uniform KL-based concentration bound for the case of a single parameter exponential family of distributions using \cite[Lemma~13]{kaufmann2021mixture}, which relates the KL-based deviation in the estimated parameter $\hat{\boldsymbol{\theta}}(t)$ to a suitably chosen mixture martingale and Ville's inequality \cite{huang2025sequential}. Using this tighter bound in ATBOC-1pExp for clustering the single-parameter exponential family distributed arms, we get a factor of $(1 + \zeta)$, where $\zeta$ can be arbitrarily small, instead of the factor of $2$.

    By comparing the asymptotic slope of the expected sample complexity upper bound of the ATBOC algorithms given in Theorem \ref{Theorem4:AsymptoticOptimal} with the asymptotic lower bound slope given in Theorem \ref{Theorem:lowerbound}, we infer the following. 1) ATBOC-Gauss is asymptotically order-optimal with a multiplicative gap of $2$. 2) ATBOC-subGauss is asymptotically suboptimal, with its asymptotic slope differing from the asymptotically order-optimal slope by $2\left|T^{*}_{\text{mod}}(\boldsymbol{\mu})-T^{*}(\boldsymbol{\mu})\right|$. 3) ATBOC-1pExp has tunable multiplicative gap $1+\zeta$, for some $\zeta\in\left(0, \frac{1}{2}\right)$, and on setting $\zeta$ arbitrarily close to $0$, it becomes asymptotic optimal.
    
    % see that the multiplicative gap between the upper and lower bounds is at most 2, which is independent of the problem parameters. Hence, ATBOC is order optimal.
 \begin{remark}
    The ATBOC algorithms are $\delta$-PC and asymptotically order-optimal for multivariate Gaussian and single-parameter exponential family distributions. Moreover, it is tailored to solve a more general $K$-cluster problem than other works in the BOC literature. However, the requirement to solve an optimization problem at each time step $t$ for the sampling rule (Lines 7 and 12 in Algorithm \ref{Algo:AvgTrackingOC}) results in high computational complexity. Hence, in the following sections, we propose the LUCBBOC and BOC-ELIM algorithms, which are more computationally efficient than ATBOC. 
    % Additionally, simulation results show that LUCBBOC is comparable in performance to ATBOC (see Section \ref{sec:Simulations} for details).
\end{remark}

\section{Lower and Upper Confidence Bound-based Bandit Online Clustering (LUCBBOC) Algorithm } \label{sec:LUCB}
% The pseudo-code for the LUCBBOC algorithm is given in Algorithm \ref{Algo:LUCBOC}. 
In this section, we propose the LUCBBOC algorithm, for clustering the multivariate sub-Gaussian distributed arms. 
The declaration and stopping rules in the LUCBBOC algorithm are the same as those of the ATBOC algorithm. Additionally, we have retained the forced exploration step from ATBOC. However, we replace the computationally heavy average tracking step (Lines 7 and 12 in Algorithm \ref{Algo:AvgTrackingOC}) with a procedure inspired by the LUCB algorithm for best arm identification problem from \cite{kalyanakrishnan2012pac}. Note that in the ATBOC algorithm, at any time $t$, when  forced exploration is not required, Lines 7 and 12 in Algorithm \ref{Algo:AvgTrackingOC} use $\boldsymbol{\hat{\mu}}(t)$ and $\boldsymbol{N}(t)$ to suggest the next arm to select $A_{t+1}$. Here, we propose an alternative block of procedures, which we call \textit{LUCBBOC-sampling}, that takes $\boldsymbol{\hat{\mu}}(t)$ and $\boldsymbol{N}(t)$ as inputs and outputs the arm to select next $A_{t+1}$. The pseudo-code for \textit{LUCBBOC-sampling} is given in Algorithm \ref{Algo:LUCBOC}.
In general, the proposed LUCBBOC algorithm is the same as that of ATBOC (Algorithm \ref{Algo:AvgTrackingOC}) except that Lines 7, 12 in Algorithm \ref{Algo:AvgTrackingOC} are replaced with \textit{the} LUCBBOC sampling block presented in Algorithm \ref{Algo:LUCBOC}. %The description of \textit{LUCBBOC-sampling} block is as follows.
Let $\alpha_m(t)$ denote the confidence radius\footnote{The confidence radius presented here is valid for arms following multivariate sub-Gaussian distribution. By using an appropriate confidence radius for single-parameter exponential family of distributions, LUCBBOC algorithm can be applied for single parameter exponential family distributed arms.} of arm $m$ at time $t$, defined as $
\alpha_m(t) \coloneqq \sqrt{\frac{2}{N_m(t)} \log\left( \frac{2^{d+1}M N_m^2(t)}{\delta} \right)}.$ The expression for the confidence radius $\alpha_m(t)$ is chosen such that the true mean $\boldsymbol{\mu}_m$ lies in the ball of radius $\alpha_m(t)$ with the center being $\hat{\boldsymbol{\mu}}_m(t)$ with probability at least $1 - \delta$. Specifically, for $\delta \in (0, 1)$,
$$    {\mathbb{P}\left[ \bigcap_{t \in \mathbb{N}}\bigcap_{m \in [M]} \left\{ \left\| \hat{\boldsymbol{\mu}}_m(t) - \boldsymbol{\mu}_m \right\| \leq \alpha_m(t) \right\} \right] \geq 1 - \delta.}$$ 
We write $E_{ij}(t)$ to denote the empirical gap between the arms $i$ and $j$ at time step $t$. Let $U_{ij}(t)$ and $L_{ij}(t)$ denote the UCB and LCB of the gap between the arms $i$ and $j$ at time $t$, respectively. Define $E_{ij}(t)$, $U_{ij}(t)$ and $L_{ij}(t)$ as:
\begin{align}
    E_{ij}(t) \coloneqq \left\| \hat{\boldsymbol{\mu}}_i(t) - \hat{\boldsymbol{\mu}}_j(t) \right\|, \ \ i, j \in [M]. \label{eq:Eit} \\
    U_{ij}(t)\coloneqq E_{ij}(t) +\alpha_i(t) +\alpha_j(t), \ \ i, j \in [M]. \label{eq:Uit} \\
    L_{ij}(t) \coloneqq E_{ij}(t) - \alpha_i(t) - \alpha_j(t), \ \ i, j \in [M]. \label{eq:Lit}
\end{align}
%At each time $t$, we write $n_p$ and $m_q$ to denote the pair of arms with smallest empirical gap, from the clusters $p$ and $q$ respectively. We find the pair of cluster indices $(p^{*}, q^{*})$, whose pair of arms with the smallest empirical gap ($n_{p^{*}}, m_{q^{*}}$) has the smallest LCB gap (line 3 of Algorithm \ref{Algo:LUCBOC}). Using the SLINK algorithm on the empirical mean vectors corresponding to the $k^{th}$ cluster, we divide it into two partitions, $D_{k_1}$ and $D_{k_2}$. (Line 4 of Algorithm \ref{Algo:LUCBOC}). We write $a_k$ and $b_k$ to denote the pair of arms with the smallest empirical gap, from the partitions $D_{k_1}$ and $D_{k_2}$, respectively, and we call this gap the maximum intra-cluster gap of the $k^{th}$ cluster.  We find the cluster index $k^{*}$ that corresponds to the maximum UCB of the maximum intra-cluster gap (Line 5 of Algorithm \ref{Algo:LUCBOC}). We want the smallest inter-cluster gap to be greater than the largest intra-cluster gap. Hence, the potential set of arms to sample $\mathcal{A}$ are the arms corresponding to the smallest inter-cluster LCB gap and the highest intra-cluster UCB gap (Line 6 of Algorithm \ref{Algo:LUCBOC}). At the next time $t+1$, the sampling rule selects an arm $m \in \mathcal{A}$ that has been sampled the fewest times (Line 7 of Algorithm \ref{Algo:LUCBOC}).%, i.e., $A_{t+1} = \arg\min_{a \in \mathcal{A}} N_a(t)$.

The basic idea behind the LUCBBOC sampling rule is to identify two types of arm pairs: pair of arms that belong to different clusters yet are close to each other and pair of arms that belong to the same cluster yet are far apart. We refer to the arms belonging to these pairs as potential arms, since the former type has the potential to merge two clusters, while the latter type has the potential to split a cluster. We then select a potential arm with the smaller number of arm pulls to obtain a new sample.

At any time $t$, we use the SLINK clustering algorithm to find the estimated cluster index vector $\hat{\boldsymbol{c}}(t) = \mathcal{C}\left(\hat{\boldsymbol{\mu}}(t)\right)$. For any $k\in [K]$, let $\hat{D}_k(t)$ denote the set of arms that belong to the estimated cluster $k$ at time $t$, i.e., $\hat{D}_k(t)=\left\{ m\in[M]: \hat{c}_m(t)=k \right\}$.
Let $n_p$ and $m_q$ denote the pair of arms with the smallest empirical gap from clusters $p$ and $q$, respectively, at time $t$. Cluster indices $(p^{*}, q^{*})$ are chosen so that their corresponding pair of arms with the smallest empirical gap $(n_{p^{*}}, m_{q^{*}})$ has the smallest LCB gap (Line 3, Algorithm~\ref{Algo:LUCBOC}).
We refer to the arms $(n_{p^{*}}, m_{q^{*}})$ as the potential arms that belong to different clusters yet are close.

For all $k\in [K]$, using the SLINK algorithm on the empirical mean vectors of cluster $k$, $\left\{\hat{\boldsymbol{\mu}}_m(t), m\in \hat{D}_k(t)\right\}$, we split it into partitions $D_{k,1}$ and $D_{k,2}$ (Line 5). Let $a_k$ and $b_k$ denote the pair of arms with the smallest empirical gap from the partitions $D_{k,1}$ and $D_{k,2}$, respectively, and we refer to this gap as the intra-cluster neighbor gap of cluster $k$ (Line 6). The cluster $k^{*}$ is selected based on the maximum UCB of the intra-cluster neighbor gaps (Line 8). We refer to the arms $(a_{k^{*}}, b_{k^{*}})$ as the potential arms that belong to the same cluster yet are far apart.
At time $t$, the set of potential arms $\mathcal{A}$ consists of pairs of arms corresponding to the smallest inter-cluster LCB gap $(n_{p^{*}}, m_{q^{*}})$ and the largest intra-cluster UCB gap $(a_{k^{*}}, b_{k^{*}})$ (Line 9). At $t+1$, the arm $m \in \mathcal{A}$ with the fewest arm pulls is selected (Line 10).

\begin{algorithm}
% \caption{Lower and Upper Confidence Bound-based Bandit Online Clustering (\textit{LUCBBOC}) Algorithm}\label{Algo:LUCBOC}
\caption{\textit{LUCBBOC-sampling}} \label{Algo:LUCBOC}
\begin{algorithmic}[1]
\STATE \textbf{Input:} $\boldsymbol{\hat{\mu}}(t), \boldsymbol{N}(t).$
\STATE Compute $E_{ij}(t)$, $U_{ij}(t)$ and $L_{ij}(t)$ using \eqref{eq:Eit}, \eqref{eq:Uit} and \eqref{eq:Lit}, respectively.
\STATE $(p^{*}, q^{*}) = \argmin_{p, q \in [K], p\neq q} L_{n_pm_q}(t)$ \text{ where, } 
$(n_p, m_q) = \argmin_{n \in \hat{D}_p(t), m \in \hat{D}_q(t)} \left\| \hat{\boldsymbol{\mu}}_m(t) - \hat{\boldsymbol{\mu}}_n(t) \right\|$
\FOR{$k=1$ to $K$}
\STATE $D_{k,1}, D_{k,2} \leftarrow$ split the $k^{th}$ cluster $\hat{D}_k(t)$ into $2$ clusters through SLINK.
\STATE $(a_k, b_k) = \argmin_{a \in D_{k,1}, b \in D_{k,2}} \left\| \hat{\boldsymbol{\mu}}_a(t) - \hat{\boldsymbol{\mu}}_b(t) \right\|$
\ENDFOR
% \STATE $D_{k_1}, D_{k_2} \leftarrow$ split the $k^{th}$ cluster $D_k$ into $2$ clusters through SLINK, for all $k \in [K]$.
\STATE $k^{*} = \argmax_{k \in [K]} U_{a_kb_k}(t)$ 
% \text{ where, } 
% $(a_k, b_k) = \argmin_{a \in D_{k_1}, m \in D_{k_2}} \left\| \hat{\boldsymbol{\mu}}_a(t) - \hat{\boldsymbol{\mu}}_b(t) \right\|$
\STATE Set of potential arms to sample: $\mathcal{A} = \{n_{p^{*}}, m_{q^{*}}, a_{k^{*}}, b_{k^{*}}\}$
\STATE $\displaystyle A_{t+1} = \argmin_{m \in \mathcal{A}} N_m(t)$
\STATE \textbf{Output:} $A_{t+1}$.
\end{algorithmic}
% \vspace{-5mm}
\end{algorithm}
\vspace{-3mm}

\subsection{LUCBBOC algorithm performance}
\begin{theorem} \label{Theorem2:deltaPACLUCBBOC}
    The LUCBBOC algorithm stops in finite time almost surely and is a $\delta-$ PC, i.e.
    \begin{equation}
    \begin{aligned}
        \mathbb{P}_{\boldsymbol{\mu}}\left[ \tau_\delta < \infty \right] = 1 \text{ and } \mathbb{P}_{\boldsymbol{\mu}}\left[ \mathcal{C}\left( \boldsymbol{\hat{\mu}}\left( \tau_\delta \right) \right) \nsim \mathcal{C}\left( \boldsymbol{\mu} \right) \right] \leq \delta.
    \end{aligned}
    \end{equation}
\end{theorem}
\begin{proof}
    The stopping rule for LUCBBOC is the same as that for ATBOC. Hence, this theorem follows from Theorem~\ref{Theorem2:deltaPAC}, where ATBOC is shown to be a $\delta$-PC algorithm.
\end{proof}

% \begin{theorem} \label{Theorem4:AsymptoticOptimalLUCBBOC}
\noindent{\em Expected sample complexity upper bound:} 
In our asymptotic analysis, the sample complexity depends on the convergence point of the empirical arm pull proportions. 
But unlike in ATBOC, where the convergence point of the empirical arm pull proportions is characterized in \eqref{eq:cstarmu} and \eqref{eq:modcstarmu}, we do not have any theoretical characterization for the convergence point $\boldsymbol{w}^*_{\text{LUCBBOC}}$.
Suppose we assume that such a convergence point exists for LUCBBOC, and we let $\boldsymbol{w}^{*}_{\text{LUCBBOC}}$ denote the convergence point of the empirical arm pull proportions, i.e., $\left\|\boldsymbol{w}^*(t) - \boldsymbol{w}^*_{\text{LUCBBOC}}\right\|_{\infty}=0$. Then, for any problem instance $\boldsymbol{\mu}$, LUCBBOC satisfies
    \begin{equation} \label{eq:slopeLUCBBOC}
    \limsup_{\delta \rightarrow 0} \frac{\mathbb{E}_{\boldsymbol{\mu}}^{\text{LUCBBOC}}[\tau_\delta]}{\log\left( \frac{1}{\delta} \right)} \leq \frac{2}{\psi_{\text{mod}}\left(\boldsymbol{w}^*_{\text{LUCBBOC}}, \boldsymbol{\mu}\right)}.
\end{equation}
% \end{theorem}
% \begin{proof}
    Under this convergence assumption, we can prove this result by following similar steps as in the sample complexity analysis of ATBOC-subGauss in Theorem \ref{Theorem4:AsymptoticOptimal}.
% \end{proof}
\begin{remark} \label{rem:rr}
     To verify our asymptotic result for LUCBBOC in \eqref{eq:slopeLUCBBOC}, we numerically compute the convergence point and show that the performance of LUCBBOC {satisfies the asymptotic upper bound} (See Fig.~\ref{fig:optimalityd2} in Section~\ref{sec:Simulations}). Additionally, we know that for the round robin strategy (RR), the empirical arm pull proportion for each of the arms converges to $1/M$, i.e., $w^*_{\text{RR}} = 1/M$. We also show that the performance of RR satisfies the upper bound computed using the arm pull proportions $w^*_{\text{RR}}$ (See Fig.~\ref{fig:optimalityd2} in Section~\ref{sec:Simulations}).
\end{remark}

\section{BOC-ELIM Algorithm and its performance} \label{sec: BOC-Elim}
In this section, we propose the Bandit Online Clustering-Elimination (BOC-ELIM) algorithm, for any arbitrary number of clusters $K$, as an extension of the MaxGapElim algorithm introduced in \cite{katariya2019maxgap} for the case of $K = 2$ clusters. Its pseudocode is presented in Algorithm \ref{Algo:BOC-Elim}. This algorithm is specifically designed for the special case of 
 $d=1$, where the SLINK clustering problem reduces to top $K-1$ gap identification problem. The BOC-ELIM algorithm finds the arms corresponding to each of the top $K-1$ gaps.\footnote{MaxGapElim Algorithm discussed in \cite{katariya2019maxgap} finds the two arms corresponding to the maximum gap. However, if we are only interested in clustering the arms according to the maximum gap and not interested in the identities of the arms corresponding to the  maximum gap, then a different stopping rule can be used. Equation (20) in \cite{katariya2019maxgap} presents such an early stopping rule for the clustering task. An extension of this rule to the $K > 2$ case is possible. However, we skip the details of such an early stopping rule for the BOC-ELIM algorithm. This is required only if multiple arm pairs have the same maximum gap.} %BOC-Elim is a more computationally efficient algorithm {\textcolor{red}{than?}}. 

The intuition behind the \textsc{BOC-ELIM} algorithm is as follows. Initially, all arms are sampled. Note that the only information required about an arm is whether its right or left gaps are any of the top $K-1$ gaps. Once this information about that arm has been determined, the arm is no longer considered active; that is, the arm is eliminated and will not be sampled again.
We determine that the right (left) gap of an arm is among the top $K-1$ maximum gaps if the arm’s minimum right (left) gap is greater than the UCB of the $K^{\text{th}}$ highest gap. Similarly, we determine that an arm’s right (left) gap is not among the top $K-1$ gaps if the arm’s maximum right (left) gap is less than the LCB of the $(K-1)^{\text{th}}$ highest gap.
Once a sufficient number of arms have been eliminated to identify the top $K-1$ gaps, the algorithm terminates, and the clusters are declared. First, we introduce the notation and then provide a formal description of the BOC-ELIM algorithm.

 The means of the arms are one-dimensional, and without loss of generality, let us assume that the means are sorted, i.e., $\mu_1\geq \mu_2 \ldots \geq \mu_M$.
Let $c_{N_m(t)}$ denote the confidence parameter\footnote{For the special case of $d=1$, this is the same as the confidence radius for sub-Gaussian distributions in Section \ref{sec:LUCB}. As mentioned in Footnote 3,  by using an appropriate confidence parameter, BOC-ELIM can also be applied to other classes of probability distributions.} of arm $m$ at time $t$, defined as $
c_{N_m(t)} \coloneqq \sqrt{\frac{2}{N_m(t)} \log\left( \frac{4M N_m^2(t)}{\delta} \right)}.$
Let $l_m(t)$  and $r_m(t)$ denote the lower confidence bound (LCB) and upper confidence bound (UCB) of arm $m$ at time $t$, defined as $
l_m(t) \coloneqq \hat{\mu}_m(t) - c_{N_m(t)}$ and  $r_m(t) \coloneqq \hat{\mu}_m(t) + c_{N_m(t)}$, respectively.
The expression for the confidence parameter $c_{N_m(t)}$ is chosen such that the true mean $\mu_m$ lies within the confidence interval $[l_m(t), r_m(t)]$ with high probability. Specifically, for $\delta \in (0, 1)$,
\begin{equation} \label{eq:confinterval}
 {\mathbb{P}\left[ \bigcap_{t \in \mathbb{N}}\bigcap_{m \in [M]} \left\{ \mu_m \in [l_m(t), r_m(t)] \right\} \right] \geq 1 - \delta.}
 \end{equation}

We define the maximum right gap of arm $m$ in the same manner as defined in \cite{katariya2019maxgap} and this is given by 
\begin{equation} \label{eq:rightgap}
    U\Delta_m^r(t) \coloneqq \max_{j \in P_m^r} G\left( l_j(t), t \right),
\end{equation}
where $P_m^r \coloneqq \left\{ j: l_j(t) \in \left[ l_m(t), r_m(t) \right] \right\}$ and 
\begin{equation*}
    G(x, t) \coloneqq \begin{cases}
        \min_{j: l_j(t)>x} r_j(t)-x &\text{ if } \left\{ j: l_j(t)>x \right\} \neq \emptyset\\
        \max_{j\neq m} r_j(t)-x &\text{ otherwise. }
    \end{cases}
\end{equation*}
The expression in \eqref{eq:rightgap} computes the maximum gap formed by arm $m$ on the right side satisfying the confidence bounds in \eqref{eq:confinterval}.
A similar definition is used for the maximum left gap of arm $m$, denoted by $U\Delta_m^l(t)$. A more detailed description of the maximum right and left gap can be found in \cite{katariya2019maxgap}.
Now, we define the minimum right gap formed by the arm $m$ as follows. If the confidence interval of the arm $m$ is disjoint from the confidence interval of all other arms, and if there exists an arm $j$ whose confidence interval is to the right side of the confidence interval of arm $m$, then we define the minimum right gap of the arm $m$ as $\min_{j:l_j(t)>r_m(t)} l_j(t)-r_m(t)$. Otherwise, the minimum right gap is $0$. Mathematically, we have
\begin{equation}
    L\Delta_m^r(t) \coloneqq \begin{cases}
        &\min_{j:l_j(t)>r_m(t)} l_j(t)-r_m(t)\\ 
        &\hspace{0.5cm}\text{ if } \left[ l_m(t), r_m(t)\right]\cap \left\{\cup_{j \neq m}\left[ l_j(t), r_j(t) \right]\right\}=\emptyset \text{ and } \left\{ j: l_j(t) >r_m(t) \right\} \neq \emptyset \\
        &0 \hspace{0.3cm}\text{ otherwise. }
    \end{cases}
\end{equation}
A similar definition is used for the minimum left gap of the arm $m$, denoted by $L\Delta_m^l(t)$. 
We use the notation $(k)_t$ to denote the arm with the $k^{th}$ highest empirical mean at time $t$.
We define the LCB of the $k^{th}$ gap as follows.
\begin{equation}
    L^k(t) \coloneqq \min_{j \in \left\{ (1)_t, \ldots, (k)_t \right\}} l_j(t) - \max_{j \in \left\{ (k+1)_t, \ldots, (M)_t \right\}} r_j(t) \quad \forall k \in [M-1].
\end{equation}
We denote the $k^{th}$ highest gap's LCB by $L^{(k)}(t)$, i.e., $L^{(M-1)}(t)\leq\ldots\leq L^{(1)}(t)$. Define the $k^{th}$ gap's UCB as 
\begin{equation}
    U^k(t) \coloneqq \min_{j \in \left\{ (1)_t, \ldots, (k)_t \right\}} r_j(t) - \max_{j \in \left\{ (k+1)_t, \ldots, (M)_t \right\}} l_j(t).
\end{equation}
We denote the $k^{th}$ highest gap's UCB by $U^{(k)}(t)$, i.e., $U^{(M-1)}(t)\leq\ldots\leq U^{(1)}(t)$.

At time $t$, each arm can be in one of three states with respect to its right-side gap: 
(i) it has been determined that the right-side gap is not among the top $K-1$ gaps, 
(ii) it has been determined that the right-side gap is among the top $K-1$ gaps, or 
(iii) this determination has not yet been made.
Accordingly, let $F_r$ denote the set of arms whose right-side gap has been ruled out of being among the top $K-1$ gaps, and let $S_r$ denote the set of arms whose right-side gap has been identified as one of the top $K-1$ gaps. Let $A_r$ denote the set of arms whose right-side gap has not yet been classified, i.e., arms that are neither in $F_r$ nor in $S_r$; we refer to these as \emph{right-sided active arms}.
Analogous definitions apply to the left-side gap, yielding the sets $F_l$, $S_l$, and $A_l$. Finally, we define the set of active arms at time $t$ as $A \coloneqq A_r \cup A_l$,
that is, an arm is considered active if it is active on at least one side.

\begin{algorithm}
% \caption{Lower and Upper Confidence Bound-based Bandit Online Clustering (\textit{LUCBBOC}) Algorithm}\label{Algo:LUCBOC}
\caption{\textit{BOC-ELIM}} \label{Algo:BOC-Elim}
\begin{algorithmic}[1]
% \STATE \textbf{Input:} $\boldsymbol{\hat{\mu}}(t), \boldsymbol{N}(t).$
\STATE \textbf{Input:} $\delta, M, K$
\STATE \textbf{Initialize:} $t=0$, $N_m(0)=0 \text{ for all } m \in [M]$, $A = A_r = A_l = \{ 1, 2, \ldots, M \}$, $t=0$, $F_r=F_l=S_r=S_l=\emptyset$.
\REPEAT
\STATE $t \leftarrow t+1$.
\STATE $\forall m \in A$, sample arm $m$ and update $l_m(t)$ and $r_m(t)$.
\STATE Compute $U\Delta_m^r(t), U\Delta_m^l(t), L^{(K-1)}(t)$, $ L\Delta_m^r(t), L\Delta_m^l(t), U^{(K)}(t)$.
\STATE $\forall m \in A$ 
\begin{itemize}
    \item if $U\Delta_m^r(t) < L^{(K-1)}(t)$, 
    then $A_r=A_r\setminus\{m\}$, $F_r=F_r\cup\{m\}$
    \item if $U\Delta_m^l(t) < L^{(K-1)}(t)$, 
    then $A_l=A_l\setminus\{m\}$, $F_l=F_l\cup\{m\}$
    \item if $L\Delta_m^r(t) > U^{(K)}(t)$, 
    then $A_r=A_r\setminus\{m\}$, $S_r=S_r\cup\{m\}$
    \item if $L\Delta_m^l(t) > U^{(K)}(t)$, 
    then $A_l=A_l\setminus\{m\}$, $S_l=S_l\cup\{m\}$
\end{itemize}
$A = A_r \cup A_l$.
\UNTIL $|F_r\cup F_l|=2(M-K)$ or $|S_r\cup S_l|=2(K-1)$.
% \STATE Compute $E_{ij}(t)$, $U_{ij}(t)$ and $L_{ij}(t)$ using \eqref{eq:Eit}, \eqref{eq:Uit} and \eqref{eq:Lit}, respectively.
% \STATE $(p^{*}, q^{*}) = \argmin_{p, q \in [K], p\neq q} L_{n_pm_q}(t)$ \text{ where, } \\
% $(n_p, m_q) = \argmin_{n \in D_p, m \in D_q} \left\| \hat{\boldsymbol{\mu}}_m(t) - \hat{\boldsymbol{\mu}}_n(t) \right\|$
% \STATE $D_{k_1}, D_{k_2} \leftarrow$ split the $k^{th}$ cluster $D_k$ into $2$ clusters through SLINK, for all $k \in [K]$.
% \STATE $k^{*} = \argmax_{k \in [K]} U_{a_kb_k}(t)$ \text{ where, } \\
% $(a_k, b_k) = \argmin_{a \in D_{k_1}, m \in D_{k_2}} \left\| \hat{\boldsymbol{\mu}}_a(t) - \hat{\boldsymbol{\mu}}_b(t) \right\|$
% \STATE Set of potential arms to sample: \\$\mathcal{A} = \{n_{p^{*}}, m_{q^{*}}, a_{k^{*}}, b_{k^{*}}\}$
% \STATE $\displaystyle A_{t+1} = \argmin_{m \in \mathcal{A}} N_m(t)$
% \STATE \textbf{Output:} $A_{t+1}$.
\end{algorithmic}
\end{algorithm}

At the start of the algorithm, all arms are both right-sided active $A_r=[M]$ and left-sided active $A_l=[M]$, and no arms are grouped under $F_r, F_l, S_r, S_l$. In time step $t$, we sample all active arms, update $l_m(t)$, $r_m(t)$ and compute all gaps (Line 6 of Algorithm \ref{Algo:BOC-Elim}). If the maximum right gap of any arm $m$ is less than the $(K-1)^{th}$ highest gap's LCB ($U\Delta_m^r(t) < L^{(K-1)}(t)$), then the right side of arm $m$ cannot form one of the top $K-1$ gaps ($F_r=F_r\cup\{m\}$) and is no longer a right-sided active arm ($A_r=A_r\setminus\{m\}$). Similarly, if the minimum right gap of any arm $m$ is greater than the $K^{th}$ highest gap's UCB ($L\Delta_m^r(t) > U^{(K)}(t)$), then the right side of arm $m$ forms one of the top $K-1$ gaps ($S_r=S_r\cup\{m\}$) and no more a right sided active arm ($A_r=A_r\setminus\{m\}$). A similar procedure will be followed for the left side of the arms (Line 7 of Algorithm \ref{Algo:BOC-Elim}). This process of arm elimination from the active set will be repeated until it identifies either the arms corresponding to the top $K-1$ gaps or the arms corresponding to the $M-K$ gaps that are not among the top gaps (Line 8 of Algorithm \ref{Algo:BOC-Elim}).
% \textcolor{red}{Note that, unlike in the previous sections, at each time $t$, all active arms are sampled. Hence, at time $t$, the total number of arm pulls is given by $\sum_{m\in[M]}N_m(t)$ and not $t$.}

% \section{BOC-Elim Performance} \label{sec: BOC-Elim Perf}
% This section include
Now we discuss the accuracy and sample complexity results for the proposed BOC-ELIM algorithm. Theorem \ref{BOC-Elim delta PC} presents the accuracy guarantee of the algorithm. 
\begin{theorem} \label{BOC-Elim delta PC}
    With probability $1-\delta$, BOC-ELIM outputs the correct clustering of arms.
\end{theorem}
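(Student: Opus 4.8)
The plan is to condition on the high-probability event on which every true mean stays inside its confidence interval for all time, and then argue that on this event \emph{all} of BOC-Elim's elimination decisions are correct, so that upon termination the reported top $K-1$ gaps — and hence the induced clustering — must be correct. Concretely, let $\mathcal{E} \coloneqq \bigcap_{t \in \mathbb{N}} \bigcap_{m \in [M]} \{\boldsymbol{\mu}_m \in [l_m(t), r_m(t)]\}$ be the event that every confidence interval is simultaneously valid at every time step. By the choice of the confidence parameter $c_{N_m(t)}$, inequality \eqref{eq:confinterval} gives $\mathbb{P}[\mathcal{E}] \geq 1-\delta$. It therefore suffices to show that, \emph{deterministically} on $\mathcal{E}$, BOC-Elim never misclassifies an arm-side, since this yields correctness with probability at least $1-\delta$.

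Working on $\mathcal{E}$, I would first establish that the per-arm quantities sandwich the true gaps: for every arm $m$ and time $t$, $L\Delta_m^r(t) \leq \Delta_m^r \leq U\Delta_m^r(t)$, where $\Delta_m^r$ denotes the true gap immediately to the right of arm $m$ (and symmetrically on the left). The optimistic bound $U\Delta_m^r(t)$ is precisely the max-gap upper-confidence quantity of \cite{katariya2019maxgap}, so its validity is inherited from that analysis; the pessimistic bound $L\Delta_m^r(t)$ follows from the same interval-containment reasoning, since when the interval of $m$ is disjoint from all others and some interval lies strictly to its right, the true right gap is at least $\min_{j:l_j(t)>r_m(t)} l_j(t) - r_m(t)$. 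The genuinely new ingredient relative to the two-cluster case is the order-statistic pair $L\Delta^{(K-1)}(t) \leq \Delta^{(K-1)}$ and $\Delta^{(K)} \leq U\Delta^{(K)}(t)$, where $\Delta^{(k)}$ is the true $k$-th largest gap. Using $l_j(t) \leq \boldsymbol{\mu}_j \leq r_j(t)$, the group-separation quantity $\min_{j \leq k} l_j(t) - \max_{j > k} r_j(t)$ never exceeds the corresponding separation of true means, and I would promote this to the statement about $k$-th order statistics of the gaps.

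Combining the two families of bounds makes every decision in Line 7 correct. If arm $m$ is placed in $F_r$ because $U\Delta_m^r(t) < L\Delta^{(K-1)}(t)$, then $\Delta_m^r \leq U\Delta_m^r(t) < L\Delta^{(K-1)}(t) \leq \Delta^{(K-1)}$, so the true right gap of $m$ is strictly smaller than the smallest of the top $K-1$ gaps and hence is not among them. Symmetrically, if $m$ is placed in $S_r$ because $L\Delta_m^r(t) > U\Delta^{(K)}(t)$, then $\Delta_m^r \geq L\Delta_m^r(t) > U\Delta^{(K)}(t) \geq \Delta^{(K)}$, so the right gap of $m$ strictly exceeds the largest non-top gap and therefore does belong to the top $K-1$; the two left-side rules are handled identically. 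Since each of the $M-1$ gaps is flagged by exactly two arm-sides, with the top $K-1$ gaps contributing $2(K-1)$ sides and the remaining gaps contributing $2(M-K)$ sides, every side moved into $S_r \cup S_l$ truly forms a top gap and every side moved into $F_r \cup F_l$ truly does not. Hence the stopping condition $|F_r \cup F_l| = 2(M-K)$ or $|S_r \cup S_l| = 2(K-1)$ can be met only once the top $K-1$ gaps are exactly and correctly pinned down (the complementary set being then forced), so the induced partition of the sorted arms into $K$ clusters coincides with $\mathcal{C}(\boldsymbol{\mu})$.

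The crux of the argument, and the step I expect to be the main obstacle, is the order-statistic sandwich $L\Delta^{(K-1)}(t) \leq \Delta^{(K-1)}$ and $\Delta^{(K)} \leq U\Delta^{(K)}(t)$. Because the empirical ranking of the arms need not coincide with the true ranking, one cannot simply read off the $k$-th true gap from the $k$-th empirical gap; the argument must show that, even under this ranking uncertainty, the $k$-th largest of the LCB group-separations remains a valid lower bound on $\Delta^{(K-1)}$ and the analogous UCB remains a valid upper bound on $\Delta^{(K)}$, using only interval containment from $\mathcal{E}$. Establishing this monotone order-statistic comparison carefully is where the real work lies; once it is in place, the remaining steps above are routine.
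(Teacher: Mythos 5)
Your high-level skeleton coincides with the paper's: condition on the event in \eqref{eq:confinterval}, show that on this event every elimination decision in Line 7 of Algorithm \ref{Algo:BOC-Elim} is correct, and close with the side-counting argument at the stopping condition. The genuine gap sits exactly at the step you yourself flag as the main obstacle: the order-statistic sandwich $L\Delta^{(K-1)}(t)\leq \Delta_{(K-1)}$ and $\Delta_{(K)}\leq U\Delta^{(K)}(t)$ is asserted but never proved, and a proposal that defers its central lemma is incomplete. Worse, the route you hint at --- promote the coordinatewise interval-containment bound to order statistics --- only works for the LCB half. There, writing $S_j$ for the top-$j$ empirical set, one has for every split $j$ that $L\Delta^{j}(t)\leq \min_{i\in S_j}\boldsymbol{\mu}_i-\max_{i\notin S_j}\boldsymbol{\mu}_i\leq \Delta_j$ (this is the true separation when $S_j$ is the true top-$j$ set, and nonpositive otherwise), and coordinatewise domination does imply order-statistic domination. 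For the UCB half the coordinatewise statement $U\Delta^{j}(t)\geq \Delta_j$ is simply false under the good event: take $M=3$, true sorted means $(10,10,0)$, empirical means $(10.01,\,10,\,11)$, radii $(0.01,\,0.01,\,12)$; every interval contains its true mean, the empirical top-$2$ set is $\{3,1\}$, and $U\Delta^{2}(t)=\min(10.02,\,23)-9.99=0.03$ while $\Delta_2=10$. So ``the same interval-containment reasoning'' does not promote to the UCB side; any correct proof of $\Delta_{(K)}\leq U\Delta^{(K)}(t)$ must exploit the interplay between the empirical ranking and the interval widths (an inverted pair forces wide intervals), which your sketch never uses.

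The paper's proof avoids the sandwich altogether. It argues by contradiction in four cases, one per elimination rule. For instance, if a side of a true top gap were placed in $F_l$, i.e.\ $U\Delta_m^l(t)<L\Delta^{(K-1)}(t)$ for some $m\in\{m_1,\ldots,m_{K-1}\}$, it chains $\Delta_{(k)}\leq U\Delta_m^l(t)<L\Delta^{(K-1)}(t)\leq\cdots\leq L\Delta^{(1)}(t)$ and bounds each of the top $K-1$ LCB order statistics \emph{above} by the gap $\boldsymbol{\mu}_{a_i}-\boldsymbol{\mu}_{a_i+1}$ of some true consecutive pair straddling the corresponding empirical split, so that a top-$(K-1)$ gap would be smaller than $K-1$ true gaps --- a counting contradiction. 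Symmetrically, a wrongly promoted non-top side would force a non-top gap to exceed $M-K$ true gaps, using only that each of $U\Delta^{(K)}(t),\ldots,U\Delta^{(M-1)}(t)$ dominates the gap of some true consecutive pair whose means lie inside the corresponding UCB window. This weaker assertion --- each relevant order statistic is bounded by \emph{some} true gap, with distinct pairs across indices --- combined with counting is the missing idea in your writeup; it is strictly easier to establish than your two-sided order-statistic sandwich, and it is where the ``real work'' you deferred should actually be directed.
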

\begin{proof}
    See Appendix \ref{appsubsec:elimdelpc}.% \ref{appsec: bocelim}.
\end{proof}
We define the $k^{th}$ gap as $\Delta_k \coloneqq {\mu}_{k} - {\mu}_{k+1}$, where $k \in [M-1]$. 
%Note that, for $M$ arms we have $M-1$ gaps $\left\{\Delta_i\right\}_{i=1}^{M-1}$. 
We denote the $k^{th}$ highest gap by $\Delta_{(k)}$. 
Let the pair of arms $m_k$ and $m_k+1$ correspond to the $k^{th}$ highest gap.
We define the gap between arms $i$ and $j$ as $\Delta_{i, j} \coloneqq {\mu}_{j} - {\mu}_{i}$.
We denote the hardness parameter of the arm $m$ by $\rho_m$. The number of times the arm $m$ is pulled by the algorithm depends on $\rho_m$ and is defined as $\rho_m \coloneqq \min\left\{ \rho_m^r, \rho_m^l \right\}$, where $\rho_m^r$ and $\rho_m^l$ are defined as follows. For any arm $m$, whose right side gap does not form one of the top $K-1$ gaps, i.e. $\forall m \notin \left\{ m_1+1, \ldots, m_{K-1}+1 \right\}$, $\rho_m^r$ is defined as
\begin{equation} \label{eq: rho r not top}
\begin{split}
    \rho_m^r \coloneqq \max \left\{ \max_{j: j<m}\left[ 
    \min\left\{ \frac{\Delta_{m, j}}{4}, \frac{\Delta_{(K-1)}-\Delta_{m, j}}{8} \right\} \right],\frac{\Delta_{(K-1)}-\Delta_{m, 1}}{8} \right\}.
\end{split}
\end{equation}
The first term in the outer $\max$ is assumed to be $\infty$ if $m=1$, i.e., there does not exist a $j$ which satisfies the constraint $j<m$.
For any arm $m$, whose right side gap does form any one of the top $K-1$ gaps, i.e., $\forall m \in \left\{ m_1+1, \ldots, m_{K-1}+1 \right\}$, $\rho_m^r$ is defined as 
\begin{equation} \label{eq: rho r top}
    \rho_m^r \coloneqq \min\left\{ \frac{{}\Delta_{m+1, m}}{4}, \frac{\Delta_{m, m-1} - \Delta_{(K)}}{8} \right\}.
\end{equation}
The first term in the above $\min$ is assumed to be $\infty$ if $m=K$.
We use a similar definition for $\rho_m^l$. Now we present the sample complexity of the proposed BOC-ELIM algorithm in Theorem \ref{boc-elim sc}.
\begin{theorem} \label{boc-elim sc}
    With probability $1-\delta$, the sample complexity of the proposed BOC-ELIM algorithm is bounded by 
    \begin{equation}
        \sum_{m \in [M]} \frac{23\log\left(\frac{M}{\delta \rho_m}\right)}{\rho_m^2} .
    \end{equation}
\end{theorem}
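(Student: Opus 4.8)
The plan is to argue entirely on the high-probability event that all confidence intervals are simultaneously valid, to show that each arm $m$ is forced out of the active set once its number of pulls is large enough relative to its hardness $\rho_m$, and then to invert the confidence-width expression to convert that pull count into the claimed bound. Define the good event $\mathcal{E} \coloneqq \bigcap_{t \in \mathbb{N}}\bigcap_{m \in [M]} \{ \boldsymbol{\mu}_m \in [l_m(t), r_m(t)] \}$, which by \eqref{eq:confinterval} satisfies $\mathbb{P}(\mathcal{E}) \geq 1-\delta$. All reasoning is conditioned on $\mathcal{E}$, so it suffices to establish the bound deterministically on $\mathcal{E}$; on this event the half-width $c_{N_m(t)}$ controls the deviation of each empirical mean from its true value, and hence controls the deviation of every empirical gap estimate ($E$, $U\Delta_m^r$, $L\Delta_m^r$, $L\Delta^{(K-1)}$, $U\Delta^{(K)}$, and their left-side analogues) from the corresponding true gap.

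The core step is an elimination lemma: on $\mathcal{E}$, for each arm $m$ there is a threshold $n_m^r$ such that once $N_m(t) \geq n_m^r$ the arm is removed from $A_r$ (placed in $F_r \cup S_r$), and symmetrically $n_m^l$ for $A_l$. I would prove this separately for the two cases encoded in the definition of $\rho_m^r$. If the right gap of $m$ is not among the top $K-1$ gaps, I show that once the confidence radii are small relative to $\rho_m^r$ the inequality $U\Delta_m^r(t) < L\Delta^{(K-1)}(t)$ must fire: the $\tfrac{\Delta_{m,j}}{4}$ terms in \eqref{eq: rho r not top} guarantee the confidence intervals have separated enough for the empirical ordering and gap structure to match the true ones, while the $\tfrac{\Delta_{(K-1)}-\Delta_{m,j}}{8}$ terms guarantee the upper estimate of $m$'s right gap drops strictly below the lower estimate of the $(K-1)$-th largest gap. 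If the right gap of $m$ is among the top $K-1$, I instead show $L\Delta_m^r(t) > U\Delta^{(K)}(t)$ fires, using the two terms of \eqref{eq: rho r top}: the $\tfrac{\Delta_{m+1,m}}{4}$ term isolates $m$'s right gap and the $\tfrac{\Delta_{m,m-1}-\Delta_{(K)}}{8}$ term pushes its lower estimate above the $K$-th UCB gap. The symmetric statements classify the left side.

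Since arm $m$ is sampled in round $t$ only while it lies in $A = A_r \cup A_l$, it stops being pulled precisely when \emph{both} its sides have been classified, so its total number of pulls is governed by the slower of the two sides, i.e. by the smaller of $\rho_m^r, \rho_m^l$, which is exactly $\rho_m = \min\{\rho_m^r, \rho_m^l\}$ (and $1/\rho_m^2 = \max\{1/(\rho_m^r)^2, 1/(\rho_m^l)^2\}$ recovers the harder side). The stopping condition in Line 8 fires no later than the point at which every arm has been fully classified, so on $\mathcal{E}$ the final pull count of arm $m$ satisfies $N_m \le n_m$, where $n_m$ is the smallest integer for which $c_n = \sqrt{\tfrac{2}{n}\log\!\big(\tfrac{4Mn^2}{\delta}\big)}$ falls below the constant multiple of $\rho_m$ required by the elimination lemma. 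Solving $\tfrac{2}{n}\log(4Mn^2/\delta) \le \rho_m^2$ by the standard inversion of $\log(bn)/n \le \epsilon$ yields $n_m \le \tfrac{23\log(M/(\delta\rho_m))}{\rho_m^2}$, the constant $23$ absorbing the factors $2$ and $4$ and the $n^2$ inside the logarithm after a routine upper bound on $n$ there. Summing $N_m \le n_m$ over $m \in [M]$ gives the stated bound.

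The main obstacle I expect is the elimination lemma, specifically the bookkeeping showing that the piecewise definitions of $\rho_m^r$ (and $\rho_m^l$) really do imply the gap conditions $U\Delta_m^r < L\Delta^{(K-1)}$ and $L\Delta_m^r > U\Delta^{(K)}$: this requires carefully tracking how shrinking confidence intervals force the empirical gap ordering to agree with the true ordering and then quantifying the remaining slack, with the $\tfrac14$ versus $\tfrac18$ factors arising from whether a single confidence radius or a sum of two radii enters the relevant comparison. The confidence-width inversion and the final summation are routine and follow the corresponding arguments in \cite{katariya2019maxgap}.
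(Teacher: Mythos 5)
Your proposal is correct and follows essentially the same route as the paper: the paper likewise conditions on the good event \eqref{eq:confinterval}, proves elimination lemmas (Lemmas \ref{lemma:non top right}--\ref{lemma; top left}) showing via the same case split that $c_{N_m(t)} \leq \rho_m^r$ forces either $U\Delta_m^r(t) < L\Delta^{(K-1)}(t)$ or $L\Delta_m^r(t) > U\Delta^{(K)}(t)$ according to whether $m$'s right gap is among the top $K-1$ (with the $\tfrac14$ terms separating confidence intervals and the $\tfrac18$ terms supplying the slack against $L\Delta^{(K-1)}(t) \geq \Delta_{(K-1)} - 4c_t$ or $U\Delta^{(K)}(t) \leq \Delta_{(K)} + 4c_t$, exactly as you anticipate), and then inverts $c_n \leq \rho_m$ by the same computation (Lemma \ref{lemma: bound a arm}, producing the constant $23$) before summing over arms. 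Your observation that $\rho_m = \min\{\rho_m^r, \rho_m^l\}$ governs the later-classified side, and hence the total pull count of arm $m$, is precisely how the paper combines the four lemmas.
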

\begin{proof}
    See Appendix \ref{appsubsec:elimsc}.%\ref{appsec: bocelim}.
\end{proof}

\section{Implementation details and Computational Complexity Analysis} \label{sec: compcomplx}

In this section, we provide some implementation details and the computational complexity of the algorithms ATBOC, LUCBBOC, and BOC-ELIM proposed in Sections \ref{sec:AvgTrackingOC}, \ref{sec:LUCB}, and \ref{sec: BOC-Elim}, respectively. {We observe that solving the inner infimum is crucial for implementing the ATBOC and LUCBBOC algorithms.} In this section, first, we discuss the implementation details of the inner infimum functions $\psi(\cdot,\cdot)$ and $\psi_{\text{mod}}(\cdot, \cdot)$ used to compute the stopping statistic $Z(t)$ in ATBOC and LUCBBOC (Line 11 of Algorithm \ref{Algo:AvgTrackingOC}), and the optimal arm-pull proportion $\boldsymbol{w}^*(t)$ in ATBOC (Line 12 of Algorithm \ref{Algo:AvgTrackingOC}). Then we derive the order of computational complexity in terms of the problem parameters: the number of arms $M$, the number of clusters $K$, and the dimension $d$. 
{As discussed in the previous sections, the gap-based algorithms LUCBBOC and BOC-ELIM are proposed as more computationally efficient alternatives.}
Finally, through simulations, we demonstrate the computational efficiency of these gap-based algorithms relative to ATBOC by comparing their per-sample runtime.

\subsection{Inner infimum simplification}

We represent the inner infimum $\psi(\cdot, \cdot)$ as the minimum among the solutions of a finite number of quadratic constrained optimization problems. We use the structure of $\text{Alt}(\boldsymbol{\mu})$ to show this. This result is presented in Lemma~ \ref{Lemma:finiteconvex}. Recall that \(D_k \coloneqq \{ m \mid c_m = k \}\) represents the set of all arms in the \(k^{\text{th}}\) cluster. Let $m $ and $n$ be two arms from different clusters. For any two non-empty partitions $P_1$ and $P_2$ of $D_k$, we define the set  $A_{nmP_1P_2}$ as the collection of problem instances $\boldsymbol{\lambda}$ such that $\|\boldsymbol{\lambda}_a - \boldsymbol{\lambda}_b\|^2$ is greater than or equal to $\|\boldsymbol{\lambda}_m - \boldsymbol{\lambda}_n\|^2$ for all $a \in P_1, b \in P_2$. We define $q_{\boldsymbol{w}}(\boldsymbol{\lambda}) \coloneqq \sum_{m=1}^M w_m d_{\text{KL}}\left(\boldsymbol{\mu}_m , \boldsymbol{\lambda}_m\right)$.

\begin{lemma} \label{Lemma:finiteconvex}
    For any $\boldsymbol{\mu} \in \mathbb{R}^{d\times M}$ and $\boldsymbol{w} \in \mathcal{P}_M$, 
    \begin{equation*} \label{eq:innerminfinconvex}
    \psi(\boldsymbol{w}, \boldsymbol{\mu}) = \min_{k \in [K]}  \min_{\substack{P_1 \in 2^{D_k}\setminus\{ \emptyset, D_k \} \\ P_2 = D_k \setminus P_1}}  \min_{\substack{m \in D_p, n \in D_q \\ p, q\in [K], p \neq q}}  \min_{\substack{\boldsymbol{\lambda} \in \\A_{nmP_1P_2}}} {q_{\boldsymbol{w}}(\boldsymbol{\lambda})}, \quad \text{where}
    \end{equation*}
    % where
    \begin{equation*}
    \begin{split}
        A_{nmP_1P_2} =  \left\{ \boldsymbol{\lambda} \mid \|\boldsymbol{\lambda}_a - \boldsymbol{\lambda}_b\|^2 \geq \|\boldsymbol{\lambda}_m - \boldsymbol{\lambda}_n\|^2, \forall a \in P_1, b\in P_2 \right\}.
        \end{split}
    \end{equation*}
\end{lemma}

% In this section, first, we represent the inner infimum problem as the finite minimization of a quadratic constrained optimization problem (Lemma \ref{Lemma:finiteconvex}). Then, for the multivariate Gaussian distribution, for the special case where each cluster has at most two arms, we obtain a closed-form expression as the solution to the quadratic constrained optimization problem. We use a combination of this result and the Alternating Direction Method of Multipliers technique to solve the inner infimum problem for the multivariate Gaussian distribution, which will be used in ATBOC-subGauss. In addition, we simplify the inner infimum problem for the single-parameter exponential family of distributions to a finite minimization of closed-form expressions.

% In Lemma \ref{Lemma:finiteconvex}, we represent the inner infimum problem $\psi\left(\boldsymbol{\mu}, \boldsymbol{w}\right)$ as the finite minimization of Quadratic Contrained Quadratic program (QCQP). 

\begin{proof}
See Appendix \ref{appsec: computation}.
\end{proof}

Note that, in terms of implementation, the modified inner infimum function $\psi_{\text{mod}}(\cdot, \cdot)$ is the special case of the inner infimum function $\psi(\cdot, \cdot)$, where the KL divergence in $\psi(\cdot, \cdot)$ is considered for the Gaussian distribution. Hence, Lemma \ref{Lemma:finiteconvex} also holds for the modified inner infimum problem $\psi_{\text{mod}}(\cdot, \cdot)$, i.e.
\begin{equation} \label{eq:modinnerminfinconvex}
    \psi_{\text{mod}}(\boldsymbol{w}, \boldsymbol{\mu}) = \min_{k \in [K]}  \min_{\substack{P_1 \in 2^{D_k}\setminus\{ \emptyset, D_k \} \\ P_2 = D_k \setminus P_1}}  \min_{\substack{m \in D_p, n \in D_q \\ p, q\in [K], p \neq q}}  \min_{\substack{\boldsymbol{\lambda} \in \\A_{nmP_1P_2}}} \frac{1}{2\sigma^2}\sum_{m=1}^Mw_m \left\|\boldsymbol{\mu}_m - \boldsymbol{\lambda}_m\right\|^2.
    \end{equation}
{\em ATBOC-Gauss/ATBOC-subGauss and LUCBBOC:} Now we discuss the implementation of the modified inner infimum problem $\psi_{\text{mod}}(\cdot, \cdot)$, which can be used in ATBOC-Gauss/ATBOC-subGauss and LUCBBOC.
From \eqref{eq:modinnerminfinconvex}, to solve the modified inner infimum problem $\psi_{\text{mod}}\left(\boldsymbol{w}, \boldsymbol{\mu}\right)$, for each valid combination of $n, m, P_1, P_2$, we need to solve a minimization problem of the form $\min_{\substack{\boldsymbol{\lambda} \in \bigcap_{\substack{i \in P_1 j \in P_2}}\\ \left\{  d_{i, j}(\boldsymbol{\lambda})\right.  \left. \geq d_{m, n}(\boldsymbol{\lambda}) \right\}}} \sum_{m=1}^Mw_m \left\|\boldsymbol{\mu}_m - \boldsymbol{\lambda}_m\right\|^2$, which is a Quadratic Constraint Quadratic Program (QCQP). We solve the QCQP problem using an  Alternating Direction Method of Multipliers (ADMM) algorithm. 
% First, we observe that
For the special case where each cluster has at most two arms, i.e., $|P_1|=|P_2|=1$, the above QCQP problem reduces to the following single-constrained QCQP problem (1QCQP).  
% This minimization problem for the multivariate Gaussian distribution and for the special case where each cluster has at most two arms, i.e., $|P_1|=|P_2|=1$, simplifies as follows.
\begin{equation} \label{eq: QCQP1cons}
   \boldsymbol{\lambda}^{*} = \argmin_{\substack{\boldsymbol{\lambda} \in  \left\{  d_{i, j}(\boldsymbol{\lambda})\right.  \left. \geq d_{m, n}(\boldsymbol{\lambda}) \right\}}} \sum_{m=1}^Mw_m \left\|\boldsymbol{\mu}_m - \boldsymbol{\lambda}_m\right\|^2
\end{equation}

It can be verified that the 1QCQP problem in \eqref{eq: QCQP1cons}, on simplification using the Lagrange multiplier method, yields the following closed-form solution.\\
\textbf{Case 1: } $\{i, j\}\cap\{m, n\} = \emptyset$\\
\begin{equation}
    \boldsymbol{\lambda}_i = \frac{w_i\boldsymbol{\mu}_i + \frac{\gamma w_j}{\gamma-w_j}\boldsymbol{\mu}_j}{w_i + \frac{\gamma w_j}{\gamma-w_j}}, 
    \boldsymbol{\lambda}_j = \frac{w_j\boldsymbol{\mu}_j + \frac{\gamma w_i}{\gamma-w_i}\boldsymbol{\mu}_i}{w_j + \frac{\gamma w_i}{\gamma-w_i}}, 
    \boldsymbol{\lambda}_n = \frac{w_n\boldsymbol{\mu}_n + \frac{\gamma w_m}{\gamma+w_m}\boldsymbol{\mu}_m}{w_n + \frac{\gamma w_m}{\gamma+w_m}},
    \boldsymbol{\lambda}_m = \frac{w_m\boldsymbol{\mu}_m + \frac{\gamma w_n}{\gamma+w_n}\boldsymbol{\mu}_n}{w_m + \frac{\gamma w_n}{\gamma+w_n}}
\end{equation}
where, $\gamma = \frac{w_iw_jw_mw_n\left[\|\boldsymbol{\mu}_m-\boldsymbol{\mu}_n\|\pm\|\boldsymbol{\mu}_i-\boldsymbol{\mu}_j\|\right]}{w_mw_n(w_i+w_j)\|\boldsymbol{\mu}_m-\boldsymbol{\mu}_n\|\mp w_iw_j(w_m+w_n)\|\boldsymbol{\mu}_i-\boldsymbol{\mu}_j\|}$.\\
\textbf{Case 2: } $j=m$
\begin{equation}
    \boldsymbol{\lambda}_i = \frac{\gamma\boldsymbol{\lambda}j - w_i\boldsymbol{\mu}_i}{\gamma-w_i}, \boldsymbol{\lambda}_n=\frac{\gamma\boldsymbol{\lambda}_j+w_n\boldsymbol{\mu}_n}{\gamma+w_n}, \boldsymbol{\lambda}_j = \frac{w_j\boldsymbol{\mu}_j + \frac{\gamma w_n}{\gamma+w_n}\boldsymbol{\mu}_n+\frac{\gamma w_i}{\gamma - w_i}\boldsymbol{\mu}_i}{w_j + \frac{\gamma w_n}{\gamma+w_n}+\frac{\gamma w_i}{\gamma - w_i}}
\end{equation}
where 
\begin{equation}
\begin{aligned}
    &\gamma^2\left[w_i^2(w_j+2w_n)\|\boldsymbol{\mu}_j-\boldsymbol{\mu}_i\|^2-w_n^2(w_j+2w_i)\|\boldsymbol{\mu}_j-\boldsymbol{\mu}_n\|^2+2w_iw_n(w_n-w_i)(\boldsymbol{\mu}_j-\boldsymbol{\mu}_i)^T(\boldsymbol{\mu}_j-\boldsymbol{\mu}_n)\right] + \\
    &\gamma\left[2w_iw_n\left( w_i(w_j+w_n)\|\boldsymbol{\mu}_j-\boldsymbol{\mu}_i\|^2+w_n(w_j+w_i)\|\boldsymbol{\mu}_j-\boldsymbol{\mu}_n\| - 2w_iw_n(\boldsymbol{\mu}_j-\boldsymbol{\mu}_i)^T(\boldsymbol{\mu}_j-\boldsymbol{\mu}_n) \right)\right] +\\ 
    &\left[w_i^2w_n^2w_j\left(\|\boldsymbol{\mu}_j-\boldsymbol{\mu}_i\|^2-\|\boldsymbol{\mu}_j-\boldsymbol{\mu}_n\|^2\right)\right] = 0
    \end{aligned}
\end{equation}

\begin{algorithm}
\caption{\textit{ADMM}} \label{Algo:ADMM}
\begin{algorithmic}[1]
\STATE \textbf{Hyper parameters:} $\epsilon>0$, $\rho>0$.
% \STATE \textbf{Input:} $\delta, M, K$
\STATE \textbf{Initialize:} $i=0$, $\underline{\boldsymbol{\lambda}}^{(0)} = \underline{\boldsymbol{u}}_l^{(0)} = \underline{\boldsymbol{0}}$.
\REPEAT
\STATE $\underline{\boldsymbol{\lambda}}^{(i+1)}  = (A_0+m\rho I)^{-1}\left(b_0+\rho\sum_{l=1}^L\left(\underline{\boldsymbol{\lambda}}^{(i)}+\underline{\boldsymbol{u}}_l^{(i)}\right)\right)$.
\STATE $\underline{\boldsymbol{\eta}}_l^{(i+1)} = \argmin_{\underline{\boldsymbol{\eta}}}\left\| \underline{\boldsymbol{\eta}}-\underline{\boldsymbol{\lambda}}^{(i+1)} + \underline{\boldsymbol{u}}_l^{(i)} \right\|$ subject to $\underline{\boldsymbol{\eta}}^TA_l\underline{\boldsymbol{\eta}} - 2\boldsymbol{b}_l^T\underline{\boldsymbol{\eta}}\leq 0$ for all $l \in [L]$.
\STATE $\underline{\boldsymbol{u}}_l^{(i+1)} = \underline{\boldsymbol{u}}_l^{(i)} + \underline{\boldsymbol{\eta}}_l^{(i+1)} - \underline{\boldsymbol{\lambda}}^{(i+1)}$ for all $l \in [L]$.
\STATE $i \leftarrow i+1$.
\UNTIL $\left\|\underline{\boldsymbol{\lambda}}^{(i)} - \underline{\boldsymbol{\lambda}}^{(i-1)}\right\|<\epsilon$.

\end{algorithmic}
\end{algorithm}

It can be verified that the 1QCQP problem in \eqref{eq: QCQP1cons} can be written as 
\begin{equation}
    \min_{\underline{\boldsymbol{\lambda}}, \underline{\boldsymbol{\eta}}} \underline{\boldsymbol{\lambda}}^TA_0\underline{\boldsymbol{\lambda}} - 2\boldsymbol{b}_0^T\underline{\boldsymbol{\lambda}} \text{ subject to } \underline{\boldsymbol{\eta}}^TA\underline{\boldsymbol{\eta}}-2\boldsymbol{b}^T\underline{\boldsymbol{\eta}}\leq 0, \underline{\boldsymbol{\eta}}=\underline{\boldsymbol{\lambda}},
\end{equation}
for matrices $A_0, A_1 \in \mathbb{R}^{Md\times Md}$, vectors $\boldsymbol{b}_0, \boldsymbol{b}_1\in \mathbb{R}^{Md}$ and $\underline{\boldsymbol{\lambda}} = [\boldsymbol{\lambda}_1^T \ldots \boldsymbol{\lambda}_M^T]^T, \underline{\boldsymbol{\eta}} \in \mathbb{R}^{Md}$. 
Similarly, the general QCQP for the general case of each cluster can have any number of arms, i.e., $|P_1|\geq 1, |P_2|\geq 1$, takes the form:
\begin{equation} \label{eq: QCQP}
    \min_{\underline{\boldsymbol{\lambda}}, \{\underline{\boldsymbol{\eta}}_l\}_{l=1}^L} \underline{\boldsymbol{\lambda}}^TA_0\underline{\boldsymbol{\lambda}} - 2\boldsymbol{b}_0^T\underline{\boldsymbol{\lambda}} \text{ subject to } \underline{\boldsymbol{\eta}}_l^TA\underline{\boldsymbol{\eta}}_l-2\boldsymbol{b}^T\underline{\boldsymbol{\eta}}_l\leq 0, \underline{\boldsymbol{\eta}}_l=\underline{\boldsymbol{\lambda}}, \forall l \in [L],
\end{equation}
where $L=|P_1||P_2|$ is the number of constraints.
The above form is commonly referred to as the consensus form in the optimization literature and can be solved using ADMM \cite{huang2016consensus}. We present the pseudo code for the ADMM algorithm in Algorithm \ref{Algo:ADMM}. 
The ADMM algorithm for $L$ constrained QCQP problem involves solving $L$ 1QCQP problems (Line 5 in Algorithm \ref{Algo:ADMM}) iteratively until the solution converges. More details of the ADMM algorithm can be found in \cite{huang2016consensus}.
We use this ADMM-based implementation in our proposed ATBOC-subGauss algorithm.

% \subsection{Inner infimum simplification for single parameter exponential family of distributions}
{\em ATBOC-1pExp:} For the case of ATBOC-1pExp where we have single parameter exponential family of distributions, we  simplify the inner infimum problem $\psi(\cdot, \cdot)$ further to a minimum of a finite number of closed-form expressions. The simplification involves solving the quadratic constrained minimization problem $\min_{\substack{\boldsymbol{\lambda} \in \\A_{nmP_1P_2}}} q_{\boldsymbol{w}}(\boldsymbol{\lambda})$ in Lemma \ref{Lemma:finiteconvex} using the Lagrange method of multipliers, to get a finite minimization of closed form expressions. Then, we combine this finite minimization of closed form expressions with the other minimization terms $\min_{k \in [K]}  \min_{\substack{P_1 \in 2^{D_k}\setminus\{ \emptyset, D_k \} \\ P_2 = D_k \setminus P_1}}  \min_{\substack{m \in D_p, n \in D_q \\ p, q\in [K], p \neq q}}$ in Lemma \ref{Lemma:finiteconvex}. We skip the details of the simplifications. We present the final expression as follows.

\begin{equation} \label{eq: innermin1pexp}
    \begin{aligned}
        \psi(\boldsymbol{w}, \boldsymbol{\mu}) = \min_{k \in [K]}  \min_{\substack{R^{'} \in 2^{D_k} \setminus \left\{ \emptyset, D_k \right\} \\ L^{'} \in 2^{D_k \setminus R^{'}}\setminus \left\{\emptyset\right\}}}  \min_{\substack{m \in D_p, n\in D_q \\ p = q+1 \\ n \notin L^{'}, m \notin R^{'}}} f_w\left( \boldsymbol{\lambda}_{nmL^{'}R^{'}} \right) 
        \left[ \mathds{1}_{\left\{ a \in D_k : \lambda_a \leq \lambda_{nmL^{'}R^{'}}^{(L^{'})}, \lambda_a \geq \lambda_{nmL^{'}R^{'}}^{(R^{'})} \right\}} \right]^{-1}
    \end{aligned}
    \end{equation}
    where the expressions of $\lambda_{nmL^{'}R^{'}}$ for different cases are as follows.\\
    \textbf{Case 1: $n \notin R^{'} \text{ and } m \notin L^{'}$}
    \begin{equation}
        \left(\lambda_{nmL^{'}R^{'}}\right)_a = \begin{cases}
            \dot{A}^{-1}\left(\mu_{L^{'}} - \frac{x_a^{(1)}}{w_{L^{'}}}\right) &\text{ if } a \in L^{'}\\
            \dot{A}^{-1}\left(\mu_{R^{'}} + \frac{x_a^{(1)}}{w_{R^{'}}}\right) &\text{ if } a \in R^{'}\\
            \dot{A}^{-1}\left(\dot{A}(\mu_{n}) + \frac{x_a^{(1)}}{w_{n}}\right) &\text{ if } a =n\\
            \dot{A}^{-1}\left(\dot{A}(\mu_{m}) - \frac{x_a^{(1)}}{w_{m}}\right) &\text{ if } a =m\\
            \mu_a &\text{ otherwise}
        \end{cases},
        \text{ with, }
        x_a^{(1)} = \frac{\left[ \left(\dot{A}(\mu_m) - \dot{A}(\mu_n)\right) - (\mu_{R^{'}} - \mu_{L^{'}}) \right]}{\frac{1}{w_m} + \frac{1}{w_n} + \frac{1}{w_{R^{'}}} + \frac{1}{w_{L^{'}}}}
    \end{equation}
    \textbf{Case 2: $m \in L^{'}$}
    \begin{equation}
        \left(\lambda_{nmL^{'}R^{'}}\right)_a = \begin{cases}
            \dot{A}^{-1}\left(\mu_{L^{'}} - \frac{2x_a^{(2)}}{w_{L^{'}}}\right) &\text{ if } a \in L^{'}\\
            \dot{A}^{-1}\left(\mu_{R^{'}} + \frac{x_a^{(2)}}{w_{R^{'}}}\right) &\text{ if } a \in R^{'}\\
            \dot{A}^{-1}\left(\dot{A}(\mu_{n}) + \frac{x_a^{(2)}}{w_{n}}\right) &\text{ if } a =n\\
            \mu_a &\text{ otherwise}
        \end{cases},
        \text{ with, }
        x_a^{(2)} = \frac{\left[ 2\mu_{L^{'}} - \mu_{R^{'}} - \dot{A}(\mu_n) \right]}{\frac{4}{w_{L^{'}}} + \frac{1}{w_{R^{'}}} + \frac{1}{w_{n}}}
    \end{equation}
    \textbf{Case 3: $n \in R^{'}$}
    \begin{equation}
        \left(\lambda_{nmL^{'}R^{'}}\right)_a = \begin{cases}
            \dot{A}^{-1}\left(\mu_{L^{'}} - \frac{x_a^{(3)}}{w_{L^{'}}}\right) &\text{ if } a \in L^{'}\\
            \dot{A}^{-1}\left(\mu_{R^{'}} + \frac{2x_a^{(3)}}{w_{R^{'}}}\right) &\text{ if } a \in R^{'}\\
            \dot{A}^{-1}\left(\dot{A}(\mu_{m}) - \frac{x_a^{(3)}}{w_{m}}\right) &\text{ if } a =m\\
            \mu_a &\text{ otherwise}
        \end{cases},
        \text{ with, }
        x_a^{(3)} = \frac{\left[ \mu_{L^{'}} + \dot{A}(\mu_m) - 2\mu_{R^{'}}  \right]}{\frac{4}{w_{R^{'}}} + \frac{1}{w_{L^{'}}} + \frac{1}{w_{m}}}
    \end{equation}
    where
    \begin{equation}
        \begin{aligned}
            w_S = \sum_{a \in S} w_a \text{ and } 
            \mu_S = \frac{\sum_{a \in S}w_a \dot{A}(\mu_a)}{w_S}, \text{ for } S \in \{L^{'}, R^{'}\}
        \end{aligned}
    \end{equation}

\noindent In the above expression, $D_k$ is the $k^{\text{th}}$ cluster from left to right. i.e., $\mu_{i_1}< \ldots <\mu_{i_K}$, $\forall i_k\in D_k, \forall k \in [K]$. Let $({\lambda}_{nmL^{'}R^{'}})_i$ be the $i^{\text{th}}$ coordinate of $\boldsymbol{\lambda}_{nmL^{'}R^{'}}$ and for $S\in \{R^{'}, L^{'}\}$, we use ${\lambda}_{nmL^{'}R^{'}}^{(S)} = ({\lambda}_{nmL^{'}R^{'}})_i$, for $i\in S$.

\subsection{Computational complexity analysis}

{\em Stopping rule complexity for ATBOC-Gauss/ATBOC-subGauss and LUCBBOC:} We now derive the computational complexity order for solving the modified inner infimum problem based on ADMM technique discussed in the previous section. 
From \eqref{eq:modinnerminfinconvex}, the number of QCQP problems of the form \eqref{eq: QCQP} to be solved is \newline $\sum_{k \in [K]} \sum_{k_1=1}^{K-1} \sum_{k_2=k_1+1}^K\left(2^{|D_k|}-2\right)\left|D_{k_1}\right|\left|D_{k_2}\right|$. This number can be upper-bounded by $K^3s^22^s$, where $s = \max_{k\in [K]}|D_k|$ is the maximum cluster size. 
We solve each QCQP problem of the form in \eqref{eq: QCQP} using the ADMM technique, which involves solving $L=|P_1||P_2|\leq s^2$ number of 1QCQP problems in each iteration. Let $T_A$ be the number of iterations taken by the ADMM before it converges to the solution. From the solution for the 1QCQP in \eqref{eq: QCQP1cons}, we observe that the computaion complexity order for this step is $d$. Hence, the overall computational complexity order to solve the modified inner infimum problem is $T_AK^3s^42^sd$.
The stopping rule of the ATBOC-Gauss/ATBOC-subGauss and LUCBBOC requires the computation of the modified inner infimum problem once, and hence its computational complexity order is $T_AK^3s^42^sd$.

{\em Stopping rule complexity for ATBOC-1pExp:} We now derive the computation complexity order for the inner infimum problem, which is simplified as the finite minimization of the closed form expression \eqref{eq: innermin1pexp}. It can be verified that from \eqref{eq: innermin1pexp}, the number of terms in the minimization is $2^{2s}K^2s^2$ and the computational complexity order for computing one term is $s^2$. Hence, the overall computational complexity to solve the inner infimum problem for the single-parameter exponential family of distributions is $2^{2s}K^2s^4$. The stopping rule of ATBOC-1pExp requires the computation of the inner infimum problem once, and hence its computational complexity order is $2^{2s}K^2s^4$.

{\em Sampling rule complexity for ATBOC-Gauss/ATBOC-subGauss/ATBOC-1pExp:} The sampling rule in the ATBOC algorithm involves solving an outer supremum problem, which optimizes the (modified) inner infimum problem over the arm pull proportions $w\in \mathcal{P}_M$. To solve this optimization problem, we use Sequential Least Squares Programming (SLSQP). It involves computing the gradient of the function with respect to the optimization variable $w \in \mathcal{P}_M$ at each iteration of the SLSQP. So, it requires computing the inner infimum problem $M$ times in each iteration. Let $T_S$ be the number of iterations required by SLSQP to solve the outer supremum problem. Hence, the computational complexity order of the sampling rule of ATBOC-Gauss/ATBOC-subGauss is $T_SMT_AK^3s^42^sd$ and ATBOC-1pExp is $T_SM2^{2s}K^2s^4$.

{\em Sampling rule complexity for LUCBBOC:} The sampling rule of LUCBBOC algorithm involves solving $K$ SLINK clustering and hence the computation complexity order is given by $K\times s^2d = Ks^2d$.

{\em Declaration rule complexity:} The declaration rules of ATBOC-Gauss/ATBOC-subGauss, ATBOC-1pExp, and LUCBBOC are the same, and they involve updating the distance matrix and performing SLINK clustering, and their computational complexity order is given by $Md$ and $M^2$, respectively. Hence, the computation complexity order of the declaration rule for both ATBOC and LUCBBOC is $Md + M^2$. 

 % In the sampling rule inner infimum problem is optimized over the arm pull proportions $\boldsymbol{w}$. Let $T_S$ be the number of iterations taken for the outer optimization. Then its complexity is $T_ST_AK^3s^42^sd$. The declaration rule involves the implementation of the SLINK algorithm, and its complexity is of the order of $M^2d$. Stopping rule and Declaration rule are the same for the LUCBBOC algorithm. The sampling rule involves implementing SLINK for each of the clusters to divide into two partitions, and its complexity order is given by $K\sum_{k\in [K]}|D_k|^2d\leq Ks^2d$. 

{\em Complexity of BOC-ELIM:} The computation in BOC-ELIM is to find the terms in Line 6 of Algorithm \ref{Algo:BOC-Elim}. It can be verified that its computational complexity order is $M^3$.
 
 A summary of the computational complexity orders of ATBOC, LUCBBOC and BOC-ELIM, for sampling, stopping, and declaration rules, is presented in Table \ref{table: complexityorder}.
\begin{table}[h!] 
\centering
\begin{tabular}{|c|c|c|c|c|}
\hline
 & ATBOC-Gauss/ATBOC-subGauss & ATBOC-1pExp & LUCBBOC & BOC-ELIM \\ \hline
Sampling rule & $T_ST_AMK^3s^42^sd$ & $T_SM2^{2s}K^2s^4$ & $Ks^2d$ & $M^3$ \\ \hline
Stopping rule & $T_AK^3s^42^sd$ & $2^{2s}K^2s^4$ & $T_AK^3s^42^sd$ & $1$ \\ \hline
Declaration rule & $Md+M^2$ & $M^2$ & $Md+M^2$ & $1$ \\ \hline
\end{tabular}
\caption{Complexity order of ATBOC, LUCBBOC and BOC-ELIM}
\label{table: complexityorder}
\end{table}

% {\em A further simplification:}
\begin{remark} \label{rem:partial}
Note that the number of QCQPs to be solved to solve the inner infimum problem is $K^3s^22^s$, which grows very fast with $K$ and $s$. Hence, for the implementation purposes, we propose the following simplification. We do the full search only at exponentially spaced time steps, that is, $t\in \{2^k:k \in \mathbb{N}\}$. At these exponential times, when we do a full search, the top $c$ optimization problems are recorded in terms of their minimum value, and in the other time steps, only the top $c$ optimization problems are solved. This approach reduces the number of QCQPs to be solved to $c$. For example, ATBOC-subGauss in Figure \ref{fig:2Dbernoulli} in Section \ref{sec:Simulations}, partial search is used. Even with a partial search, it satisfies the upper bound expected sample complexity guarantee. The computational gain in using partial search will be presented in the discussion that follows (Setup 2).
\end{remark}

\begin{figure}[htb]
    \centering
    % First image
    \begin{minipage}[t]{0.48\linewidth}
        \centering
        \begin{tikzpicture}
        \begin{axis}[
    xlabel = {Number of arms $(M)$}, 
    ylabel = {\small{$\text{per sample run time (seconds)}$}},
    xmin = 3, xmax = 21,
    ymin = -8, ymax=0.5,
    xtick = {4, 8, 14, 20},
    ytick = {-8, -6, -4, -2, 0},
    yticklabels = {$e^{-8}$, $e^{-6}$, $e^{-4}$, $e^{-2}$, $e^{0}$},
    grid = major,
    legend style = {at={(1,0.70)}, anchor=north east, nodes={scale=0.66}},
    width = 0.9\textwidth,
    height = 0.9\textwidth,
    tick label style = {font=\tiny},
    enlargelimits = false,
]
\addplot[red, thick, mark=square, mark size=2pt] coordinates {
    (4,-3.949) (8,-2.699) (14,-1.809) (20,-0.949)
};
\addplot[magenta, thick, mark=diamond, mark size=3pt] coordinates {
    (4,-3.858) (8,-2.395) (14,-1.054) (20,0.125)
};
\addplot[blue, thick, mark=o, mark size=2pt] coordinates {
    (4,-3.852) (8,-2.347) (14,-0.934) (20,0.289)
};
\addplot[green!70!black, thick, mark=triangle, mark size=3pt] coordinates {
    (4,-7.279) (8,-6.821) (14,-6.375) (20,-5.973)
};
\addplot[purple, thick, mark=+, mark size=2pt] coordinates {
    (4,-7.278) (8,-6.736) (14,-6.185) (20,-5.612)
};
\addplot[violet, thick, mark=x, mark size=2pt] coordinates {
    (4,-7.134) (8,-6.591) (14,-6.151) (20,-5.571)
};
\addplot[black, thick, mark=asterisk, mark size=2pt] coordinates {
    (4,-7.79660109) (8,-7.11601022) (14,-6.6233285) (20,-6.3206355)
};

\legend{ATBOC-1pExp $d=1$, ATBOC-subGauss $d=5$, ATBOC-subGauss $d=10$, LUCBBOC $d=1$, LUCBBOC $d=5$, LUCBBOC $d=10$, BOC-ELIM $d=1$}
\end{axis}
        \end{tikzpicture}
        \caption{Numerical run time comparison (Setup 1)}
        % Number of arms per cluster is $2$, i.e., $s=2$. Therefore, $M=2K$. Number of arms $M$ and dimension $d$ is varied.}
        \label{fig:complexitydimension}
    \end{minipage}
    \hfill
    % Second image
    \begin{minipage}[t]{0.48\linewidth}
        \centering
        \begin{tikzpicture}
        \begin{axis}[
    xlabel = {Number of arms $(M)$}, 
    ylabel = {\small{$\text{per sample run time (seconds)}$}},
    xmin = 3, xmax = 11,
    ymin = -8, ymax = 5, 
    xtick = {4, 6, 8, 10},
    ytick = {-6, -4, -2, 0, 2, 4},
    yticklabels = {$e^{-6}$, $e^{-4}$, $e^{-2}$, $e^{0}$, $e^{2}$, $e^{4}$},
    grid = major,
    legend style = {at={(1,0.74)}, anchor=north east, nodes={scale=0.66}},
    width = 0.9\textwidth,
    height = 0.9\textwidth,
    tick label style = {font=\tiny},
    enlargelimits = false,
]
\addplot[blue, thick, mark=o, mark size=2pt] coordinates {
    (4,-3.858) (6,1.468) (8,2.803) (10,4.501)
};
\addplot[green!70!black, thick, mark=triangle, mark size=3pt] coordinates {
    (4,-7.278) (6,-4.104) (8,-3.110) (10,-1.524)
};
\addplot[red, thick, mark=square, mark size=2pt] coordinates {
    (4,-3.858) (6,1.468) (8,2.062) (10,2.485)
};
\addplot[magenta, thick, mark=diamond, mark size=3pt] coordinates {
    (4,-7.278) (6,-4.104) (8,-3.917) (10,-3.607)
};
\legend{ATBOC-subGauss full, LUCBBOC full, ATBOC-subGauss partial, LUCBBOC partial}
\end{axis}
        \end{tikzpicture}
        \caption{Numerical run time comparison (Setup 2)}
        % Number of clusters $K=2$ and dimension $d=5$. Both clusters are of equal size $M=2s$. The number of arms $M$ is varied. }
        \label{fig:complexity-fullpartial}
    \end{minipage}
\end{figure}
\subsection{Numerical runtime comparison}
To compare the runtime of the algorithms through simulations, we considered the following two setups. In Setup 1, we present the per-sample runtime of the proposed algorithms as the number of clusters $K$ increases, and in Setup 2, we present the runtime as the cluster size $s$ increases.

{\em Setup 1: }In this experiment, we set the number of arms per cluster to $2$, and hence $s=2$. Therefore, in this setup, the number of arms is twice the number of clusters, i.e., $ M = 2K$. We vary the number of arms $M \in \{4, 18, 14, 20\}$ and the dimension of the samples $d \in \{ 1, 5, 10\}$. We assume that the arms follow multivariate Gaussian distributions and that their means across dimensions are equal. The means of the arms in each of the dimensions is $[0, 0.5, 10, 10.5, \ldots, 10K, 10K+0.5]$. Fig.~\ref{fig:complexitydimension} presents a comparison of the algorithms' per-sample runtimes. As expected, the runtime per sample increases monotonically with the increasing number of arms $M$ and the increasing number of dimensions $d$. We note that the LUCBBOC and BOC-ELIM algorithms achieve a significant reduction in runtime compared to ATBOC. 

{\em Setup 2: }In this experiment, we fix the number of clusters $K=2$ and the dimension $d=5$. We assume that both clusters have an equal number of arms and, hence, $M=2s$. We assume that the arms follow a multivariate Gaussian distribution and that their means across dimensions are equal. The means of the arms in each dimension is $[0, 0.5, \dots, 0.5(s-1), 10, 10.5, \dots, 10+0.5(s-1)]$.
We vary the number of arms $M\in \{2, 6, 8, 10\}$. In Fig~\ref{fig:complexity-fullpartial}, we observe that the partial search explained in Remark \ref{rem:partial} shows a significant reduction in runtime compared to the full search for both algorithms.

\section{Simulations} \label{sec:Simulations}

In this section, we compare the performance of the proposed algorithms ATBOC-Gauss, ATBOC-subGauss, ATBOC-1pExp, LUCBBOC, and BOC-ELIM with the following algorithms: BOC, Top2UCB, RR (which uses the round-robin sampling rule), and the Fixed Sample Size (FSS) algorithm. 
%ATBOC-Gauss, ATBOC-subGauss, ATBOC-1pExp, LUCBBOC, and BOC-ELIM are the algorithms proposed in Sections \ref{sec:AvgTrackingOC}, \ref{sec:LUCB}, and \ref{sec: BOC-Elim}. 
BOC was proposed in \cite{yang2024optimal}  for the scenario in which all arms in a cluster have identical distributions. Top2UCB is a Max-Gap identification algorithm proposed in \cite{katariya2019maxgap} that partitions the arm collection into two clusters. The proposed algorithms can work for more general settings than BOC and Top2UCB while providing comparable performance for the cases for which these algorithms are designed. The RR algorithm uses the round-robin sampling rule along with the same stopping and declaration rule as our proposed algorithms. This comparison shows the effectiveness of the proposed sampling rule. The FSS algorithm stops after a fixed number of samples and takes the same number of samples from each arm. The declaration rule for FSS is also SLINK clustering. The comparison with FSS is used to show the effectiveness of stopping rule in our sequential setting. 
  
\begin{figure}[htb]
    \centering
    % First image
    \begin{minipage}[t]{0.48\linewidth}
    \centering
        \begin{tikzpicture}
        \begin{axis}[
            xlabel = {$\log\left(1/\delta\right)$}, 
            ylabel = {Expected Sample Complexity},
            xmin =0, xmax=101,
            ymin=0, ymax=22000,
            xtick={0, 20, 40, 60, 80, 100},
            xticklabels={0, 20, 40, 60, 80, 100},
            grid=major,
            ytick = {0, 4000, 8000, 12000, 16000, 20000},
            yticklabels = {0, 4000, 8000, 12000, 16000, 20000},
            scaled y ticks = false,
            legend style={at={(1, 0)}, legend cell align=left,anchor=south east, nodes={scale=0.80}},
            width = 0.9\textwidth, % Adjust width to fit in minipage
            height = 0.9\textwidth, % Adjust height proportionally
            tick label style={font=\small},
            enlargelimits=false,
        ]
        \addplot[mark=square, red, thick] coordinates{
            (1, 7242.86)
            (12, 8214.66)
            (23, 9132.44)
            (34, 10022.06)
            (45, 10922.54)
            (56, 11858.4)
            (67, 12795.14)
            (78, 13758.08)
            (89, 14641.92)
            (100, 15540.68)
        }; 
        \addplot[mark=diamond,mark size=3pt, magenta, thick] coordinates{
            (1, 6094.58)
            (12, 7319.7)
            (23, 8621.62)
            (34, 9894.56)
            (45, 11056.96)
            (56, 12270.34)
            (67, 13522.18)
            (78, 14700.92)
            (89, 16067.94)
            (100, 17280.14)
        };
        \addplot[mark=o, blue, thick] coordinates{
            (1, 3459.4)
            (12, 5765.1)
            (23, 7515.6)
            (34, 9119)
            (45, 11026.3)
            (56, 12806.6)
            (67, 14497.2)
            (78, 16156.2)
            (89, 17756.8)
            (100, 19430.3)
        };
        \addplot[mark=triangle, mark size=3pt, green!50!black, thick] coordinates{
            (1, 2917.36)
            (12, 5673.4)
            (23, 7568.44)
            (34, 9441.58)
            (45, 11140.48)
            (56, 13078.38)
            (67, 15064.58)
            (78, 17068.32)
            (89, 19312.3)
            (100, 21689.76)
        };
        % \addplot[black, dashed, thick] coordinates{
        %     (20.80, 340.016)
        %     (24.76, 362.8652)
        %     (28.72, 385.7144)
        %     (32.68, 408.5636)
        %     (36.64, 431.4128)
        %     (40.60, 454.262)
        % };
        % \addplot[red, dashed, thick] coordinates{
        %     (20.80, 407.0176)
        %     (24.76, 432.15172)
        %     (28.72, 457.28584)
        %     (32.68, 482.41996)
        %     (36.64, 507.55408)
        %     (40.60, 532.6882)
        % };
        % \addplot[green!50!black, dashed, thick] coordinates{
        %     (20.80, 434.0232)
        %     (24.76, 467.15454)
        %     (28.72, 500.28588)
        %     (32.68, 533.41722)
        %     (36.64, 566.54856)
        %     (40.60, 599.6799)
        % };
        % \addplot[blue, dashed, thick] coordinates{
        %     (22.67, 361.5733)
        %     (27.00, 411.58)
        %     (31.33, 461.5867)
        %     (35.67, 511.5933)
        %     (40.00, 561.6)
        % };

        % \legend{Lower Bound, Lower Bound-Slope, ATBOC, LUCBBOC, Upper Bound-Slope}
        \legend{ATBOC-1pExp-0.1, ATBOC-1pExp-0.45, ATBOC-Gauss, LUCBBOC}
        % , UB Slope-0.1, UB Slope-0.45, ATBOC-SubGauss UB Slope}
        \end{axis}

        \end{tikzpicture}
        % \caption{Comparison between ATBOC-Gauss and ATBOC-1pExp. Synthetic Dataset 1. (Slopes: ATBOC=5.2,7.7,11.14, UB=6.34, 8.36, 11.54, LUCBBOC=35)}
        \caption{Asymptotic performance of ATBOC-Gauss, ATBOC-1pExp and LUCBBOC for a BOC problem with $M=7$ Gaussian arms, $K=3$ clusters, $d=1$ dimension.}
        \label{fig:Comparison between ATBOC-Gauss and ATBOC-1pExp}
        \end{minipage}
    \hfill
    % Second image
    \begin{minipage}[t]{0.48\linewidth}
    \centering
        \begin{tikzpicture}
        \begin{axis}[
            xlabel = {$\log\left(1/\delta\right)$},
            ylabel = {Expected Sample Complexity},
            xmin = 1, xmax = 40,
            ymin = 3000, ymax = 15000,
            grid=major,
            legend style={at={(0,1)}, anchor=north west, nodes={scale=0.80}},
            width=0.9\textwidth,
            height=0.9\textwidth,
            tick label style={font=\small},
            enlargelimits=false
        ]

        % --- Coordinates ---
        % DTRACK
        \addplot[mark=+,purple,thick] coordinates{
            (1,3386.9) (5.333,4092.6) (9.667,4816.1) (14,5465.5)
            (18.333,6160.4667) (22.667,6891.5667) (27,7566.0333)
            (31.333,8224.3333) (35.667,8926.6667) (40,9558.3)
        };

        % ATRACK
        \addplot[mark=o,blue,thick] coordinates{
            (1,3707.4643) (5.333,4568.6071) (9.667,5276.5357) (14,5991.5)
            (18.333,6738.6071) (22.667,7334.1429) (27,8055.3214)
            (31.333,8762.6786) (35.667,9493.9286) (40,10250.7857)
        };

        % WATRACK
        \addplot[mark=x,violet,thick] coordinates{
            (1,3252.4118) (5.333,3941.1176) (9.667,4613.1765) (14,5307.5294)
            (18.333,5985.1176) (22.667,6621.5882) (27,7274.5294)
            (31.333,7960.5294) (35.667,8646.1765) (40,9286.7059)
        };

        % % OPT-ATRACK
        % \addplot[mark=x,purple,thick] coordinates{
        %     (1,6362.1) (5.333,6793.54) (9.667,7259.3) (14,7691.06)
        %     (18.333,8056.5) (22.667,8415.44) (27,8849.02)
        %     (31.333,9237.48) (35.667,9627.5) (40,10038.24)
        % };

        % % BOC-ELIM
        % \addplot[mark=x,green,thick] coordinates{
        %     (1,4692.07) (5.333,5868.86) (9.667,6983.3) (14,8139.16)
        %     (18.333,9258.09) (22.667,10320.15) (27,11444.73)
        %     (31.333,12528.38) (35.667,13579) (40,14568.73)
        % };

        % % LUCBBOC
        % \addplot[mark=x,orange,thick] coordinates{
        %     (1,6199.6771) (5.333,8170.5625) (9.667,10034.8958) (14,12021.3594)
        %     (18.333,14147.0781) (22.667,16591.0208) (27,19071.099)
        %     (31.333,21806.5938) (35.667,24687.4010) (40,27482.3594)
        % };

        % UB SLOPE
        % \addplot[dashed,thick] coordinates{
        %     (1, 2*84*1 + 4000)
        %     (40, 2*84*40 + 4000)
        % };

        % \legend{DTBOC, ATBOC-SubGauss, WATBOC, ATBOC-1pExp, BOC-ELIM, LUCBBOC, UB-slope}
        \legend{DTBOC, ATBOC-Gauss, WATBOC}

        \end{axis}
        \end{tikzpicture}
        \caption{Asymptotic comparison of ATBOC-Gauss with direct tracking and weighted average tracking for a BOC problem with $M=7$ Gaussian arms, $K=3$ clusters, $d=1$ dimension. 
        % Synthetic Dataset 2. (Slopes: UB=168, ATBOC=168, DTBOC=155, WATBOC=155- last 4 points)
        }
        \label{fig:tracking comparison}
        \end{minipage}
\end{figure}

\subsection{Theoretical guarantees of the proposed algorithms}
To emphasize the asymptotic performance of the algorithms, we plot the sample complexity results versus $\log(1/\delta)$. Recall that $\delta$ is the upper bound on the empirical probability $P_e$. {For all plots, we average over $1000$ trials when plotting versus $\log(1/\delta)$ and over $100$ trials when plotting versus $\log(1/P_e)$.}

% \textbf{Synthetic Dataset 2:} Consider the clustering problem in Fig.~\ref{fig: SLINKdescription}, where the arms follow Gaussian distributions with unit variance.\\
% \textbf{Synthetic Dataset 3:} Consider $M=6$ arms, following $d=2$ dimensional Bernoulli distributions with means $\boldsymbol{\mu} = \begin{bmatrix}
%          0.1 & 0.1 & 0.85 & 0.9 & 0.85 & 0.9\\
%          0.45 & 0.5 & 0.9 & 0.9 & 0.15 & 0.1
%     \end{bmatrix}$. $M=6$ arms form $K=3$ clusters with the true cluster index vector $\boldsymbol{c} = [1, 1, 2, 2, 3, 3]$.
% \begin{itemize}
%     \item $M=6$ arms, $d=2$ dimension, $K=3$ clusters
%     \item Each arm follows a 2 dimensional Bernoulli distribution with means 
%     \item True cluster index vector $\boldsymbol{c} = [1, 1, 2, 2, 3, 3]$.
% \end{itemize}
\subsubsection{One-dimensional samples $(d=1)$} 

{\em Synthetic Dataset 1:} Consider $M=7$ arms, following Gaussian distributions with means $\boldsymbol{\mu} = [0, 0.5, 1, 2.5, 3, 4.5, 5]$ and unit variance, generating $d=1$ dimensional samples. The $M=7$ arms form $K=3$ clusters with the true cluster index vector $\boldsymbol{c} = [1, 1, 1, 2, 2, 3, 3]$.

 {\em Asymptotic Performance -- Gaussian:} 
Fig.~\ref{fig:Comparison between ATBOC-Gauss and ATBOC-1pExp} presents the expected sample complexity of ATBOC-Gauss, ATBOC-1pExp, and LUCBBOC for Synthetic Dataset 1 over the wide range of values of $\log(1/\delta)$. 
To numerically evaluate the asymptotic expected sample complexity slope, we use linear regression to fit a line to the last 5 points in the asymptotic regime. From Table \ref{Table: slope}, we observe that the expected sample complexity slope of ATBOC-1pExp (for two different values of $\zeta=0.1, 0.45$) is lower than ATBOC-Gauss and LUCBBOC. Additionally, the numerical asymptotic slopes of ATBOC-Gauss, ATBOC-1pExp, and LUCBBOC satisfy their corresponding theoretical asymptotic expected sample complexity slope guarantees. In Fig.~\ref{fig:Comparison between ATBOC-Gauss and ATBOC-1pExp}, we observe that ATBOC-Gauss and LUCBBOC perform better for higher error probability $\delta$, while ATBOC-1pExp (shown for two different values of $\zeta=0.1, 0.45$) performs better in the asymptotic regime, due to its better slope (lower slope). 
% Figure~\ref{fig:Comparison between ATBOC-Gauss and ATBOC-1pExp} shows the performance of the two variants of ATBOC along with their theoretical bounds. We denote the theoretical upper-bound slope by \textit{UB Slope} for $\zeta = 0.1, 0.45$. The ATBOC-SubGauss algorithm has an upper-bound theoretical guarantee of twice the lower-bound slope, denoted by \textit{ATBOC-SubGauss UB Slope}. From Figure~\ref{fig:Comparison between ATBOC-Gauss and ATBOC-1pExp}, we observe that ATBOC-1pExp satisfies the theoretical guarantees for $\zeta = 0.1$ and $0.45$; moreover, due to its improved slope, ATBOC-1pExp outperforms ATBOC-SubGauss and LUCBBOC in the asymptotic regime. 
\begin{table}[h!]
\centering
\begin{tabular}{|c|c|c|c|c|c|c|}
\hline
  & LUCBBOC & ATBOC-Gauss & DTBOC & WATBOC & \makecell{ATBOC-1pExp\\$\zeta=0.45$} & \makecell{ATBOC-1pExp\\$\zeta=0.1$}   \\ \hline
 Numerical & 186 & 150 & 155 & 155 & 114.2 & 83.7   \\ \hline
 Theoretical Upper Bound & 190 & 164 & - & - & 121.8 & 92.4    \\ \hline
 Theoretical Lower Bound & \multicolumn{6}{c|}{84} \\ \hline
\end{tabular}
\caption{Sample complexity slopes for Figures \ref{fig:Comparison between ATBOC-Gauss and ATBOC-1pExp} and \ref{fig:tracking comparison}.}
\label{Table: slope}
\end{table}

% \begin{table}[h!]
% \centering
% \begin{tabular}{|c|c|c|c|}
% \hline
%   & ATBOC-Gauss & \makecell{ATBOC-1pExp\\$\zeta=0.45$} & \makecell{ATBOC-1pExp\\$\zeta=0.1$}   \\ \hline
%  Numerical & 150 & 114.2 & 83.7   \\ \hline
%  Theoretical Upper Bound & 164 & 121.8 & 92.4    \\ \hline
%  Theoretical Lower Bound & \multicolumn{3}{c|}{84} \\ \hline
% \end{tabular}
% \caption{Sample complexity slopes for Figures \ref{fig:Comparison between ATBOC-Gauss and ATBOC-1pExp} and \ref{fig:tracking comparison}.}
% \label{Table: slope}
% \end{table}

{\em Comparison between different tracking strategies: }
For the Synthetic Dataset 1, we also compare ATBOC-Gauss with two other tracking schemes - Direct Tracking Bandit Online Clustering (DTBOC) and Weighted Average Tracking Bandit Online Clustering (WATBOC). Unlike in ATBOC, where the empirical proportions $\frac{\boldsymbol{N}(t)}{t}$ track the average of the estimates $\frac{1}{t}\sum_{s=1}^t\boldsymbol{w}(s)$ (Line 7 of Algorithm~\ref{Algo:AvgTrackingOC}), in DTBOC, the empirical proportions directly track the estimate $\boldsymbol{w}(t)$, and in WATBOC, the empirical proportions track an exponentially weighted average of estimates $\frac{1}{2(1-0.5^t)}\sum_{s=1}^t \frac{1}{2^{t-s}}\boldsymbol{w}(s)$. 
From Fig.~\ref{fig:tracking comparison}, we observe that all tracking schemes show comparable performance in terms of expected sample complexity. 
Although our work contains no theoretical guarantees for the D-tracking scheme due to technical difficulties highlighted in Remark \ref{rem:dtboc}, from Table \ref{Table: slope}, we observe that, for Synthetic Dataset 1, the numerical asymptotic slopes of all tracking schemes are comparable and satisfy the theoretical asymptotic slope derived for the average tracking scheme.

% We consider Synthetic Dataset 2. 
% Figure \ref{fig:tracking comparison} shows the performance comparison of the various algorithms ATBOC-SubGauss, DTBOC, and WATBOC, which differ only by their tracking scheme. In DTBOC, which is expanded as the Direct Tracking Bandit Online Clustering algorithm, at any time step $t$, empirical arm pull proportions will track $w(t)$.
% In the WATBOC, expanded as Weighted Average Tracking Bandit Online Clustering Algorithm, we do an exponential weighted average by giving more weights to the recently computed arm pull proportions with a weighting factor being $0.5$. For this problem setup, we observe that WATBOC shows better performance.
% Figure \ref{fig:TrackingStrategies} shows the performance of algorithms with various tracking strategies for synthetic dataset 1. We observe that the ATBOC-SubGauss satisfies the theoretical upper bound Guarantees. In DTBOC (Direct Tracking BOC), at each time $t$, we directly track the optimal arm pull proportion $\boldsymbol{w}^{*}(t)$ computed at time $t$. In WATBOC (Weighted Average Tracking BOC), we give exponential weight to arm pull proportions computed till time $t$ and track the weighted average of these arm pull proportions. We observe that all tracking strategies have a similar slope, and the performance of WATBOC is slightly better than DTBOC and ATBOC. 
% Figure \ref{fig:TrackingStrategies}, Synthetic Dataset 2 
\subsubsection{Multi-dimensional samples $(d>1)$}
{\em Asymptotic performance -- Gaussian: } 
We consider the BOC problem illustrated in Fig.~\ref{fig: SLINKdescription}, where there are $M = 6$ arms and $K = 3$ clusters, and the arms are assumed to be multivariate Gaussian distributed with dimension $d = 2$.
For the plots in Fig.~\ref{fig:optimalityd2}, the numerical slopes of ATBOC-Gauss (40), LUCBBOC (48), and RR (75) satisfy the corresponding theoretical upper-bound slopes of 40, 48, and 73, respectively.
We observe that the LUCBBOC algorithm, while being computationally efficient (as discussed in Section \ref{sec: compcomplx}), exhibits performance comparable to that of the ATBOC algorithm. Additionally, both ATBOC and LUCBBOC algorithms show considerably better performance than RR. 

%-----------------------------------------------------------------------------
\begin{figure}[htb]
    \centering
    % First image
    \begin{minipage}[t]{0.48\linewidth}
        \centering
        \begin{tikzpicture}
            \begin{axis}[
    xlabel = {$\log\left(1/\delta\right)$},
    ylabel = {Expected Sample Complexity},
    xmin = 1, xmax = 205,
    ymin = 1000, ymax = 20000,
    grid = major,
    legend style = {at={(0,1)}, anchor=north west, nodes={scale=0.80}},
    width = 0.9\textwidth,
    height = 0.9\textwidth,
    tick label style = {font=\small},
    enlargelimits = false,
]

% Upper bound slopes
% \addplot[
%     color=red,
%     thick,
%     dashed
% ] coordinates {
%     (1,1840) (200,9800)
% };

% \addplot[
%     color=blue,
%     thick,
%     dashed
% ] coordinates {
%     (1,2248) (200,11800)
% };

% \addplot[
%     color=green!70!black,
%     thick,
%     dashed
% ] coordinates {
%     (1,3872) (200,17900)
% };

% Algorithms
\addplot[
    color=blue,
    thick,
    mark=o,
    mark size=2pt
    % error bars/.cd,
    %     y dir=both,
    %     y explicit
] coordinates {
    (1,1216.685) (23,2280.63) (45,3230.715) (67,4207.625)
    (89,5085.445) (111,6008.605) (133,6894.17)
    (155,7792.87) (177,8685.21) (200,9568.005)
%     (1,1247.01) +- (0,131.9737)
% (23,2268.66) +- (0,154.4332)
% (45,3228.34) +- (0,180.6957)
% (67,4134.57) +- (0,191.9377)
% (89,5061.04) +- (0,215.7569)
% (111,5952.06) +- (0,221.1047)
% (133,6851.54) +- (0,229.9739)
% (155,7724.38) +- (0,244.3666)
% (177,8624.17) +- (0,260.0051)
% (200,9506.78) +- (0,257.4967)
};

\addplot[
    color=green!60!black,
    thick,
    mark=triangle,
    mark size=3pt
    % error bars/.cd,
    %     y dir=both,
    %     y explicit
] coordinates {
    (1,1485.375) (23,2746.84) (45,3899.91) (67,5005.465)
    (89,6123.06) (111,7242.93) (133,8312.55)
    (155,9401.47) (177,10498.74) (200,11586.845)
%     (1,1459.84) +- (0,143.7250)
% (23,2696.44) +- (0,148.8741)
% (45,3896.85) +- (0,160.3709)
% (67,4962.61) +- (0,174.2728)
% (89,6065.09) +- (0,190.5981)
% (111,7133.68) +- (0,201.7094)
% (133,8254.98) +- (0,214.1964)
% (155,9333.86) +- (0,233.5389)
% (177,10438.46) +- (0,233.0016)
% (200,11519.84) +- (0,244.3196)
};

\addplot[
    color=violet,
    thick,
    mark=x,
    mark size=2pt
] coordinates {
    (1,2705.364) (23,4608.028) (45,6303.794) (67,7983.346)
    (89,9639.336) (111,11254.028) (133,12905.028)
    (155,14607.664) (177,16281.047) (200,17951.523)
};

\legend{
% UB-slope, UB-LUCBBOC-slope, UB-RR-slope, 
ATBOC-Gauss, LUCBBOC, RR}

\end{axis}

        \end{tikzpicture}
        % \caption{Asymptotic guarantees of ATBOC, LUCBBOC, RR}
        % \label{fig:slopeLUCBBOC}
        \caption{Asymptotic comparison of ATBOC-Gauss and LUCBBOC with RR for a BOC problem with $M=6$ Gaussian arms, $K=3$ clusters, $d=2$ dimensions.
        % Multivariate Gaussian. Synthetic Dataset 3. (Slopes: LB=20, UB=40, ATBOC=40, LUCBBOCUB = 48, LUCBBOC=48, RRUB = 72, RR=75)
        }
        \label{fig:optimalityd2}
    \end{minipage}
    \hfill
    % Second image
    \begin{minipage}[t]{0.48\linewidth}
    \centering
        \begin{tikzpicture}[scale=1]
        \begin{axis}[
    xlabel = {$\log\left(1/\delta\right)$}, 
    ylabel = {Expected Sample Complexity},
    grid=major,
    xmin = 0, xmax = 210,
    ymin = 2000, ymax = 20000,
    legend style={at={(0,1)}, anchor=north west, nodes={scale=0.8}},
    width=0.9\textwidth,
    height=0.9\textwidth,
    tick label style={font=\small}
]

% --- UB-slope ---
% \addplot[dashed,black,thick] coordinates{
%     (1.000, 3566.00)
%     (23.111, 5025.33)
%     (45.222, 6484.67)
%     (67.333, 7944.00)
%     (89.444, 9403.33)
%     (111.556, 10862.67)
%     (133.667, 12322.00)
%     (155.778, 13781.33)
%     (177.889, 15240.67)
%     (200.000, 16700.00)
% };

% --- ATBOC-subGauss ---
% \addplot[mark=x,blue,thick] coordinates{
%     (1.000, 2573.44)
%     (23.111, 4171.72)
%     (45.222, 5875.96)
%     (67.333, 7308.08)
%     (89.444, 8968.42)
%     (111.556, 10591.49)
%     (133.667, 12013.79)
%     (155.778, 13435.84)
%     (177.889, 14813.44)
%     (200.000, 16269.36)
% };
\addplot[
    mark=o,
    blue,
    thick,
    error bars/.cd,
        y dir=both,
        y explicit
] coordinates{
    % (1.000, 2573.44) +- (0,233.713)
    % (23.111, 4171.72) +- (0,316.165)
    % (45.222, 5875.96) +- (0,317.477)
    % (67.333, 7308.08) +- (0,401.066)
    % (89.444, 8968.42) +- (0,626.500)
    % (111.556, 10591.49) +- (0,487.560)
    % (133.667, 12013.79) +- (0,511.051)
    % (155.778, 13435.84) +- (0,604.857)
    % (177.889, 14813.44) +- (0,722.473)
    % (200.000, 16269.36) +- (0,876.110)

    (1.000, 2573.44) +- (0,116.8565)
(23.111, 4171.72) +- (0,158.0825)
(45.222, 5875.96) +- (0,158.7385)
(67.333, 7308.08) +- (0,200.5330)
(89.444, 8968.42) +- (0,313.2500)
(111.556, 10591.49) +- (0,243.7800)
(133.667, 12013.79) +- (0,255.5255)
(155.778, 13435.84) +- (0,302.4285)
(177.889, 14813.44) +- (0,361.2365)
(200.000, 16269.36) +- (0,438.0550)
};

% --- LUCBBOC ---
% \addplot[mark=x,red,thick] coordinates{
%     (1.000, 2972.06)
%     (23.111, 4887.27)
%     (45.222, 6673.66)
%     (67.333, 8453.26)
%     (89.444, 10241.73)
%     (111.556, 12017.82)
%     (133.667, 13732.14)
%     (155.778, 15495.51)
%     (177.889, 17183.88)
%     (200.000, 18981.09)
% };
\addplot[mark=triangle,green!70!black,thick, mark size = 3pt,
    error bars/.cd,
        y dir=both,
        y explicit] coordinates{
    % (1.000, 3071.18) +- (0, 543.413)
    % (23.111, 4921.03) +- (0, 572.199)
    % (45.222, 6706.64) +- (0, 617.455)
    % (67.333, 8443.76) +- (0, 607.869)
    % (89.444, 10174.22) +- (0, 642.032)
    % (111.556, 11977.21) +- (0, 672.400)
    % (133.667, 13729.37) +- (0, 764.198)
    % (155.778, 15424.89) +- (0, 782.582)
    % (177.889, 17206.28) +- (0, 892.852)
    % (200.000, 18831.92) +- (0, 855.869)

    (1.000, 3071.18) +- (0, 271.7065)
(23.111, 4921.03) +- (0, 286.0995)
(45.222, 6706.64) +- (0, 308.7275)
(67.333, 8443.76) +- (0, 303.9345)
(89.444, 10174.22) +- (0, 321.0160)
(111.556, 11977.21) +- (0, 336.2000)
(133.667, 13729.37) +- (0, 382.0990)
(155.778, 15424.89) +- (0, 391.2910)
(177.889, 17206.28) +- (0, 446.4260)
(200.000, 18831.92) +- (0, 427.9345)
};

\legend{
% UB-slope,
ATBOC-subGauss,
LUCBBOC
}

\end{axis}

        \end{tikzpicture}
        \caption{Asymptotic Performance of ATBOC-subGauss and LUCBBOC for a BOC problem with $M=6$ Bernoulli arms and $K=3$ clusters, $d=2$ dimensions.
        % Multivariate Bernoulli. Synthetic Dataset 4. (slopes: UB = 66, ATBOC=64, LUCBBOC=79).
        }
        \label{fig:2Dbernoulli}
        \end{minipage}
\end{figure}
% \begin{table}[h!]
% \centering
% \begin{tabular}{|c|c|c|c|c|}
% \hline
%   & RR & LUCBBOC & ATBOC-Gauss &  Lower Bound  \\ \hline
%  Numerical & 75 & 48 & 40 &  - \\ \hline
%  Theoretical &  73 & 48 & 40  & 20  \\ \hline
% \end{tabular}
% \caption{Sample complexity slopes for Figure \ref{fig:optimalityd2}.}
% \label{Table: slope2}
% \end{table}

% We consider Synthetic Dataset 3. We show the asymptotic behavior of our proposed algorithms in Figure \ref{fig:optimalityd2}.
% We plot the lower bound in Theorem \ref{Theorem:lowerbound} (marked as Lower Bound) and a shifted version (marked as Lower Bound-Slope) for ease of comparison with the proposed algorithms. 
% The asymptotic upper bound of the ATBOC algorithm, as dictated by Theorems \ref{Theorem3:AlmostSureOptimal} and \ref{Theorem4:AsymptoticOptimal}, is plotted with a slope twice that of the Lower Bound-Slope; we call it as Upper Bound-Slope. We observe that the proposed ATBOC algorithm has a slope less than that of our derived upper bound (Theorem \ref{Theorem4:AsymptoticOptimal}). The more computationally efficient LUCBBOC algorithm has a slightly higher slope but performs comparably to the ATBOC algorithm. 

\noindent {\em Asymptotic Performance -- sub-Gaussian:} Consider $M=6$ arms, following $d=2$ dimensional Bernoulli distributions with means $\boldsymbol{\mu} = \begin{bmatrix}
         0.1 & 0.1 & 0.85 & 0.9 & 0.85 & 0.9\\
         0.45 & 0.5 & 0.9 & 0.9 & 0.15 & 0.1
    \end{bmatrix}$. The $M=6$ arms form $K=3$ clusters with the true cluster index vector $\boldsymbol{c} = [1, 1, 2, 2, 3, 3]$. 
In Fig.~\ref{fig:2Dbernoulli}, we observe that the performance of the ATBOC algorithm is comparable to that of the LUCBBOC algorithm. 
    {The vertical markers represent the standard deviation of the empirical sample complexity at each point in the plot. We do not include them in the other plots to avoid clutter.}
    As expected, the numerical asymptotic slopes of both ATBOC (64) and LUCBBOC (79) satisfy their corresponding theoretical asymptotic slopes of 66 and 80, respectively.
    % shows the performance of ATBOC-SubGauss and LUCBBOC algorithm. We observe that ATBOC-SubGauss satisfies the theoretical upper bound guarantee for this dataset.
\subsection{Performance comparison with existing literature}
To compare performance with the algorithms in the literature, we plot the sample complexity results versus $\log(1/P_e)$, where $P_e$ is the empirical probability of error. Moreover, unlike in the previous section, where the emphasis was on validating asymptotic guarantees in the low error probability regime, here we focus on the more practical non-asymptotic regime, where the probability of error is between $e^{-1}$ and $e^{-4}$.
\subsubsection{One-dimensional samples $(d=1)$}
{\em Comparison with RR and FSS: }
We use Synthetic Dataset 1. From Fig.~\ref{fig:SeqImproves}, we observe that ATBOC shows the best performance, followed by LUCBBOC. As expected, both ATBOC and LUCBBOC, as a consequence of their sampling and stopping rules, perform better than RR and FSS.

% Figure \ref{fig:SeqImproves} presents a performance comparison of various algorithms for synthetic dataset 1. The Round Robin algorithm has the same stopping rule as that of our proposed ATBOC-subGauss algorithm, except that there is no sampling rule; instead, at each time, arms are chosen to get a sample in the round robin fashion. In the FSS Round Robin, there is no stopping rule as well, and the algorithm stops after some fixed number of predetermined samples. All the sequential algorithms perform better than the Fixed sample size approach FSS Round Robin. In addition, our sampling strategy in our proposed algorithms ATBOC and LUCBBOC improves performance over the round robin sampling strategy.\\
\noindent {\em Comparison with MaxGap Algorithms: }
Consider $M=6$ arms, where each arm follows Gaussian distribution with means $\boldsymbol{\mu} = [0, 1, 2, 3, 5, 6]$, generating $d=1$ dimensional samples. The $M=6$ arms form $K=2$ clusters, where the true cluster index vector is $\boldsymbol{c} = [1, 1, 1, 1, 2, 2]$. From Fig.~\ref{fig:differentmean-literaturecomp}, we observe that the proposed algorithms ATBOC-Gauss, LUCBBOC and BOC-ELIM show comparable performance to Top2UCB, which is designed for the special case of $K=2$ clusters with $d=1$ dimensional samples. Since BOC is designed for the setting where the arms in a cluster have an identical distribution, it shows poor performance.

% We consider the synthetic dataset 4. We compared the performance of our proposed algorithms with the max-gap algorithms (MaxGapElim, MaxGapUCB, MaxGapTop2UCB) proposed in \cite{katariya2019maxgap} for $2$-cluster problems. For the synthetic problem instance, we found MaxGapTop2UCB to be the most effective among the three MaxGap algorithms. To keep the plot uncluttered, we compared our proposed algorithms solely with MaxGapTop2UCB, in Figure \ref{fig:differentmean-literaturecomp}. We plot the expected sample complexity of our proposed algorithms ATBOC-SubGauss, LUCBBOC, and BOC-Elim. We observe that our proposed algorithms ATBOC-SubGauss and LUCBBOC shows comparable performance to MaxGapTop2UCB, which is specifically designed for $2$-cluster problems. We observe that BOC shows poor performance for the more spread clusters.

\begin{figure}[htb]
    \centering
    % First image
    \begin{minipage}[t]{0.32\linewidth}
    \centering
        \begin{tikzpicture}
        \begin{axis}[
            xlabel = {$\log\left(1/P_e\right)$}, 
            ylabel = {Expected Sample Complexity},
            xmin =0.0, xmax=4,
            ymin=10, ymax=240,
            xtick={0,1,2,3,4},
            xticklabels={0,1,2,3,4},
            grid=major,
            ytick = {0,40,80,120,160,200,240},
            legend style={at={(0, 1)}, anchor=north west, nodes={scale=0.80}},
            width = \textwidth,
            height = \textwidth,
            tick label style={font=\small},
            enlargelimits=false,
        ]

        % ---------------- ATBOC ----------------
        \addplot[mark=o,blue,thick] coordinates{
            (0.268, 18.46)
            (0.616, 34.39)
            (1.094, 53.385)
            (1.609, 69.545)
            (2.251, 89.46)
            % (2.465, 102.865)
            (3.101, 122.11)
            (3.507, 138.275)
        };

        % ---------------- Round Robin ----------------
        \addplot[mark=x,violet,thick] coordinates{
            (0.699, 41.0505)
            (1.313, 71.5609)
            (1.840, 97.1524)
            (2.324, 118.8336)
            (2.908, 144.7899)
            (3.485, 170.5007)
        };

        % ---------------- FSS ----------------
        \addplot[mark=+,purple,thick] coordinates{
            (0.297, 20)
            (0.627, 40)
            (1.156, 70)
            (1.574, 100)
            (2.004, 130)
            (2.454, 170)
            (2.914, 200)
        };

        % ---------------- LUCBBOC ----------------
        \addplot[mark=triangle,green!70!black,thick, mark size=3pt] coordinates{
            (0.359, 18.732)
            (0.769, 36.875)
            (1.186, 56.518)
            (1.593, 76.9)
            (2.079, 96.786)
            (2.564, 117.815)
            (3.057, 144.093)
            (3.575, 167.264)
        };

%         \addplot[mark=x,magenta, thick] coordinates{
% % \addplot[mark=x, thick] coordinates{
% (0.168 , 11.6672)
% (0.327 , 21.0583)
% (0.603 , 39.8212)
% (1.584 , 101.8499)
% (2.214 , 134.0151)
% (3.038 , 166.3669)
% (3.879 , 198.0093)
% % };
% };

        \legend{ATBOC-Gauss, RR, FSS, LUCBBOC
        % , BOC-ELIM
        }

        \end{axis}
        \end{tikzpicture}
        \caption{Comparison of ATBOC-Gauss and LUCBBOC with RR and FSS for a BOC problem with $M=7$ Gaussian arms, $K=3$ clusters, $d=1$ dimension.}
        \label{fig:SeqImproves}
    \end{minipage}
    \hfill
    % Second image
    \begin{minipage}[t]{0.32\linewidth}
    \centering
        \begin{tikzpicture}
        \begin{axis}[
    xlabel = {$\log\left(1/P_e\right)$},
    ylabel = {Expected Sample Complexity},
    xmin = 0, xmax = 4,
    ymin = 0, ymax = 450,
    grid=major,
    legend style={at={(0,1)}, anchor=north west, nodes={scale=0.80}},
    width=\textwidth,
    height=\textwidth,
    tick label style={font=\small},
    enlargelimits=false
]

% --- BOC ---
\addplot[mark=+,purple,thick] coordinates{
    (0.535, 7.8974)
    (0.747, 17.2132)
    (1.022, 40.5068)
    (1.232, 73.2442)
    (1.446, 114.3862)
    (1.690, 160.5877)
    (1.966, 210.6666)
    (2.450, 314.8081)
    (2.964, 422.3237)
};

% --- ATBOC ---
\addplot[mark=o,blue,thick] coordinates{
    (0.255, 12.3122)
    (0.985, 37.1823)
    (1.663, 65.0599)
    (2.215, 82.7066)
    (2.665, 98.7770)
    (3.052, 111.2877)
    (3.456, 123.2602)
};

% --- BOC-ELIM ---
\addplot[mark=asterisk,black,thick] coordinates{
    (0.616, 8.26)
    (0.579, 17.93)
    (1.714, 68.39)
    (3.912, 144.12)
};

% --- LUCBBOC ---
\addplot[mark=triangle,green!70!black, mark size=3pt, thick] coordinates{
    (0.976, 21.173)
    (1.527, 41.534)
    (2.046, 66.159)
    (2.674, 85.544)
    (3.379, 107.462)
};

% --- MaxGapTop2UCB ---
\addplot[mark=x,violet,thick] coordinates{
(0.460  , 6.134)
(0.667  ,10.251)
(0.857  ,14.943)
(1.050  ,20.474)
(1.252  ,26.767)
(1.666  ,36.269)
% (2.128  ,48.506)
(2.476  ,70.975)
};
% % --- LB slope ---
% \addplot[dashed,thick] coordinates{
%     (0.5, 58*0.5)
%     (3.5, 58*3.5)
% };

\legend{BOC, ATBOC-Gauss, BOC-ELIM, LUCBBOC, Top2UCB}

\end{axis}

        \end{tikzpicture}
        \caption{Comparison with Top2UCB and BOC for a BOC problem with $M=6$ Gaussian arms, $K=2$ clusters and $d=1$ dimensional samples.}
        \label{fig:differentmean-literaturecomp}
        \end{minipage}
\hfill
\begin{minipage}[t]{0.32\linewidth}
    \centering
        \begin{tikzpicture}
            \begin{axis}[
    xlabel = {$\log\left(1/P_e\right)$}, 
    ylabel = {Expected Sample Complexity},
    xmin =1.8, xmax=6.5,
    ymin=10, ymax=50,
    xtick={2,3,4, 5, 6},
    grid=major,
    legend style={at={(0,1)}, anchor=north west, nodes={scale=0.80}},
    width = \textwidth,
    height = \textwidth,
    tick label style={font=\small},
    enlargelimits=false,
]

% ---------------- BOC ----------------
\addplot[mark=+,purple,thick] coordinates{
    (2.00767918, 18.5115)
    (2.91323105, 23.9462)
    (3.90207267, 30.0744)
    (5.18498868, 36.9301)
    (6.31996861, 42.8602)
};

% ---------------- BOC-SLINK ----------------
\addplot[mark=x,violet,thick] coordinates{
    (2.903, 18.6432)
    (4.057, 23.7117)
    (5.389, 29.8923)
};
% ---------------- ATBOC ----------------
\addplot[mark=o,blue,thick] coordinates{
    (1.515, 12.5708)
    (2.003, 16.0747)
    % (2.246, 19.0870)
    % (2.392, 21.1951)
    % (2.757, 24.3322)
    % (3.502, 27.1505)
    % (3.907, 28.8829)
    (4.497, 30.4203)
    (4.852, 32.4950)
    (5.412, 34.7179)
};

% ---------------- LUCBBOC ----------------
\addplot[mark=triangle, mark size=3pt, green!70!black,thick] coordinates{
    (1.559, 11.7764)
    (2.576, 14.7419)
    (3.540, 17.7473)
    (4.303, 20.4976)
    (5.424, 24.9202)
    (6.268, 29.8242)
};

\legend{BOC, BOC-SLINK, ATBOC-Gauss, LUCBBOC}

\end{axis}

        \end{tikzpicture}
%         \begin{tikzpicture}
%         \begin{axis}[
%     xlabel = {$\log\left(\frac{1}{\mathbb{E}[P_e]}\right)$}, 
%     ylabel = {\small{Empirical Stopping times}},
%     xmin =1.8, xmax=3.5,
%     ymin=300, ymax=2100,
%     xtick={0,1,2,3,4},
%     grid=major,
%     legend style={at={(0,1)}, anchor=north west, nodes={scale=0.66}},
%     width = \textwidth,
%     height = \textwidth,
%     tick label style={font=\tiny},
%     enlargelimits=false,
% ]

% % ---------------- BOC ----------------
% \addplot[mark=x,blue,thick] coordinates{
%     (1.863, 1087.7352)
%     (2.388, 1283.4598)
%     (2.914, 1463.2403)
%     (3.437, 1629.8434)
% };

% % ---------------- ATBOC ----------------
% \addplot[mark=x,red,thick] coordinates{
%     (1.882, 523.1739)
%     (2.121, 664.8152)
%     (2.219, 799.2065)
%     (2.322, 941.2065)
%     (2.577, 1213.2283)
%     (2.910, 1365.1087)
%     (3.419, 1561.2174)
% };

% % ---------------- LUCBBOC ----------------
% \addplot[mark=x,purple,thick] coordinates{
%     (1.817, 331.5119)
%     (1.837, 360.3869)
%     (1.907, 398.4658)
%     (1.969, 444.8735)
%     (2.070, 498.5491)
%     (2.195, 607.2262)
%     (2.288, 725.5551)
%     (2.433, 836.5923)
%     (2.485, 956.5848)
%     (2.618, 1068.6324)
%     (2.708, 1181.0432)
%     (2.750, 1300.8646)
%     (2.986, 1535.4524)
%     (3.147, 1769.7813)
%     (3.288, 2005.9152)
% };

% \legend{BOC, ATBOC, LUCBBOC}

% \end{axis}

%         \end{tikzpicture}
        \caption{Comparison of ATBOC-Gauss and LUCBBOC with BOC and BOC-SLINK for a BOC problem with $M=11$ Gaussian arms, $K=4$ clusters, $d=3$ dimension.}
        \label{fig:moderate}
        \end{minipage}
        
\end{figure}

\subsubsection{Multi-dimensional samples $(d>1)$}
{\em Comparison with BOC: }
We consider the same BOC problem as the one considered in \cite{yang2024optimal} where the BOC algorithm is designed for the setting in which all the arms in a cluster have an identical distribution. The BOC problem consists of $M=11$ arms, where each arm follows Gaussian distribution with means $$\boldsymbol{\mu} = \begin{bmatrix}
        0 & 0 & 0 & 0 & 0 & 0 & 0 & 0 & 0 & 5 & 5\\
        0 & 0 & 10 & 10 & 10 & 10 & 0 & 0 & 0 & 0 & 0\\
        0 & 0 & 0 & 0 & 0 & 0 & 10 & 10 & 10 & 0 & 0
    \end{bmatrix},$$ generating $d=3$ dimensional samples. The $M=6$ arms form $K=4$ clusters, where the true cluster index vector is $\boldsymbol{c} =[1, 1, 2, 2, 2, 2, 3, 3, 3, 4, 4]$.
    From Fig.~\ref{fig:moderate}, we observe that for this problem instance, the LUCBBOC algorithm shows the best performance. The performance of ATBOC-Gauss is better than that of BOC. However, when replacing the K-Means-based recommendation rule in BOC with the SLINK clustering algorithm, which we call BOC-SLINK, it shows better performance than ATBOC-Gauss. From this behavior, we infer that SLINK clustering is better suited to this problem instance.

\subsection{Performance on real world datasets}
We consider three real-world datasets to present the performance of our proposed algorithms in the problems of traffic monitoring, pattern recognition, and market segmentation. Here, we emphasize that the observations in each of these real-world problems do not strictly follow any probability distribution.
\subsubsection{One-dimensional samples $(d=1)$} 

{\em NYC TLC Dataset: }
We consider a traffic monitoring problem with the New York City TLC Trip Record Data \cite{nyc_tlc_trip_data} of January 2025, which consists of taxi pickup times across Manhattan. 
We observe the inter-booking times of the taxis as the samples ($d=1$).
We construct a BOC problem by partitioning Manhattan into six regions and grouping these $M=6$ regions into $K=3$ classes: \textit{Busy Zone}, \textit{Moderate Zone}, and \textit{Quiet Zone}. This results in a problem with $M=6$ arms and $K=3$ clusters with $d=1$ dimensional observations. Fig.~\ref{fig:taxi} shows that ATBOC-Gauss performs better compared to RR and FSS. 
%4thheree

\subsubsection{Multi-dimensional samples $(d>1)$} 

{\em MNIST Dataset: }
We consider a pattern recognition problem with the Modified National Institute of Standards and Technology (MNIST) dataset \cite{lecun2002gradient}, which consists of images of digits from 0 to 9. 
We consider each of the digits as a cluster. We divide the set of all images corresponding to the digit $i$ into two subsets, where each of these 2 subsets is a data sequence corresponding to the digit $i$. Hence, we have $20$ data sequences and $10$ clusters. Each image is a $28 \times 28$ grayscale image, resulting in a $784$-dimensional data point. We reduce the dimensionality to $d = 10$ for our simulations using Principal Component Analysis (PCA).
Fig.~\ref{fig:MNIST} shows that ATBOC-Gauss performs better than LUCBBOC.

%5thhere
\begin{figure}[htb]
    \centering
    % First image
    \begin{minipage}[t]{0.48\linewidth}
    \centering
        \begin{tikzpicture}
        \begin{axis}[
    xlabel = {$\log\left(1/P_e\right)$}, 
    ylabel = {Expected Sample Complexity},
    xmin =0, xmax=2.5,
    ymin=0, ymax=2100,
    grid=major,
    legend style={at={(0,1)}, anchor=north west, nodes={scale=0.8}},
    width=0.85\textwidth,
    height=0.85\textwidth,
    tick label style={font=\small}
]

% --- ATBOC ---
\addplot[mark=square,red,thick] coordinates{
    (0.885, 420.9637)
    (1.070, 544.4752)
    (1.139, 665.6601)
    (1.333, 862.8812)
    (1.466, 1047.9340)
    (1.690, 1258.7063)
    (2.053, 1640.0924)
    (2.382, 2031.4554)
};

% --- FSS ---
\addplot[mark=+,purple,thick] coordinates{
    (0.373, 201)
    (0.692, 501)
    (0.885, 701)
    (1.118, 1001)
    (1.372, 1501)
    (1.680, 2001)
    % (2.026, 2501)
    % (2.242, 3001)
    % (2.488, 3501)
    % (2.725, 4001)
};

% --- Round Robin ---
\addplot[mark=x,violet,thick] coordinates{
    (1.091, 673.105)
    (1.239, 907.458)
    (1.420, 1215.395)
    (1.621, 1546.592)
    (1.876, 1988.345)
    % (1.994, 2421.201)
    % (2.307, 3188.059)
    % (2.769, 4078.970)
};

\legend{ATBOC-1pExp, FSS, RR}

\end{axis}

        \end{tikzpicture}
        \caption{Comparison with FSS and RR for a traffic monitoring problem with NYC TLC Dataset to group the $M=6$ regions of Manhattan city into $K=3$ clusters on observing taxi inter-booking times ($d=1$).}
        \label{fig:taxi}
    \end{minipage}
    \hfill
    % Second image
    \begin{minipage}[t]{0.48\linewidth}
    \centering
        \begin{tikzpicture}
        \begin{axis}[
    xlabel = {$\log\left(1/\delta\right)$}, 
    ylabel = {Expected Sample Complexity},
    grid=major,
    xmin = 0, xmax = 45,
    ymin = 0, ymax = 300,
    legend style={at={(0,1)}, anchor=north west, nodes={scale=0.8}},
    width=0.85\textwidth,
    height=0.85\textwidth,
    tick label style={font=\small}
]

% --- ATBOC ---
\addplot[mark=o,blue,thick] coordinates{
    (1, 110.23)
    (5.333, 114.43)
    (9.667, 118.4)
    (14, 122.28)
    (18.333, 125.85)
    (22.667, 129.93)
    (27, 133.43)
    (31.333, 137.1)
    (35.667, 140.98)
    (40, 145.11)
};

% --- LUCBBOC ---
\addplot[mark=triangle, mark size=3pt, green!70!black,thick] coordinates{
    (1, 207.46)
    (5.333, 215.86)
    (9.667, 223.36)
    (14, 228.03)
    (18.333, 232.42)
    (22.667, 236.22)
    (27, 239.42)
    (31.333, 242.92)
    (35.667, 245.64)
    (40, 248.82)
};

\legend{
ATBOC-Gauss,
LUCBBOC
}

\end{axis}

        \end{tikzpicture}
        \caption{Performance for a pattern recognition problem with MNIST Dataset to group $M=20$ sequences of digit images into $K=10$ digits on observing PCA reduced images of digits ($d=10$).}
        \label{fig:MNIST}
    \end{minipage}
\end{figure}

\begin{figure}[htb]
    \centering
    % First image
    \begin{minipage}[t]{0.48\linewidth}
    \centering
    \begin{tikzpicture}
\begin{axis}[
    xlabel={Comedy},
    ylabel={Drama},
    grid=major,
    width=0.88\textwidth,
    height=0.85\textwidth,
    legend style={at={(0,1)},anchor=north west}
]

% First two points: filled red triangles
\addplot[
    only marks,
    mark=triangle*,
    mark size=4pt,
    draw=red,
    fill=red
] coordinates {
    (2.62950971, 2.9787234)
    (2.55365194, 2.97565374)
};
\addlegendentry{Cluster 1}

% Third and fourth points: filled blue squares
\addplot[
    only marks,
    mark=square*,
    mark size=3pt,
    draw=blue,
    fill=blue
] coordinates {
    (3.31913499, 3.32830931)
    (3.20696203, 3.17278481)
};
\addlegendentry{Cluster 2}

% Fifth point: filled green circle
\addplot[
    only marks,
    mark=*,
    mark size=4pt,
    draw=green!60!black,
    fill=green!60!black
] coordinates {
    (3.16509434, 3.88207547)
};
\addlegendentry{Cluster 3}

% Sixth point: filled orange diamond
\addplot[
    only marks,
    mark=diamond*,
    mark size=4pt,
    draw=orange!80!black,
    fill=orange!80!black
] coordinates {
    (3.87246722, 4.16686532)
};
\addlegendentry{Cluster 4}

\end{axis}
\end{tikzpicture}
\caption{Average user ratings for the Drama and Comedy genres for the considered $M = 6$ users in the simulation shown in Figure~\ref{fig:MovieLens}.
}
\label{fig: movielens means}
    
    \end{minipage}
    \hfill
    % Second image
    \begin{minipage}[t]{0.48\linewidth}
    \centering
    \begin{tikzpicture}
            \begin{axis}[
    xlabel = {$\log\left(1/P_e\right)$}, 
    ylabel = {Expected Sample Complexity},
    xmin =0.5, xmax=4.5,
    ymin=10, ymax=70,
    xtick={0.5, 1, 1.5, 2, 2.5, 3, 3.5, 4, 4.5},
    grid=major,
    legend style={at={(0,1)}, anchor=north west, nodes={scale=0.80}},
    width = 0.8\textwidth,
    height = 0.8\textwidth,
    tick label style={font=\small},
    enlargelimits=false,
]

% ---------------- RR ----------------
\addplot[mark=+,violet,thick] coordinates{
    (0.962, 13.798)
    (1.714, 23.850)
    (2.603, 34.866)
    (3.323, 42.196)
    (4.268, 56.526)
};

% ---------------- ATBOC ----------------
\addplot[mark=o,blue,thick] coordinates{
    (0.927, 12.394)
    (1.704, 19.762)
    (2.365, 25.354)
    (3.170, 30.504)
    (4.268, 38.536)
};

% ---------------- LUCBBOC ----------------
\addplot[mark=triangle, mark size = 3pt, green!70!black,thick] coordinates{
    (0.839, 12.063)
    (1.742, 21.862)
    (2.659, 30.518)
    (3.353, 37.367)
    (4.200, 47.657)
};

% ---------------- FSS ----------------
\addplot[mark=+,purple,thick] coordinates{
    (0.652, 10)
    (1.002, 15)
    (1.241, 20)
    (1.556, 25)
    (1.891, 30)
    (2.197, 35)
    (2.955, 50)
    (3.575, 60)
    (4.342, 70)
};

\legend{RR, ATBOC-Gauss, LUCBBOC, FSS}

\end{axis}

        \end{tikzpicture}
%         \begin{tikzpicture}[scale=1]
%         \begin{axis}[
%     xlabel = {$\log\left(\frac{1}{\mathbb{E}[P_e]}\right)$}, 
%     ylabel = {Empirical stopping time},
%     grid=major,
%     xmin = 0, xmax = 4.5,
%     ymin = 0, ymax = 500,
%     legend style={at={(0,1)}, anchor=north west, nodes={scale=0.7}},
%     width=\textwidth,
%     height=\textwidth,
%     tick label style={font=\tiny}
% ]

% % --- ATBOC ---
% \addplot[mark=x,blue,thick] coordinates{
%     (0.176, 7.766)
%     (0.347, 16.718)
%     (0.840, 50.954)
%     (1.326, 85.717)
%     (1.871, 123.303)
%     (2.686, 176.994)
%     (3.376, 211.028)
% };

% % --- BOCELIM ---
% \addplot[mark=x,black,thick] coordinates{
%     (0.232, 12.0514)
%     (0.464, 35.7099)
%     (0.990, 59.5720)
%     (2.617, 175.1862)
% };

% % --- FSS ---
% \addplot[mark=x,red,thick] coordinates{
%     (0.744, 50)
%     (1.229, 100)
%     (1.732, 150)
%     (2.129, 200)
%     (2.502, 250)
%     (2.882, 300)
%     (3.102, 350)
%     (3.444, 400)
%     (3.863, 450)
% };

% % --- Round Robin ---
% \addplot[mark=x,purple,thick] coordinates{
%     (0.182, 8.092)
%     (0.392, 22.134)
%     (1.112, 81.09)
%     (1.796, 142.81)
%     (2.465, 200.447)
%     (3.477, 276.504)
%     (4.075, 332.143)
% };

% % --- LUCBBOC ---
% \addplot[mark=x,cyan,thick] coordinates{
%     (0.166, 8)
%     (0.438, 20.37)
%     (1.174, 72.456)
%     (1.833, 123.235)
%     (2.590, 174.443)
%     (3.352, 234.367)
%     (3.819, 279.009)
% };

% \legend{
% ATBOC,
% BOCElim,
% FSS,
% Round Robin,
% LUCBBOC
% }

% \end{axis}

%         \end{tikzpicture}
        \caption{Comparison of ATBOC-Gauss and LUCBBOC with RR and FSS for a market segmentation problem with MovieLens Dataset to group $M=6$ users into $K=4$ clusters on observing $d=2$ genre ratings.}
        \label{fig:MovieLens}
        \end{minipage}
\end{figure}

%--------------------------------------------------------------------------------

% Figure \ref{fig:moderate} shows the performance comparison of our proposed algorithms with the BOC algorithm in \cite{yang2024optimal}, which is designed specifically for the scenario where the arms in the same clusters share the same mean. 
% Figure \ref{fig:moderate} shows the performance comparison between ATBOC-SubGauss and BOC \cite{yang2024optimal}. BOC algorithm is designed specifically for the scenario where the arms in the same clusters share the same mean. We observe that ATBOC-SubGauss shows better performance than BOC for this problem setup. In BOC in \cite{yang2024optimal}, the recommendation rule is $K$-Means algorithm, whereas in our algorithm we use SLINK. We modified only the recommendation rule in BOC from $K$-Means to SLINK and we call as BOC-SLINK. We observe than SLINK clustering recommendation gives better performance for this problem setup compared to $K$-Means.

.% We observe that our proposed algorithm, ATBOC-SubGauss, performs better than BOC. LUCBBOC shows better performance in the high error probability regime.

{\em Movie Lens Dataset: }
We consider a market segmentation problem with the MovieLens dataset which consists of ratings of users for different movies/genres. For simulation, we consider $M=6$ users and group these users into $K=4$ clusters. To group users, we observe their ratings for the genres Comedy and Drama. Hence, at each round, the algorithm observes $d=2$ dimensional samples.
The average ratings given by the users for the comedy and drama genres are shown in Fig.~\ref{fig: movielens means}.
From Fig.~\ref{fig:MovieLens}, we observe that the ATBOC-Gauss shows the best performance, followed by the LUCBBOC. As expected, both ATBOC-Gauss and LUCBBOC perform better than RR and FSS.

\section{Conclusion and future directions} \label{sec:Conclusion}

We propose the BOC algorithms ATBOC-Gauss, ATBOC-subGauss, ATBOC-1pExp, LUCBBOC, and BOC-ELIM to cluster a collection of data sequences following either multivariate sub-Gaussian distributions or single-parameter exponential family distributions, under a more general clustering setup that allows for multi-dimensional samples, multiple clusters ($K>2$) and the non-identical distribution of arms within a cluster. 
The proposed algorithms are proved to be $\delta$-PC. We prove that the ATBOC-Gauss is asymptotically order-optimal and ATBOC-1pExp is asymptotic optimal, and we validated our asymptotic optimality guarantees through simulations on synthetic datasets. 
We derive the theoretical order of computational complexity for the proposed algorithms and, through simulations, show that LUCBBOC and BOC-ELIM are computationally more efficient. Moreover, we demonstrate that, while being computationally efficient, LUCBBOC and BOC-ELIM exhibit performance comparable to the variants of the ATBOC algorithm. Through simulations on both synthetic and real-world datasets, we show that the proposed algorithms outperform other algorithms in the literature in our more general clustering setup.
Our proposed algorithms can be applied for problems such as market segmentation, pattern recognition, and traffic monitoring, and we show their performance through simulation on real-world datasets. 
Future directions include extending the framework to scenarios where the number of clusters $K$ is unknown and where the arms generate samples from a non-parametric family of distributions.

\bibliographystyle{IEEEtran}
\bibliography{bibliofile}
% \clearpage
%  \begin{center}
%     \textbf{\huge  Technical Appendix}
% \end{center}    
\appendices

When the context $\boldsymbol{\mu}$ and $\pi$ is clear, we represent $\mathbb{E}_{\boldsymbol{\mu}}^{\pi}[\cdot]$ and $\mathbb{P}_{\boldsymbol{\mu}}^{\pi}[\cdot]$ informally as $\mathbb{E}[\cdot]$ and $\mathbb{P}[\cdot]$ respectively. Here $\|\boldsymbol{x}\|$ represents the euclidean vector norm when $\boldsymbol{x}$ is a vector and represents the frobenius norm when $\boldsymbol{x}$ is a matrix.

\section{Proof of Theorem \ref{Theorem:lowerbound}} \label{appsec: lowerbound}
\begin{proof}
For our problem instance $\boldsymbol{\mu}$, consider an arbitrary instance $\boldsymbol{\lambda} \in \text{Alt}(\boldsymbol{\mu})$.  By Lemma 1 (Transportation inequality) of \cite{kaufmann2016complexity}, for any $\delta-$PC policy $\pi$, we have 
\begin{equation} \label{eqn:transportaion_lemma}
    \sum_{m=1}^M \mathbb{E}\left[ N_m(\tau_\delta) \right]{d_{\text{KL}}\left(\boldsymbol{\mu}_m, \boldsymbol{\lambda}_m\right)} \geq (1-2\delta)\log\left(\frac{1-\delta}{\delta}\right),
\end{equation}
where $N_m(\tau_\delta)$ denotes the number of times the arm $m$ has been sampled until the time of stopping.
Since \eqref{eqn:transportaion_lemma} holds for all $\boldsymbol{\lambda} \in \text{Alt}(\boldsymbol{\mu})$, we have 
\begin{align}
    \inf_{\boldsymbol{\lambda} \in \text{Alt}(\boldsymbol{\mu})} \sum_{m=1}^M \mathbb{E}\left[ N_m(\tau_\delta) \right]{d_{\text{KL}}\left(\boldsymbol{\mu}_m, \boldsymbol{\lambda}_m\right)} &\geq (1-2\delta)\log\left(\frac{1-\delta}{\delta}\right).\\
% \end{equation*}
% Therefore,
% \begin{align}
    \iff \mathbb{E}\left[\tau_\delta \right]  \inf_{\boldsymbol{\lambda} \in \text{Alt}(\boldsymbol{\mu})} \sum_{m=1}^M 
    \frac{\mathbb{E}\left[ N_m(\tau_\delta) \right]}{\mathbb{E}\left[\tau_\delta\right]}{d_{\text{KL}}\left(\boldsymbol{\mu}_m, \boldsymbol{\lambda}_m\right)} 
    &\geq (1-2\delta)\log\left(\frac{1-\delta}{\delta}\right).
\end{align}

    Since $\left[ \mathbb{E}\left[ N_1(\tau_\delta) \right], \dots, \mathbb{E}\left[ N_M(\tau_\delta) \right] \right]^T/\mathbb{E}\left[ \tau_\delta \right]$ forms a probability distribution in $\mathcal{P}_M$, we have
    \begin{align}
    \mathbb{E}\left[\tau_\delta \right] \sup_{w \in \mathcal{P}_M} \inf_{\boldsymbol{\lambda} \in \text{Alt}(\boldsymbol{\mu})} 
    \sum_{m=1}^M w_m{d_{\text{KL}}\left(\boldsymbol{\mu}_m, \boldsymbol{\lambda}_m\right)}
    \geq (1-2\delta)\log\left(\frac{1-\delta}{\delta}\right).
\end{align}

\begin{equation}
    \begin{aligned}
       \therefore \mathbb{E}\left[\tau_\delta\right]  \geq (1-2\delta)\log\left(\frac{1-\delta}{\delta}\right) T^*(\boldsymbol{\mu}).
    \end{aligned}
\end{equation}
%Since $\lim_{\delta}\rightarrow 0 \frac{(1-2\delta)\log\left(\frac{1-\delta}{\delta}\right)}{\log\left(\frac{1}{\delta}\right)}=1$, letting $\delta\rightarrow 0$ yields
 Letting $\delta\rightarrow 0$, we obtain $\liminf_{\delta \rightarrow 0} \frac{\mathbb{E}_{\boldsymbol{\mu}}^{\pi}\left[ \tau_\delta(\pi) \right]}{\log\left( \frac{1}{\delta} \right)} \geq T^*(\boldsymbol{\mu}).$
% \begin{equation}
%         \liminf_{\delta \rightarrow 0} \frac{\mathbb{E}_{\boldsymbol{\mu}}^{\pi}\left[ \tau_\delta(\pi) \right]}{\log\left( \frac{1}{\delta} \right)} \geq T^*(\boldsymbol{\mu}).
%     \end{equation}
%Hence proved.
\end{proof}

% continuity of inner min proof -------------------------
\section{Proof of Lemma \ref{proposition:innermincont}} \label{appsec: innermincont}
% \begin{lemma} \label{proposition:innermincont}
%     $\psi(\boldsymbol{w}, \boldsymbol{\mu})$ is continuous in its domain $\mathcal{P}_M \times \mathbb{R}^{d \times M}$.
% \end{lemma} 
\begin{proof}
The proof of Lemma \ref{proposition:innermincont} proceeds analogously to the proof of Proposition 3 of \cite{prabhu2022sequential}. We provide the proof here for the sake of completeness. 
Consider a point $\left( \boldsymbol{w}, \boldsymbol{\mu} \right) \in \mathcal{P}_M\times {\Theta^{M}}$. 
Let $\left(\boldsymbol{w}(n), \boldsymbol{\mu}(n)\right)$ be any arbitrary sequence in $\mathcal{P}_M\times {\Theta^{M}}$ that converges to $\left( \boldsymbol{w}, \boldsymbol{\mu} \right)$.
Now to show that $\psi$ is continuous at $\left( \boldsymbol{w}, \boldsymbol{\mu} \right)$, we need to show that the sequence $\psi\left( \boldsymbol{w}(n), \boldsymbol{\mu}(n) \right)$ converges to $\psi\left( \boldsymbol{w}, \boldsymbol{\mu} \right)$, i.e., $\lim_{n \rightarrow \infty} \psi\left( \boldsymbol{w}(n), \boldsymbol{\mu}(n) \right) = \psi\left( \boldsymbol{w}, \boldsymbol{\mu} \right)$.
Equivalently, we need to show that $\limsup_{n \rightarrow \infty} \psi\left( \boldsymbol{w}(n), \boldsymbol{\mu}(n) \right) \leq \psi\left( \boldsymbol{w}, \boldsymbol{\mu} \right)$ and $\liminf_{n \rightarrow \infty} \psi\left( \boldsymbol{w}(n), \boldsymbol{\mu}(n) \right) \geq \psi\left( \boldsymbol{w}, \boldsymbol{\mu} \right)$.

We define Alt$\left(\boldsymbol{\mu}\right)^C \coloneqq \left\{ \boldsymbol{\lambda} \in {\Theta}^{M}: \mathcal{C}\left(\boldsymbol{\lambda}\right) \sim \mathcal{C}\left(\boldsymbol{\mu}\right) \right\}$. In other words, it is the set of all $\boldsymbol{\lambda} \in {\Theta}^{M}$ such that $d_{INTER}$ calculated with the pair $\left( \boldsymbol{\lambda},\mathcal{C}\left( \boldsymbol{\mu}\right) \right)$ is strictly greater than $d_{INTRA}$ calculated with the pair $\left( \boldsymbol{\lambda},\mathcal{C}\left( \boldsymbol{\mu}\right) \right)$.
By following a procedure similar to that of the initial steps of the proof of Lemma \ref{Lemma:finiteconvex}, it can be verified that Alt$\left(\boldsymbol{\mu}\right)^C$ is the finite intersection of the finite union of the open sets. Hence, Alt$\left(\boldsymbol{\mu}\right)^C$ is an open set. 
Since $\boldsymbol{\mu} \in $ Alt$\left(\boldsymbol{\mu}\right)^C$ and Alt$\left(\boldsymbol{\mu}\right)^C$ is an open set, we can say that there exist $\epsilon > 0$ such that $\mathcal{B}\left( \boldsymbol{\mu}, \epsilon \right)\subset$ Alt$\left(\boldsymbol{\mu}\right)^C$, where $\mathcal{B}\left( \boldsymbol{\mu}, \epsilon \right)\coloneqq \left\{ 
\boldsymbol{\lambda} \in {\Theta}^{M} : \|\boldsymbol{\lambda}-\boldsymbol{\mu}\|<\epsilon \right\}$. 

Since $\boldsymbol{\mu}(n)$ converges to $\boldsymbol{\mu}$,  $\exists N_0 \in \mathbb{N}$ such that for all $n\geq N_0$, we have $\boldsymbol{\mu}(n) \in \mathcal{B}\left( \boldsymbol{\mu}, \epsilon \right)$. 
Hence, for all $n\geq N_0$, $\left( \boldsymbol{w}(n), \boldsymbol{\mu}(n) \right) \subset \mathcal{P}_M\times \text{Alt}\left(\boldsymbol{\mu}\right)^C$.
% \begin{itemize}
    Note that, in $\psi(\boldsymbol{w}, \boldsymbol{\mu})$, infimum is taken over the alternative space $\text{Alt}(\boldsymbol{\mu})$. Then there  exists a $\boldsymbol{\lambda} \in \text{Alt}(\boldsymbol{\mu})$ such that
    \begin{equation} \label{eq:13}
        \sum_{m = 1}^M w_m {d_{\text{KL}}\left(\boldsymbol{\mu}_m, \boldsymbol{\lambda}_m\right)} \leq \psi(\boldsymbol{w}, \boldsymbol{\mu}) + \epsilon.
    \end{equation}
    We have the sequence $\left( \boldsymbol{w}(n), \boldsymbol{\mu}(n) \right)$ converges to a point $\left( \boldsymbol{w}, \boldsymbol{\mu} \right)$ as $n \rightarrow \infty$. Hence, for $\epsilon$, there exists an $N_1$ such that for all $n \geq N_1$ we have the following.
    \begin{equation} \label{eq:14}
        \boldsymbol{w}(n) \leq \boldsymbol{w} + \epsilon\boldsymbol{1}, \text{ where } \boldsymbol{1} \text{ is the vector of all $1$'s and }
    \end{equation}
    \begin{equation} \label{eq:15}
        {d_{\text{KL}}\left(\boldsymbol{\mu}_m(n), \boldsymbol{\lambda}_m\right)} \leq {d_{\text{KL}}\left(\boldsymbol{\mu}_m, \boldsymbol{\lambda}_m\right)} + \epsilon \text{ for all } m \in [M].
    \end{equation}
    Note that for $n \geq N_0$, we have $\boldsymbol{\mu}(n)\in \text{Alt}\left(\boldsymbol{\mu}\right)^C$ and hence $\boldsymbol{\lambda}$ considered above also lies in the alternative space of $\boldsymbol{\mu}(n)$ and hence we have, 
    \begin{equation}
        \psi\left( \boldsymbol{w}(n), \boldsymbol{\mu}(n) \right) \leq \sum_{m = 1}^M w_m(n) {d_{\text{KL}}\left(\boldsymbol{\mu}_m(n), \boldsymbol{\lambda}_m\right)}.
    \end{equation}
    Now using \eqref{eq:13}, \eqref{eq:14} and \eqref{eq:15} in the above equation, we get
    \begin{equation}
    \begin{split}
        \psi\left( \boldsymbol{w}(n), \boldsymbol{\mu}(n) \right) 
        &\leq \sum_{m = 1}^M (w_m + \epsilon) \left[{d_{\text{KL}}\left(\boldsymbol{\mu}_m, \boldsymbol{\lambda}_m\right)} + \epsilon \right] \ \ \text{from \eqref{eq:14} and \eqref{eq:15}} \\ 
        &\leq \sum_{m = 1}^M  w_m{d_{\text{KL}}\left(\boldsymbol{\mu}_m, \boldsymbol{\lambda}_m\right)} + \epsilon\left[ 1 + \sum_{m=1}^M {d_{\text{KL}}\left(\boldsymbol{\mu}_m, \boldsymbol{\lambda}_m\right)} + \epsilon M \right] \\ 
        &\leq \psi(\boldsymbol{w}, \boldsymbol{\mu}) + \epsilon\left[ 2 + \sum_{m=1}^M {d_{\text{KL}}\left(\boldsymbol{\mu}_m, \boldsymbol{\lambda}_m\right)} + \epsilon M \right] \ \ \text{from \eqref{eq:13}}.
    \end{split}
    \end{equation}
    Since $\sum_{m=1}^M {d_{\text{KL}}\left(\boldsymbol{\mu}_m, \boldsymbol{\lambda}_m\right)}$ is finite for any $\boldsymbol{\lambda} \in \text{Alt}(\boldsymbol{\mu})$ and $\epsilon$ is arbitrary, letting $\epsilon$ tend to $0$, we get 
    \begin{equation}
        \psi\left( \boldsymbol{w}(n), \boldsymbol{\mu}(n) \right) \leq \psi(\boldsymbol{w}, \boldsymbol{\mu}) \text{ for } n\geq \max\{N_0, N_1\}.
    \end{equation}
    By taking $\limsup$ in the above equation, we get 
    \begin{equation} \label{eq:17}
        \limsup_{n \rightarrow \infty} \psi\left( \boldsymbol{w}(n), \boldsymbol{\mu}(n) \right) \leq \psi\left( \boldsymbol{w}, \boldsymbol{\mu} \right).
    \end{equation}

    For a given $\epsilon$, for each $n\geq N_0$, there exist $\boldsymbol{\lambda}(n) \in \text{Alt}(\boldsymbol{\mu}(n))$ (Note: for $n\geq N_0$, Alt$\left(\boldsymbol{\mu}(n)\right)$ = Alt$\left(\boldsymbol{\mu}\right)$) such that
    \begin{equation} \label{eq:18}
        \psi\left( \boldsymbol{w}(n), \boldsymbol{\mu}(n) \right) \geq \sum_{m=1}^M w_m(n) {d_{\text{KL}}\left(\boldsymbol{\mu}_m(n), \boldsymbol{\lambda}_m(n)\right)} - \epsilon.
    \end{equation}
    Note that, for $n\geq N_0$, $\boldsymbol{\mu}(n)$ lies in $\epsilon$ neighborhood around $\boldsymbol{\mu}$. Hence, $\boldsymbol{\lambda}(n)$ is a bounded sequence, otherwise, right side of the above equation becomes $\infty$, which is not possible. Also, we know for any bounded sequence there exists a converging subsequence. Hence, without loss of generality, let us consider $\boldsymbol{\lambda}(n)$ converges to some point $\boldsymbol{\lambda} \in \text{Alt}(\boldsymbol{\mu})$. 
    We have the sequence $\left( \boldsymbol{w}(n), \boldsymbol{\mu}(n) \right)$ converges to a point $\left( \boldsymbol{w}, \boldsymbol{\mu} \right)$ as $n \rightarrow \infty$. Hence for $\epsilon$,  $\exists N_1$ such that for all $n \geq N_1$ we have
    \begin{equation} \label{eq:19}
        \boldsymbol{w}(n) \geq \boldsymbol{w} - \epsilon \boldsymbol{1} \text{ and }
    \end{equation}
    \begin{equation} \label{eq:20}
        {d_{\text{KL}}\left(\boldsymbol{\mu}_m(n), \boldsymbol{\lambda}_m(n)\right)} \leq {d_{\text{KL}}\left(\boldsymbol{\mu}_m, \boldsymbol{\lambda}_m(n)\right)} - \epsilon \text{ for all } m \in [M].
    \end{equation}
    Using \eqref{eq:18}, \eqref{eq:19} and \eqref{eq:20}, we get
    \begin{equation}
    \begin{split}
        \psi\left( \boldsymbol{w}(n), \boldsymbol{\mu}(n) \right) 
        &\geq \sum_{m = 1}^M w_m {d_{\text{KL}}\left(\boldsymbol{\mu}_m, \boldsymbol{\lambda}_m(n)\right)} - \epsilon\left[ 2 + \sum_{m=1}^M {d_{\text{KL}}\left(\boldsymbol{\mu}_m, \boldsymbol{\lambda}_m(n)\right)} - \epsilon M \right] \\
        &\geq \psi(\boldsymbol{w}, \boldsymbol{\mu}) - \epsilon\left[ 2 + \sum_{m=1}^M {d_{\text{KL}}\left(\boldsymbol{\mu}_m, \boldsymbol{\lambda}_m(n)\right)} - \epsilon M \right] \ \ \text{($\because \boldsymbol{\lambda}(n) \in \text{Alt}(\boldsymbol{\mu})$)}.
    \end{split}
    \end{equation}
    Since $\sum_{m=1}^M {d_{\text{KL}}\left(\boldsymbol{\mu}_m, \boldsymbol{\lambda}_m(n)\right)}$ is finite for any $\boldsymbol{\lambda} \in \text{Alt}(\boldsymbol{\mu})$ and $\epsilon$ is arbitrary, letting $\epsilon$ tend to $0$, we get 
    \begin{equation}
        \psi\left( \boldsymbol{w}(n), \boldsymbol{\mu}(n) \right) \geq \psi\left( \boldsymbol{w}, \boldsymbol{\mu} \right) \text{ for } n\geq \max\{N_0, N_1\}.
    \end{equation}
    By taking $\liminf$ in the above equation, we get 
    \begin{equation} \label{eq:22}
        \liminf_{n \rightarrow \infty} \psi\left( \boldsymbol{w}(n), \boldsymbol{\mu}(n) \right) \geq \psi\left( \boldsymbol{w}, \boldsymbol{\mu} \right).
    \end{equation}
% \end{itemize}
From \eqref{eq:17} and \eqref{eq:22}, we prove the continuity of $\psi\left(\boldsymbol{w}, \boldsymbol{\mu}\right)$.
\end{proof}          
% ------------------------------------------------------------

\section{Proof of Lemma \ref{prop:armpullpropconverge}} \label{appsec: armpullpropconverge}

To prove Lemma \ref{prop:armpullpropconverge}, we first show that if there exists a sequence $(\boldsymbol{w}(t))_{t \geq 1}$ which converges to a convex, compact and non-empty set $\mathcal{S}$, then using such a sequence $(\boldsymbol{w}(t))_{t \geq 1}$ in the sampling rule of our proposed ATBOC algorithm to track will make the arm pull proportions $\frac{\boldsymbol{N}(t)}{t}$ to converge to the same set $\mathcal{S}$ (Lemma \ref{Lemma1:TrakingC}). Then, by using Lemma \ref{Lemma1:TrakingC} and from the consequence of Berge's Maximum Theorem (Lemma \ref{Lemma2:BergeMaximum}), we prove Lemma \ref{prop:armpullpropconverge}.
\begin{lemma} \label{Lemma1:TrakingC}
    Let $(\boldsymbol{w}(t))_{t \geq 1}$ be a sequence taking values in $\mathcal{P}_M$ and consider a compact, convex, and non-empty subset $\mathcal{S} \subseteq \mathcal{P}_M$. Let's say $\boldsymbol{w}(t)$ and $\mathcal{S}$ has been chosen in such a way that for every $\epsilon > 0$, there exist $t_0(\epsilon) \geq 1$ such that for all $t \geq t_0(\epsilon)$, $d_\infty\left( \boldsymbol{w}(t), \mathcal{S} \right) \leq \epsilon$.\\
    Consider a sampling rule as follows: 
    \begin{equation}
        A_{t+1} = \begin{cases}
            \displaystyle \argmin_{ m \in [M]}N_m(t) &\text{ if } \min_{m \in [M]}N_m(t) < \sqrt{\frac{t}{M}} \\
            b_t &\text{ otherwise }
        \end{cases}
    \end{equation}with, 
    \begin{equation}
        b_t = \argmin_{m \in supp\left( \sum_{s = 1}^t \boldsymbol{w}(s) \right)} \left( N_m(t) - \sum_{s=1}^t w_m(s) \right)
    \end{equation}
    where $N_m(0) = 0$ and for $t \geq 0$, $N_m(t+1) = N_m(t) + \mathds{1}_{\{A_t = m\}}$. 
    Then there exists a $t_1(\epsilon) \geq t_0(\epsilon)$ such that $\forall t \geq t_1(\epsilon)$, $d_\infty\left( \frac{\boldsymbol{N}(t)}{t}, \mathcal{S} \right) \leq (M-1)\epsilon$.
\end{lemma}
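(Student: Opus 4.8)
Write $W_m(t) \coloneqq \sum_{s=1}^t w_m(s)$ for the cumulative target mass on arm $m$, so that $\sum_{m} W_m(t) = t$ and $\boldsymbol{W}(t)/t$ is the Cesàro average of the target vectors $\boldsymbol{w}(1),\ldots,\boldsymbol{w}(t)$. Since $\sum_m N_m(t) = t$ as well, we have the zero-sum identity $\sum_{m\in[M]}\big(N_m(t)-W_m(t)\big)=0$. The plan is to prove a \emph{one-sided} estimate of the form $N_m(t)/t \ge p_m - \epsilon$, valid simultaneously for all $m$ and for a single $\boldsymbol{p}\in\mathcal{S}$ (depending on $t$), and then to upgrade it to the two-sided bound $d_\infty(\boldsymbol{N}(t)/t,\boldsymbol{p})\le (M-1)\epsilon$ using the zero-sum identity: if every coordinate of the probability vector $\boldsymbol{N}(t)/t$ falls below the corresponding coordinate of $\boldsymbol{p}\in\mathcal{S}$ by at most $\epsilon$, then, since both sum to $1$, no coordinate can exceed its target by more than $(M-1)\epsilon$. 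This is exactly where the factor $(M-1)$ originates, and it reduces the lemma to establishing the one-sided estimate, which splits into a tracking part and an averaging part.

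For the \textbf{tracking part}, I would first bound the number of forced-exploration rounds up to time $t$, denoted $F(t)$: because a forced pull of arm $m$ can occur only while $N_m < \sqrt{t/M}$ and $N_m$ is non-decreasing, each arm is force-pulled at most $\sqrt{t/M}+1$ times, so $F(t)\le \sqrt{Mt}+M = o(t)$ (this also guarantees every arm is sampled infinitely often). Next I would control $\max_m\big(N_m(t)-W_m(t)\big)$ by the greedy argument: on a tracking step the rule pulls $b_t\in\argmin_{m\in\mathrm{supp}(\boldsymbol{W}(t))}(N_m(t)-W_m(t))$, and since the support-restricted sum equals $\big(\sum_{m\in\mathrm{supp}(\boldsymbol{W}(t))} N_m(t)\big)-t\le 0$, the chosen coordinate is non-positive before the pull and hence at most $1$ afterwards, while all other coordinates only decrease; a forced round can raise the maximum by at most $1$. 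Thus $\max_m(N_m(t)-W_m(t))\le 1+F(t)$, and the zero-sum identity converts this into the lower bound $N_m(t)-W_m(t)\ge -(M-1)\big(1+F(t)\big)$ for every $m$. Dividing by $t$ yields $N_m(t)/t \ge W_m(t)/t - o(1)$.

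For the \textbf{averaging part}, I would use convexity of $\mathcal{S}$: for $s\ge t_0(\epsilon)$ each $\boldsymbol{w}(s)$ lies within $d_\infty$-distance $\epsilon$ of some $\boldsymbol{p}(s)\in\mathcal{S}$, so the tail average $\tfrac{1}{t-t_0}\sum_{s>t_0}\boldsymbol{w}(s)$ lies within $\epsilon$ of the point $\tfrac{1}{t-t_0}\sum_{s>t_0}\boldsymbol{p}(s)\in\mathcal{S}$, while the head $\tfrac{1}{t}\sum_{s\le t_0}\boldsymbol{w}(s)$ has $d_\infty$-norm $O(t_0/t)\to 0$. Hence $d_\infty(\boldsymbol{W}(t)/t,\mathcal{S})\le\epsilon+o(1)$, i.e. $W_m(t)/t\ge p_m-\epsilon-o(1)$ for a common $\boldsymbol{p}\in\mathcal{S}$. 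Combining the two parts and choosing $t_1(\epsilon)$ large enough to absorb the vanishing tracking and head terms gives $N_m(t)/t\ge p_m-\epsilon$ for all $m$, after which the zero-sum upgrade described above delivers $d_\infty(\boldsymbol{N}(t)/t,\mathcal{S})\le d_\infty(\boldsymbol{N}(t)/t,\boldsymbol{p})\le (M-1)\epsilon$.

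The main obstacle I anticipate is the interaction of forced exploration with the support-restricted tracking rule: forced pulls insert increments that do not obey the greedy $\argmin$ logic, so the clean invariant $\max_m(N_m-W_m)\le 1$ holds only between forced rounds and must be propagated through the $F(t)=o(t)$ bound; moreover, the support restriction must be handled so that the zero-sum cancellation still applies, noting that arms outside $\mathrm{supp}(\boldsymbol{W}(t))$ carry $W_m=0$ and are touched only by forced exploration. Collapsing the constants to exactly $(M-1)\epsilon$, rather than a multiple of it, is then a matter of choosing $t_1(\epsilon)$ so that the averaging error is at most $\epsilon$ and the normalized tracking error is negligible before the zero-sum conversion is applied.
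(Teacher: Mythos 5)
Your proposal is correct in substance and shares the overall skeleton of the paper's proof of Lemma~\ref{Lemma1:TrakingC} --- Ces\`aro-average the targets and use convexity of $\mathcal{S}$ plus a head/tail split to place $\boldsymbol{W}(t)/t$ within $\epsilon + t_0/t$ of a point of $\mathcal{S}$ (the paper's Step~1), control $N_m(t)-W_m(t)$ from above via the sampling rule, and convert to a two-sided bound through the zero-sum identity $\sum_{m\in[M]}\bigl(N_m(t)-W_m(t)\bigr)=0$ (the paper's Steps~2--4) --- but your tracking argument is genuinely different and in fact sharper. The paper proves the event inclusion $\{A_{t+1}=m\}\subseteq\{\varepsilon_{m,t}\le \epsilon t\}$ (a forced pull has $N_m(t)\le\sqrt{t/M}\le\epsilon t$ once $t\ge 1/(M\epsilon^2)$) and then propagates $\varepsilon_{m,t}\le\max\{\varepsilon_{m,t_0''},\,1+\epsilon t\}$ by induction, so its upper bound on the tracking error is of order $\epsilon t$ and is entangled with $\epsilon$. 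Your greedy invariant --- a tracking pull resets the pulled coordinate to at most $1$ because the support-restricted minimum is nonpositive (the support-restricted sum is $\sum_{m\in\mathrm{supp}(\boldsymbol{W}(t))}N_m(t)-t\le 0$), all other coordinates weakly decrease, and each of the at most $F(t)\le\sqrt{Mt}+M$ forced rounds adds at most $1$ --- yields the deterministic, $\epsilon$-free bound $\max_m\bigl(N_m(t)-W_m(t)\bigr)\le 1+F(t)=O(\sqrt{Mt})$, and it handles arms outside $\mathrm{supp}(\boldsymbol{W}(t))$ (for which $N_m(t)-W_m(t)=N_m(t)\ge 0$ grows only through forced pulls) cleanly. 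You also apply the zero-sum conversion once, at the very end, to the combined one-sided estimate $N_m(t)/t\ge p_m-\epsilon-o(1)$, whereas the paper applies it to $N-W$ alone and then adds the $2\epsilon$ averaging error by a triangle inequality, ending at $2M\epsilon$; your bookkeeping gives roughly $(M-1)\epsilon$ instead, at the price of a slightly more delicate invariant.

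One caveat: your claim of landing exactly on $(M-1)\epsilon$ is overstated. The head term $t_0/t$ and the normalized tracking term $(M-1)\bigl(1+F(t)\bigr)/t$ are strictly positive at every finite $t$, so your one-sided estimate reads $N_m(t)/t\ge p_m-\epsilon-\eta(t)$ with $\eta(t)\to 0$ but $\eta(t)>0$, and the zero-sum upgrade delivers $d_\infty\bigl(\boldsymbol{N}(t)/t,\mathcal{S}\bigr)\le (M-1)\bigl(\epsilon+\eta(t)\bigr)$, e.g.\ $2(M-1)\epsilon$ once $t_1(\epsilon)$ is chosen so that $\eta(t)\le\epsilon$ --- not $(M-1)\epsilon$ itself. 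This is harmless: the paper's own proof suffers the same slack, concluding with $2M\epsilon$ despite the statement's $(M-1)\epsilon$, and only the ``for every $\epsilon>0$ there exists $t_1(\epsilon)$'' form of the conclusion is used downstream (in Lemma~\ref{prop:armpullpropconverge} and Claim~\ref{claim:1}), where the constant is immaterial. Still, you should either state your conclusion with the inflated constant or, since the hypothesis supplies $t_0(\cdot)$ at every tolerance, run your argument at $\epsilon/2$ to recover a bound of the advertised form.
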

\begin{proof}
    We have 
    \begin{equation} \label{trackC_eq1}
         d_\infty\left( w(t), \mathcal{S} \right) \leq \epsilon \quad \forall t \geq t_0.
    \end{equation}
    Although $t_0$ depends on $\epsilon$, we suppress this dependence in the notation and write $t_0$ instead of $t_0(\epsilon)$. This convention is followed throughout the remainder of the proof.
    For all $t\geq 1$, we define $\boldsymbol{\overline{w}}(t) = \frac{1}{t} \sum_{s=1}^t \boldsymbol{w}(t)$. Define $\boldsymbol{\hat{w}}(t) \coloneqq \argmin_{\boldsymbol{w} \in \mathcal{S}} \left\| \boldsymbol{\overline{w}}(t) - \boldsymbol{w} \right\|_{\infty}$. 
    Since $\mathcal{S}$ is nonempty and compact, there exists a minimizer in the set $\mathcal{S}$.\\
    \textbf{Step 1: } We show that there exists a $t_0^{'} \geq t_0$ such that  $\left\| \boldsymbol{\overline{w}}(t)- \boldsymbol{\hat{w}}(t) \right\|_{\infty} \leq 2 \epsilon$ $\forall t \geq t_0^{'}.$
    
     Let $\boldsymbol{v}(t) \coloneqq \argmin_{\boldsymbol{w} \in \mathcal{S}} \left\| \boldsymbol{w} - \boldsymbol{w}(t) \right\|_{\infty}$  $\forall t\geq 1$,.
    Now, for all $m \in [M]$, 
    \begin{equation}
        \begin{aligned}
            &\left| \frac{1}{t}\sum_{s=1}^t w_m(s) - \frac{1}{t}\sum_{s=1}^t v_m(s) \right| \leq \frac{1}{t} \sum_{s=1}^{t_0} \left| w_m(s)-v_m(s) \right| + \frac{1}{t} \sum_{s = t_0+1}^t \left| w_m(s) - v_m(s) \right|.
        \end{aligned}
    \end{equation}
    Note that $\left| w_m(s) - v_m(s) \right|\leq 1$. From \eqref{trackC_eq1}, for $t\geq t_0$, we have $\left| w_m(s) - v_m(s) \right| < \epsilon$. Hence, we get
    \begin{equation}
        \left| \frac{1}{t}\sum_{s=1}^t w_m(s) - \frac{1}{t}\sum_{s=1}^t v_m(s) \right| \leq \frac{t_0}{t} + \frac{t-t_0}{t}\epsilon.
    \end{equation}
    Let $t_0^{'} = \frac{t_0}{\epsilon}$. For $t\geq t_0^{'}$, we get 
    \begin{equation} \label{eq:2epbound}
        \begin{aligned}
            \left| \frac{1}{t}\sum_{s=1}^t w_m(s) - \frac{1}{t}\sum_{s=1}^t v_m(s) \right| \leq 2\epsilon.
        \end{aligned}
    \end{equation}
    Since $\boldsymbol{v}(t) \in \mathcal{S}$ and $\mathcal{S}$ is a convex set, the convex combination $\sum_{s =1}^t \frac{1}{t}\boldsymbol{v}(s) \in \mathcal{S}$. Hence, for all $t\geq t_0^{'}$, 
    % \begin{equation} \label{trackC:eq2}
        \begin{align*}%\label{trackC:eq2}
            \left\| \boldsymbol{\overline{w}}(t) - \boldsymbol{\hat{w}}(t) \right\|_{\infty} &= \min_{\boldsymbol{w} \in \mathcal{S}} \left\| \boldsymbol{\overline{w}}(t) - \boldsymbol{w} \right\|_{\infty} 
            \leq \left\| \boldsymbol{\overline{w}}(t) - \frac{1}{t}\sum_{s =1}^t \boldsymbol{v}(s) \right\|_{\infty} \\
            &=\max_{m \in [M]} \left| \frac{1}{t}\sum_{s=1}^t w_m(s) - \frac{1}{t}\sum_{s=1}^t v_m(s) \right| \leq 2\epsilon.  \text{  (from \eqref{eq:2epbound})}
        \end{align*}
    % \end{equation}
    \textbf{Step 2: } We show that $\exists t_0^{''}$ such that $\forall t \geq t_0^{''}$, $\left\{ A_{t+1} = m \right\} \subseteq \left\{ \varepsilon_{m, t} \leq  \epsilon t \right\}$, where $\varepsilon_{m, t} = N_m(t) - t\boldsymbol{\overline{w}}(t)$. 
    
    Let us define the following two events; 
    \begin{equation}
        \begin{aligned}
            \mathcal{E}_1(t) &= \left\{ m = \argmin_{b \in supp\left( \overline{w}(t) \right)} \left[ N_b(t) - t\overline{w}_b(t) \right] \right\} \\
            \mathcal{E}_2(t) &= \left\{ m = \argmin_{m \in [M]} N_m(t) \text{ and } N_m(t) \leq \sqrt{\frac{t}{M}} \right\}.
        \end{aligned}
    \end{equation}
    Arm $m$ will be pulled at time $t+1$ if one of the above events occurs. Hence, we have $\{ A_{t+1} = m \} = \mathcal{E}_1(t) \cup \mathcal{E}_2(t)$.\\
    {\em case 1: } If $\mathcal{E}_1(t)$ holds, then
    \begin{equation}
    \begin{aligned}
        \varepsilon_{m, t} &= N_m(t) - t\overline{w}_m(t) = \min_{b \in supp\left( \boldsymbol{\overline{w}}(t) \right)} \left[ N_b(t) - \overline{w}_b(t) \right]. 
    \end{aligned}
    \end{equation}
    Using the fact that $\sum_{b \in [M]} \left[ N_b(t) - t \overline{w}_b(t) \right] = 0$, we can show that $\min_{b \in supp\left( \boldsymbol{\overline{w}}(t) \right)} \left[ N_b(t) - \overline{w}_b(t) \right]\leq 0$.
    Hence, under the event $\mathcal{E}_1(t)$, we get $\varepsilon_{m, t} \leq 0$. That is $\mathcal{E}_1(t) \subseteq \{ \varepsilon_{m, t} \leq 0 \} \subseteq \{ \varepsilon_{m, t} \leq t\epsilon \}$.\\
    {\em case 2: } If $\mathcal{E}_2(t)$ holds, then
  $$\varepsilon_{m, t} = N_m(t) - t\overline{w}_m(t) \leq N_m(t) \leq \sqrt{\frac{t}{M}}  = t \sqrt{\frac{1}{tM}}.$$
    Consider $t_0^{''} = \max\left\{ \frac{1}{M\epsilon^2}, t_0^{'} \right\}$.  For $t \geq t_0^{''}$, we have $\varepsilon_{m, t} \leq \epsilon t$.
    Hence,  we get $\mathcal{E}_2(t) \subseteq \left\{ \varepsilon_{m, t} \leq \epsilon t \right\}$  $\forall t\geq t_0^{''}$.\\
    Therefore, from both cases, we have
    \begin{equation} \label{trackC:eq14}
        \left\{ A_{t+1} = m \right\} \subseteq \left\{ \varepsilon_{m, t} \leq \epsilon t \right\} \quad \forall t\geq t_0^{''}.
    \end{equation}
    \noindent
    \textbf{Step 3: } We show that for all $m \in [M]$, $\varepsilon_{m, t} \leq \max\left\{ \varepsilon_{m, t_0^{''}}, , 1 + t\epsilon \right\}$, $\forall t \geq t_0^{''}$. 
    
    First, we upper bound $\varepsilon_{m, t+1}$ using $\varepsilon_{m, t}$.
    %\begin{equation} 
    \begin{align}\label{trackC:eq16}
        \varepsilon_{m, t+1} &= N_{m}(t+1) - (t+1)\overline{w}_m(t+1) \nonumber
        \leq N_m(t) + \mathds{1}_{\{ A_{t+1} = m \}}-t\overline{w}_m(t) \nonumber\\
        &= \varepsilon_{m, t} + \mathds{1}_{\{ A_{t+1} = m \}} 
        \leq \varepsilon_{m, t} + \mathds{1}_{\{ \varepsilon_{m, t} \leq t\epsilon \}} \quad  \text{(from \eqref{trackC:eq14})}.
    \end{align}
    %\end{equation}
    
    Now, we prove step 3 by induction.
    For $t=t_0^{''}$, step 3 holds trivially as $\varepsilon_{m, t_0^{''}} \leq \max\left\{ \varepsilon_{m, t_0^{''}}, , 1 + t\epsilon \right\}$ for all $m \in [M]$.
    Assume that Step 3 holds for some $t>t_0^{''}$. Therefore, we have 
    \begin{equation} \label{trackC:eq15}
        \varepsilon_{m, t} \leq \max\left\{ \varepsilon_{m, t_0^{''}}, , 1 + t\epsilon \right\} \text{ for all } m \in [M].
    \end{equation}
    Now, we prove that step 3 holds for $t+1$.\\
    {\em case 1: } Let $\varepsilon_{m, t} \leq t\epsilon$.
    \begin{equation} \label{trackC:eq17}
        \begin{aligned}
            \varepsilon_{m, t+1} &\leq \varepsilon_{m, t} + \mathds{1}_{\{ \varepsilon_{m, t} \leq t\epsilon \}} \ \ \text{(from \eqref{trackC:eq16})}\\
            &\leq t\epsilon + 1 \ \ \text{( $\because \varepsilon_{m, t} \leq t\epsilon$ )} \leq \max\left\{ \varepsilon_{m, t_0^{''}}, (t+1)\epsilon + 1\right\}.
        \end{aligned}
    \end{equation}
    {\em case 2: } Let $\varepsilon_{m, t} > t\epsilon$.
    \begin{equation} \label{trackC:eq18}
        \begin{aligned}
            \varepsilon_{m, t+1} &\leq \varepsilon_{m, t} + 0 \ \ \text{( $\because \varepsilon_{m, t} > t\epsilon$ )} 
            \leq \max\left\{ \varepsilon_{m, t_0^{''}},  1 + t\epsilon \right\} \ \ \text{(from \eqref{trackC:eq15})} \\
            &\leq \max\left\{ \varepsilon_{m, t_0^{''}},  1 + (t+1)\epsilon \right\}.
        \end{aligned}
    \end{equation}
    Hence, step 3 is true for $t+1$. 
    This proves step 3.\\
    \textbf{Step 4: } We show that  $d_\infty\left( 
    \frac{\boldsymbol{N}(t)}{t}, \mathcal{S} \right) \leq 2M\epsilon$, $\forall t\geq t_1$.\\
    We upper bound $\varepsilon_{m, t_0^{''}}$ as $\varepsilon_{m, t_0^{''}} \leq N_m\left( t_0^{''} \right) - t_0^{''}\hat{w}_m\left( t_0^{''} \right) \leq  N_m\left( t_0^{''} \right) \leq t_0^{''}.$
    \\From step 3 and from the above equation, we have $\text{for all } m \in [M],$ %we have the upper bound for $\varepsilon_{m, t}$ as, 
    \begin{equation} \label{trackC:eq19}
         \varepsilon_{m, t} \leq \max\left\{ t_0^{''}, 1 + t\epsilon \right\}, \forall t \geq t_0^{''}.
    \end{equation}
    Since $\varepsilon_{m^{'}, t} = N_{m^{'}}(t) - t\hat{w}_{m^{'}}(t)$, we have $\sum_{m^{'} = 1}^M \varepsilon_{m^{'}, t} = 0$. Hence, we lower bound $\varepsilon_{m, t}$ for all $m \in [M]$ as 
    % We lower bound the $\varepsilon_{m, t}$ for all $m \in [M]$ as follows: We have $\sum_{m^{'} = 1}^M \varepsilon_{m^{'}, t} = 0 \ \ \text{($\because \varepsilon_{m^{'}, t} = N_{m^{'}}(t) - t\hat{w}_{m^{'}}(t)$)}$. Hence, we get
    %\begin{equation} 
        \begin{align} \label{trackC:eq20}
            \varepsilon_{m, t} &= - \sum_{m^{'} \neq m} \varepsilon_{m^{'}, t} 
            \geq -(M-1)\max\left\{ t_0^{''}, 1 + t\epsilon \right\}. \ \ \text{(from \eqref{trackC:eq19})}
        \end{align}
    %\end{equation}
    From \eqref{trackC:eq19} and \eqref{trackC:eq20},  for all $m \in [M]$, we upper bound $|\varepsilon_{m, t}|$ as, 
    \begin{equation} \label{trackC:eq21}
        \begin{aligned}
            |\varepsilon_{m, t}| &\leq (M-1)\max\left\{ t_0^{''}, 1 + t\epsilon \right\}, \forall t \geq t_0^{''}. 
        \end{aligned}
    \end{equation}
    Finally, we upper bound $d_\infty\left(\frac{\boldsymbol{N}(t)}{t}, \mathcal{S} \right)$ as follows:
   
    \begin{align*}
        d_\infty\left(\frac{\boldsymbol{N}(t)}{t}, \mathcal{S} \right) 
        &\leq \left\|\frac{\boldsymbol{N}(t)}{t}- \boldsymbol{\overline{w}}(t) \right\|_{\infty} + d_{\infty}\left( \boldsymbol{\overline{w}}(t), \mathcal{S} \right) \\%\text{ (Triangle Inequality)}\\
        &\leq \left\|\frac{\boldsymbol{N}(t)}{t}- \boldsymbol{\overline{w}}(t) \right\|_{\infty} + \left\| \boldsymbol{\overline{w}}(t)- \boldsymbol{\hat{w}}(t) \right\|_{\infty} \text{ ($\because \boldsymbol{\hat{w}}(t) \in \mathcal{S}$)} \\ 
        &\leq \max_{m \in [M]} \left| \frac{N_m(t)}{t} - \overline{w}_m(t) \right| + 2 \epsilon \ \ \text{(from step 1)}\\
        &= \max_{m \in [M]} \left| \frac{\varepsilon_{m, t}}{t} \right| + 2\epsilon
        \leq (M-1)\max\left\{ \frac{t_0^{''}}{t}, \frac{1}{t}+\epsilon \right\} + 2\epsilon \text{ (from \eqref{trackC:eq21})}.
    \end{align*}
    Let $t_1 = \max\left\{ \frac{1}{\epsilon}, \frac{t_0^{''}}{2\epsilon} \right\}$. For all $t \geq t_1$, we get $d_\infty\left(\frac{\boldsymbol{N}(t)}{t}, \mathcal{S} \right) \leq 2M\epsilon$,
    where the expression for $t_1$ in terms of $t_0$ is given as $t_1 = \max\left\{ \frac{1}{\epsilon}, \frac{1}{M\epsilon^3}, \frac{t_0}{\epsilon^2} \right\}$. This proves Lemma \ref{Lemma1:TrakingC}.
\end{proof}

\begin{lemma} \label{Lemma2:BergeMaximum}
    Let $\boldsymbol{\mu} \in \Theta^{M}$. Define $\psi^*(\boldsymbol{\mu}) = \max_{w \in \mathcal{P}_M} \psi(\boldsymbol{w}, \boldsymbol{\mu})$ and $\mathcal{S}^*(\boldsymbol{\mu}) = \argmax_{w \in \mathcal{P}_M}\psi(\boldsymbol{w}, \boldsymbol{\mu})$. Then $\psi^*(\boldsymbol{\mu})$ is continuous at $\boldsymbol{\mu}$, and $\mathcal{S}^*(\boldsymbol{\mu})$ is convex, compact and non-empty. Furthermore, $\mathcal{S}^*(\boldsymbol{\mu})$ is upper hemi continuous, i.e., for any open neighborhood $\nu$ of $\mathcal{S}^*(\boldsymbol{\mu})$, there exists an open neighborhood $\mathcal{U}$ of $\boldsymbol{\mu}$, such that for all $\boldsymbol{\mu}^{'} \in \mathcal{U}$, we have $\mathcal{S}^*(\boldsymbol{\mu}^{'}) \subseteq \nu$.
\end{lemma}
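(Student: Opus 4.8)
The plan is to invoke Berge's Maximum Theorem, whose hypotheses I verify using the joint continuity established in Lemma~\ref{proposition:innermincont}, and to establish convexity by a separate concavity argument, since Berge's theorem on its own does not guarantee that the argmax correspondence is convex-valued. I would view $\psi:\mathcal{P}_M\times\mathbb{R}^{d\times M}\to\mathbb{R}$ as the objective, with $\boldsymbol{w}\in\mathcal{P}_M$ the decision variable and $\boldsymbol{\mu}$ the parameter, and take the constraint correspondence to be the constant map $\boldsymbol{\mu}\mapsto\mathcal{P}_M$. Since $\mathcal{P}_M$ is compact and independent of $\boldsymbol{\mu}$, this correspondence is nonempty, compact-valued, and trivially continuous (both upper and lower hemicontinuous, being constant). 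Together with the joint continuity of $\psi$ from Lemma~\ref{proposition:innermincont}, these are exactly the hypotheses required by Berge's Maximum Theorem.

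Applying Berge's Maximum Theorem then yields directly that the value function $\psi^*(\boldsymbol{\mu})=\max_{\boldsymbol{w}\in\mathcal{P}_M}\psi(\boldsymbol{w},\boldsymbol{\mu})$ is continuous in $\boldsymbol{\mu}$, and that the argmax correspondence $\mathcal{S}^*(\boldsymbol{\mu})$ is upper hemicontinuous with nonempty compact values; that is, for any open neighborhood $\nu$ of $\mathcal{S}^*(\boldsymbol{\mu})$ there is an open neighborhood $\mathcal{U}$ of $\boldsymbol{\mu}$ with $\mathcal{S}^*(\boldsymbol{\mu}')\subseteq\nu$ for all $\boldsymbol{\mu}'\in\mathcal{U}$. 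I would also note that nonemptiness follows independently from the Weierstrass extreme value theorem, and compactness from the fact that $\mathcal{S}^*(\boldsymbol{\mu})=\{\boldsymbol{w}\in\mathcal{P}_M:\psi(\boldsymbol{w},\boldsymbol{\mu})=\psi^*(\boldsymbol{\mu})\}$ is the preimage of a closed set under the continuous map $\psi(\cdot,\boldsymbol{\mu})$, hence a closed subset of the compact set $\mathcal{P}_M$.

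For convexity, which Berge's theorem does not supply, I would argue via concavity. For each fixed $\boldsymbol{\mu}$ and each fixed $\boldsymbol{\lambda}\in\mathrm{Alt}(\boldsymbol{\mu})$, the map $\boldsymbol{w}\mapsto\tfrac{1}{2}\sum_{m=1}^M w_m\|\boldsymbol{\lambda}_m-\boldsymbol{\mu}_m\|^2$ is affine (indeed linear) in $\boldsymbol{w}$, and $\psi(\cdot,\boldsymbol{\mu})$ is the pointwise infimum of this family over $\boldsymbol{\lambda}\in\mathrm{Alt}(\boldsymbol{\mu})$. An infimum of affine functions is concave, so $\psi(\cdot,\boldsymbol{\mu})$ is concave on the convex set $\mathcal{P}_M$; the set of maximizers of a concave function over a convex domain is convex, which gives convexity of $\mathcal{S}^*(\boldsymbol{\mu})$.

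The argument is essentially a clean application of two standard tools, so I do not anticipate a genuine technical obstacle; the only point demanding care is keeping the two tools cleanly separated — Berge supplies continuity of $\psi^*$ and upper hemicontinuity of $\mathcal{S}^*$, while the infimum-of-affine-functions observation supplies convexity. The mildly subtle bookkeeping step is verifying Berge's hypotheses precisely, in particular recalling that the constant constraint correspondence $\boldsymbol{\mu}\mapsto\mathcal{P}_M$ is automatically continuous and compact-valued, so that the theorem applies verbatim with the joint continuity of $\psi$ taken from Lemma~\ref{proposition:innermincont}.
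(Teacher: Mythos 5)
Your proposal is correct and follows essentially the same route as the paper, whose proof is a one-line invocation of Berge's Maximum Theorem given the joint continuity of $\psi$ from Lemma~\ref{proposition:innermincont} and the compactness of $\mathcal{P}_M$. Your explicit concavity argument for convexity of $\mathcal{S}^*(\boldsymbol{\mu})$ --- namely that $\psi(\cdot,\boldsymbol{\mu})$ is an infimum of linear functions of $\boldsymbol{w}$, hence concave, so its maximizer set over the convex set $\mathcal{P}_M$ is convex --- is a welcome addition, since the paper leaves this point implicit (convexity of the argmax set is not supplied by the basic Berge theorem but by the convexity version of the maximum theorem in the cited Sundaram reference, which requires exactly the concavity you verify).
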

\begin{proof}
    From Lemma \ref{proposition:innermincont}, we have the function $\psi(\cdot, \cdot)$ is continuous in its domain. Also the probability simplex $\mathcal{P}_M$ is a compact set. Hence, Lemma \ref{Lemma2:BergeMaximum} follows from Berge's Maximum Theorem in \cite{berge1877topological, sundaram1996first}.
\end{proof}

\begin{corollary} \label{corollary: berge}
    Let $\boldsymbol{\mu} \in \Theta^{M}$. Define $\psi_{\text{mod}}^*(\boldsymbol{\mu}) = \max_{w \in \mathcal{P}_M} \psi_{\text{mod}}(\boldsymbol{w}, \boldsymbol{\mu})$ and $\mathcal{S}_{\text{mod}}^*(\boldsymbol{\mu}) = \argmax_{w \in \mathcal{P}_M}\psi_{\text{mod}}(\boldsymbol{w}, \boldsymbol{\mu})$. Then $\psi_{\text{mod}}^*(\boldsymbol{\mu})$ is continuous at $\boldsymbol{\mu}$, and $\mathcal{S}_{\text{mod}}^*(\boldsymbol{\mu})$ is convex, compact and non-empty. Furthermore, $\mathcal{S}_{\text{mod}}^*(\boldsymbol{\mu})$ is upper hemi continuous, i.e., for any open neighborhood $\nu$ of $\mathcal{S}_{\text{mod}}^*(\boldsymbol{\mu})$, there exists an open neighborhood $\mathcal{U}$ of $\boldsymbol{\mu}$, such that for all $\boldsymbol{\mu}^{'} \in \mathcal{U}$, we have $\mathcal{S}^*(\boldsymbol{\mu}^{'}) \subseteq \nu$.
\end{corollary}
\begin{proof}
    The modified inner infimum $\psi_{\text{mod}}(\cdot, \cdot)$ is the special case of the inner infimum $\psi(\cdot, \cdot)$, where the KL divergence in $\psi(\cdot, \cdot)$ is considered for the Gaussian distribution. Hence, this result follows from Lemma \ref{Lemma2:BergeMaximum}.
\end{proof}

Now, we discuss proof of Lemma \ref{prop:armpullpropconverge}.
\begin{proof}[Proof of Lemma \ref{prop:armpullpropconverge}]
From Lemma \ref{Lemma2:BergeMaximum}, $\mathcal{S}^*(\boldsymbol{\mu})$ is upper hemi continuous and hence, for all $\epsilon > 0$,  there exists $\zeta(\epsilon)>0$ such that for all $\boldsymbol{\mu}^{'} \in \mathbb{R}^{d\times M}$, $\text{if } \| \boldsymbol{\mu} - \boldsymbol{\mu}^{'} \| \leq \zeta(\epsilon), 
        \text{ then, }$
\begin{equation} \label{eq:usca}
    \begin{aligned}
         \max_{\boldsymbol{w^{''}} \in \mathcal{S}^*(\boldsymbol{\mu}^{'})} d_\infty(\boldsymbol{w^{''}}, \mathcal{S}^*(\boldsymbol{\mu})) \leq \epsilon.
    \end{aligned}
\end{equation}
At any time $t$, forced exploration in the ATBOC-Gauss/ATBOC-1pExp algorithms ensures that each arm is sampled at least by order of $\sqrt{t}$. Hence, by the strong law of large numbers, we have, $\lim_{t \rightarrow \infty} \hat{\boldsymbol{\mu}}(t) = \boldsymbol{\mu} \ \ a.s.$
Hence, there exists a $t_0>0$ such that for all $t \geq t_0$, we have, $\| \boldsymbol{\mu} - \hat{\boldsymbol{\mu}}(t) \| \leq \zeta(\epsilon)$. 
Hence by using \eqref{eq:usca}, $\forall t\geq t_0(\epsilon)$,
\begin{equation} \label{eq:boundepsilon}
\begin{aligned}
    d_\infty\left( \boldsymbol{w}^{*}(t), \mathcal{S}^*(\boldsymbol{\mu}) \right) &\leq \max_{\boldsymbol{w} \in \mathcal{S}^*(\hat{\boldsymbol{\mu}}(t))} d_\infty(\boldsymbol{w}, \mathcal{S}^*(\boldsymbol{\mu})) 
    \ \ \text{($\because \boldsymbol{w}^{*}(t) \in \mathcal{S}^*(\hat{\boldsymbol{\mu}}(t)$))} 
    \leq \epsilon \ \ (\text{from \ref{eq:usca}}).
\end{aligned}
\end{equation}
Hence, we have shown that, as $t\rightarrow \infty$, $d_\infty\left( \boldsymbol{w}^{*}(t), \mathcal{S}^*(\boldsymbol{\mu}) \right) \rightarrow 0 \ a.s.$, where $w^{*}(t)$ is the tracking sequence used in the ATBOC-Gauss/ATBOC-1pExp algorithm.  From Lemma \ref{Lemma2:BergeMaximum}, $\mathcal{S}^*(\boldsymbol{\mu})$ is non-empty, compact and convex set. By applying Lemma \ref{Lemma1:TrakingC} on \eqref{eq:boundepsilon}, we get $d_\infty\left( \frac{\boldsymbol{N}(t)}{t}, \mathcal{S}^*(\boldsymbol{\mu}) \right) \rightarrow 0 \ a.s. \text{, as } t \rightarrow \infty.$

{For ATBOC-subGauss algorithm, we follow a procedure similar to the one described above, except that we use Corollary~\ref{corollary: berge} in place of Lemma~\ref{Lemma2:BergeMaximum}. This yields $d_\infty\left( \frac{\boldsymbol{N}(t)}{t}, \mathcal{S}_{\text{mod}}^*(\boldsymbol{\mu}) \right) \rightarrow 0 \ a.s. \text{, as } t \rightarrow \infty.$
% This completes the proof of Lemma~\ref{prop:armpullpropconverge}.
}
\end{proof}

% -------------------------------------------------------------------

\section{Proof of Theorem \ref{Theorem2:deltaPAC} - $\delta-$PC} \label{appsec: deltapac}
\begin{proof}
    \textbf{Finite Stopping time Proof:} \\
{First, we upper bound the threshold chosen for ATBOC-Gauss/ATBOC-subGauss,  as follows:}
    %\begin{equation} 
        \begin{align} \label{almost:eq4}
            \beta(\delta, t) &= 2 \log\left( \frac{\left( \prod_{m=1}^M(N_m(t)+1) \right)^{\frac{d}{2}}}{\delta} \right) 
            \leq 2 \log\left( \frac{t^{\frac{Md}{2}}}{\delta} \right) \ \ (\because N_m(t)+1 \leq t, \forall t\geq 2).
        \end{align}
    %\end{equation}
    Now, we upper bound the threshold chosen for ATBOC-1pExp,  as follows:
    \begin{align} \label{almost:eq4-1pexp}
            \beta(\delta, t) &= 3\sum_{m=1}^M \log\left(1+\log\left(N_m(t)\right)\right) + (1+\zeta)M\log\left(\frac{\pi^2/3}{\left(\log(1+\zeta)\right)^2}\right) + (1+\zeta)\log\left(\frac{1}{\delta}\right) \nonumber \\
            &\leq 3\sum_{m=1}^M \log(t) + (1+\zeta)\log\left(\frac{(\pi^2/3)^M}{\left(\log(1+\zeta)\right)^{2M}}\right) + (1+\zeta)\log\left(\frac{1}{\delta}\right) \nonumber \\
            &= (1+\zeta) \log\left( \frac{t^{\frac{3M}{1+\zeta}}}{\delta} \left(\frac{\pi^2/3}{\left(\log(1+\zeta)\right)^2}\right)^{M} \right).
        \end{align}
    From \eqref{almost:eq4} and \eqref{almost:eq4-1pexp}, the upper bound on the threshold can be written more generally as
    \begin{equation} \label{eq:thresholdub}
        \beta(\delta, t) \leq a \log\left(\frac{bt^c}{\delta}\right),
    \end{equation}
    where $(a, b, c) = \begin{cases}
        \left(2, 1, \frac{Md}{2}\right) &\text{for ATBOC-SubGauss}\\
        \left(1+\zeta, \left(\frac{\pi^2/3}{\left(\log(1+\zeta)\right)^2}\right)^{M}, \frac{3M}{1+\zeta}\right) &\text{for ATBOC-1pExp}
    \end{cases}.$

{\em ATBOC-Gauss/ATBOC-1pExp:} 
        Let $\mathcal{E}$ be the event defined as
    \begin{equation} \label{fine stopping: event}
    \begin{aligned}
        \mathcal{E} = &\left\{ \lim_{t \rightarrow \infty} d_\infty\left( \frac{\boldsymbol{N}(t)}{t}, \mathcal{S}^*(\boldsymbol{\mu}) \right) = 0  \ \
        \& \lim_{t \rightarrow \infty } \hat{\boldsymbol{\mu}}(t) = \boldsymbol{\mu}\right\}.
    \end{aligned}
    \end{equation}
    From Lemma \ref{prop:armpullpropconverge} and the strong law of large numbers, we have $\mathbb{P}\left[\mathcal{E} \right] = 1$. 
    Consider $\epsilon > 0$. By the continuity of $\psi$ (Lemma \ref{proposition:innermincont}), there exists an open neighborhood $\nu(\epsilon)$ of $\{\boldsymbol{\mu}\}\times \mathcal{S}^*(\boldsymbol{\mu})$ such that for all $(\boldsymbol{\mu^{'}}, \boldsymbol{w^{'}}) \in \nu(\epsilon)$, we have 
    \begin{equation} \label{eq:almostconti}
        \psi(\boldsymbol{w^{'}}, \boldsymbol{\mu}^{'}) \geq (1-\epsilon)\psi(\boldsymbol{w^{*}}, \boldsymbol{\mu}) \text{, where, } \boldsymbol{w^*} \in \mathcal{S}^*(\boldsymbol{\mu}).
    \end{equation}
    Under the event $\mathcal{E}$, there exists $t_0\geq 1$ such that $\left(\hat{\boldsymbol{\mu}}(t), \frac{ \boldsymbol{N}(t)}{t} \right) \in \nu(\epsilon)$ $\forall t \geq t_0$. Hence, using \eqref{eq:almostconti}, we get $\psi\left( \frac{\boldsymbol{N}(t)}{t}, \hat{\boldsymbol{\mu}}(t) \right) \geq (1-\epsilon)\psi(\boldsymbol{w^{*}}, \boldsymbol{\mu})$ $\forall t \geq t_0$.
    Hence, for all $t \geq t_0$, the test statistics $Z(t)$ can be lower bounded as
    \begin{equation} \label{almost:eq3}
        \begin{aligned}
            Z(t) = t \psi\left( \frac{\boldsymbol{N}(t)}{t}, \hat{\boldsymbol{\mu}}(t) \right) 
            \geq t(1-\epsilon)\psi(\boldsymbol{w^{*}}, \boldsymbol{\mu}).
        \end{aligned}
    \end{equation}
    Now, we upper bound the sample complexity $\tau_\delta$ as follows.
    \begin{equation} \label{eq: finite stopping time}
        \begin{aligned}
            \tau_\delta &= \inf\left\{ t \in \mathbb{N} : Z(t) \geq \beta(\delta, t) \right\}  \\
            &\leq \max\Bigg\{ t_0, \inf\bigg\{ t \in \mathbb{N} :t(1-\epsilon)\psi(\boldsymbol{w^{*}}, \boldsymbol{\mu}) >a \log\left( \frac{bt^c}{\delta} \right) \bigg\} \Bigg\} \ \ \text{(from \eqref{almost:eq3} and \eqref{eq:thresholdub})}.\\
            &\leq \max\Bigg\{ t_0, \inf\bigg\{ t \in \mathbb{N} : t(1-\epsilon)\psi(\boldsymbol{w^{*}},\boldsymbol{\mu}) >a \log\left( \frac{bt^c}{\delta} \right) \bigg\} \Bigg\} \\
            &= \max\Bigg\{ t_0, \inf\left\{ t \in \mathbb{N} : t\frac{1-\epsilon}{T^*(\boldsymbol{\mu})ac} >  \log\left( \left(\frac{b}{\delta}\right)^{\frac{1}{c}}t \right) \right\} \Bigg\} \ \ \text{($\because \psi\left(\boldsymbol{w^*}, \boldsymbol{\mu}\right) = T^*(\boldsymbol{\mu})^{-1}$)} \\
            &\leq \max\Bigg\{t_0, \frac{ac T^*(\boldsymbol{\mu})}{1-\epsilon} \left( \log\left( \frac{ab^{\frac{1}{c}}cT^*(\boldsymbol{\mu})e}{\delta^{\frac{1}{c}} (1-\epsilon)} \right) + \log\log\left( \frac{ab^{\frac{1}{c}}cT^*(\boldsymbol{\mu})}{\delta^{\frac{1}{c}} (1-\epsilon)} \right) \right) \Bigg\} \text{ (Lemma \ref{Lemma:infbound} in Section \ref{appsec:otherclaims})}\\
            &\leq \max\Bigg\{ t_0, \frac{a T^*(\boldsymbol{\mu})}{1-\epsilon} \log\left( \frac{1}{\delta} \right) + \frac{ac T^*(\boldsymbol{\mu})}{1-\epsilon} \left( \log\left( \frac{ab^{\frac{1}{c}}cT^*(\boldsymbol{\mu})e}{(1-\epsilon)} \right) + \log\log\left( \frac{ab^{\frac{1}{c}}cT^*(\boldsymbol{\mu})}{\delta^{\frac{1}{c}} (1-\epsilon)} \right) \right) \Bigg\}.
        \end{aligned}
    \end{equation}
    % The sample complexity is further upper bounded as, 
    % \begin{equation} \label{eq: finite stopping time}
    %     \begin{aligned}
    %         \tau_\delta &\leq \max\Bigg\{ t_0, \inf\bigg\{ t \in \mathbb{N} : t(1-\epsilon)\psi(\boldsymbol{w^{*}},\boldsymbol{\mu}) >a \log\left( \frac{bt^c}{\delta} \right) \bigg\} \Bigg\} \\
    %         &= \max\Bigg\{ t_0, \inf\left\{ t \in \mathbb{N} : t\frac{1-\epsilon}{T^*(\boldsymbol{\mu})ac} >  \log\left( \left(\frac{b}{\delta}\right)^{\frac{1}{c}}t \right) \right\} \Bigg\} \ \ \text{($\because \psi\left(\boldsymbol{w^*}, \boldsymbol{\mu}\right) = T^*(\boldsymbol{\mu})^{-1}$)} \\
    %         &\leq \max\Bigg\{t_0, \frac{ac T^*(\boldsymbol{\mu})}{1-\epsilon} \left( \log\left( \frac{b^{\frac{1}{c}}aceT^*(\boldsymbol{\mu})}{\delta^{\frac{1}{c}} (1-\epsilon)} \right) + \log\log\left( \frac{b^{\frac{1}{c}}acT^*(\boldsymbol{\mu})}{\delta^{\frac{1}{c}} (1-\epsilon)} \right) \right) \Bigg\} \text{ (Lemma \ref{Lemma:infbound} in Section \ref{appsec:otherclaims})}\\
    %         &\leq \max\Bigg\{ t_0, \frac{a T^*(\boldsymbol{\mu})}{1-\epsilon} \log\left( \frac{1}{\delta} \right) + \frac{ac T^*(\boldsymbol{\mu})}{1-\epsilon} \left( \log\left( \frac{b^{\frac{1}{c}}aceT^*(\boldsymbol{\mu})}{(1-\epsilon)} \right) + \log\log\left( \frac{b^{\frac{1}{c}}acT^*(\boldsymbol{\mu})}{\delta^{\frac{1}{c}} (1-\epsilon)} \right) \right) \Bigg\}.
    %     \end{aligned}
    % \end{equation}}

    Note that all the terms in the upper bound on $\tau_\delta$ are finite. Hence, under the event $\mathcal{E}$, we have $\tau_\delta < \infty$, 
    i.e., $\mathcal{E} \subseteq \{\tau_\delta < \infty\}$. Since $\mathbb{P}\left[ \mathcal{E} \right] = 1$, we get $\mathbb{P}\left[\{ \tau_\delta < \infty \}\right] =1$. Therefore, ATBOC-Gauss/ATBOC-1pExp stops in finite time almost surely.

    {\em ATBOC-subGauss:}
    We can follow a procedure similar to that described above, except that the lower bound of $Z(t)$ is $t(1-\epsilon)T_{\text{mod}}^{*}(\boldsymbol{\mu})$. Consequently, the sample complexity is upper bounded as 
    \begin{equation} \label{eq: sampcompupbndatbocsubgauss}
        \tau_\delta \leq \max\Bigg\{ t_0, \frac{a T_{\text{mod}}^*(\boldsymbol{\mu})}{1-\epsilon} \log\left( \frac{1}{\delta} \right) + \frac{ac T_{\text{mod}}^*(\boldsymbol{\mu})}{1-\epsilon} \left( \log\left( \frac{ab^{\frac{1}{c}}cT_{\text{mod}}^*(\boldsymbol{\mu})e}{(1-\epsilon)} \right) + \log\log\left( \frac{ab^{\frac{1}{c}}cT_{\text{mod}}^*(\boldsymbol{\mu})}{\delta^{\frac{1}{c}} (1-\epsilon)} \right) \right) \Bigg\}.
    \end{equation}
    Hence, we get $\mathbb{P}\left[\{ \tau_\delta < \infty \}\right] =1$. This proves that ATBOC-subGauss stops in finite time almost surely. 

    \textbf{$\delta-$PC Proof: } \\ 
    We can write the probability of error as:
    \begin{equation} \label{eq: peini}
        \begin{aligned}
        \mathbb{P} \left[ \tau_\delta < \infty \text{ and } \mathcal{C}\left(\hat{\boldsymbol{\mu}}\left(\tau_\delta\right)\right)\nsim \mathcal{C}(\boldsymbol{\mu}) \right] &\leq \mathbb{P}\left[ \exists t \in \mathbb{N} : \left\{ Z(t)>\beta(\delta, t) \text{ and } \mathcal{C}(\hat{\boldsymbol{\mu}}(t))\nsim \mathcal{C}(\boldsymbol{\mu}) \right\} \right] \\
        &\leq  \mathbb{P}\left[ \exists t \in \mathbb{N} : \left\{ Z(t)> \beta(\delta, t) \right\} \mid \{\mathcal{C}(\hat{\boldsymbol{\mu}}(t))\nsim \mathcal{C}(\boldsymbol{\mu})\} \right].
        \end{aligned}
    \end{equation}
    {\em Case 1:} \textit{ATBOC-Gauss/ATBOC-subGauss} \\
    From the expression of the test statistics $Z(t)$ for ATBOC-Gauss/ATBOC-subGauss from \eqref{eq: z} and from the definition of the stopping time $\tau_\delta = \inf\left\{ t \in \mathbb{N}: Z(t) \geq \beta(\delta, t) \right\}$, the probability of error can be further upper-bounded as
    \begin{equation} \label{eq: del1}
        \begin{aligned}
            &\mathbb{P} \left[ \tau_\delta < \infty \text{ and } \mathcal{C}\left(\hat{\boldsymbol{\mu}}\left(\tau_\delta\right)\right)\nsim \mathcal{C}(\boldsymbol{\mu}) \right] \\
            &\leq  \mathbb{P}\Bigg[ \exists t \in \mathbb{N} :\bigg\{ \inf_{\boldsymbol{\lambda} \in \text{Alt}(\hat{\boldsymbol{\mu}}(t))}  \sum_{m=1}^M \frac{N_m(t)}{2\sigma^2} \left\|\boldsymbol{\lambda}_m - \boldsymbol{\hat{\mu}}_m(t)\right\|^2   > \beta(\delta, t) \bigg\}\mid \{\mathcal{C}(\boldsymbol{\hat{\mu}}(t))\nsim \mathcal{C}(\boldsymbol{\mu})\} \Bigg] .
        \end{aligned}
    \end{equation}
    
    Given $\mathcal{C}(\hat{\boldsymbol{\mu}}(t))\nsim \mathcal{C}(\boldsymbol{\mu})$, we can say $\boldsymbol{\mu} \in$ Alt$(\hat{\boldsymbol{\mu}}(t))$. Hence, we have 
    \begin{equation} \label{eq: del2}
    \inf_{\boldsymbol{\lambda} \in \text{Alt}(\hat{\boldsymbol{\mu}}(t))}  \sum_{m=1}^M \frac{N_m(t)}{2\sigma^2} \left\|\boldsymbol{\lambda}_m - \boldsymbol{\hat{\mu}}_m(t)\right\|^2 
    \leq \sum_{m=1}^M \frac{N_m(t)}{2\sigma^2} \left\|\boldsymbol{\mu}_m - \boldsymbol{\hat{\mu}}_m(t)\right\|^2. 
    \end{equation}
    Using this inequality in \eqref{eq: del1}, we get
    \begin{equation} \label{del:eq4}
    \begin{aligned}
        &\mathbb{P} \left[ \tau_\delta < \infty \text{ and } \mathcal{C}\left(\hat{\boldsymbol{\mu}}\left(\tau_\delta\right)\right)\nsim \mathcal{C}(\boldsymbol{\mu}) \right]  \leq  \mathbb{P}\left[ \exists t \in \mathbb{N} : \left\{ \sum_{m=1}^M \frac{N_m(t)}{2\sigma^2} \|\boldsymbol{\mu}_m - \boldsymbol{\hat{\mu}}_m(t)\|^2> \beta(\delta, t) \right\}  \right].
    \end{aligned}
    \end{equation}
    Let us denote the identity matrix of dimension $n \times n$ as $\boldsymbol{I}_n$ and the matrix of all zeros of dimension $n \times n$ as $\boldsymbol{O}_n$. Hence, we can write $N_m(t) \left\|\boldsymbol{\mu}_m - \boldsymbol{\hat{\mu}}_m(t)\right\|^2 = \left(\boldsymbol{\mu}_m - \boldsymbol{\hat{\mu}}_m(t)\right)^T N_m(t) \boldsymbol{I}_d\left(\boldsymbol{\mu}_m - \boldsymbol{\hat{\mu}}_m(t)\right)$.
    % \begin{equation}
    % \begin{split}
    %     &N_m(t) \left\|\boldsymbol{\mu}_m - \boldsymbol{\hat{\mu}}_m(t)\right\|^2 = \left(\boldsymbol{\mu}_m - \boldsymbol{\hat{\mu}}_m(t)\right)^T N_m(t) \boldsymbol{I}_d\left(\boldsymbol{\mu}_m - \boldsymbol{\hat{\mu}}_m(t)\right)
    %     \end{split}
    % \end{equation}
   Let 
    \begin{equation}
    \begin{split}
        \underline{\boldsymbol{\mu}} = \begin{bmatrix}
        \boldsymbol{\mu}_1 \\
        \vdots \\
        \boldsymbol{\mu}_M \\
        \end{bmatrix}, 
        \underline{\boldsymbol{\hat{\mu}}}(t) = \begin{bmatrix}
        \boldsymbol{\hat{\mu}}_1(t) \\
        \vdots \\
        \boldsymbol{\hat{\mu}}_M(t) \\
        \end{bmatrix} 
        \text{ and } 
        \boldsymbol{D}(t) = \begin{bmatrix}
        N_1(t)\boldsymbol{I}_d & \boldsymbol{O}_d  & \cdots & \boldsymbol{O}_d \\
        \boldsymbol{O}_d & \ddots  & \cdots & \boldsymbol{O}_d \\
        \vdots  & \vdots & \ddots & \vdots \\
        \boldsymbol{O}_d & \boldsymbol{O}_d  & \cdots & N_M(t) \boldsymbol{I}_d \\
        \end{bmatrix}
        \end{split}
    \end{equation}
    where $\underline{\boldsymbol{\mu}}$ and $\underline{\boldsymbol{\hat{\mu}}}(t)$ are the column vectors with dimension $Md \times 1$ and $\boldsymbol{D}(t)$ is the diagonal matrix with dimension $Md \times Md$. 
    Now we write 
    \begin{equation} \label{del:eq1}
    \begin{aligned}
        \sum_{m=1}^M &N_m(t) \|\boldsymbol{\mu}_m - \boldsymbol{\hat{\mu}}_m(t)\|^2 =  \left(\underline{\boldsymbol{\mu}} - \underline{\hat{\boldsymbol{\mu}}}(t)\right)^T \boldsymbol{D}(t)\left(\underline{\boldsymbol{\mu}} - \underline{\hat{\boldsymbol{\mu}}}(t)\right).
    \end{aligned}
    \end{equation}
    Let $\boldsymbol{a}(r) = [a_1(r) \ldots a_{Md}(r)]^T$ be a column vector with $a_j(r) = \mathds{1}_{\left\{\left[A_{\left\lceil \frac{r}{d} \right\rceil}-1\right]d + \bmod{(r-1, d)+1} = j\right\}}$. Note that
    \begin{equation} \label{del:eqq1}
        \boldsymbol{D}(t) = \sum_{r=1}^{td} \boldsymbol{a}(r) \boldsymbol{a}(r)^T.
    \end{equation}
    Let $\{\eta(r), r\geq 1\}$ be an i.i.d. Gaussian process with zero mean and unit variance. It can be verified that
    \begin{equation} \label{del:eqq2}
         \underline{\hat{\boldsymbol{\mu}}}(t) - \underline{\boldsymbol{\mu}} = D(t)^{-1} \left[ \sum_{r=1}^{td} \boldsymbol{a}(r) \eta(r) \right].
    \end{equation}
    Since each arm is sampled at least once before stopping, We have
    \begin{equation} \label{del:eqq3}
        \begin{aligned}
            \left(\sum_{r=1}^{td} \boldsymbol{a}(r) \boldsymbol{a}(r)^T\right) \succ I_{Md}
            \implies 2\left(\sum_{r=1}^{td} \boldsymbol{a}(r) \boldsymbol{a}(r)^T\right) \succ \left(\sum_{r=1}^{td} \boldsymbol{a}(r) \boldsymbol{a}(r)^T\right) + I_{Md}.
        \end{aligned}
    \end{equation}
    Using \eqref{del:eq1}, \eqref{del:eqq1}, \eqref{del:eqq2}, and \eqref{del:eqq3}, the probability of error in \eqref{del:eq4} is bounded as 
    % \begin{equation}
    \begin{align}
        &\mathbb{P} \left[ \tau_\delta < \infty \text{ and } \mathcal{C}\left(\hat{\boldsymbol{\mu}}\left(\tau_\delta\right)\right)\nsim \mathcal{C}(\boldsymbol{\mu}) \right] \\
        &\leq  \mathbb{P}\left[ \exists t \in \mathbb{N} :  \left\{ \frac{1}{\sigma^2}\left[ \sum_{r=1}^{td} \boldsymbol{a}(r) \eta(r) \right]^T \left[ \sum_{r=1}^{td} \boldsymbol{a}(r) \boldsymbol{a}(r)^T + I_{Md} \right]^{-1} \left[ \sum_{r=1}^{td} \boldsymbol{a}(r) \eta(r) \right]> \beta(\delta, t) \right\}  \right] \\
        &\leq  \mathbb{P}\left[ \exists t \in \mathbb{N} : \left\{ \left\| \sum_{r=1}^{td} \boldsymbol{a}(r) \eta(r) \right\|_{\left[ \sum_{r=1}^{td} \boldsymbol{a}(r) \boldsymbol{a}(r)^T + I_{Md} \right]^{-1}} > 2 \sigma^2 \log\left( \frac{\left| \sum_{r=1}^{td} \boldsymbol{a}(r) \boldsymbol{a}(r)^T + I_{Md} \right|^{\frac{1}{2}}}{\delta} \right) \right\}  \right] \\
        &\leq \delta. \text{ (Lemma \ref{delprop} in Section \ref{appsec:otherclaims})}
    \end{align}
    % \end{equation}
    % Hence proved.\\
    {
    {\em Case 2: } \textit{ATBOC-1pExp}\\
    From the expression of the test statistics $Z(t)$ for ATBOC-1pExp from \eqref{eq: z} and from the definition of the stopping time $\tau_\delta = \inf\left\{ t \in \mathbb{N}: Z(t) \geq \beta(\delta, t) \right\}$, the probability of error in \eqref{eq: peini} can be further upper-bounded as
    % In ATBOC-1pExp, the test statistic $Z(t)$ is
    % \begin{equation}
    %     Z(t)  =  \inf_{\boldsymbol{\lambda} \in \text{Alt}(\boldsymbol{\hat{\mu}}(t))}  \sum_{m=1}^M N_m(t) d_{\text{KL}}\left(\boldsymbol{\hat{\mu}}_m(t), \boldsymbol{\lambda}_m\right).
    % \end{equation}
    % By similar arguments as in the equations \eqref{eq: del1}, \eqref{eq: del2} and \eqref{del:eq4}, the probability of error can be upper bounded as follows.
    \begin{equation} \label{eq: dell3}
    \begin{aligned}
        &\mathbb{P} \left[ \tau_\delta < \infty \text{ and } \mathcal{C}\left(\hat{\boldsymbol{\mu}}\left(\tau_\delta\right)\right)\nsim \mathcal{C}(\boldsymbol{\mu}) \right]  \leq  \mathbb{P}\left[ \exists t \in \mathbb{N} : \left\{ \sum_{m=1}^M N_m(t) d_{\text{KL}}\left(\boldsymbol{\hat{\mu}}_m(t), \boldsymbol{\lambda}_m\right)> \beta(\delta, t) \right\}  \right].
    \end{aligned}
    \end{equation}    
    We use the KL-based concentration inequality of natural parameters for a single-parameter exponential family of distributions, presented in Lemma \ref{lemma: conc KL bound}. 
    % Now, we discuss the proof of Lemma \ref{lemma: conc KL bound}.
    On applying Lemma \ref{lemma: conc KL bound} on \eqref{eq: dell3}, we get 
        $\mathbb{P} \left[ \tau_\delta < \infty \text{ and } \mathcal{C}\left(\hat{\boldsymbol{\mu}}\left(\tau_\delta\right)\right)\nsim \mathcal{C}(\boldsymbol{\mu}) \right] \leq \delta.$
    }
\end{proof}

\begin{lemma} \label{lemma: conc KL bound}
    For some fixed $\zeta \in \left(0, \frac{1}{2}\right)$, we have
    $
    \mathbb{P}\left[ \exists\, t \in \mathbb{N}: \sum_{m=1}^M N_m(t) d_{\text{KL}}\left(\hat{\boldsymbol{\mu}}_m(t), \boldsymbol{\mu}_m\right) \geq \beta(\delta, t) \right] \leq \delta
    $, where $\beta(\delta, t) = 3\sum_{m=1}^M \log\left(1+\log\left(N_m(t)\right)\right) + (1+\zeta)M\log\left(\frac{\pi^2/3}{\left(\log(1+\zeta)\right)^2}\right) + (1+\zeta)\log\left(\frac{1}{\delta}\right)$.
\end{lemma}
    \begin{proof}[Proof]
        Define $X_m(t) \coloneqq N_m(t) d_{\text{KL}}\left( \hat{\mu}_m(t), \mu_m \right) - 3\log\left( 1+\log{N_m(t)} \right)$. For any $\lambda$ and $z>0$, we have 
        \begin{equation} \label{eq: mixmart}
            \begin{aligned}
                \left\{ e^{\lambda(X_m(t)-c)} \geq z \right\} &= \left\{ X_m(t)-c \geq \frac{\log(z)}{\lambda} \right\} 
                \subseteq \left\{ Z_m(t) \geq e^{\frac{\log(z)}{\lambda(1+\zeta)}} \right\} \ \ \text{(from Lemma \ref{lemma: mixmart})} \\
                &= \left\{ Z_m(t) \geq z^{\frac{1}{\lambda(1+\zeta)}} \right\} 
                = \left\{ Z_m(t)^{\lambda(1+\zeta)} \geq z \right\},
            \end{aligned}
        \end{equation}
        where $Z_m(t)$ is a martingale with $Z_m(0)=1$.
    Now, let us simplify the following probability term.
    % \begin{equation}
        \begin{align}
            \mathbb{P}\left[ \exists t \in \mathbb{N} : \sum_{m=1}^M X_m(t) > y \right] &= \mathbb{P}\left[ \exists t \in \mathbb{N} : e^{\lambda\sum_{m=1}^M X_m(t)} > e^{\lambda y} \right] \\
            % &= \mathbb{P}\left[ \exists t \in \mathbb{N} : e^{\lambda\sum_{m=1}^M (X_m(t)-c)} > e^{\lambda (y-cM)} \right] \\
            &= \mathbb{P}\left[ \exists t \in \mathbb{N} : \prod_{m=1}^M e^{\lambda (X_m(t)-c)} > e^{\lambda (y-cM)} \right] \\
            &\leq \mathbb{P}\left[ \exists t \in \mathbb{N} : \prod_{m=1}^M Z_m(t)^{\lambda(1+\zeta)} > e^{\lambda (y-cM)} \right] \quad \text{(from \eqref{eq: mixmart})}\\
            % &= \mathbb{P}\left[ \exists t \in \mathbb{N} : \left[\prod_{m=1}^M Z_m(t)\right]^{\lambda(1+\zeta)} > e^{\lambda (y-cM)} \right] \\
            &= \mathbb{P}\left[ \sup_{t \in \mathbb{N}} \left[\prod_{m=1}^M Z_m(t)\right]^{\lambda(1+\zeta)} > e^{\lambda (y-cM)} \right]. \\
        \end{align}
    % \end{equation}
    On choosing $\lambda \leq \frac{1}{1+\zeta}$, the term $\left[\prod_{m=1}^M Z_m(t)\right]^{\lambda(1+\zeta)}$ is a super martingales and on applying the Vile's inequality,
    \begin{equation}
        \mathbb{P}\left[ \exists t \in \mathbb{N} : \sum_{m=1}^M X_m(t) > y \right] \leq e^{-\lambda (y-cM)}.
    \end{equation}
    On choosing the value of $\lambda$ which minimizes the above upper bound $\left( \lambda = \frac{1}{1+\zeta} \right)$, we get,
    \begin{equation}
        \mathbb{P}\left[ \exists t \in \mathbb{N} : \sum_{m=1}^M X_m(t) > y \right] \leq e^{-\frac{(y-cM)}{1+\zeta}} .
    \end{equation}
    Let $\delta$ be such that $e^{-\frac{(y-cM)}{1+\zeta}} \leq \delta$. 
    % On solving for $y$ in terms of $\delta$, we get $y \geq cM + (1+\zeta)\log\left(\frac{1}{\delta}\right)$. 
    Now, on replacing $y$ in terms of $\delta$, we get
    \begin{equation}
        \mathbb{P}\left[ \exists t \in \mathbb{N} : \sum_{m=1}^M X_m(t) > cM + (1+\zeta)\log\left(\frac{1}{\delta}\right) \right] \leq \delta.
    \end{equation}
    This proves Lemma \ref{lemma: conc KL bound}.
    \end{proof}

% ------------------------------------------------------------------

\section{Proof of Theorem \ref{Theorem3:AlmostSureOptimal}-Almost sure sample complexity upper bound} \label{appsec: almostsureoptimal}
\begin{proof}
    For ATBOC-Gauss/ATBOC-1pExp, the upper bound on the stopping time $\tau_\delta$ in \eqref{eq: finite stopping time} holds under the event $\mathcal{E}$ in \eqref{fine stopping: event}. 
On dividing \eqref{eq: finite stopping time} by $\log\left(\frac{1}{\delta}\right)$ and taking $\limsup_{\delta \rightarrow 0}$, and then letting $\epsilon$ tend to $0$, we get 
    \begin{equation} \label{eq:2ts}
        \limsup_{\delta \rightarrow 0} \frac{\tau_\delta}{\log\left( 
        \frac{1}{\delta} \right)} \leq  a T^*(\boldsymbol{\mu}).  
    \end{equation}
    We have $\mathcal{E} \subseteq \left\{ \limsup_{\delta \rightarrow 0} \frac{\tau_\delta}{\log\left(\frac{1}{\delta}\right)} \leq aT^*(\boldsymbol{\mu}) \right\}.$
    % \begin{equation}
    %     \mathcal{E} \subseteq \left\{ \limsup_{\delta \rightarrow 0} \frac{\tau_\delta}{\log\left(\frac{1}{\delta}\right)} \leq aT^*(\boldsymbol{\mu}) \right\}.
    % \end{equation}
    Since $\mathbb{P}\left[\mathcal{E}\right] = 1$, we get $\mathbb{P}\left[ \limsup_{\delta \rightarrow 0} \frac{\tau_\delta}{\log\left(\frac{1}{\delta}\right)} \leq aT^*(\boldsymbol{\mu}) \right] = 1.$
    % \textcolor{blue}{\begin{equation}
    %     \mathbb{P}\left[ \limsup_{\delta \rightarrow 0} \frac{\tau_\delta}{\log\left(\frac{1}{\delta}\right)} \leq aT^*(\boldsymbol{\mu}) \right] = 1.
    % \end{equation}}

    For the ATBOC-subGauss algorithm, we can follow a procedure similar to the one described above, except that we use \eqref{eq: sampcompupbndatbocsubgauss} in place of \eqref{eq: finite stopping time}. This yields $\mathbb{P}\left[ \limsup_{\delta \rightarrow 0} \frac{\tau_\delta}{\log\left(\frac{1}{\delta}\right)} \leq aT_{\text{mod}}^*(\boldsymbol{\mu}) \right] = 1.$
% This completes the proof of Theorem~\ref{Theorem2:deltaPAC}.
\end{proof}
% -------------------------------------------------------------------

\section{Proof of Theorem \ref{Theorem4:AsymptoticOptimal}-Expected sample complexity upper bound} \label{appsec: asymptoticoptimal}
% \begin{proof}
\noindent {\em Case 1: ATBOC-Gauss/ATBOC-1pExp}

    Consider the problem instance $\boldsymbol{\mu} \in \Theta^{M}$ and $\mathcal{S}^{*}\left( \boldsymbol{\mu} \right)$ to be its corresponding optimal set of arm pull proportions. Consider a real number $\epsilon > 0$.
    
    By {continuity of $\psi$} (Lemma \ref{proposition:innermincont}), there exists $\zeta_1(\epsilon) > 0$ such that for all $\boldsymbol{\mu}^{'} \in \Theta^{M}$ and $\boldsymbol{w^{'}} \in \mathcal{P}_M$, if $\| \boldsymbol{\mu}^{'}-\boldsymbol{\mu} \| \leq \zeta_1(\epsilon)$ and $d_\infty\left( \boldsymbol{w^{'}}, \mathcal{S}^*(\boldsymbol{\mu}) \right) \leq \zeta_1(\epsilon),$ then, for some $\boldsymbol{w^*} \in \argmin_{\boldsymbol{w} \in \mathcal{S}^*(\boldsymbol{\mu})} \|\boldsymbol{w}- \boldsymbol{w^{'}}\|_{\infty}$, we have
\begin{equation} \label{37}
%\begin{aligned}
   \left| \psi(\boldsymbol{w^*}, \boldsymbol{\mu})-\psi(\boldsymbol{w^{'}}, \boldsymbol{\mu}^{'}) \right| \leq \epsilon \psi(\boldsymbol{w^*}, \boldsymbol{\mu}) = \epsilon T^*(\boldsymbol{\mu})^{-1} .
%\end{aligned}
\end{equation}
% for $\boldsymbol{w^*} \in \argmin_{\boldsymbol{w} \in \mathcal{S}^*(\boldsymbol{\mu})} d_\infty(\boldsymbol{w}, \boldsymbol{w^{'}}).$
By {upper hemi continuity of $\mathcal{S}^*$} (Lemma \ref{Lemma2:BergeMaximum}), $\exists \zeta_2(\epsilon)>0$ such that for all $\boldsymbol{\mu}^{'} \in \Theta^{M}$, if $\| \boldsymbol{\mu} - \boldsymbol{\mu}^{'} \| \leq \zeta_2(\epsilon),$ then 
\begin{equation} \label{38}
    %\begin{aligned}
        \max_{\boldsymbol{w^{''}} \in \mathcal{S}^*(\boldsymbol{\mu}^{'})} d_\infty(\boldsymbol{w^{''}}, \mathcal{S}^*(\boldsymbol{\mu})) \leq \frac{\zeta_1(\epsilon)}{M-1}.
    %\end{aligned}
\end{equation}

Let $\zeta(\epsilon) = \min[\zeta_1(\epsilon), \zeta_2(\epsilon)]$. For $T\geq 1$, define the event, 
\begin{equation} \label{asymp:eq1}
    \mathcal{E}_{1, T} \coloneqq \bigcap_{t = T}^\infty \left\{ \| \boldsymbol{\mu} - \hat{\boldsymbol{\mu}}(t) \| \leq \zeta(\epsilon) \right\}.
\end{equation}
First, we state and prove the following two claims (\ref{claim:1} and \ref{claim:2}). Then we proceed to prove Theorem \ref{Theorem4:AsymptoticOptimal}.

\begin{claim} \label{claim:1}
    For all $T\geq T_3^{*}(\delta)$, we have $\mathcal{E}_T \subseteq \left\{ \tau \leq T \right\}$, where $T_3^{*}(\delta) = \max\left\{T_0, T_2^{*}(\delta)\right\}$ with
    \begin{align*}
        \mathcal{E}_T&\coloneqq \mathcal{E}_{1, \left\lfloor \epsilon_1T \right\rfloor} \text{ with } \epsilon_1 = \frac{\zeta_1(\epsilon)}{M-1}, 
        T_0 = \max\left\{ \frac{1}{\epsilon_1}+ \frac{1}{\epsilon_1^2}, \frac{1}{M\epsilon_1^3}+\frac{1}{\epsilon_1^2}, \frac{2}{\epsilon_1^2} \right\} \text{ and}\\
        T_2^{*}(\delta) &= \frac{2 T^*(\boldsymbol{\mu})}{1-\epsilon} \log\left( \frac{1}{\delta} \right) + \frac{Md T^*(\boldsymbol{\mu})}{1-\epsilon} +\left( \log\left( \frac{MdT^*(\boldsymbol{\mu})e}{(1-\epsilon)} \right)  \log\log\left( \frac{MdT^*(\boldsymbol{\mu})}{\delta^{\frac{2}{Md}} (1-\epsilon)} \right) \right).
    \end{align*}
\end{claim}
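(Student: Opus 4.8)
The plan is to show that, on the event $\mathcal{E}_T$, the stopping statistic $Z(T)$ has already exceeded the threshold $\beta(\delta,T)$ by time $T$; since $\tau = \inf\{t : Z(t) \ge \beta(\delta,t)\}$, this immediately gives $\tau \le T$ and hence $\mathcal{E}_T \subseteq \{\tau \le T\}$. The argument splits into two pieces: (i) translate the accurate empirical means guaranteed by $\mathcal{E}_T$ into arm-pull proportions $\tfrac{\boldsymbol N(T)}{T}$ lying near $\mathcal{S}^*(\boldsymbol\mu)$, and (ii) use that accuracy to lower bound $Z(T)$ and verify it beats $\beta(\delta,T)$ once $T \ge T_2^*(\delta)$.

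For piece (i), observe that on $\mathcal{E}_T = \mathcal{E}_{1,\lfloor\epsilon_1 T\rfloor}$ we have $\|\boldsymbol\mu - \hat{\boldsymbol\mu}(t)\| \le \zeta(\epsilon) \le \zeta_2(\epsilon)$ for every $t \ge \lfloor\epsilon_1 T\rfloor$, so upper hemicontinuity of $\mathcal{S}^*$ (the bound \eqref{38}, recalling $\boldsymbol w(t) \in \mathcal{S}^*(\hat{\boldsymbol\mu}(t))$) forces $d_\infty(\boldsymbol w(t), \mathcal{S}^*(\boldsymbol\mu)) \le \tfrac{\zeta_1(\epsilon)}{M-1} = \epsilon_1$ for all $t \ge \lfloor\epsilon_1 T\rfloor$. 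I would then run the tracking argument of Lemma \ref{Lemma1:TrakingC}, but \emph{specialized to the single time} $t = T$ rather than invoked as a black box. The reason the black box fails is that its convergence time $t_1 \asymp t_0/\epsilon_1^2$ with $t_0 = \lfloor\epsilon_1 T\rfloor$ would be of order $T/\epsilon_1 \gg T$. The key observation is that the start time $\lfloor\epsilon_1 T\rfloor$ is a fixed fraction of $T$, so the Step-1 averaging contamination is controlled \emph{directly} as $\tfrac{\lfloor\epsilon_1 T\rfloor}{T} \le \epsilon_1$, with no $t_0/\epsilon_1$ inflation, while the forced-exploration base time $\tfrac{1}{M\epsilon_1^2}$ entering the Step-3 recursion for $\varepsilon_{m,t} = N_m(t) - t\overline w_m(t)$ is a constant, so its relative size $\tfrac{1}{M\epsilon_1^2 T}$ vanishes. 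The three terms defining $T_0$ are exactly what makes these contamination terms and the floor slack each at most $\epsilon_1$, yielding $d_\infty(\tfrac{\boldsymbol N(T)}{T}, \mathcal{S}^*(\boldsymbol\mu)) \le (M-1)\epsilon_1 = \zeta_1(\epsilon)$ for all $T \ge T_0$.

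For piece (ii), with $\|\hat{\boldsymbol\mu}(T) - \boldsymbol\mu\| \le \zeta_1(\epsilon)$ and $d_\infty(\tfrac{\boldsymbol N(T)}{T}, \mathcal{S}^*(\boldsymbol\mu)) \le \zeta_1(\epsilon)$, the continuity estimate \eqref{37} gives $\psi(\tfrac{\boldsymbol N(T)}{T}, \hat{\boldsymbol\mu}(T)) \ge (1-\epsilon)T^*(\boldsymbol\mu)^{-1}$, hence $Z(T) = T\,\psi(\tfrac{\boldsymbol N(T)}{T}, \hat{\boldsymbol\mu}(T)) \ge (1-\epsilon)T^*(\boldsymbol\mu)^{-1}\,T$. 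Bounding the threshold by $\beta(\delta,T) \le 2\log(T^{Md/2}/\delta) = Md\log T + 2\log\tfrac1\delta$, it remains to verify the scalar inequality $(1-\epsilon)T^*(\boldsymbol\mu)^{-1} T \ge Md\log T + 2\log\tfrac1\delta$ for $T \ge T_2^*(\delta)$. This is the same computation as in the finite-stopping-time bound \eqref{eq: finite stopping time}, solved by the $x \ge a\log x$-type estimate of Lemma \ref{Lemma:infbound}, and it produces exactly the stated $T_2^*(\delta)$. Combining the two pieces, for every $T \ge T_3^*(\delta) = \max\{T_0, T_2^*(\delta)\}$ we obtain $Z(T) \ge \beta(\delta,T)$ on $\mathcal{E}_T$, i.e. $\mathcal{E}_T \subseteq \{\tau \le T\}$.

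The main obstacle is piece (i): one cannot cite Lemma \ref{Lemma1:TrakingC} verbatim, and must instead re-examine its Step 1 (the gap between $\overline{\boldsymbol w}(T)$ and its projection onto $\mathcal{S}^*(\boldsymbol\mu)$) and its Step 3 (the recursive control of $\varepsilon_{m,T}$) at the endpoint $t = T$, exploiting the linear scaling $t_0 = \lfloor\epsilon_1 T\rfloor$ to avoid the generic $t_0/\epsilon_1^2$ blow-up. The remaining work — checking that each of the three terms in $T_0$ dominates the corresponding error contribution, and absorbing the floor $\lfloor\cdot\rfloor$ — is routine, but it is where all the constants must be pinned down.
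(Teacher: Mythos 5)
Your proposal is correct, and on the tracking step it genuinely diverges from the paper in a way worth spelling out. The paper's own proof does \emph{not} work with the event $\mathcal{E}_{1,\lfloor\epsilon_1 T\rfloor}$ announced in the claim statement: inside the proof it silently switches to $\mathcal{E}_T = \mathcal{E}_{1,\lfloor\epsilon_1^2 T\rfloor - 1}$, precisely so that Lemma \ref{Lemma1:TrakingC} can be invoked as a black box (with $t_0 = \lfloor\epsilon_1^2 T\rfloor - 1$, the lemma's convergence time $t_1 = \max\{1/\epsilon_1,\, 1/(M\epsilon_1^3),\, t_0/\epsilon_1^2\}$ is at most $T$, and the conditions $\lfloor\epsilon_1^2 T\rfloor - 1 \geq \max\{1, \epsilon_1, 1/(M\epsilon_1)\}$ are exactly where the three terms of $T_0$ come from); this $\epsilon_1^2$-indexed event is also the one actually consumed downstream, in the bound on $\mathbb{P}[\mathcal{E}_T^C]$ in Claim \ref{claim:2}. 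You instead took the claim's statement at face value, correctly diagnosed that the black-box invocation then fails (with $t_0 = \lfloor\epsilon_1 T\rfloor$ one gets $t_1$ of order $T/\epsilon_1 \gg T$), and repaired it by re-running the lemma's internal steps at the endpoint $t = T$: the averaging contamination is $\lfloor\epsilon_1 T\rfloor/T \leq \epsilon_1$ directly, and the recursion controlling $\varepsilon_{m,t} = N_m(t) - t\overline{w}_m(t)$ is essentially event-free (the average-tracking case gives $\varepsilon_{m,t}\leq 0$ unconditionally, and forced exploration only needs $t \geq 1/(M\epsilon_1^2)$, a constant), so the event is used only for the proximity of $\overline{\boldsymbol{w}}(T)$ to $\mathcal{S}^*(\boldsymbol{\mu})$ and the accuracy of $\hat{\boldsymbol{\mu}}(T)$ in \eqref{37}. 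This is sound, and since $\mathcal{E}_{1,\lfloor\epsilon_1^2 T\rfloor - 1} \subseteq \mathcal{E}_{1,\lfloor\epsilon_1 T\rfloor}$, it proves the \emph{stronger} statement as literally written, at the cost of redoing the lemma's bookkeeping; the paper's route is shorter and matches its Claim \ref{claim:2} computation. Your piece (ii) coincides with the paper's: the \eqref{37} continuity bound giving $Z(T) \geq (1-\epsilon)T^*(\boldsymbol{\mu})^{-1}T$, the threshold bound $\beta(\delta,T) \leq 2\log\left(T^{Md/2}/\delta\right)$, and Lemma \ref{Lemma:infbound} yielding $T_2^*(\delta)$. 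One caveat on constants: tallying your endpoint analysis gives $d_\infty\left(\boldsymbol{N}(T)/T, \mathcal{S}^*(\boldsymbol{\mu})\right) \leq 2(M-1)\epsilon_1 + 2\epsilon_1$, roughly $2M\epsilon_1$, not the asserted $(M-1)\epsilon_1 = \zeta_1(\epsilon)$; but Lemma \ref{Lemma1:TrakingC} itself exhibits the same statement-versus-proof mismatch ($(M-1)\epsilon$ claimed, $2M\epsilon$ derived), so this is inherited rather than introduced, and is repaired by rescaling $\epsilon_1$ to $\zeta_1(\epsilon)/(2M)$ — which only perturbs the $\delta$-independent $T_0$ and leaves the $\delta \rightarrow 0$ asymptotics, hence Theorem \ref{Theorem4:AsymptoticOptimal}, untouched.
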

\begin{proof}
In ATBOC-Gauss / ATBOC-1pExp, at any time $t$, we have $w(t) \in \mathcal{S}^{*}\left(\hat{\boldsymbol{\mu}}(t)\right)$, and hence we can write
\begin{equation} \label{eq68}
    d_\infty\left( \boldsymbol{w}^{*}(t), \mathcal{S}^*(\boldsymbol{\mu}) \right) \leq \max_{\boldsymbol{w^{'}}\in \mathcal{S}^*(\hat{\boldsymbol{\mu}}(t))} d_\infty\left( \boldsymbol{w^{'}}, \mathcal{S}^*(\boldsymbol{\mu}) \right)
\end{equation}

For $T\geq 1$, under the event $\mathcal{E}_{1, T}$, we have, for all $t \geq T$, 
\begin{equation} \label{eq69}
    \max_{\boldsymbol{w^{'}}\in \mathcal{S}^*(\hat{\boldsymbol{\mu}}(t))} d_\infty\left( \boldsymbol{w^{'}}, \mathcal{S}^*(\boldsymbol{\mu}) \right) \leq \frac{\zeta_1(\epsilon)}{M-1}. \ \ (\text{from } \eqref{38} \text{ and }\eqref{asymp:eq1})
\end{equation}
From \eqref{eq68} and \eqref{eq69}, for $T\geq 1$, under the event $\mathcal{E}_{1, T}$, we have, for all $t \geq T$, 
\begin{equation}
    d_\infty\left( \boldsymbol{w}(t), \mathcal{S}^*(\boldsymbol{\mu}) \right) \leq \epsilon_1 \text{, where } \epsilon_1 = \frac{\zeta_1(\epsilon)}{M-1}.
\end{equation}
Now applying Lemma \ref{Lemma1:TrakingC} to the above equation, we get for $T\geq 1$, under the event $\mathcal{E}_{1, T}$, we have
\begin{equation}
    d_\infty\left( \frac{\boldsymbol{N}(t)}{t}, \mathcal{S}^*(\boldsymbol{\mu}) \right) \leq (M-1) \epsilon_1 = \zeta_1(\epsilon) \quad \text{for all } t\geq t_1, \quad t_1 = \max\left\{ \frac{1}{\epsilon_1}, \frac{1}{M\epsilon_1^3}, \frac{T}{\epsilon_1^2}\right\}.
\end{equation}
 Since for $T \geq \max\left\{ \epsilon_1, \frac{1}{M\epsilon_1}\right\}$, we have $t_1 \leq  \frac{T}{\epsilon_1^2}$, we say that for $T \geq \max\left\{ 1, \epsilon_1, \frac{1}{M\epsilon_1}\right\}$, under the event $\mathcal{E}_{1, T}$, 
\begin{equation}
    d_\infty\left( \frac{\boldsymbol{N}(t)}{t}, \mathcal{S}^*(\boldsymbol{\mu}) \right) \leq \zeta_1(\epsilon) \quad \text{for all } t\geq \frac{T}{\epsilon_1^2}.
\end{equation}
Changing $T$ with $\left\lfloor \epsilon_1^2T \right\rfloor-1$, we get, for $\left\lfloor \epsilon_1^2T \right\rfloor-1 \geq \max\left\{ 1, \epsilon_1, \frac{1}{M\epsilon_1}\right\}$, under the event $\mathcal{E}_{1, \left\lfloor \epsilon_1^2T \right\rfloor-1}$,
\begin{equation}
    d_\infty\left( \frac{\boldsymbol{N}(t)}{t}, \mathcal{S}^*(\boldsymbol{\mu}) \right) \leq \zeta_1(\epsilon) \quad \text{for all } t \geq \frac{\left\lfloor \epsilon_1^2T \right\rfloor-1}{\epsilon_1^2}.
\end{equation}
Define $\mathcal{E}_T = \mathcal{E}_{1, \lfloor T\epsilon_1^2\rfloor-1}$. 
Now, we say that for $T\geq T_0$ with $T_0 = \max\left\{ \frac{1}{\epsilon_1}+ \frac{1}{\epsilon_1^2}, \frac{1}{M\epsilon_1^3}+\frac{1}{\epsilon_1^2}, \frac{2}{\epsilon_1^2} \right\}$ under event $\mathcal{E}_T$,  
\begin{equation} \label{eq:asym155}
    d_\infty\left( \frac{\boldsymbol{N}(t)}{t}, \mathcal{S}^*(\boldsymbol{\mu}) \right) \leq  \zeta_1(\epsilon), \forall t \geq T.
\end{equation}
Also, under the event $\mathcal{E}_T$, we have $
    \| \boldsymbol{\mu} - \hat{\boldsymbol{\mu}}(t) \| \leq \zeta(\epsilon), \forall t \geq \lfloor \epsilon_1^2T \rfloor$.
Let us assume $\epsilon$ is small enough, such that $\epsilon_1 < 1$.
Hence, for $T \geq T_0$, under the event $\mathcal{E}_T$,  we have 
\begin{equation} \label{eq:asym157}
    \| \boldsymbol{\mu} - \hat{\boldsymbol{\mu}}(t) \| \leq \zeta(\epsilon), \quad \forall t \geq T.
\end{equation}
From  \eqref{eq:asym155}, \eqref{eq:asym157} and \eqref{37}, for $T\geq T_0$, under the event $\mathcal{E}_T$, we have
\begin{equation}
    \psi\left( \frac{\boldsymbol{N}(t)}{t}, \hat{\boldsymbol{\mu}}(t) \right) \geq (1-\epsilon) T^*(\boldsymbol{\mu})^{-1}, \quad \forall t \geq T. 
\end{equation}
Hence, for $T\geq T_0$, under the event $\mathcal{E}_T$, for any $t\geq T$, we can lower bound the test statistics as 
\begin{equation}
\begin{aligned}
    Z(t) &= t\psi\left( \frac{\boldsymbol{N}(t)}{t}, \hat{\boldsymbol{\mu}}(t) \right) \geq t(1-\epsilon) T^*(\boldsymbol{\mu})^{-1}.
\end{aligned}
\end{equation}
From \eqref{eq:thresholdub}, we can upper bound the threshold as 
\begin{equation}
    \begin{aligned}
        % \beta(\delta, t) &= 2 \log\left[ \frac{\prod_{m=1}^M\left( N_m(t)+1 \right)^{\frac{d}{2}}}{\delta} \right] \\
        % &\leq 2 \log\left[ \frac{t^{\frac{Md}{2}}}{\delta} \right]. \ \ (\because N_m(t+1) \leq t)
        \beta(\delta, t) \leq a \log\left(\frac{bt^c}{\delta}\right).
    \end{aligned}
\end{equation}
Following similar steps as those of \eqref{eq: finite stopping time} (except $t_0$ in \eqref{eq: finite stopping time} is replaced with $T$), we get for $T\geq T_0$, under the event $\mathcal{E}_T$, the stopping time $\tau \leq \max\left\{ T, T_2^*(\delta) \right\}$, where
% \begin{equation}
%     \tau \leq \max\left\{ T, T_2^*(\delta) \right\}
% \end{equation}
% where, 
\begin{equation} \label{T2star}
\begin{aligned}
    T_2^*(\delta) =  &\frac{a T^*(\boldsymbol{\mu})}{1-\epsilon} \log\left( \frac{1}{\delta} \right) + \frac{ac T^*(\boldsymbol{\mu})}{1-\epsilon} \left( \log\left( \frac{ab^{\frac{1}{c}}cT^*(\boldsymbol{\mu})e}{(1-\epsilon)} \right) + \log\log\left( \frac{ab^{\frac{1}{c}}cT^*(\boldsymbol{\mu})}{\delta^{\frac{1}{c}} (1-\epsilon)} \right) \right).
\end{aligned}
\end{equation}
Define $T_3^*(\delta) = \max\left\{T_0, T_2^*(\delta)\right\}$. Now, for $T\geq T_3^{*}(\delta)$, under $\mathcal{E}_T$, we have $\tau \leq T$, i.e., $\mathcal{E_T} \subseteq \{  \tau \leq T\}.$
% Therefore, for $T \geq T_3^*(\delta)$, we have
% \begin{equation} \label{asym:eq2}
%     \mathcal{E_T} \subseteq \{  \tau \leq T\}.
% \end{equation}
% Hence proved.
\end{proof}

\begin{claim} \label{claim:2}
    \begin{equation*} 
    \begin{split}
        &\sum_{T=T_3^*(\delta)}^{\infty} \mathbb{P}\left[ \mathcal{E}_T^C \right] \leq \sum_{T=T_3^*(\delta)}^{T_5-1} \mathbb{P}\left[ \mathcal{E}_T^C \right] + c_2(\epsilon) \sum_{T=T_5}^{T_6} \frac{T^{f+0.5}}{\exp{\left(c_3(\epsilon)T^{\frac{\gamma}{2}}\right)}} + \frac{c_2(\epsilon)}{c_3(\epsilon)^{\frac{2f+3}{\gamma}}} \frac{2}{\gamma} \Gamma\left[ \frac{2f+3}{\gamma} \right], 
        \end{split}
\end{equation*}
where $T_6>T_5>T_3^{*}$, with $T_5$ and $T_6$ being finite integers.
\end{claim}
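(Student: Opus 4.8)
The plan is to reduce the claim to a single-time concentration estimate on the empirical means and then control the resulting double sum by an integral comparison. First I would unfold the event. Since $\mathcal{E}_T = \mathcal{E}_{1,\lfloor\epsilon_1^2 T\rfloor-1}$ and $\mathcal{E}_{1,n}=\bigcap_{t\geq n}\{\|\boldsymbol{\mu}-\hat{\boldsymbol{\mu}}(t)\|\leq\zeta(\epsilon)\}$, a union bound gives
\begin{equation}
\mathbb{P}\left[\mathcal{E}_T^C\right]\leq\sum_{t=\lfloor\epsilon_1^2 T\rfloor-1}^{\infty}\mathbb{P}\left[\|\boldsymbol{\mu}-\hat{\boldsymbol{\mu}}(t)\|>\zeta(\epsilon)\right].
\end{equation}
Thus the whole problem becomes bounding the single-time deviation probability $p(t)\coloneqq\mathbb{P}[\|\boldsymbol{\mu}-\hat{\boldsymbol{\mu}}(t)\|>\zeta(\epsilon)]$ and then summing.

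For $p(t)$ I would first split over arms: if $\|\boldsymbol{\mu}-\hat{\boldsymbol{\mu}}(t)\|^2>\zeta(\epsilon)^2$ then some arm has $\|\boldsymbol{\mu}_m-\hat{\boldsymbol{\mu}}_m(t)\|>\zeta(\epsilon)/\sqrt{M}$, so $p(t)\leq\sum_{m=1}^M\mathbb{P}[\|\boldsymbol{\mu}_m-\hat{\boldsymbol{\mu}}_m(t)\|>\zeta(\epsilon)/\sqrt{M}]$. The forced-exploration step of ATBOC guarantees $N_m(t)\geq n_0(t)$ with $n_0(t)$ of order $t^{1/2}$, so I would peel over the random sample count, union bounding over $n\in\{n_0(t),\dots,t\}$ and using that for a fixed $n$ the quantity $n\|\boldsymbol{\mu}_m-\hat{\boldsymbol{\mu}}_m\|^2$ is $\chi^2_d$-distributed. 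The chi-squared tail $\mathbb{P}[\chi^2_d>x]\leq C\,x^{d/2-1}e^{-x/2}$, valid once $x$ is past the mode, then yields for $t$ beyond a threshold a bound of the form $p(t)\leq c_1(\epsilon)\,\mathrm{poly}(t)\,\exp(-c_3(\epsilon)\,t^{\gamma/2})$, the exponent being inherited from $n_0(t)$ of order $t^{1/2}$.

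Substituting this back, the inner sum over $t$ is dominated by its first term because of the stretched-exponential decay, collapsing $\mathbb{P}[\mathcal{E}_T^C]$ to a bound of the shape $c_2(\epsilon)\,T^{(Md+2)/4}\exp(-c_3(\epsilon)T^{\gamma/2})$ for $T$ large. I would then split the outer sum over $T$ into the three regimes of the claim: for $T\in[T_3^*,T_5)$ the clean tail may fail (the chi-squared bound needs its argument past the mode), so these finitely many terms are kept as $\sum_{T=T_3^*}^{T_5-1}\mathbb{P}[\mathcal{E}_T^C]$; for $T\in[T_5,T_6]$ the explicit summand is retained as the finite middle sum; and for $T>T_6$, where $x\mapsto x^{(Md+2)/4}\exp(-c_3 x^{\gamma/2})$ is monotonically decreasing because $T_6$ is chosen past its unique maximizer, I would compare the tail sum to $\int_{T_6}^{\infty}x^{(Md+2)/4}\exp(-c_3 x^{\gamma/2})\,dx\leq\int_{0}^{\infty}x^{(Md+2)/4}\exp(-c_3 x^{\gamma/2})\,dx$. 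The substitution $u=c_3 x^{\gamma/2}$ turns this integral into $\frac{c_1(\epsilon)}{c_2(\epsilon)^{(Md+6)/(2\gamma)-1}}\frac{2}{\gamma}\Gamma\!\left[\frac{Md+6}{2\gamma}\right]$, the final term of the claim.

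The main obstacle is the second step: controlling the empirical mean under \emph{adaptive} sampling, where $N_m(t)$ is a trajectory-dependent random variable and only a weak $t^{1/2}$-order lower bound is available. The peeling union over all admissible sample counts, combined with the restricted validity of the chi-squared tail, is exactly what forces the threshold $T_5$ and produces the polynomial prefactor; matching the polynomial degree to $(Md+2)/4$ and tracking $c_1,c_2,c_3$ cleanly through the integral substitution is the delicate bookkeeping. The choice of $T_6$ past the maximizer of the summand is what legitimizes the integral comparison in the tail, and the finitely many early terms are simply carried along to be absorbed later when this bound is used inside $\mathbb{E}[\tau_\delta]=\sum_T\mathbb{P}[\tau_\delta\geq T]$.
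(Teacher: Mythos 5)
Your proposal is correct and follows essentially the same route as the paper: the union bound over $t\geq\lfloor\epsilon_1^2T\rfloor-1$, a concentration bound of the form $\mathrm{poly}(t)\exp(-c\sqrt{t})$ under forced exploration, collapse of the inner sum (picking up the extra $t^{1/2}$ that raises the polynomial degree to $\frac{Md+2}{4}$), and the three-regime split of the outer sum with the tail handled by integral comparison past the summand's maximizer and the substitution $u=c_3(\epsilon)x^{\gamma/2}$ yielding $\frac{2}{\gamma}\Gamma\left[\frac{Md+6}{2\gamma}\right]$. The only difference is that where you re-derive the deviation bound by splitting over arms, peeling over the admissible sample counts $n\in\{n_0(t),\dots,t\}$, and applying a $\chi^2_d$ tail, the paper simply invokes Lemma \ref{Lemma:Concbound} (Lemma 4 of \cite{jedra2020optimal}), so your peeling argument amounts to inlining a proof of that cited lemma rather than a genuinely different decomposition.
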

\begin{proof}
    First, we upper bound $\mathbb{P}\left[ \mathcal{E}_T^C \right]$ as 
\begin{equation} \label{asym:eq5}
    \begin{aligned}
        \mathbb{P}\left[ \mathcal{E}_T^C \right] &= \mathbb{P}\left[ \mathcal{E}_{1, \left\lceil 
        \epsilon_1^2T \right\rceil-1}^C \right] 
        \leq \sum_{t = \lceil\epsilon_1^2T\rceil-1 }^\infty \mathbb{P}\left[ 
        \| \boldsymbol{\mu} - \hat{\boldsymbol{\mu}}(t) \| > \zeta(\epsilon) \right] 
        \leq \sum_{t =  \lceil\epsilon_1^2T\rceil-1 }^\infty et^f\exp{\left(-g\sqrt{t}\right)},
    \end{aligned}
\end{equation}
where, $(e, f, g) = \begin{cases}
    \left(c^{\frac{-Md}{2}}, \frac{Md}{4}, \frac{c\zeta(\epsilon)^2}{4}\right) &\text{for ATBOC-Gauss}\\
    \left(2M, 1, cB\right) &\text{for ATBOC-1pExp}
\end{cases}$, for some constant $c$ (Lemma \ref{Lemma:Concbound} in Section \ref{appsec:otherclaims}).
In \eqref{asym:eq5}, for large $t$, the exponential dominates the polynomial. Hence, $\exists T_4 \geq T_3^*(\delta)$, such that $\forall T > T_4$, $t^{f}\exp\left({-g\sqrt{t}}\right) $ is decreasing in $( \lceil\epsilon_1^2T\rceil -2, \infty)$. Hence, for all $T\geq T_4$, we can upper bound the summation by the integral as 
\begin{equation} \label{asym:eq6}
    \begin{aligned}
        \mathbb{P}\left[ \mathcal{E}_T^C \right] &\leq c_3 \int_{t =  \lceil\epsilon_1^2T\rceil -2}^\infty e t^f\exp\left({-g\sqrt{t}}\right) dt. \\
    \end{aligned}
\end{equation}
By changing the variable $x = g\sqrt{t}$ in the integral and after simplification, we get
\begin{equation} \label{asym:eq3}
    \begin{aligned}
        \mathbb{P}\left[ \mathcal{E}_T^C \right] 
        \leq \frac{2e}{g^{2(f+1)}}\int_{g\sqrt{\lceil\epsilon_1^2T\rceil - 2}}^\infty x^{2f+1}e^{-x} dx 
        = \frac{2e}{g^{2(f+1)}} \Gamma\left[ 2f+2, g\sqrt{\lceil\epsilon_1^2T\rceil - 2} \right].
    \end{aligned}
\end{equation}
From Lemma \ref{Lemma:gammabound} in Section \ref{appsec:otherclaims}, we get
\begin{equation} 
    \begin{aligned}
        \mathbb{P}\left[ \mathcal{E}_T^C \right] &\leq \frac{2e}{g^{2(f+1)}} \left[g\sqrt{\lceil\epsilon_1^2T\rceil - 2} \right]^{2f+1} \exp\left( -g\sqrt{\lceil\epsilon_1^2T\rceil - 2} \right).
    \end{aligned}
\end{equation}
We have $\lceil \epsilon_1^2T\rceil - 2 \leq \epsilon_1^2T$. Let $T_5 = \max\left\{T_4, \frac{2+c_1}{\epsilon_1^2}\right\}$ for some $c_1>1$. For $T\geq T_5$, we have $\lceil \epsilon_1^2T\rceil - 2 \geq \lceil \epsilon_1^2T_5\rceil - 2 \geq c_1 \geq \left( \epsilon_1^2T\right)^\gamma$ for some $\gamma\in (0, 1)$. Hence, for $T\geq T_5$, we have
\begin{equation} \label{asym:eq4}
    \begin{aligned}
        \mathbb{P}\left[ \mathcal{E}_T^C \right] &\leq c_2(\epsilon) T^{f+0.5} \exp\left( -c_3(\epsilon)T^{\frac{\gamma}{2}} \right), \quad \text{where } c_2(\epsilon) = \frac{2e\epsilon_1^{2f+1}}{g}, c_3(\epsilon) = g\epsilon_1^\gamma.
    \end{aligned}
\end{equation}
% where, $c_2(\epsilon) = \frac{2e\epsilon_1^{2f+1}}{g}$, $c_3(\epsilon) = g\epsilon_1^\gamma$. \\ 
Now, we upper bound $\sum_{T=T_3^*(\delta)}^\infty \mathbb{P}\left[ \mathcal{E}_T^C \right]$ as 
\begin{equation}
    \begin{aligned}
        \sum_{T=T_3^*(\delta)}^\infty \mathbb{P}\left[ \mathcal{E}_T^C \right] &= \sum_{T=T_3^*(\delta)}^{T_5-1} \mathbb{P}\left[ \mathcal{E}_T^C \right] + \sum_{T=T_5}^\infty \mathbb{P}\left[ \mathcal{E}_T^C \right]
        \leq \sum_{T=T_3^*(\delta)}^{T_5-1} \mathbb{P}\left[ \mathcal{E}_T^C \right] + c_2(\epsilon) \sum_{T=T_5}^\infty \frac{T^{f+0.5}}{\exp{\left(c_3(\epsilon)T^{\frac{\gamma}{2}}\right)}}. \ \ \text{(from \eqref{asym:eq4})} \\
    \end{aligned}
\end{equation}
Similarly to the explanation from \eqref{asym:eq5} to \eqref{asym:eq6}, there exists $T_6$ such that for all $T\geq T_6$, we have 
\begin{equation} \label{asym:eq7} 
\begin{aligned}
    &\sum_{T=T_3^*(\delta)}^\infty \mathbb{P}\left[ \mathcal{E}_T^C \right] \leq \sum_{T=T_3^*(\delta)}^{T_5-1} \mathbb{P}\left[ \mathcal{E}_T^C \right] + c_2(\epsilon) \sum_{T=T_5}^{T_6} \frac{T^{f+0.5}}{\exp{\left(c_3(\epsilon)T^{\frac{\gamma}{2}}\right)}} + c_2(\epsilon) \int_{T=T_6}^\infty \frac{T^{f+0.5}}{\exp{\left(c_3(\epsilon)T^{\frac{\gamma}{2}}\right)}}dT.
\end{aligned}
\end{equation}
Let us simplify the third term in \eqref{asym:eq7}. By changing the variable $x = c_2(\epsilon)T^{\frac{\gamma}{2}}$, we get 
\begin{equation} \label{asym:eq8}
\begin{aligned}
    \int_{T=T_6}^\infty \frac{T^{f+0.5}}{\exp{\left(-c_3(\epsilon)T^{\frac{\gamma}{2}}\right)}} dT &\leq \frac{1}{c_3(\epsilon)^{\frac{2f+3}{\gamma}}} \frac{2}{\gamma} \int_{c_3(\epsilon)T_6^{\frac{\gamma}{2}}}^\infty x^{\frac{2f+3}{\gamma}-1}\exp(-x) dx \\
    &\leq \frac{1}{c_3(\epsilon)^{\frac{2f+3}{\gamma}}} \frac{2}{\gamma} \int_{0}^\infty x^{\frac{2f+3}{\gamma}-1}\exp(-x) dx 
    \leq \frac{1}{c_3(\epsilon)^{\frac{2f+3}{\gamma}}} \frac{2}{\gamma} \Gamma\left[ \frac{2f+3}{\gamma} \right] .
\end{aligned}
\end{equation}
Substituting \eqref{asym:eq8} into \eqref{asym:eq7} proves the claim.
% Substituting equation \eqref{asym:eq8} in equation \eqref{asym:eq7}, we get, 
% \begin{equation} 
% \begin{split}
%         \sum_{T=T_3^*(\delta)}^{\infty} \mathbb{P}\left[ \mathcal{E}_T^C \right] \leq \sum_{T=T_3^*(\delta)}^{T_5-1} \mathbb{P}\left[ \mathcal{E}_T^C \right] + c_2(\epsilon) \sum_{T=T_5}^{T_6} \frac{T^{f+0.5}}{\exp{\left(c_3(\epsilon)T^{\frac{\gamma}{2}}\right)}} + \frac{c_2(\epsilon)}{c_3(\epsilon)^{\frac{2f+3}{\gamma}}} \frac{2}{\gamma} \Gamma\left[ \frac{2f+3}{\gamma} \right]. 
%         \end{split}
% \end{equation}
% Hence proved.
\end{proof}

Now, we proceed to prove Theorem \ref{Theorem4:AsymptoticOptimal}.
\begin{proof}
We upper bound $\mathbb{E}[\tau_\delta]$ as  
%\begin{equation} 
    \begin{align}\label{asym:eq9}
        \mathbb{E}[\tau_\delta]  &=  \sum_{T=0}^\infty \mathbb{P}\left[ \tau_\delta > T \right] \nonumber 
        = \sum_{T=0}^{T_3^*(\delta)-1} \mathbb{P}\left[ \tau_\delta > T \right] + \sum_{T=T_3^*(\delta)}^\infty \mathbb{P}\left[ \tau_\delta > T \right] \nonumber\\
        &\leq T_3^*(\delta)\times 1 + \sum_{T=T_3^*(\delta)}^\infty \mathbb{P}\left[ \tau_\delta > T \right] \nonumber
        \leq T_3^*(\delta) + \sum_{T=T_3^*(\delta)}^\infty \mathbb{P}\left[ \mathcal{E}_T^C \right] \ \ \text{(Claim \ref{claim:1})} \nonumber\\
        &\leq T_0 + T_2^*(\delta) + \sum_{T=T_3^*(\delta)}^\infty \mathbb{P}\left[ \mathcal{E}_T^C \right]\ \ (\because T_3^*(\delta) = \max\left\{T_0, T_2^*(\delta)\right\})  \\
        &\leq T_0 + T_2^*(\delta) + \sum_{T=T_3^*(\delta)}^{T_5-1} \mathbb{P}\left[ \mathcal{E}_T^C \right] +c_2(\epsilon) \sum_{T=T_5}^{T_6} \frac{T^{f+0.5}}{\exp{\left(c_3(\epsilon)T^{\frac{\gamma}{2}}\right)}} + \frac{c_2(\epsilon)}{c_3(\epsilon)^{\frac{2f+3}{\gamma}}} \frac{2}{\gamma} \Gamma\left[ \frac{2f+3}{\gamma} \right] \text{(Claim \ref{claim:2})}.
    \end{align} 
%\end{equation}
% From Claim \ref{claim:2}, we get, 
% \begin{equation}
% \begin{split}
%     &\mathbb{E}[\tau_\delta] \leq T_0 + T_2^*(\delta) + \sum_{T=T_3^*(\delta)}^{T_5-1} \mathbb{P}\left[ \mathcal{E}_T^C \right] +c_2(\epsilon) \sum_{T=T_5}^{T_6} \frac{T^{f+0.5}}{\exp{\left(c_3(\epsilon)T^{\frac{\gamma}{2}}\right)}} + \frac{c_2(\epsilon)}{c_3(\epsilon)^{\frac{2f+3}{\gamma}}} \frac{2}{\gamma} \Gamma\left[ \frac{2f+3}{\gamma} \right].
%     \end{split}
% \end{equation}
% We know $\Gamma(n)$ is finite for all $n\geq 0$. Hence, all terms except $T_2^*(\delta)$ are finite in the upper bound and do not depend on $\delta$. Hence, by 
Dividing both sides by $\log\left( \frac{1}{\delta} \right)$ and taking $\limsup_{\delta \rightarrow 0}$, we get
\begin{equation}
    \limsup_{\delta \rightarrow 0} \frac{\mathbb{E}[\tau_\delta]}{\log\left( \frac{1}{\delta} \right)} \leq \limsup_{\delta \rightarrow 0} \frac{T_2^*(\delta)}{\log\left( \frac{1}{\delta} \right)} \leq a T^*(\boldsymbol{\mu}) \quad \text{(from \eqref{T2star}, on letting $\epsilon \rightarrow 0$)}.
\end{equation}
%Expression of $T_2^*(\delta)$ is the same as that of the second term in $\max$ of \eqref{eq: finite stopping time} and as in \eqref{eq:2ts} 
% On taking $\limsup_{\delta \rightarrow 0}$ using equation \eqref{T2star} and on letting $\epsilon \rightarrow 0$, we get
% % On letting $\epsilon \rightarrow 0$, from equation \eqref{T2star}, we get 
% \textcolor{blue}{
% \begin{equation}
% \begin{aligned}
%     \limsup_{\delta \rightarrow 0} \frac{\mathbb{E[\tau_\delta]}}{\log\left(\frac{1}{\delta}\right)} &\leq a T^*(\boldsymbol{\mu}).
% \end{aligned}
% \end{equation}
% }
\end{proof}
\noindent {\em Case 2: ATBOC-subGauss} 

 Following a procedure similar to the one described above, but using $\psi_{\text{mod}}(\cdot, \cdot)$, $T_{\text{mod}}^{}(\cdot)$, and $\mathcal{S}_{\text{mod}}(\cdot)$ in place of $\psi(\cdot, \cdot)$, $T^{}(\cdot)$, and $\mathcal{S}(\cdot)$, respectively, we obtain $\limsup_{\delta \rightarrow 0} \frac{\mathbb{E[\tau_\delta]}}{\log\left(\frac{1}{\delta}\right)} \leq a T_{\text{mod}}^*(\boldsymbol{\mu}).$ 

\section{BOC-ELIM - Analysis Proof} \label{appsec: bocelim}
\subsection{Proof of Theorem \ref{BOC-Elim delta PC}}\label{appsubsec:elimdelpc}
\begin{proof}
%\textcolor{red}{Recall that we assume that the means are sorted, i.e., $\boldsymbol{\mu}_1\geq \boldsymbol{\mu}_2 \ldots \geq \boldsymbol{\mu}_M$. Let us denote the arms corresponding to the $k^{th}$ highest gap be $m_k$ and $m_k+1$.}
    If the algorithm returns the wrong clustering then any one of the following cases should have happened.
    % \begin{itemize}
    %     \item If there exists an arm that belongs to the top $K-1$ gaps but is declared as not belonging to the top $K-1$ gaps, i.e., $\exists m \in \left\{ m_1, \ldots, m_{K-1} \right\}$ such that $U\Delta^l_m(t) < L\Delta^{(K-1)}(t)$ or $\exists m \in \left\{ m_1+1, \ldots, m_{K-1}+1 \right\}$ such that $U\Delta^r_m(t) < L\Delta^{(K-1)}(t)$.
    %     \item If there exists an arm that does not belongs to the top $K-1$ gaps
    % \end{itemize}
    
    \begin{itemize}
        \item {\em Case 1:} If there exists an arm that has one of the top $K-1$ gaps on its left side but is declared as not having one of the top $K-1$ gaps on its left side, i.e.,   $\exists m \in \left\{ m_1, \ldots, m_{K-1} \right\}$ such that $U\Delta^l_m(t) < L^{(K-1)}(t)$ or 
        \item {\em Case 2:} $\exists m \in \left\{ m_1+1, \ldots, m_{K-1}+1 \right\}$ such that $U\Delta^r_m(t) < L^{(K-1)}(t)$ or
        \item {\em Case 3:} $\exists m \in [M]\setminus\left\{ m_1, \ldots, m_{K-1} \right\}$ such that $L\Delta^l_m(t) > U^{(K)}(t)$ or
        \item {\em Case 4:} $\exists m \in [M]\setminus\left\{ m_1+1, \ldots, m_{K-1}+1 \right\}$ such that $L\Delta^r_m(t) > U^{(K)}(t)$.
    \end{itemize}
    Define good event $E$ as the event where the true means lies inside their corresponding confidence intervals, i.e.,  $E\coloneqq \bigcap_{t \in \mathbb{N}}\bigcap_{m \in [M]} \left\{ \mu_m \in [l_m(t), r_m(t)] \right\}$.
    Now, we show that under the good event $E$, none of the four cases described above can occur. We prove this by contradiction. We assume that each case holds, one at a time, along with the good event $E$, and show that this leads to a contradiction. \\
    % if any one of the above events holds, then it will lead to a contradiction.\\
    {\em Case 1:} Assume $\exists m \in \left\{ m_1, \ldots, m_{K-1} \right\}$ such that $U\Delta^l_m(t) < L^{(K-1)}(t)$.\\
    We have $\Delta_{(k)}=\mu_m-\mu_{m+1}$, for some $k \in \{1, \ldots, K-1\}$. We write the following.
    \begin{equation}
        \Delta_{(k)} \overset{(a)}{\leq} U\Delta_{m}^l(t) 
        \overset{(b)}{<} L^{(K-1)}(t) \overset{(c)}{\leq} l_{a_1}(t) - r_{a_1+1}(t) \overset{(d)}{\leq} {\mu}_{a_1} - {\mu}_{a_1+1}
    \end{equation}
    Since, for any arm $m$, its actual left gap is less than the maximum left gap of the arm, $(a)$ holds. $(b)$ holds, because of {\em Case 1} assumption. Here, the notation $(k)_t$ denotes the arm with the $k^{th}$ highest empirical mean at time $t$. Suppose that the $K-1^{th}$ highest gap corresponds to the gap $s$. Then, there must exits an $a_1\in [M]$ such that arm $a_1 \in \left\{(1)_t, \dots, (s)_t\right\}$ and arm $a_1+1 \in \left\{ (s+1)_t, \dots, (M)_t \right\}$ and hence $(c)$ holds. Since, $l_{a_1}(t) \leq \mu_{a_1}(t)$ and $r_{a_1+1}(t) \geq \mu_{a_1+1}(t)$, $(d)$ holds. Since $L^{(1)}(t) \geq L^{(2)}(t) \geq \ldots \geq L^{(K-1)}(t)$ and then by using the arguments similar to that of $(c)$ and $(d)$, we have the following $K-2$ inequalities.    
    % Let $m \in \left\{ m_1, \dots, m_{K-1} \right\}$. We assume $U\Delta_m^l(t) < L\Delta^{(K-1)}(t)$.\\
    % Here we have that the left gap of the arm $m$ corresponds to one of the top $K-1$ highest gap, i.e., it corresponds to $k^{th}$ highest gap for some $k \in \left\{ 1, \dots, K-1 \right\}$. Hence, we can say $\Delta_{(k)} = \boldsymbol{\mu}_m - \boldsymbol{\mu}_{m+1}$ and we write the following.
    % \begin{align}
    %     % \Delta_{(k)} \overset{(a)}{\leq} U\Delta_{m}^l(t) 
    %     % &\overset{(b_1)}{<} L\Delta^{(K-1)}(t) \overset{(c_1)}{\leq} l_{a_1}(t) - r_{a_1+1}(t) \overset{(d_1)}{\leq} \boldsymbol{\mu}_{a_1} - \boldsymbol{\mu}_{a_1+1} \\
    %     \Delta_{(k)}&{\leq} L\Delta^{(K-2)}(t) {\leq} \boldsymbol{\mu}_{a_2}-\boldsymbol{\mu}_{a_2+1} \\
    %     \vdots \\
    %     \Delta_{(k)}&{\leq} L\Delta^{(1)}(t) {\leq} \boldsymbol{\mu}_{a_{K-1}}-\boldsymbol{\mu}_{a_{K-1}+1}
    % \end{align}
    \begin{equation}
        \Delta_{(k)}{\leq} L^{(K-2)}(t) {\leq} {\mu}_{a_2}-{\mu}_{a_2+1} \quad
        \hdots \quad \hdots \quad
        \Delta_{(k)}{\leq} L^{(1)}(t) {\leq} {\mu}_{a_{K-1}}-{\mu}_{a_{K-1}+1}.
    \end{equation}
      % The inequalities $(b_2), \dots, (b_{K-1})$ holds as $L\Delta^{(1)}(t) \geq L\Delta^{(2)}(t) \geq \ldots \geq L\Delta^{(K-1)}(t)$. The inequalities $(d_2), \dots, (d_{K-1})$ holds similar to that of $(c_1)$ and $(d_1)$.
     From the above set of $K-1$ inequalities, we can say that for some $k \in \{ 1, \ldots, K-1 \}$, $k^{th}$ highest gap is less than some $K-1$ gaps. This results in a contradiction. \\
     {\em Case 2:} Assume $\exists m \in \left\{ m_1+1, \dots, m_{K-1}+1 \right\}$ such that  $U\Delta_m^r(t) < L^{(K-1)}(t)$.\\
   By following similar steps as in {\em Case 1}, we can show that this assumption results in a contradiction. \\
    {\em Case 3:} Assume $\exists m \in [M]\setminus\left\{ m_1, \dots, m_{K-1} \right\}$ such that $L\Delta_m^l(t) > U^{(K)}(t)$.\\
    We have $\Delta_{(k)}=\mu_m-\mu_{m+1}$, for some $k \in \{K, K+1, \dots, M-1\}$. We write the following.
    % Let $k = K, K+1, \dots, M-1$
    % Here we have that the left gap of arm $m$ doesn't correspond to one of the top $K-1$ highest gaps, i.e., it corresponds to $k^{th}$ highest gap for some $k \in \{ K, K+1, \dots, M-1 \}$. Hence, we can say $\Delta_{(k)} = \boldsymbol{\mu}_m - \boldsymbol{\mu}_{m+1}$ and we write the following equations.
    \begin{equation}
        \Delta_{(k)} \overset{(a)}{\geq} L\Delta_{m}^l(t) 
        \overset{(b)}{>} U^{(K)}(t) \overset{(c)}{\geq}  {\mu}_{a_1} - {\mu}_{a_1+1}
    \end{equation}
    Since for any arm $m$, its actual left gap is greater than the minimum left gap of the arm $m$, $(a)$ holds. $(b)$ holds from our assumption. $U^{(K)}(t)$ corresponds to the UCB gap between two arms. So, there must exist two consecutive arms $a_1$ and $a_1+1$ whose true means lie inside this UCB gap and hence $(c)$ holds. Since $U^{(K)}(t) \geq U^{(K+1)}(t) \geq \ldots \geq U^{(M-1)}(t)$, and then by using argument similar to that of $(c)$, we have 
    % the following $M-K-2$ inequalities.
    \begin{equation}
        \Delta_{(k)}{\geq} U^{(K+1)}(t) {\geq} {\mu}_{a_2}-{\mu}_{a_2+1} \quad
        \hdots \quad \hdots \quad 
        \Delta_{(k)}{\geq} U^{(M-1)}(t) 
        {\geq} {\mu}_{a_{M-K}}-{\mu}_{a_{M-K}+1}
    \end{equation}
    % \begin{align}
    %     \Delta_{(k)} \overset{(a)}{\geq} L\Delta_{m}^l(t) 
    %     &\overset{(b_1)}{>} U\Delta^{(K)}(t) \overset{(c_1)}{\geq}  \boldsymbol{\mu}_{a_1} - \boldsymbol{\mu}_{a_1+1} \\
    %     \Delta_{(k)}&\overset{(b_2)}{\geq} U\Delta^{(K+1)}(t) \overset{(c_2)}{\geq} \boldsymbol{\mu}_{a_2}-\boldsymbol{\mu}_{a_2+1} \\
    %     \vdots \\
    %     \Delta_{(k)}&\overset{(b_{M-K})}{\geq} U\Delta^{(M-1)}(t) 
    %     \overset{(c_{M-K})}{\geq} \boldsymbol{\mu}_{a_{M-K}}-\boldsymbol{\mu}_{a_{M-K}+1}.
    % \end{align}
     % The inequalities $(b_2), \dots, (b_{M-K})$ holds as $U\Delta^{(K)}(t) \geq U\Delta^{(K+1)}(t) \geq \ldots \geq U\Delta^{(M-1)}(t)$. The inequalities $(c_2), \dots, (c_{M-K})$ holds similar to that of $(c_1)$.
    From the above set of inequalities, we can say that for some $k\in \{K, \dots, M-1\}$, $k^{th}$ highest gap is greater than $M-K$ gaps. This results in a contradiction. \\
    {\em Case 4: } Assume $\exists m \in [M]\setminus\left\{ m_1+1, \dots, m_{K-1}+1 \right\}$ such that $L\Delta_m^r(t) > U^{(K)}(t)$.\\
    By following similar steps as in {\em Case 3}, we can show that this assumption results in a contradiction. \\
    Hence, under the good event $E$, algorithm returns the correct clustering. Now, from \eqref{eq:confinterval}, we have that the algorithm returns correct clustering with probability $1-\delta$. 
\end{proof}

\subsection{Proof of Theorem \ref{boc-elim sc}} \label{appsubsec:elimsc}
% \begin{proof}
% Now we will prove the sample complexity result of BOC-Elim algorithm.
% Recall that the confidence interval for arms with each arm being pulled $n$ times is $c_n \coloneqq \sqrt{\frac{2}{n} \log\left( \frac{4M n^2}{\delta} \right)}$.
First, we mathematically characterize the number of samples $n$ required from each arm for its confidence interval $c_n$ to be less than $\rho>0$ in Lemma \ref{lemma: bound a arm}. Recall that $c_n \coloneqq \sqrt{\frac{2}{n} \log\left( \frac{4M n^2}{\delta} \right)}$. 
% Define the good event $E\coloneqq \bigcap_{t \in \mathbb{N}}\bigcap_{m \in [M]} \left\{ \mu_m \in [l_m(t), r_m(t)] \right\}$.
% First, we mathematically characterize the relation between the number of samples $n$ each arm observes and its confidence interval $c_n$ in Lemma \ref{lemma: bound a arm}.
\begin{lemma} \label{lemma: bound a arm}
    If the number of samples $n \geq \frac{23\log\frac{4M}{\delta \rho}}{\rho^2}$, then the confidence interval $c_n\leq \rho$.
\end{lemma}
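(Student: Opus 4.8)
The plan is to read the conclusion $c_n \le x$ as $c_n \le \rho$ (the only reading consistent with the stated threshold) and to invert the definition $c_{N_m(t)} = \sqrt{\tfrac{2}{n}\log(4Mn^2/\delta)}$ directly. Squaring, the claim $c_n \le \rho$ is equivalent to $n\rho^2 \ge 2\log\frac{4M}{\delta} + 4\log n$. I would therefore study the auxiliary function $f(n) \coloneqq n\rho^2 - 4\log n - 2\log\frac{4M}{\delta}$ and show $f(n) \ge 0$ for every $n \ge n_0 \coloneqq \frac{23}{\rho^2}\log\frac{4M}{\delta\rho}$, writing $L \coloneqq \log\frac{4M}{\delta\rho}$ for brevity.

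First I would establish that $f$ is nondecreasing past the threshold: $f'(n) = \rho^2 - 4/n \ge 0$ whenever $n \ge 4/\rho^2$, and $n_0 \ge 4/\rho^2$ holds in the relevant regime (where the threshold is positive, i.e. $\rho < 4M/\delta$, so that $L \ge \log 4 > 4/23$). Consequently it suffices to verify the single inequality $f(n_0) \ge 0$.

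Evaluating at $n_0$ gives $n_0\rho^2 = 23L$, while $4\log n_0 = 4\log 23 + 4\log L + 8\log\tfrac1\rho$ and $2\log\frac{4M}{\delta} = 2L - 2\log\tfrac1\rho$. Substituting and collecting terms, $f(n_0) \ge 0$ reduces to the elementary inequality $21L \ge 6\log\tfrac1\rho + 4\log 23 + 4\log L$. In the principal case $\rho \le 1$ I would use $L \ge \log\tfrac1\rho$ (since $\frac{4M}{\delta} \ge 4 > 1$) to bound $6\log\tfrac1\rho \le 6L$, leaving $15L \ge 4\log 23 + 4\log L$, which follows from $L \ge \log 4$ together with the crude bound $4\log L \le 4L$ (so that the left side dominates by $11L \ge 4\log 23$). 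The case $\rho > 1$ only makes the inequality easier, since then $6\log\tfrac1\rho < 0$.

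The main obstacle is purely the constant bookkeeping in this last step: one must check that the factor $23$ is large enough to dominate simultaneously the linear term $2\log\frac{4M}{\delta}$, the $8\log(1/\rho)$ contribution produced by $\log(1/\rho^2)$ inside $\log n_0$, and the nested term $\log L = \log\log\frac{4M}{\delta\rho}$ — uniformly over all admissible $(\delta, M, \rho)$. Confirming that $15L - 4\log L - 4\log 23$ is increasing in $L$ and nonnegative already at its smallest admissible value $L = \log 4$ is the only delicate computation; the monotonicity argument for $f$ and the substitution at $n_0$ are otherwise routine.
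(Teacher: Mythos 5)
Your proof is correct and takes essentially the same route as the paper's: both read $x$ as $\rho$, invert the definition of $c_n$ to the equivalent inequality $n\rho^2 \geq 2\log\left(\frac{4Mn^2}{\delta}\right)$, substitute the threshold $n_0 = \frac{23 L}{\rho^2}$ with $L = \log\frac{4M}{\delta\rho}$, and finish by constant bookkeeping showing $23$ suffices (the paper keeps the constant symbolic and checks $C \geq 10 + 4\log C$ at $C = 23$, while you substitute directly and reduce to $21L \geq 6\log\frac{1}{\rho} + 4\log 23 + 4\log L$). If anything, you are more careful than the paper --- you prove monotonicity of $f(n)$ beyond $4/\rho^2$, whereas the paper only verifies the inequality at $n = n_0$ --- though your one-line dismissal of the $\rho > 1$ case is slightly glib (there $L$ can drop below $\log 4$, so the final numeric step needs the compensating bound $\log\rho \geq \log 4 - L$), a regime the paper's own proof also silently excludes since its third display requires $L \geq 1$.
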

\begin{proof}
We have $n = \frac{C\log\frac{4M}{\delta \rho}}{\rho^2}$, for $C\geq23$.
We can write $c_n \coloneqq \sqrt{\frac{2}{n} \log\left( \frac{4M n^2}{\delta} \right)}\leq \rho \iff n \geq \frac{2\log\left(\frac{4Mn^2}{\delta}\right)}{\rho^2}.$
% \begin{equation}
    
% \end{equation}
Now, to prove Lemma \ref{lemma: bound a arm}, we equivalently prove $\frac{2\log\left(\frac{4Mn^2}{\delta}\right)}{\rho^2}\leq n$ as follows.
    \begin{align}
        \frac{2}{\rho^2}\log\left(\frac{4Mn^2}{\delta}\right) &\leq \frac{2}{\rho^2}\left[ \log\left(\frac{4M}{\delta}\right) + 2\log(C) + 4\log\left(\frac{1}{\rho}\right) + \log\left( 
        \frac{4M}{\delta \rho} \right) \right] \\
        &\leq \frac{2}{\rho^2} \left[ 5\log\left(\frac{4M}{\delta\rho}\right) + 2\log(C) \right] 
        \leq \frac{2}{\rho^2}\left[5+2\log(C)\right]\log\left( 
        \frac{4M}{\delta \rho} \right) \\
        &\leq \frac{C}{x^2}\log\left( \frac{4M}{\delta\rho} \right) \quad \text{(for $C \geq 23$)} 
       \quad = n.
    \end{align}
 % Hence, proved.
\end{proof}

Define the good event $E\coloneqq \bigcap_{t \in \mathbb{N}}\bigcap_{m \in [M]} \left\{ \mu_m \in [l_m(t), r_m(t)] \right\}$. Now, we show that, under the good event $E$, an arm $m$ will be eliminated if it satisfies the condition $c_{N_m(t)}\leq \rho_m$, in the following Lemmas  \ref{lemma:non top right}, \ref{lemma: non top left}, \ref{lemma: top right}, and \ref{lemma; top left}.

% Now we discuss the proof of Theorem \ref{boc-elim sc}
% \begin{proof}
%     It will be shown in Lemmas \ref{lemma:non top right}, \ref{lemma: non top left}, \ref{lemma: top right}, \ref{lemma; top left}, that under the good event \eqref{eq:confinterval}, arm $m$ will be eliminated if it satisfies the condition $c_{N_m(t)}\leq \rho_m$. The remainder of this Theorem proof follows directly from Lemma \ref{lemma: bound a arm}.
% \end{proof}

\begin{lemma} \label{lemma:non top right}
    Consider an arm $m \notin \left\{ m_1+1, \ldots, m_{K-1}+1 \right\}$ that does not form any one of the top $K-1$ gaps on the right side. If $t$ is such that $c_{N_m(t)} \leq \rho_m^r$, then under the good event $E$, we have $U\Delta_m^r(t) < L^{(K-1)}(t)$. 
\end{lemma}
\begin{proof}
    Since we sample all arms that are not eliminated at each time $t$, we have $t = N_m(t)$. 
    % From the expression for the $\rho_m^r$ in equation \eqref{eq: rho r not top}, it can be verified that $\rho_m^r \leq \frac{\Delta_{(K-1)}}{4}$. Since we have $c_t \leq \rho_m^r $, we get the following inequality.
    % \begin{equation} \label{eq: ct bound}
    %     c_t \leq \frac{\Delta_{(K-1)}}{4}
    % \end{equation}
     Now, we write
    \begin{equation*}
    \begin{split}
        l_{m_k}(t) \geq {\mu}_{m_k} - 2c_t &= {\mu}_{m_k+1}+\Delta_{(k)} - 2c_t \geq r_{m_k+1}(t)+\Delta_{(k)}-4c_t
    \end{split}
    \end{equation*}
    Hence, we have $\forall k \in \{1, \ldots, K-1\}$, $l_{m_k}(t) - r_{m_k+1(t)} \geq \Delta_{(k)} - 4c_t \geq \Delta_{(K-1)} - 4c_t$.
    % \begin{equation*}
    %     l_{m_k}(t) - r_{m_k+1(t)} \geq \Delta_{(k)} - 4c_t \geq \Delta_{(K-1)} - 4c_t
    % \end{equation*}
    Therefore, there exist $K-1$ gaps with their LCB values greater than or equal to $\Delta_{(K-1)}-4c_t$ and so we get
    \begin{equation} \label{eq 1}
        L^{(K-1)}(t) \geq \Delta_{(K-1)} - 4c_t.
    \end{equation}
    We have $c_t \leq \rho_m^r$. Since $\rho_m^r$ in \eqref{eq: rho r not top} involves the minimum of two terms, we have the following two cases.\\
    {\em Case 1:} Consider $\displaystyle c_t < \max_{j: \Delta_{a,j}>0} \left[ \min\left\{ \frac{\Delta_{a,j}}{4}, \frac{\Delta_{(K-1)}-\Delta_{a,j}}{8}\right\} \right].$\\
    % \begin{equation*}
    %     c_t < \max_{j: \Delta_{a,j}>0} \left[ \min\left\{ \frac{\Delta_{a,j}}{4}, \frac{\Delta_{(K-1)}-\Delta_{a,j}}{8}\right\} \right]
    % \end{equation*}
    Let $e$ be the maximizer for the above maximization problem, i.e., $\displaystyle e \coloneqq \argmax_{j: \Delta_{a,j}>0} \left[ \min\left\{ \frac{\Delta_{a,j}}{4}, \frac{\Delta_{(K-1)}-\Delta_{a,j}}{8}\right\} \right].$
    % \begin{equation*}
    %     e \coloneqq \argmax_{j: \Delta_{a,j}>0} \left[ \min\left\{ \frac{\Delta_{a,j}}{4}, \frac{\Delta_{(K-1)}-\Delta_{a,j}}{8}\right\} \right]
    % \end{equation*}
    Hence, we have 
    \begin{equation} \label{eq 3}
        c_t \leq \frac{\Delta_{ae}}{4} \text{ and } c_t \leq \frac{\Delta_{(K-1)}-\Delta_{ae}}{8}.
    \end{equation}
    Since $c_t \leq \frac{\Delta_{ae}}{4}$, we write $l_e(t) \geq r_a(t)$. Since there exists an arm $e$ whose LCB is greater than the UCB of the arm $a$, we can bound the right UCB gap of the arm $a$ as 
    \begin{equation}
    \begin{split}
        U\Delta_a^r(t) &\leq r_e(t) - l_a(t) \leq \Delta_{ae}+4c_t 
        \leq \Delta_{(K-1)}-4c_t  \text{   (using \eqref{eq 3})}
        \leq L^{(K-1)}(t)  \text{   (using \eqref{eq 1})}.
    \end{split}
    \end{equation}
    {\em Case 2:} 
    \begin{equation} \label{eq 4}
        c_t < \frac{\Delta_{(K-1)}-\Delta_{a1}}{8}
    \end{equation}
    Let $e \coloneqq \argmax_{i\neq a} r_i(t)$. Now we bound the right UCB gap of the arm $a$ as 
    \begin{equation}
    \begin{split}
        U\Delta_a^r(t) &\leq r_e(t) - l_a(t) 
        \leq \Delta_{ae}+4c_t 
        \leq \Delta_{a1}+4c_t \text{   (using \eqref{eq 4})}
        \leq \Delta_{(K-1)}-4c_t 
        \leq L^{(K-1)}(t) \text{   (using \eqref{eq 1})}.
    \end{split}
    \end{equation}
 This proves Lemma \ref{lemma:non top right}.
\end{proof}
\begin{lemma} \label{lemma: non top left}
    Consider an arm $m \notin \left\{ m_1, \ldots, m_{K-1} \right\}$ that does not form any one of the top $K-1$ gaps in the left side. If $t$ is such that $c_{N_m(t)} \leq \rho_m^l$, then under the good event $E$, we have $U\Delta_m^l(t) < L^{(K-1)}(t)$.
\end{lemma}
\begin{proof}
    The proof follows steps similar to the proof of Lemma \ref{lemma:non top right}.
\end{proof}
\begin{lemma} \label{lemma: top right}
    Consider an arm $m \in \left\{ m_1+1, \ldots, m_{K-1}+1 \right\}$ that forms any one of the top $K-1$ gaps on the right side. If $t$ is such that $c_{N_m(t)} \leq \rho_m^r$, then under the good event $E$, we have $L\Delta_m^r(t) > U^{(K)}(t)$.
\end{lemma}
\begin{proof}
    % We have $c_t \leq \rho_a^r \leq \frac{\Delta_{(K)}}{4}$. 
    First, we write $r_{m_k}(t) \leq {\mu}_{m_k} + 2c_t = {\mu}_{m_k+1}+\Delta_{(k)} + 2c_t \leq l_{m_k+1}(t)+\Delta_{(k)}+4c_t$.
    % \begin{equation*}
    % \begin{split}
    %     r_{m_k}(t) \leq \boldsymbol{\mu}_{m_k} + 2c_t &= \boldsymbol{\mu}_{m_k+1}+\Delta_{(k)} + 2c_t \\
    %     &\leq l_{m_k+1}(t)+\Delta_{(k)}+4c_t
    % \end{split}
    % \end{equation*}
 Hence, for all $k \in \{K, \ldots, M-1\}$, we have $r_{m_k}(t) - l_{m_k+1(t)} \leq \Delta_{(k)} + 4c_t \leq \Delta_{(K)} + 4c_t$.
    % \begin{equation*}
    %     r_{m_k}(t) - l_{m_k+1(t)} \leq \Delta_{(k)} + 4c_t \leq \Delta_{(K)} + 4c_t
    % \end{equation*}
    Therefore, there exist $M-K$ gaps with a UCB value less than or equal to $\Delta_{(K)}+4c_t$ and so we get
    \begin{equation} \label{eq 2}
        U^{(K)}(t) \leq \Delta_{(K)} + 4c_t.
    \end{equation}
    Without loss of generality, assume that $m\neq K$. From \eqref{eq: rho r top}, we have $c_t \leq \frac{\Delta_{m, m-1}}{4}$ and $c_t\leq \frac{\Delta_{m+1, m}}{4}$. Therefore, the confidence interval of the arm $m$ is disjoint from that of other arms, and hence for some $e<m$, we write
    \begin{equation}
    \begin{split}
        L\Delta_m^r(t) = l_e(t)-r_m(t) &\geq \Delta_{m,e}-4c_t \geq \Delta_{m,m-1}-4c_t 
        \geq \Delta_{(K)}+4c_t \text{   (using \eqref{eq: rho r top})}
        \geq U^{(K)}(t) \text{   (using \eqref{eq 2})}.
    \end{split}
    \end{equation}
    This proves Lemma \ref{lemma: top right}.
\end{proof}

\begin{lemma} \label{lemma; top left}
    Consider an arm $m \in \left\{ m_1, \ldots, m_{K-1} \right\}$ that forms any one of the top $K-1$ gaps on the left side. If $t$ is such that $c_{N_m(t)} \leq \rho_m^l$, then under the good event $E$, we have $L\Delta_m^l(t) > U^{(K)}(t)$.
\end{lemma}
\begin{proof}
    The proof follows steps similar to the proof of Lemma \ref{lemma: top right}.
\end{proof}
% \noi?ndent{Proof of Theorem \ref{boc-elim sc}:}
Now we use the Lemmas \ref{lemma: bound a arm}, \ref{lemma:non top right}, \ref{lemma: non top left}, \ref{lemma: top right}, and \ref{lemma; top left} to prove Theorem \ref{boc-elim sc}.
\begin{proof}
    Through Lemmas \ref{lemma:non top right} and \ref{lemma: top right}, we showed that an arm $m$ is not right sided active if $c_{N_m(t)}<\rho_m^r$. Similarly through Lemmas \ref{lemma: non top left} and \ref{lemma; top left}, we showed that an arm $m$ is not left sided active if $c_{N_m(t)}<\rho_m^l$. Hence, we say that an arm $m$ will be eliminated if $c_{N_m(t)}<\rho_m$.
    % Through Lemmas \ref{lemma:non top right}, \ref{lemma: non top left}, \ref{lemma: top right}, and \ref{lemma; top left}, we show that the arm $m$ will be eliminated if $c_{N_m(t)}<\rho_m$. 
    Now, from Lemma \ref{lemma: bound a arm}, we have $N_m(t) \leq \frac{23\log\frac{4M}{\delta\rho}}{\rho^2}$. Therefore, the total sample complexity is upper bounded by $\sum_{m\in[M]} \frac{23\log\frac{4M}{\delta\rho}}{\rho^2}$. This completes the proof of Theorem \ref{boc-elim sc}.
\end{proof}

\section{Proof of Lemma \ref{Lemma:finiteconvex}}\label{appsec: computation}
\begin{proof}
We have $\text{Alt}(\boldsymbol{\mu}) = \left\{ \boldsymbol{\lambda} \in \mathbb{R}^M \mid \mathcal{C}\left( \boldsymbol{\lambda} \right) \nsim \mathcal{C}\left( \boldsymbol{\mu} \right) \right\}$, which we rewrite using $d_{\text{INTRA}}$ and $d_{\text{INTER}}$ as $\text{Alt}(\boldsymbol{\mu}) =\left\{ \boldsymbol{\lambda}\mid \left(\boldsymbol{\lambda}, \mathcal{C}(\boldsymbol{\mu})\right) \text{ satisfies } d_{\text{INTRA}} > d_{\text{INTER}} \right\}$. We use the expressions of $d_{\text{INTRA}}$ and $d_{\text{INTER}}$ to get, 
        \begin{equation} \label{eq:Alt1}
    \begin{aligned}
        \text{Alt}(\boldsymbol{\mu}) 
        = \bigcup_{k \in [K]} \bigcup_{\substack{P_1 \in 2^{D_k}\setminus\{ \emptyset, D_k \} \\ P_2 = D_k \setminus P_1}} \bigcup_{\substack{m \in D_p, n \in D_q \\ p, q\in [K], p \neq q}} &\bigcap_{\substack{i \in P_1 
        j \in P_2}} \left\{  d_{i, j}(\boldsymbol{\lambda}) > d_{m, n}(\boldsymbol{\lambda}) \right\} 
    \end{aligned}
    \end{equation}
    Here, we use the notation $\left\{  d_{i, j}(\boldsymbol{\lambda}) > d_{m, n}(\boldsymbol{\lambda}) \right\}$ to informally represent the set $\left\{  \boldsymbol{\lambda} \mid d_{i, j}(\boldsymbol{\lambda}) > d_{m, n}(\boldsymbol{\lambda}) \right\}$. We follow the same informal notation for the sets in the remainder of the proof. Hence the inner infimum $\psi\left(\boldsymbol{w}, \boldsymbol{\mu}\right)$ can be written as  
    \begin{equation*} 
     \psi(\boldsymbol{w}, \boldsymbol{\mu}) = \min_{k \in [K]}  \min_{\substack{P_1 \in 2^{D_k}\setminus\{ \emptyset, D_k \} \\ P_2 = D_k \setminus P_1}}  \min_{\substack{m \in D_p, n \in D_q \\ p, q\in [K], p \neq q}}  \inf_{\substack{\boldsymbol{\lambda} \in \bigcap_{\substack{i \in P_1 j \in P_2}}\\ \left\{  d_{i, j}(\boldsymbol{\lambda})\right.  \left. > d_{m, n}(\boldsymbol{\lambda}) \right\}}} q_{\boldsymbol{w}}(\boldsymbol{\lambda}).
    \end{equation*}
    Note that we take the infimum over the open set, which is the same as taking the infimum over the closure of that open set. Hence, we get 
    \begin{equation*} 
    \psi(\boldsymbol{w}, \boldsymbol{\mu}) = \min_{k \in [K]}  \min_{\substack{P_1 \in 2^{D_k}\setminus\{ \emptyset, D_k \} \\ P_2 = D_k \setminus P_1}}  \min_{\substack{m \in D_p, n \in D_q \\ p, q\in [K], p \neq q}}  \inf_{\substack{\boldsymbol{\lambda} \in \bigcap_{\substack{i \in P_1 j \in P_2}}\\ \left\{  d_{i, j}(\boldsymbol{\lambda})\right.  \left. \geq d_{m, n}(\boldsymbol{\lambda}) \right\}}} q_{\boldsymbol{w}}(\boldsymbol{\lambda}).
    \end{equation*}
    Note that the optimal solution $\boldsymbol{\lambda}^*$ cannot be unbounded, otherwise $\psi\left(\boldsymbol{w}, \boldsymbol{\mu}\right)=\infty$, which is obviously not an optimal solution. Also, the search space of $\boldsymbol{\lambda}$ in the above infimum problem is closed. Therefore, the optimal solution $\boldsymbol{\lambda}^*$ lies in the compact space and hence we replace $\inf$ with $\min$. This proves Lemma \ref{Lemma:finiteconvex}.
\end{proof}

% --------------------------------------------------------------
\section{Auxiliary Results} \label{appsec:otherclaims}

We use the following lemmas from the literature. 

\begin{lemma} \label{Lemma:Concbound}
        If there exists some constant $c>0$, and $t_0\geq 0$ such that $\forall t \geq t_0$, $\min_{m \in [M]} N_m(t) \geq c\sqrt{t}$ a.s., then
        \begin{equation}
             \mathbb{P}\left( \| \hat{\boldsymbol{\mu}}(t) - \boldsymbol{\mu} \| \geq \epsilon \right) \leq et^f\exp\left(-g\sqrt{t}\right)
             %c^{-\frac{Md}{2}} t^{\frac{Md}{4}} \exp{\left( -\frac{c\epsilon^2\sqrt{t}}{4} \right)}, \forall t\geq t_0.
        \end{equation}
        where, $(e, f, g) = \begin{cases}
    \left(c^{\frac{-Md}{2}}, \frac{Md}{4}, \frac{c\zeta(\epsilon)^2}{4}\right) &\text{for multivariate Sub-Gaussian distributions} \\
    \left(2M, 1, cB\right) &\text{for single parameter exponential family of distributions}
\end{cases}$ and \\$B = \min\left\{ d_{\text{KL}}(\mu+\epsilon/\sqrt{M}, \theta), d_{\text{KL}}(\mu-\epsilon/\sqrt{M}, \theta) \right\}$.
% , for some constant $c$.
    \end{lemma} 
\begin{proof}
    For multivariate Sub-Gaussian distributions, the result follows directly from Lemma $4$ in \cite{jedra2020optimal}.
    For the single-parameter exponential family of distributions, we provide the proof as follows.
    % \begin{equation}
        \begin{align}
            \mathbb{P}\left( \| \hat{\boldsymbol{\mu}}(t) - \boldsymbol{\mu} \| \geq \epsilon \right) 
            &\leq \mathbb{P}\left(\exists m \in [M] :  |\hat{\mu}_m(t) - \mu_m| \geq \frac{\epsilon}{\sqrt{M}} \right) \\
            &\leq \sum_{m=1}^M \mathbb{P}\left(|\hat{\mu}_m(t) - \mu_m| \geq \frac{\epsilon}{\sqrt{M}} \right) \ \ \text{(Union bound)} \\
            &\leq \sum_{m=1}^M \mathbb{P}\left(|\hat{\mu}_m(t) - \mu_m| \geq \frac{\epsilon}{\sqrt{M}}, c\sqrt{t} \leq N_m(t) \leq t \right) \\
            &\leq \sum_{m=1}^M \sum_{k = c\sqrt{t}}^t \mathbb{P}\left(|\hat{\mu}_m(t) - \mu_m| \geq \frac{\epsilon}{\sqrt{M}}, N_m(t) = k \right)  \\
            &\leq \sum_{m=1}^M \sum_{k = c\sqrt{t}}^t 2\exp\left( -k B) \right), \ \ \text{(Chernof bound)}
        \end{align}
    % \end{equation}
    where $B = \min\left\{ d_{\text{KL}}(\mu+\epsilon/\sqrt{M}, \theta), d_{\text{KL}}(\mu-\epsilon/\sqrt{M}, \theta) \right\}$.
    Now we can upper bound it as $\mathbb{P}\left( \| \hat{\boldsymbol{\mu}}(t) - \boldsymbol{\mu} \| \geq \epsilon \right) \leq 2Mt \exp\left(-cB\sqrt{t}\right).$
    % \begin{equation}
    %     \mathbb{P}\left( \| \hat{\boldsymbol{\mu}}(t) - \boldsymbol{\mu} \| \geq \epsilon \right) \leq 2Mt \exp\left(-cB\sqrt{t}\right).
    % \end{equation}
 Hence, proved.
\end{proof}
    
\begin{lemma} \label{Lemma:gammabound}
         Consider $x\in\mathbb{R}$, $a>1$ and $B>1$, which satisfies the condition $x>\frac{B}{B-1}(a-1)$, then we have
         % for some constants $a>1$ and $B>1$, then we have the following inequality, 
        \begin{equation}
            x^{a-1}e^{-x} < \Gamma(a, x) < B x^{a-1}e^{-x}, \quad \text{where } \Gamma(a, x) = \int_x^\infty e^{-x} x^{a-1} dx.
        \end{equation}
        % where
        % \begin{equation}
        %     \Gamma(a, x) = \int_x^\infty e^{-x} x^{a-1} dx.
        % \end{equation}
    \end{lemma}
\begin{proof}
    Equation $1.5$ in \cite{borwein2009uniform}.
\end{proof}
\begin{lemma} \label{Lemma:infbound}
         For any constants $c_1, c_2 > 0$ and $\frac{c_2}{c_1}>1$, we have
        \begin{equation}
        \begin{aligned}
            \inf\left\{ t \in \mathbb{N}: c_1t \geq \log\left( c_2 t \right) \right\} &\leq \frac{1}{c_1}\left( \log\left( \frac{c_2e}{c_1} \right) + \log\log\left( \frac{c_2}{c_1} \right) \right).
        \end{aligned}
        \end{equation}
    \end{lemma}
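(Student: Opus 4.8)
The plan is to verify directly that the real number on the right-hand side, call it $t^* = \frac{1}{c_1}\left( \log\left( \frac{c_2 e}{c_1} \right) + \log\log\left( \frac{c_2}{c_1} \right) \right)$, already satisfies the defining inequality $c_1 t \geq \log(c_2 t)$. Once this is established, I would invoke the monotonicity of the map $g(t) := c_1 t - \log(c_2 t)$, whose derivative $g'(t) = c_1 - 1/t$ is nonnegative for $t \geq 1/c_1$; since $t^* \geq 1/c_1$ in the relevant parameter range, $g$ is nondecreasing past $t^*$, so every real $t \geq t^*$ is feasible and the first feasible time is at most $t^*$ (the passage to the integer infimum over $\mathbb{N}$ only inflates this threshold by less than one, which is immaterial for the asymptotic stopping-time bound of \eqref{eq: finite stopping time} in which the lemma is applied).

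The core computation is the verification $g(t^*) \geq 0$. Writing $a := c_2/c_1 > 1$ and substituting, I would compute $c_1 t^* = \log a + 1 + \log\log a$ and $\log(c_2 t^*) = \log a + \log\big(\log a + 1 + \log\log a\big)$, so that the desired inequality collapses to $1 + \log\log a \geq \log\big(\log a + 1 + \log\log a\big)$. Setting $u := \log a > 0$ and exponentiating (the two sides being monotone images under $\exp$), this is equivalent to the clean single-variable inequality $(e-1)\,u \geq 1 + \log u$.

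This last inequality is what I would treat as the crux, and it is dispatched by elementary calculus: define $h(u) := (e-1)u - \log u - 1$, observe that $h'(u) = (e-1) - 1/u$ vanishes only at $u = 1/(e-1)$, and that $h''(u) = 1/u^2 > 0$, so this critical point is the global minimum on $(0,\infty)$. Evaluating, $h\big(1/(e-1)\big) = 1 + \log(e-1) - 1 = \log(e-1) > 0$, whence $h(u) > 0$ for all $u > 0$ and the reduced inequality holds unconditionally.

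The main obstacle I anticipate is not the calculus but the regularity bookkeeping at the boundary of the hypothesis: the reduction implicitly uses that $\log\log(c_2/c_1)$ is well-defined (guaranteed by $c_2/c_1 > 1$) and, more delicately, that $t^* \geq 1/c_1$ so that $g$ is nondecreasing at $t^*$ and $\log(c_2 t^*)$ is itself defined. For $c_2/c_1$ only marginally above $1$ the term $\log\log(c_2/c_1)$ is a large negative number and $t^*$ can fail to be positive, so I would present the argument in the effective range in which the lemma is used, namely where $\log a + \log\log a \geq 0$; this is precisely the regime arising in the stopping-time analysis, where the analogue of $c_2$ scales like $\delta^{-2/(Md)} \to \infty$ as $\delta \to 0$.
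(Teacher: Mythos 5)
The paper does not prove this lemma at all --- it is imported verbatim as Lemma 8 of \cite{jedra2020optimal} --- so your self-contained verification is necessarily a different route, and it is the standard one behind such ``inverse of $t \mapsto t/\log t$'' bounds: substitute the candidate threshold $t^*$ into $g(t)=c_1t-\log(c_2 t)$, reduce via $u=\log(c_2/c_1)$ to the one-variable inequality $(e-1)u\ge 1+\log u$, and dispatch that by noting $h(u)=(e-1)u-\log u-1$ is strictly convex with global minimum $h\left(\tfrac{1}{e-1}\right)=\log(e-1)>0$; monotonicity of $g$ on $[1/c_1,\infty)$ then upgrades feasibility of $t^*$ to feasibility of every real $t\ge t^*$. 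All of these computations check out. What the citation buys the paper is brevity; what your proof buys is transparency, and in particular it correctly exposes two genuine defects of the literal statement rather than artifacts of your argument: (i) for $c_2/c_1$ only slightly above $1$ the right-hand side is negative (e.g.\ $c_1=1$, $c_2=1.01$ gives roughly $-3.6$) while the infimum over $\mathbb{N}$ is at least $1$, so the lemma as stated requires the additional regime $\log(c_2/c_1)+\log\log(c_2/c_1)\ge 0$ that you impose, which simultaneously secures $t^*\ge 1/c_1$ and the definedness of $\log(c_2 t^*)$; and (ii) the integer infimum is only bounded by $\lceil t^*\rceil\le t^*+1$, not by $t^*$ itself. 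Both caveats are immaterial where the paper invokes the lemma --- in \eqref{eq: finite stopping time} and in $T_2^*(\delta)$ --- since there $c_2=\delta^{-2/(Md)}\to\infty$ as $\delta\to 0$ and only the leading asymptotics in $\log(1/\delta)$ survive; the clean statement your argument actually establishes, namely that every real $t\ge t^*$ satisfies $c_1t\ge\log(c_2t)$ in that regime, is precisely what the stopping-time analysis needs.
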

\begin{proof}
    Lemma $8$ in \cite{jedra2020optimal}.
\end{proof}
\begin{lemma} \label{delprop}
     Let $\left( \mathcal{F}_t \right)_{t \geq 0}$ be a filtration. Let $\{ \eta_t \}_{t\geq 1}$ be a real valued stochastic process such that for all $t\geq 1$, $\eta_t$ is $\mathcal{F}_{t}-$ measurable and satisfies the conditional $\sigma-$ sub-Gaussian condition for some positive $\sigma$,i.e., $\mathbb{E}\left[ 
    \exp\left( x\eta_t\right) \mid \mathcal{F}_{t-1} \right] \leq \exp\left( 
    -\frac{-x^2\sigma^2}{2}\right)$, for all $x \in \mathbb{R}$. Let $V$ be a positive definite matrix and $(A_t)_{t \geq 1}$ be an $\mathbb{R}^d-$ valued stochastic process adapted to $\{\mathcal{F}_t\}_{t\geq 0}$. Let $\tau$ be any stopping time with respect to the filtration $(\mathcal{F}_t)_{t \geq 1}$. Then, for any $\delta > 0$, 
    \begin{equation} 
    \begin{aligned}
        &\mathbb{P}\left( \|A_\tau^T E_\tau\|^2_{(A_\tau^TA_\tau+V)^{-1}} \leq 2\sigma^2 \log\left( \frac{\det\left( (A_\tau^TA_\tau+V)V^{-1} \right)^{-1}}{\delta} \right) \right) \geq 1 - \delta.
    \end{aligned}
    \end{equation}
\end{lemma}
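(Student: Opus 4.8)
The plan is to establish this as the standard self-normalized concentration bound via the \emph{method of mixtures} (pseudo-maximization), following Abbasi-Yadkori, P\'al and Szepesv\'ari. I first fix notation: write $A_s\in\mathbb{R}^d$ for the regressor at step $s$ (which I take to be predictable, i.e.\ $\mathcal{F}_{s-1}$-measurable), let $E_\tau=(\eta_1,\dots,\eta_\tau)^{T}$, and read $A_\tau$ in the statement as the matrix with rows $A_1^{T},\dots,A_\tau^{T}$, so that $S_\tau\coloneqq A_\tau^{T}E_\tau=\sum_{s=1}^{\tau}\eta_s A_s$ and $\bar V_\tau\coloneqq A_\tau^{T}A_\tau+V=V+\sum_{s=1}^{\tau}A_s A_s^{T}$. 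With this notation the target event is $\|S_\tau\|^{2}_{\bar V_\tau^{-1}}\le 2\sigma^{2}\log\big(\det(\bar V_\tau V^{-1})^{1/2}/\delta\big)$, using $\det(\bar V_\tau V^{-1})^{1/2}=\det(\bar V_\tau)^{1/2}\det(V)^{-1/2}$.

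First I would build, for each fixed deterministic $\lambda\in\mathbb{R}^{d}$, the exponential process
\[
M_t^{\lambda}\coloneqq\exp\!\left(\sum_{s=1}^{t}\Big[\tfrac{\eta_s\langle\lambda,A_s\rangle}{\sigma}-\tfrac12\langle\lambda,A_s\rangle^{2}\Big]\right),\qquad M_0^{\lambda}\coloneqq 1.
\]
Since $\langle\lambda,A_t\rangle$ is $\mathcal{F}_{t-1}$-measurable, the conditional $\sigma$-sub-Gaussian hypothesis applied with $x=\langle\lambda,A_t\rangle/\sigma$ gives $\mathbb{E}[\exp(\eta_t\langle\lambda,A_t\rangle/\sigma)\mid\mathcal{F}_{t-1}]\le\exp(\tfrac12\langle\lambda,A_t\rangle^{2})$, so that $\mathbb{E}[M_t^{\lambda}\mid\mathcal{F}_{t-1}]\le M_{t-1}^{\lambda}$. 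Hence $(M_t^{\lambda})$ is a nonnegative supermartingale with $\mathbb{E}[M_t^{\lambda}]\le 1$.

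Next I would mix over $\lambda$ against the centered Gaussian prior $\Lambda=\mathcal{N}(0,V^{-1})$ and set $M_t\coloneqq\int_{\mathbb{R}^{d}}M_t^{\lambda}\,d\Lambda(\lambda)$; by Tonelli $(M_t)$ is again a nonnegative supermartingale with $\mathbb{E}[M_t]\le1$. Because the exponent of $M_t^{\lambda}$ is quadratic in $\lambda$, completing the square and evaluating the Gaussian integral (this is where the choice of prior covariance $V^{-1}$ is used) yields the closed form
\[
M_t=\left(\frac{\det V}{\det\bar V_t}\right)^{1/2}\exp\!\left(\frac{1}{2\sigma^{2}}\,\|S_t\|^{2}_{\bar V_t^{-1}}\right).
\]
To pass to the stopping time $\tau$, I would use that a nonnegative supermartingale converges a.s.\ to a limit $M_\infty$, apply optional stopping to $M_{\tau\wedge t}$ together with Fatou's lemma to obtain $\mathbb{E}[M_\tau]\le1$ (with $M_\tau\coloneqq M_\infty$ on $\{\tau=\infty\}$), and then conclude by Markov's inequality that $\mathbb{P}(M_\tau\ge 1/\delta)\le\delta$. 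Unfolding $M_\tau\ge1/\delta$ through the closed form above shows it coincides with the complement of the target event, which therefore has probability at least $1-\delta$.

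The main obstacle I anticipate is the rigorous treatment of the arbitrary stopping time $\tau$, and in particular the event $\{\tau=\infty\}$: the mixture supermartingale $(M_t)$ need not be uniformly integrable, so one cannot apply optional stopping naively and must instead invoke almost-sure convergence of nonnegative supermartingales and pass to the limit through Fatou (equivalently, use Ville's maximal inequality over the whole trajectory). The secondary delicate point is the Gaussian integral producing the $\det V/\det\bar V_t$ factor, but this is a routine completion-of-the-square computation; the standing predictability of $A_t$ is exactly what makes the conditional sub-Gaussian bound in the first step applicable and should be recorded as the operative measurability assumption.
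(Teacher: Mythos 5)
The paper offers no proof of this lemma at all---it is quoted (with typos) from Theorem~1 of \cite{abbasi2011improved}---and your proposal is precisely the method-of-mixtures argument of that source: the scalar exponential supermartingale $M_t^\lambda$, Gaussian mixing with prior covariance $V^{-1}$, the completed-square closed form $M_t=\bigl(\det V/\det \bar V_t\bigr)^{1/2}\exp\bigl(\tfrac{1}{2\sigma^2}\|S_t\|^2_{\bar V_t^{-1}}\bigr)$, and the passage to an arbitrary stopping time via nonnegative-supermartingale convergence, Fatou, and Markov, all of which check out. You also correctly repair the statement's misprints, which is worth recording: the sub-Gaussian bound should read $\exp(x^2\sigma^2/2)$, the determinant exponent should be $1/2$ rather than $-1$ (consistent with how the paper actually invokes the lemma in the proof of Theorem~\ref{Theorem2:deltaPAC}), and $(A_t)$ must be predictable, i.e.\ $\mathcal{F}_{t-1}$-measurable, rather than merely adapted, since predictability is exactly what lets you apply the conditional sub-Gaussian hypothesis with $x=\langle\lambda,A_t\rangle/\sigma$.
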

\begin{proof}
    Theorem 1 of \cite{abbasi2011improved}.
\end{proof}
{
\begin{lemma} \label{lemma: mixmart}
        Fix $\zeta \in \left(0, \frac{1}{2}\right)$ and $x>0$. Define $X_m(t) = N_m(t) d_{\text{KL}}\left( \hat{\mu}_m(t), \mu_m \right) - 3\log\left( 1+\log{N_m(t)} \right)$  Then there exists a martingale $Z_m(t)$ with $Z_m(0)=1$ satisfies, that for all $t \in \mathbb{N}$,
        \begin{equation}
        \begin{aligned}
            &\left\{ X_m(t) - (1+\zeta)\log\left( \frac{\pi^2/3}{\left(\log(1+\zeta)\right)^2} \right) \geq x \right\} \subseteq \left\{ Z_m(t) \geq e^{\frac{x}{1+\zeta}} \right\}.
            \end{aligned}
        \end{equation}
        Moreover, the product of martingale $\prod_{m \in [M]}Z_m(t)$ is a martingale.
    \end{lemma}
    \begin{proof}
        Lemma 13 of \cite{kaufmann2021mixture}.
    \end{proof}}

\end{document}